\documentclass[11pt]{article}
\usepackage[sort,round]{natbib}
\bibliographystyle{abbrvnat}

\usepackage{microtype}
\usepackage{graphicx}
\usepackage{subfigure}
\usepackage{booktabs}
\usepackage{bm}
\usepackage{multicol,multirow}
\usepackage{comment}
\usepackage{booktabs}
\newcommand{\ie}{\textit{i.e.}}
\newcommand{\eg}{\textit{e.g.}}

\usepackage{enumerate}   
\usepackage{amsmath}
\usepackage{tikz}
\usetikzlibrary{positioning,arrows}
\usepackage{wrapfig}
\usepackage{subfigure}

\DeclareMathOperator*{\argmin}{arg\,min}

\usepackage{amsmath}
\usepackage{appendix}
\usepackage{amssymb}

\newcommand{\epol}{\pi^\mathrm{e}}
\newcommand{\var}{\mathrm{var}}
\newcommand{\bpol}{\pi^\mathrm{b}}
\usepackage{amsthm}
\usepackage{hyperref}
\usepackage[capitalise]{cleveref}
\Crefname{assumption}{Assumption}{Assumptions}

\theoremstyle{plain}
\newtheorem{theorem}{Theorem}
\newtheorem{lemma}[theorem]{Lemma}
\newtheorem{corollary}[theorem]{Corollary}
\theoremstyle{definition}
\newtheorem{assumption}{Assumption}
\newtheorem{definition}{Definition}
\newtheorem{remark}{Remark}

\newcommand{\bG}{\mathbb{G}}
\def\Holder{{H\"{o}lder}}

\newcommand{\cm}{\mathcal{M}}
\newcommand{\ch}{\mathcal{H}}
\newcommand{\bigO}{\mathrm{O}} %
\newcommand{\smallo}{\mathrm{o}} %

\renewcommand{\P}{\mathbb{P}}
\newcommand{\op}{\smallo_{p}}
\newcommand{\E}{\mathbb{E}}

\newcommand{\rE}{\mathrm{E}}

\newcommand{\prns}[1]{\left(#1\right)}
\newcommand{\braces}[1]{\left\{#1\right\}}
\newcommand{\bracks}[1]{\left[#1\right]}
\newcommand{\sumT}{\sum_{t=0}^T}
\newcommand{\abs}[1]{\left|#1\right|}
\newcommand{\Rl}{\mathbb{R}}

\newcommand{\RN}[1]{%
  \textup{\uppercase\expandafter{\romannumeral#1}}%
}

\usepackage{color,graphicx,lscape,longtable}

\def\boxit#1{\vbox{\hrule\hbox{\vrule\kern6pt\vbox{\kern6pt#1\kern6pt}\kern6pt\vrule}\hrule}}

\def\edit{}
\def\blockedit{}

\oddsidemargin .25in
\evensidemargin .25in
\marginparwidth 0.07 true in
\topmargin -0.5in
\addtolength{\headsep}{0.25in}
\textheight 8.5 true in
\textwidth 6.0 true in
\widowpenalty=10000
\clubpenalty=10000

\begin{document}

\title{
Double Reinforcement Learning for
Efficient\\Off-Policy Evaluation in Markov Decision Processes
}

\author{Nathan Kallus\\
       Department of Operations Research and Information Engineering and Cornell Tech\\
       Cornell University\\
       New York, NY 10044, USA
       \and
        Masatoshi Uehara\thanks{\url{uehara_m@g.harvard.edu}}\\
       Department of Statistics\\
       Harvard University\\
       Cambridge, MA 02138, USA}

\date{}
\maketitle

\begin{abstract}
Off-policy evaluation (OPE) in reinforcement learning allows one to evaluate novel decision policies without needing to conduct exploration, which is often costly or otherwise infeasible. 
We consider for the first time the semiparametric efficiency limits of OPE in Markov decision processes (MDPs), where actions, rewards, and states are memoryless.
We show existing OPE estimators may fail to be efficient in this setting. We develop a new estimator based on cross-fold estimation of $q$-functions and marginalized density ratios, which we term double reinforcement learning (DRL). We show that DRL is efficient when both components are estimated at fourth-root rates and is also doubly robust when only one component is consistent.
We investigate these properties empirically and demonstrate the performance benefits due to harnessing memorylessness.

\vspace{\baselineskip}\noindent\textbf{Keywords}: Off-policy evaluation, Markov decision processes, Semiparametric efficiency, Double machine learning
\end{abstract}

\section{Introduction}

Off-policy evaluation (OPE) is the problem of estimating mean rewards of a given policy (target policy) for a sequential decision-making problem using data generated by the log of another policy (behavior policy). OPE is a key problem in reinforcement learning (RL) \citep{Precup2000,Mahmood20014,Li2015,thomas2016,jiang,Munos2016,Liu2018} and it finds applications as varied as healthcare \citep{MurphyS.A.2003Odtr} and education \citep{Mandel2014}.
Because data can be scarce, it is crucial to use all available data efficiently, while at the same time using flexible, nonparametric estimators that avoid misspecification error.

\begin{figure}[t!]%
\centering%
\begin{tikzpicture}[%
>=latex',node distance=2cm, minimum height=0.75cm, minimum width=0.75cm,
state/.style={draw, shape=circle, draw=green, fill=green!10, line width=0.5pt},
action/.style={draw, shape=rectangle, draw=red, fill=red!10, line width=0.5pt},
reward/.style={draw, shape=rectangle, draw=blue, fill=blue!10, line width=0.5pt}
]
\node[state] (S0) at (0,0) {$s_0$};
\node[action,right of=S0] (A0) {$a_0$};
\node[reward,right of=A0] (R0) {$r_0$};
\node[state,right of=R0] (S1) {$s_1$};
\node[action,right of=S1] (A1) {$a_1$};
\node[reward,right of=A1] (R1) {$r_1$};
\node[state,right of=R1] (S2) {$s_2$};
\draw[->] (S0) -- (A0);
\draw[->] (S0) edge[bend left=30] (R0);
\draw[->] (A0) -- (R0);
\draw[->] (S0) edge[bend left=30] (S1);
\draw[->] (A0) edge[bend left=30] (S1);
\draw[->] (S1) -- (A1);
\draw[->] (S1) edge[bend left=30] (R1);
\draw[->] (A1) -- (R1);
\draw[->] (S1) edge[bend left=30] (S2);
\draw[->] (A1) edge[bend left=30] (S2);
\draw[->] (S0) edge[bend left=30] (A1);
\draw[->] (S0) edge[bend left=30] (R1);
\draw[->] (A0) edge[bend left=30] (R1);
\draw[->] (S0) edge[bend left=30] (S2);
\draw[->] (A0) edge[bend left=30] (S2);
\end{tikzpicture}\caption{$\cm_1$: Non-Markov decision process (NMDP)}\label{fig:m1}%
\vspace{4em}%
\begin{tikzpicture}[%
>=latex',node distance=2cm, minimum height=0.75cm, minimum width=0.75cm,
state/.style={draw, shape=circle, draw=green, fill=green!10, line width=0.5pt},
action/.style={draw, shape=rectangle, draw=red, fill=red!10, line width=0.5pt},
reward/.style={draw, shape=rectangle, draw=blue, fill=blue!10, line width=0.5pt}
]
\node[state] (S0) at (0,0) {$s_0$};
\node[action,right of=S0] (A0) {$a_0$};
\node[reward,right of=A0] (R0) {$r_0$};
\node[state,right of=R0] (S1) {$s_1$};
\node[action,right of=S1] (A1) {$a_1$};
\node[reward,right of=A1] (R1) {$r_1$};
\node[state,right of=R1] (S2) {$s_2$};
\draw[->] (S0) -- (A0);
\draw[->] (S0) edge[bend left=30] (R0);
\draw[->] (A0) -- (R0);
\draw[->] (S0) edge[bend left=30] (S1);
\draw[->] (A0) edge[bend left=30] (S1);
\draw[->] (S1) -- (A1);
\draw[->] (S1) edge[bend left=30] (R1);
\draw[->] (A1) -- (R1);
\draw[->] (S1) edge[bend left=30] (S2);
\draw[->] (A1) edge[bend left=30] (S2);
\end{tikzpicture}\caption{$\cm_2$: Markov decision process (MDP)}\label{fig:m2}%
\vspace{4em}%
\centering%
\begin{tabular}{ccccccc}
$O(T^2)$&
$=$&
$\operatorname{EffBd}(\cm_2)$&
$=$&
$\operatorname{EffBd}(\cm_{2b})$&
$>$&
$\operatorname{EffBd}(\cm_{2q})$
\\[.25em]
&&\rotatebox[origin=c]{90}{$>$}&&\rotatebox[origin=c]{90}{$>$}
\\[.25em]
$2^{\Omega(T)}$&
$=$&
$\operatorname{EffBd}(\cm_1)$&
$=$&
$\operatorname{EffBd}(\cm_{1b})$&
$>$&
$\operatorname{EffBd}(\cm_{1q})$
\end{tabular}
\caption{Relationship between the semiparametric efficiency bounds in each model, which lower bound achievable mean-squared error. $\cm_1$, $\cm_2$ are, respectively, NMDP and MDP with unknown behavior policy. $\cm_{1b}$, $\cm_{2b}$ are with known behavior policy. $\cm_{1q}$, $\cm_{2q}$ are with parametric assumptions on $q$-functions. Inequalities are generically strict (\cref{cor:jensen}), and the MDP bound is generally polynomial in horizon length $T$ while the NMDP bound is generally exponential (see \cref{thm:horizon}).}\label{fig:order}%
\end{figure}

In this paper, our goal is to obtain an estimator for policy value with minimal asymptotic mean squared error under nonparametric models for the sequential decision process and behavior policy, that is, achieving the semiparametric efficiency bound \citep{bickel98}. Toward that end, we explore the efficiency bound and efficient influence function of the target policy value under two models: non-Markov decision processes (NMDP) and Markov decision processes (MDP).
The two models are illustrated in \cref{fig:m1,fig:m2} and defined precisely in \cref{sec:setup}.
While much work has studied efficient estimation under $\cm_1$ \citep{jiang,thomas2016,DudikMiroslav2014DRPE,Kallus2019IntrinsicallyES}, work on $\cm_2$ has been restricted to the parametric, finite-state-finite-action case \citep{jiang} and no globally efficient estimators have been proposed.
The two models are clearly nested and indeed we obtain that the efficiency bounds are generally strictly ordered (see \cref{fig:order}). In other words, if we correctly leverage the Markov property, we can obtain OPE estimators that are \emph{more efficient} than existing ones. This is quite important, given the practical difficulty of evaluation in long horizons \citep[see, \eg,][]{gottesman2019guidelines} and given that many RL problems are Markovian. \edit{In particular, our results show the NMDP efficiency bound is generally exponential in horizon length so that estimators that target the NMDP model \emph{necessarily} suffer from the curse of horizon, a phenomenon previously identified only for specific estimators. In contrast, the MDP efficiency bound, which we achieve, is generally polynomial in horizon.}

We propose the Double Reinforcement Learning (DRL) estimator, which is given by by cross-fold estimation and plug-in of the $q$- and density ratio functions into the efficient influence function for each model, which we derive for the first time here.
\edit{The name DRL is inspired by the Double Machine Learning estimation procedure of \citet{ChernozhukovVictor2018Dmlf}, which we leverage, and by our simultaneous use of two learning procedures: learning of $q$-functions and of density ratios.}
We show that DRL achieves the semiparametric efficiency bound globally even when these nuisances are estimated at slow fourth-root rates and without restricting to Donsker or bounded entropy classes, enabling the use of flexible machine learning method for the nuisance estimation in the spirit of \citet{ZhengWenjing2011CTME,ChernozhukovVictor2018Dmlf}.
Further, we show that DRL is consistent even if only some of the nuisances are consistently estimated, known as double robustness.
To the best of our knowledge, this is the first proposed estimator shown to be globally efficient for OPE in MDPs.

The organization of the paper is as follows. In \cref{sec:setup}, we define the OPE problem and our models. In \cref{sec:semiparam} we summarize semiparametric inference theory and in \cref{sec:litope} we review the literature on OPE. In \cref{sec:semiparambounds}, we calculate the efficient influence functions and efficiency bounds in each of our models. In \cref{sec:drl}, we propose the DRL estimator and prove its efficiency and robustness in each model, while also reviewing the inefficiency of other estimators. In \cref{sec:q-learning}, we discuss how to estimate $q$-functions in an off-policy manner to be used in DRL as well as the efficiency bound under parametric assumptions on the $q$-function. In \cref{sec:experiment}, we demonstrate the benefits of DRL empirically.

\subsection{Problem Setup}\label{sec:setup}

A (potentially) non-Markov decision process (NMDP) is given by a
sequence of state and action spaces $\mathcal S_t,\mathcal A_t$ for $t=0,1,\dots,T$, an initial state distribution $P_{s_0}(s_0)$, transition probabilities $P_{s_t}(s_t\mid \ch_{a_{t-1}})$ for $t=1,\dots,T$, and emission probabilities $P_{r_t}(r_t\mid \ch_{a_t})$ for $t=0,\dots,T$, where $\ch_{a_t}=(s_0,a_0,\dots,s_t,a_t)$ is the state-action history up to $a_t$.
A (non-anticipatory) policy is a sequence of action probabilities $\pi_t(a_t\mid \ch_{s_t})$, where $\ch_{s_t}=(s_0,a_0,\dots,a_{t-1},s_t)$ is the state-action history up to $s_t$. Together, an NMDP and a policy define a joint distribution over trajectories $\ch=(s_0,a_0,r_0,s_1,a_1,r_1,\dots,s_T,a_T,r_T)$, given by the product $P_{s_0}(s_0)\pi_0(a_0\mid \ch_{s_0})P_{r_0}(r_0\mid \ch_{a_0})P_{s_1}(s_1\mid \ch_{a_0})\cdots P_{r_T}(r_T\mid \ch_{a_T})$. The dependence structure of such a distribution is illustrated in \cref{fig:m1}. We denote this distribution by $P_\pi$ and expectations in this distribution by $\rE_\pi$ to highlight the dependence on $\pi$.

A \edit{(time-varying)} Markov decision process (MDP) is an NMDP where transitions and emissions \edit{depend only on the recent state and action and the time index $t$}, $P_{s_t}(s_t\mid \ch_{a_{t-1}})=P_{s_t}(s_t\mid s_{t-1},a_{t-1})$ and $P_{r_t}(r_t\mid \ch_{a_{t}})=P_{r_t}(r_t\mid s_{t},a_{t})$, and where we restrict to policies that depend only on the recent state, $\pi_t(a_t\mid \ch_{s_t})=\pi_t(a_t\mid s_t)$. MDPs have the important property that they are memoryless: given $s_t$, the trajectory starting at $s_t$ is independent of the past trajectory, so that $s_t$ fully captures the current state of the system. This imposes a stricter dependence structure, which is illustrated in \cref{fig:m2}. In particular, connections between variables with different time indices occurs only via $s_t$.

Our ultimate goal is to estimate the average cumulative reward of a policy,
$$
\rho^\pi=\rE_\pi\bracks{\sumT r_t}.
$$
The quality and value functions ($q$- and $v$-functions) are defined as the following conditional averages of the cumulative reward to go, respectively:
\begin{align*}
q_t(\ch_{a_t})=\rE_\pi\bracks{\sum_{k=t}^T r_k\mid \ch_{a_t}},\qquad
v_t(\ch_{s_t})=\rE_\pi\bracks{\sum_{k=t}^T r_k\mid \ch_{s_t}}=\rE_\pi\bracks{q_t(\ch_{a_t})\mid \ch_{s_t}}.
\end{align*}
Note that the very last expectation is taken only over $a_t\sim\pi_t(a_t\mid \ch_{s_t})$.
For MDPs, we have $q_t(\ch_{a_t})=q_t(s_t,a_t)$ and $v_t(\ch_{s_t})=v_t(s_t)=\rE_{\pi}\bracks{q_t(s_t,a_t)\mid s_t}$, where again the last expectation is taken only over $a_t\sim\pi_t(a_t\mid {s_t})$. For brevity, we define the random variables $q_t=q_t(\ch_{a_t})$, $v_t=v_t(\ch_{s_t})$.

The off-policy evaluation (OPE) problem is to estimate the average cumulative reward of a given (known) target evaluation policy, $\pi^e$, using $n$ observations of trajectories $\mathcal{D}=\{\ch^{(1)},\dots,\ch^{(n)}\}$ independently generated by the distribution $P_{\pi^b}$ induced by using another policy, $\pi^{b}$, in the same decision process. This latter policy, $\pi^{b}$, is called the behavior policy and it may be known or unknown. 

A model for the data generating process $P_{\pi}$ of $\mathcal{D}$ is given by the set of products $P_{s_0}(s_0)\pi^b_0(a_0\mid \ch_{s_0})P_{r_0}(r_0\mid \ch_{a_0})P_{s_1}(s_1\mid \ch_{a_0})\cdots P_{r_T}(r_T\mid \ch_{a_T})$ over some possible values for each probability distribution in the product. 
We let $\cm_1$ denote the nonparametric model where each distribution is unknown and free. We let $\cm_{1b}$ denote the submodel of $\cm_1$ where $\pi^b$ is known and fixed. We let $\cm_{1q}$ denote any submodel of $\cm_1$ where the functions $q_t(\ch_{a_t})$ are restricted parametrically for $t=0,\dots,T$. We let $\cm_2,\,\cm_{2b},\,\cm_{2q}$ denote the corresponding models where both the decision process and the behavior policy are restricted to be Markovian.
Since $\pi^e$ is given,
the parameter of interest, $\rho^{\pi^e}$, is a function of just the part that specifies the decision process (initial state, transition, and emission probabilities).

To streamline notation, when no subscript is denoted, all  expectations $\mathrm{E}[\cdot]$ and variances $\mathrm{var}[\cdot]$ are taken with respect to the behavior policy, $\pi^b$. 
At the same time, all $v$- and $q$-functions are for the target policy, $\pi^e$.
The $L^{p}$-norm is defined as $\|g\|_{p}=\rE[\abs{g}^{p}]^{1/p}$. 
For any function of trajectories, we define its empirical average as $$\textstyle\mathrm{E}_{n}[f(\mathcal \ch)]=n^{-1}\sum_{i=1}^nf(\mathcal \ch^{(i)}).$$

We denote the density ratio at time $t$ between the target and behavior policy by
$$
\eta_t(\ch_{a_{t}})=\frac{\pi^{e}_{t}(a_t\mid\ch_{s_t})}{\pi^{b}_{t}(a_t\mid\ch_{s_t})}.
$$
We denote the cumulative density ratio up to time $t$ and the marginal density ratio at time $t$ by, respectively,
\begin{align*}
\lambda_t(\ch_{a_{t}})&=\prod_{k=0}^t\eta_t(\ch_{a_{k}}),\qquad
\mu_t(s_t,a_t)=\frac{p_{\pi^{e}_t}(s_t,a_t)}{p_{\pi^{b}_t}(s_t,a_t)},
\end{align*}
where $p_{\pi_t}(s_t,a_t)$ denotes the \emph{marginal} distribution of $s_t,a_t$ under $P_\pi$. Note that under $\cm_2$, $\eta_t(\ch_{a_t})=\eta_t(a_t,s_t)$. Again, for brevity we define the variables $\eta_t=\eta_t(\ch_{a_{t}})$, $\lambda_t=\lambda_t(\ch_{a_{t}})$, $\mu_t=\mu_t(s_t,a_t)$.

We \edit{will often} assume the following:
\begin{assumption}[Sequential overlap]\label{asm:overlap}
\edit {The density ratios $\eta_t,\mu_t$ satisfy $0\leq\eta_t\leq C,0\leq \mu_t\leq C'$ for all $t=0,\dots,T$.}
\end{assumption}
\begin{assumption}[Bounded rewards]\label{asm:bddrewards}
The reward $r_t$ satisfies $0 \leq r_t \leq R_{\mathrm{max}}$ for all $t=0,\dots,T$. 
\end{assumption}
\edit{\Cref{asm:overlap} requires that every action supported by the evaluation policy is also supported by the behavior policy, else the evaluation policy may induce state-action combinations that we cannot possibly ever see in the data. The assumption is standard in causal inference. \Cref{asm:bddrewards} focuses on bounded rewards, which are common in reinforcement learning. Both assumptions can be relaxed to $L_p$-norm bounds on the above variables instead of boundedness (see \cref{remark:normassumptions}).}

\subsection{Summary of Semiparametric Inference}\label{sec:semiparam}

We briefly review semiparametric inference theory as it pertains to the relevance of our results. 
\edit{We provide a more complete review in \cref{appendix:semiparam}, while providing an accessible casual introduction here sufficient for the reader to understand the nature of our efficiency results.}
For a complete textbook presentation, we refer the reader to
\edit{\cite{bickel98,TsiatisAnastasiosA2006STaM,KosorokMichaelR2008ItEP,VaartA.W.vander1998As,LaanMarkJ.vanDer2003UMfC}}.

Suppose we have a model $\mathcal{M}$ for the generating process of the iid data $\ch^{(1)},\dots,\ch^{(n)}$, that is, a (potentially nonparametric) set of possible distributions for $\ch^{(i)}$ that also contains the true distribution $F\in\mathcal M$ that generated the data. Consider a 
(scalar) parameter of interest $R:\mathcal M\to\Rl$. 
\edit{Given an estimator $\hat R$ (or rather a sequence of estimators), its limiting law is the distribution limit of $\sqrt{n}(\hat R-R(F))$, and the \emph{asymptotic mean-squared error} (AMSE) is the second moment of the limiting law, which in turn lower bounds the scaled limit of the mean-squared error (MSE), $\lim n\E[(\hat R-R(F))^2]$, by the portmanteau lemma.}

\edit{Every gradient of $R(\cdot)$ at $F\in\mathcal M$ (for paths in the model $\mathcal M$) is an $F$-measurable (scalar) random variable, that is, $\phi(\ch)$ with $\ch\sim F$ for some function $\phi(\cdot)$.
Each such function is called an \emph{influence function}, and the influence function $\phi_{\mathrm{eff}}(\cdot)$ with the smallest $L^2$ norm is 
is called the \emph{efficient influence function} because
$$\operatorname{EffBd}(\mathcal M)=\rE_{\ch\sim F}\bracks{\phi_{\mathrm{eff}}^2(\ch)}$$ bounds below the AMSE of \emph{any} estimator that is regular with respect to the model $\cm$.\footnote{\edit{Note that $\operatorname{EffBd}(\mathcal M)$ depends on the estimand $R(\cdot)$, the model $\mathcal M$, and the instance $F$ in that model. We emphasize foremost the dependence on the model to highlight the differences when we change the model from NMDP to MDP, while our estimand is always the target policy value.}}
Regular estimators are roughly those that have risk that is invariant to vanishing perturbations to the data generating process $F$ (that remain inside the model $\mathcal M$), which is a desirable property else the estimator may be unreasonably sensitive to undetectable changes.\footnote{\edit{See \cref{appendix:semiparam} and \citealp[Ch.~25]{VaartA.W.vander1998As} for precise definitions of path derivatives and regular estimators.}}
Essentially, regular estimators with respect to $\mathcal M$ are those that would work for estimating $R(F)$ for \emph{any} instance $F\in\mathcal M$. Thus, this lower bound applies \emph{per-instance} for any estimator that, in a sense, works in the model $\mathcal M$. Note we have $\operatorname{EffBd}(\mathcal M')\leq \operatorname{EffBd}(\mathcal M)$ whenever $F\in\mathcal M'\subset\mathcal M$, and that these may be different even though $F\in\mathcal M'$, as the set of estimators that work in $\mathcal M'$ is potentially larger. For example, the lower bound in the NMDP model is still larger than (or equal to) the bound in MDP model, even if considered at a specific instance that happens to be an MDP.}

\edit{There are several further interpretations of this lower bound. By the portmanteau lemma, the lower bound on limiting law also means that $\operatorname{EffBd}(\mathcal M)$ lower bounds the limit of the MSE for any regular $\hat R$, namely $$\liminf_{n\to\infty}n\E[(\hat R-R(F))^2]\geq \operatorname{EffBd}(\mathcal M).$$ Moreover, standard results \citep[\eg,][Thm.~25.21]{VaartA.W.vander1998As} establish that the lower bound also applies to \emph{all} estimators (not just regular ones) in a local minimax fashion: for \emph{any} estimator, $n$ times the worst-case MSE in a $1/\sqrt{n}$-sized $\mathcal M$-neighborhood around $F$ has a limit infimum of at least $\operatorname{EffBd}(\mathcal M)$. Here the ambient model $\mathcal M$ is relevant in determining the bound as the local worst-case neighborhoods are restricted to remain inside the model. 
When $\mathcal M$ is a fully parametric model the semiparametric efficiency bound is actually the same as the Cram\'er-Rao bound. In fact, the semiparametric efficiency bound corresponds to the supremum of the Cram\'er-Rao bounds over all regular parametric submodels $F\in\mathcal M_{\text{para}}\subset \mathcal M$. Thus, it also describes the best-achievable behavior by nonparametric estimators that work in \emph{every} parametric submodel.}

In these senses, $\operatorname{EffBd}(\mathcal M)$, known as the semiparametric efficiency bound, lower bounds the achievable MSE in estimating $R$ on the model $\mathcal M$. If we can find an estimator whose limiting law has zero mean and variance $\operatorname{EffBd}(\mathcal M)$ then it must have the smallest-possible (asymptotic) MSE, and such estimators are known as (asymptotically) \emph{efficient}. 
\edit{Moreover, all efficient regular estimators must satisfy}
\begin{align*}
\sqrt{n}(\hat{R}-R(F))=\frac{1}{\sqrt{n}}\sum_{i=1}^{n}\phi_{\mathrm{eff}}(\ch^{(i)})+\smallo_{p}(1),
\end{align*}
that is, they are asymptotically linear with \edit{efficient} influence function $\phi_{\mathrm{eff}}$.
This suggests an estimation strategy: try to approximate $\hat\psi(\ch)\approx\phi_{\mathrm{eff}}(\ch)+R(F)$ and use $\hat R=\frac1n\sum_{i=1}^n\hat\psi(\ch^{(i)})$. Done appropriately, this can provide an efficient estimate. Therefore, deriving the efficient influence function is important both for computing the semiparametric efficiency bound and for coming up with good estimators.

\subsection{Summary of Literature on OPE}\label{sec:litope}

OPE is a central problem in both RL and in closely related problems such as dynamic treatment regimes \citep[DTR;][]{MurphySA2001MMMf}. While the NMDP model $\cm_1$ is commonly the one assumed in the causal inference literature in the context of marginal structural model estimation \citep{robins2000marginal,robins2000marginalb} and DTRs \citep{MurphySA2001MMMf,chakraborty2013statistical,ZhangBaqun2013Reoo},\footnote{\edit{OPE is equivalent to estimating the total treatment effect of a DTR in a causal inference setting. Although we do not explicitly use counterfactual notation (either potential outcomes or \emph{do}-calculus), if we assume the usual sequential ignorability conditions \citep{MurphySA2001MMMf,ErtefaieAshkan2014CDTR,LuckettDanielJ.2018EDTR}, the estimands we consider are the same and our results immediately apply.}} in RL one often assumes that the MDP model $\cm_2$ holds. Nonetheless, with some exceptions that we review below, OPE methods in RL have largely not leveraged the additional independence structure of $\cm_2$ to improve estimation, and in particular, the effect of this structure on efficiency has not previously been studied and no efficient evaluation method has been proposed.

Methods for OPE can be roughly categorized into three types. The first approach is the \emph{direct method} (DM), wherein we directly estimate the $q$-function and use it to directly estimate the value of the target evaluation policy. \edit{For example, one can use model-based estimates \citep{mannor2007bias} or estimate the $q$-function directly
using fitted LSTDQ or more general $q$-iteration \citep{LagoudakisMichail2004LPI,antos2008learning, le2019batch} (we further review estimation of $q$-functions in \cref{sec:q-learning}}. Once we have an estimate $\hat q_0$ of the first $q$-function, the DM estimate is simply $\hat\rho^{\pi^e}_{\mathrm{DM}}=\rE_n\bracks{\rE_{\pi^e}\bracks{\hat q_0(s_0,a_0)\mid s_0}}$, where the inner expectation is simply over $a_0\sim\pi^e(\cdot\mid s_0)$ and is thus computable as a sum or integral over a known measure and the outer expectation is simply an average over the $n$ observations of $s_0$. \edit{Recall we define all $q$-functions to be with respect to $\pi^e$.} For DM, we can leverage the structure of $\cm_2$ by simply restricting $q$-functions to be Markovian. However, DM can fail to be efficient even under $\cm_1$ unless $q$-functions are parametric (and correctly specified) or extremely smooth (as shown by \citealp{HahnJinyong1998OtRo} but only in the $T=0$ case). DM is also not robust in that, if $q$-functions are inconsistently estimated, the estimate will be inconsistent.

The second approach is \emph{importance sampling} (IS), which averages the data weighted by the density ratio of the evaluation and behavior policies. Given estimates $\hat{\lambda}_t$ of the cumulative density ratios (or, letting $\hat{\lambda}_t=\lambda_t$ if the behavior policy is known), the IS estimate is simply $\hat\rho^{\pi^e}_{\mathrm{IS}}=\rE_n\bracks{\sum_{t=0}^T\hat{\lambda}_tr_t}$. (An alternative but higher-variance IS estimator is $\rE_n\bracks{\hat{\lambda}_T\sum_{t=0}^Tr_t}$.) When behavior policy is known, IS is unbiased and consistent, but its variance tends to be large due to extreme weights. In particular, under $\cm_1$, IS with $\hat{\lambda}_t=\lambda_t$ is known to be inefficient \citep{hirano2003efficient}, which implies it must be inefficient under $\cm_2$ as well. A common variant of IS is the self-normalized estimate $\sum_{t=0}^T\frac{\rE_n\bracks{\hat{\lambda}_tr_t}}{\rE_n\bracks{\hat{\lambda}_t}}$ \citep{Swaminathan2015b}, which trades off some bias for variance but does not make IS efficient.

The third approach is the \emph{doubly robust} (DR) method, which combines DM and IS and is given by adding the estimated $q$-function as a control variate \citep{scharfstein99,DudikMiroslav2014DRPE,jiang}. 
The DR estimate has the form $\hat\rho^{\pi^e}_{\mathrm{DR}}=\rE_n\bracks{\sum_{t=0}^T\prns{\hat{\lambda}_t(r_t-\hat q_t)+\hat{\lambda}_{t-1}\rE_{\pi^e}\bracks{\hat q_t|s_t}}}$. 

DR is colloquially known to be efficient under $\cm_1$ but no precise result is available.
When state and action spaces are finite, the model $\cm_1$ is necessarily parametric, and, under this parametric model, \citet{jiang} study the Cram\'er-Rao lower bound and observe that an infeasible DR estimator that uses oracle nuisance values instead of estimates, $\hat q_t=q_t$ and $\hat \lambda_t=\lambda_t$, would achieve the bound. For completeness, we derive precisely the more general semiparametric efficiency bound under $\cm_1$ (\cref{thm:fin_nnonpara}) and show that two (feasible) variants of the standard DR estimate are semiparametrically efficient, either using sample splitting with a rate condition (\cref{thm:double_m1}) or without sample splitting with a Donsker condition (\cref{thm:double_m1_sam}). \citet{jiang} also study parametric Cram\'er-Rao lower bounds under finite action and state space in the MDP model $\cm_2$, but no efficient estimator, whether parametric or nonparametric, has been proposed.
See also \cref{remark:jianNMDPbound,remark:jianMDPbound}.
There is a significant gap to deriving the semiparametric bound, which generalizes these results to more general action and state spaces and nonparametric models. More importantly, our derivation yields the efficient influence function, which provides a way to construct an efficient estimator under $\cm_2$.

Many variations of DR have been proposed. \citet{thomas2016} propose both a self-normalized variant of DR and a variant blending DR with DM when density ratios are extreme. \citet{Chow2018} propose to optimize the choice of $\hat q_t$ to minimize a variance estimate for DR rather than use a plug-in value. \citet{Kallus2019IntrinsicallyES} propose a DR estimator that achieves local efficiency, has certain stability properties enjoyed by self-normalized IS, and at the same time is guaranteed to have asymptotic MSE that is never worse than both DR, IS, and self-normalized IS.

However, all of the aforementioned IS and DR estimators do not exploit MDP structure and, in particular, will \emph{fail} to be efficient under $\cm_2$. 
\edit{Recently, in the finite-state-space setting \citet{XieTengyang2019OOEf} studied an IS-type estimator that exploits MDP structure by replacing density ratios with marginalized density ratios, estimated by a recursive formula.}
However, this estimator is also {not} efficient, even in the finite tabular setting. Remark 4 of \citet{XieTengyang2019OOEf} points out the inefficiency of the estimator proposed therein.

\section{Semiparametric Inference for Off-Policy Evaluation}\label{sec:semiparambounds}

In this section, we derive the efficiency bounds and efficient influence functions for $\rho^{\pi^{e}}$ under the models $\cm_{1}$, $\cm_{1b}$, $\cm_{2}$, and $\cm_{2b}$. Recall that the former two models are NMDP and the latter two are MDP and that the second and fourth assume a known behavior policy.

\subsection{Semiparametric Efficiency in Non-Markov Decision Processes}
\label{sec:m1}

First, we consider the NMDP models $\cm_{1}$ and $\cm_{1b}$.
We do this mostly for the sake of completeness since, while the influence function we derive below for the NMDP model appears as a central object in the structure of various previously proposed doubly-robust OPE estimators for RL \citep[\eg, among others,][]{jiang,DudikMiroslav2014DRPE,Chow2018,Kallus2019IntrinsicallyES,thomas2016}, \edit{these do not establish it as the efficient influence function in the NMDP model or derive the semiparametric efficiency bound, with the exception of the concurrent \citet{pmlr-v97-bibaut19a}. We note that in contrast, the influence function we derive for the MDP model in the next section appears to be novel and leads to new, more efficient estimators.}
\begin{theorem}[Efficiency bound under $\cm_1$]
\label{thm:fin_nnonpara}
The efficient influence function of $\rho^{\pi^{e}}$ under $\cm_1$ is
\begin{align}
\label{eq:m1_nonpara}
\phi^{\cm_1}_{\mathrm{eff}}(\ch)=-\rho^{\pi^{e}}+
\sum_{t=0}^{T}\prns{\lambda_{t}\prns{r_t-q_{t}}+\lambda_{t-1} 
v_t
}.
\end{align}

The semiparametric efficiency bound under $\cm_1$ is 
\begin{align}
\label{eq:m1_bound}
\operatorname{EffBd}(\cm_1)=
    \mathrm{var}(v_0)+\sum_{t=0}^{T}\mathrm{E}\bracks{\lambda^{2}_{t} \mathrm{var}\prns{r_{t}+v_{t+1}\mid\ch_{a_{t}}}}
    , 
\end{align}
where $v_{T+1}=0$.

Under $\mathcal{M}_{1b}$, the efficient influence function and bound are the same. 
\end{theorem}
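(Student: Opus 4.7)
The approach is to (i) identify the tangent space of $\cm_1$ via the factorization of $F$, (ii) rewrite the candidate influence function in a martingale-like form to make mean-zero and the variance formula cheap, (iii) verify the pathwise derivative identity on each factor-score separately, and (iv) derive the $\cm_{1b}$ case from orthogonality of the EIF to behavior-policy scores.

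First I would use the factorization $F = P_{s_0}\prod_t \pi^b_t P_{r_t} P_{s_{t+1}}$ with variation-free components to decompose the tangent space at $F\in\cm_1$ as the orthogonal direct sum $T(\cm_1) = T_{s_0} \oplus \bigoplus_t (T_{\pi^b_t} \oplus T_{r_t} \oplus T_{s_{t+1}})$, each summand being the closed linear span of conditional-mean-zero $L^2$ functions of the corresponding conditional. Nonparametricity of $\cm_1$ makes this direct sum all of $L^2_0(F)$, so any mean-zero element that satisfies the pathwise derivative identity is automatically the unique EIF.

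Next I would apply a telescoping rearrangement to rewrite
\begin{align*}
\phi^{\cm_1}_{\mathrm{eff}}(\ch) = -\rho^{\pi^e} + v_0 + \sum_{t=0}^T \lambda_t D_t,
\end{align*}
where $D_t = r_t + v_{t+1} - q_t$ (with $\lambda_{-1}=1$ and $v_{T+1}=0$). The Bellman identity $q_t = \rE_{\pi^e}[r_t + v_{t+1}\mid \ch_{a_t}]$ combined with policy-invariance of transitions and emissions gives $\rE[D_t\mid \ch_{a_t}] = 0$; together with the change-of-measure identity $\rE[\lambda_t g(\ch_{a_t})] = \rE_{\pi^e}[g(\ch_{a_t})]$ and $\rE_{\pi^e}[v_0] = \rho^{\pi^e}$ this yields $\rE[\phi^{\cm_1}_{\mathrm{eff}}] = 0$. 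Conditional orthogonality of the $\lambda_t D_t$ across $t$ (since for $s<t$ the product $\lambda_s D_s \lambda_t$ is determined by time $t$ and $\rE[D_t\mid \ch_{a_t}]=0$) and of each with the $s_0$-measurable $v_0$ then collapses the variance directly to $\var(v_0) + \sum_t \rE[\lambda_t^2 \var(r_t + v_{t+1}\mid \ch_{a_t})]$.

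For the pathwise derivative identity, I would check each factor-score in turn. For a score $s$ of $P_{s_0}$, $P_{r_t}$, or $P_{s_{t+1}}$, I would expand $\partial_\theta \rho^{\pi^e}(F_\theta)|_0$ as the $\pi^e$-expectation of the appropriate partial cumulative reward times $s$, and show that $\rE[\phi^{\cm_1}_{\mathrm{eff}}\, s]$ reduces to the same expression after using conditional-mean-zero properties of $s$ and of $D_t$ to kill all but one term and then a single application of the change of measure $\lambda_t \mapsto \pi^e$. For a score $s_{\pi^b_t}$ of the behavior policy, both sides vanish: the derivative because $\rho^{\pi^e}$ does not depend on $\pi^b$, and the inner product because iterated importance sampling gives $\rE[\prod_{j>t}\eta_j D_k\mid \ch_{a_t}] = \rE_{\pi^e}[D_k\mid \ch_{a_t}] = 0$ for $k\geq t$. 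Finally, for $\cm_{1b}$ the tangent space loses exactly the $T_{\pi^b_t}$ summands, but $\phi^{\cm_1}_{\mathrm{eff}}$ is already orthogonal to them and therefore lies in $T(\cm_{1b})$; being its own orthogonal projection there, it is also the EIF under $\cm_{1b}$, with identical variance bound. The main obstacle will be the importance-sampling bookkeeping in this last step, specifically showing $\rE[\lambda_k D_k\, s]=0$ when $k$ is strictly larger than the score's time index; this requires the change of measure applied one step at a time from the score's time up through $k$, while tracking whether $s$ is $\ch_{a_t}$- or $\ch_{s_t}$-measurable.
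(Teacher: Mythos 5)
There is a genuine gap at the foundation of your argument: the claim that nonparametricity of $\cm_1$ makes the tangent space all of $L^2_0(F)$ is false. The NMDP model restricts every transition $P_{s_{t+1}}(\cdot\mid\ch_{a_t})$, emission $P_{r_t}(\cdot\mid\ch_{a_t})$, and policy $\pi^b_t(\cdot\mid\ch_{s_t})$ to condition on the \emph{reward-free} history, so the model imposes the conditional independence of future states/actions/rewards from past rewards given past states and actions. The tangent space is therefore a proper subspace of $L^2_0(F)$: for instance, $u=(f(r_0)-\rE[f(r_0)\mid s_0,a_0])(g(s_1)-\rE[g(s_1)\mid s_0,a_0])$ is mean zero, orthogonal to every factor score, and nonzero in general. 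Consequently gradients are \emph{not} unique (the paper's own remarks note that the IS influence function $\sum_t\lambda_t r_t-\rho^{\epol}$ is a different gradient), and verifying the pathwise-derivative identity only shows your candidate is \emph{a} gradient, not the efficient one. The missing step — and the real content of the paper's proof — is showing that $\phi^{\cm_1}_{\mathrm{eff}}$ is orthogonal to the orthocomplement of the tangent space (the paper's spaces $A'_t$, $B''_t$, $C'_t$, whose elements are functions of the reward-inclusive history with their reward-free conditional means subtracted). That computation uses exactly the reward-exclusion structure and is not a formality.

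The same gap propagates to your $\cm_{1b}$ argument: orthogonality to the behavior-policy scores $T_{\pi^b_t}$ implies membership in $T(\cm_{1b})$ only if $T(\cm_1)=L^2_0(F)$, which it is not; you would need to first establish $\phi^{\cm_1}_{\mathrm{eff}}\in T(\cm_1)$ and then combine it with orthogonality to the $B_t$ components. Your remaining steps are sound and match the paper's: the telescoping rewrite $\phi^{\cm_1}_{\mathrm{eff}}=-\rho^{\epol}+v_0+\sum_t\lambda_t D_t$ with $\rE[D_t\mid\mathcal J_{a_t}]=0$, the resulting martingale-difference variance collapse (equivalent to the paper's law-of-total-variance computation), and the factor-by-factor verification of the pathwise derivative are all correct. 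But without the tangent-space membership argument the proof establishes only a valid influence function, not the efficiency bound.
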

\edit{Note that we do not assume \cref{asm:overlap,asm:bddrewards} in the above. The quantity $\operatorname{EffBd}(\cm_1)$ may or may not be finite. An infinite efficiency bound would imply the impossibility of consistent $\sqrt{n}$ estimation. Below in \cref{thm:horizon} we show how to bound $\operatorname{EffBd}(\cm_1)$ under \cref{asm:overlap,asm:bddrewards}.}

\begin{remark}
The efficient influence function and bound are both the same whether we know the behavior policy or not. Intuitively, this happens because the estimand $\rho^{\pi_e}$ does not in fact depend on behavior policy part of the data generating distribution, $P_{\pi^b}$, but only on the decision process parts (initial state, transition, and emission probabilities). 
This phenomenon mirrors the situation with knowledge of the propensity score in average treatment effect estimation in causal inference noted by \citet{HahnJinyong1998OtRo}.
\end{remark}

\begin{remark}\label{remark:jianNMDPbound}
When the action and state spaces are discrete, $\cm_1$ is necessarily a parametric model. In this discrete-space parametric model and with $r_t=0$ for $t\leq T-1$, Theorem~2 of \citet{jiang} derives the Cram\'er-Rao lower bound for estimating $\rho^{\pi^{e}}$. Because the semiparametric efficiency bound is the same as the Cram\'er-Rao lower bound for parametric models, the bound coincides with ours in this special discrete setting. 
\edit{\Cref{thm:fin_nnonpara} and the related result in \citet{pmlr-v97-bibaut19a}} are more general, establishing the limit on estimation in non-discrete, nonparametric settings and, moreover, establishes that the efficient influence function coincides with the structure of many doubly-robust OPE estimators used in RL (see references above).
\end{remark}

\begin{remark}
The efficient influence function $\phi_{\mathrm{eff}}^{\cm_1}$ has the oft-noted doubly robust structure. Specifically, 
\begin{align*}
\rho^{\pi^{e}}+\rE\bracks{\phi^{\cm_1}_{\mathrm{eff}}(\ch)}
&=\underbrace{\rE\left [\sum_{t=0}^{T}\lambda_{t}r_t\right]}_{=\rho^{\pi^{e}}}+\underbrace{\rE\left[\sum_{t=0}^{T}\prns{-\lambda_{t} q_t+\lambda_{t-1}v_{t}} \right]}_{=0}\\
&=  \underbrace{\rE\bracks{v_0}}_{=\rho^{\pi^{e}}} +\underbrace{\rE\bracks{\sum_{t=0}^{T}\lambda_{t}(r_t- q_t+v_{t+1} )}}_{=0}. 
\end{align*}
The first term in each line corresponds to a sequential IS estimator and direct method (DM), respectively. The second term in each line is a control variate, which remain mean zero even if we plug in different (\ie, wrong) $q$- and $v$-functions or density ratios, respectively.
In this sense, it is sufficient to estimate only one part of these for consistent OPE. We will leverage this in \cref{thm:doublerobust_m1} to achieve double robustness for DRL.
\end{remark}

\subsection{Semiparametric Efficiency in Markov Decision Processes}
\label{sec:m2}

Next, we derive the efficiency bound and efficient influence function for $\rho^{\pi^{e}}$ under the models $\cm_{2}$ and $\cm_{2b}$, \ie, when restricting to MDP structure. To our knowledge, not only have these never before been derived, the influence function we derive has also not appeared in any existing OPE estimators in RL. We recall that under $\cm_2$, we have $q_t=q_t(s_t,a_t)$ and $v_t=v_t(s_t)$.

\begin{theorem}[Efficiency bound under $\cm_2$]\label{thm:fin_nnonpara2}
The efficient influence function of $\rho^{\pi^{e}}$ under $\cm_2$ is 
\begin{align}
\label{eq:m2_eff}
 \phi^{\cm_2}_{\mathrm{eff}}(\ch)=-\rho^{\pi^{e}}+
 \sum_{t=0}^{T}\prns{\mu_t\prns{r_t-q_{t}}+\mu_{t-1}
 v_t}.
\end{align}

The semiparametric efficiency bound under $\cm_2$ is 
\begin{align}
\label{eq:m2_nonpara}
\operatorname{EffBd}(\cm_2)=
    \mathrm{var}(v_0)+\sum_{t=0}^{T}\mathrm{E}\bracks{\mu^{2}_{t} \mathrm{var}\prns{r_{t}+v_{t+1}\mid s_t,a_t}}
    , 
\end{align}
where $v_{T+1}=0$.

Under $\mathcal{M}_{2b}$, the efficient influence function and bound are the same.
\end{theorem}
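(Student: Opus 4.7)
My plan is to leverage \cref{thm:fin_nnonpara} and the fact that $\cm_2\subset\cm_1$ to obtain $\phi^{\cm_2}_{\mathrm{eff}}$ as the orthogonal projection of $\phi^{\cm_1}_{\mathrm{eff}}$ onto the $\cm_2$-tangent space $\mathcal T^{\cm_2}$ at the true MDP instance. Since $\rho^{\pi^e}$ is pathwise differentiable on both models and $\mathcal T^{\cm_2}\subset\mathcal T^{\cm_1}$, any $\cm_1$-gradient of $\rho^{\pi^e}$ induces a $\cm_2$-gradient by projection; because that projection lies inside $\mathcal T^{\cm_2}$, it must be the unique canonical (efficient) gradient on $\cm_2$. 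By the standard tangent-space calculus for nonparametric factorized likelihoods \citep{bickel98}, $\mathcal T^{\cm_2}$ decomposes as an orthogonal direct sum with summands $\mathcal T_{s_0}$, $\mathcal T_{\pi^b_t}$, $\mathcal T_{r_t\mid s_t,a_t}$, $\mathcal T_{s_{t+1}\mid s_t,a_t}$ of square-integrable functions of the relevant variables having the appropriate conditional mean zero; an analogous decomposition holds for $\mathcal T^{\cm_1}$ using history-level conditioning, and $\mathcal T^{\cm_2}$ embeds coordinatewise into $\mathcal T^{\cm_1}$ by the MDP property.

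To exploit this, I would first rewrite $\phi^{\cm_1}_{\mathrm{eff}}$ in a decomposed form. Telescoping (with $v_{T+1}=0$, $\lambda_{-1}=1$) together with the Bellman identity $q_t=\rE[r_t+v_{t+1}\mid\ch_{a_t}]$ (which holds under either policy because the reward and transition kernels are policy-free) gives
$$\phi^{\cm_1}_{\mathrm{eff}}=(v_0-\rho^{\pi^e})+\sum_{t=0}^T\lambda_t\prns{r_t-\rE[r_t\mid\ch_{a_t}]}+\sum_{t=0}^T\lambda_t\prns{v_{t+1}-\rE[v_{t+1}\mid\ch_{a_t}]},$$
whose three summand types land respectively in $\mathcal T^{\cm_1}_{s_0}$, $\mathcal T^{\cm_1}_{r_t\mid\ch_{a_t}}$, and $\mathcal T^{\cm_1}_{s_{t+1}\mid\ch_{a_t}}$. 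The key projection computation is the identity
$$\rE[\lambda_t\mid r_t,s_t,a_t]=\rE[\lambda_t\mid s_{t+1},s_t,a_t]=\rE[\lambda_t\mid s_t,a_t]=\mu_t,$$
where the first two equalities use the MDP conditional independences of $r_t$ and $s_{t+1}$ from $\ch_{a_{t-1}}$ given $(s_t,a_t)$, and the last is the integral identity $\rE_{\pi^b}[\lambda_t\mid s_t,a_t]=p_{\pi^e_t}(s_t,a_t)/p_{\pi^b_t}(s_t,a_t)$. This yields projections $\mu_t(r_t-\rE[r_t\mid s_t,a_t])$ and $\mu_t(v_{t+1}-\rE[v_{t+1}\mid s_t,a_t])$; reversing the telescope with the MDP Bellman identity $q_t=\rE[r_t\mid s_t,a_t]+\rE[v_{t+1}\mid s_t,a_t]$ reassembles $\phi^{\cm_2}_{\mathrm{eff}}=-\rho^{\pi^e}+\sum_t\prns{\mu_t(r_t-q_t)+\mu_{t-1}v_t}$. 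Squaring and using orthogonality of the three pieces yields the efficiency bound; combining the two conditional variances via $r_t\perp s_{t+1}\mid(s_t,a_t)$ under MDP produces the closed form $\var(v_0)+\sum_t\rE[\mu_t^2\var(r_t+v_{t+1}\mid s_t,a_t)]$.

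For the $\cm_{2b}$ equivalence, the decomposition shows $\phi^{\cm_2}_{\mathrm{eff}}$ has no component in $\mathcal T_{\pi^b_t}$, so it already lies in $\mathcal T^{\cm_{2b}}$; together with the observation that $\rho^{\pi^e}$ has zero pathwise derivative along behavior-policy scores (as it depends only on the decision-process factors), this certifies the same efficient influence function and bound. The main technical obstacle is rigorously establishing that this orthogonal projection of $\phi^{\cm_1}_{\mathrm{eff}}$ onto $\mathcal T^{\cm_2}$ indeed equals the $\cm_2$-efficient influence function: I need to verify that $\mathcal T^{\cm_1}_{r_t\mid\ch_{a_t}}\cap\{f(r_t,s_t,a_t)\}=\mathcal T^{\cm_2}_{r_t\mid s_t,a_t}$ (and its transition analogue) so that a summand living in one $\mathcal T^{\cm_1}$ coordinate contributes only to the matching $\mathcal T^{\cm_2}$ coordinate under projection, together with the careful conditional-independence reductions of $\lambda_t$ to $\mu_t$ used above.
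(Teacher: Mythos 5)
Your proposal is correct, but it takes a genuinely different route from the paper's proof. The paper derives the efficient influence function under $\cm_2$ \emph{de novo}: it differentiates $\rho^{\pi^e}$ along parametric submodels of the MDP factorization to exhibit a candidate gradient, explicitly computes the tangent space $\bigoplus_t(A_t\oplus B_t\oplus C_t)$ and its orthogonal complement, and then verifies by direct calculation (\cref{lem:orthgonal2}) that the candidate is orthogonal to the orthogonal complement, hence lies in the tangent space and is the canonical gradient; the bound is then obtained via the law of total variance. You instead start from $\phi^{\cm_1}_{\mathrm{eff}}$, note it remains a gradient at an MDP instance with respect to the smaller model, and project it onto $\mathcal T^{\cm_2}$ — precisely the route the paper sketches only as intuition in \cref{remark:diffinfluencefn} but does not execute. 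Your key computations check out: the telescoped decomposition of $\phi^{\cm_1}_{\mathrm{eff}}$ into the $A_0$, $C_t$, and $A_{t+1}$ coordinates is algebraically correct (using $\lambda_{-1}=1$, $v_{T+1}=0$, and the policy-free Bellman identity); the identities $\rE[\lambda_t\mid r_t,s_t,a_t]=\rE[\lambda_t\mid s_{t+1},s_t,a_t]=\rE[\lambda_t\mid s_t,a_t]=\mu_t$ follow from the MDP conditional independences and the change-of-measure property of $\lambda_t$; and the ``matching coordinate'' step you flag as the main obstacle is immediate from the mutual orthogonality of the $\cm_1$ coordinates together with the coordinatewise containments $A_t^{\cm_2}\subset A_t$, $C_t^{\cm_2}\subset C_t$. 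What your route buys is transparency: it explains the passage $\lambda_t\mapsto\mu_t$ as an $L^2$ projection, makes the absence of any behavior-policy-score component (and hence the $\cm_{2b}$ equivalence) automatic, and yields \cref{cor:jensen} as the Pythagorean identity for the projection. What the paper's route buys is self-containedness — it does not lean on \cref{thm:fin_nnonpara} or on the regularity bookkeeping needed to certify that an $\cm_1$-gradient restricts to an $\cm_2$-gradient. One small point of care if you write this up: your variance formula requires $\mathrm{cov}(r_t,v_{t+1}\mid s_t,a_t)=0$, which does hold here because the paper's factorization draws $s_{t+1}$ from $P_{s_{t+1}}(\cdot\mid s_t,a_t)$ without conditioning on $r_t$, but this should be stated explicitly since it is a modeling choice rather than a consequence of the Markov property alone.
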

\edit{Again, we do not assume \cref{asm:overlap,asm:bddrewards} in the above. Below in \cref{thm:horizon} we show how to bound $\operatorname{EffBd}(\cm_2)$ under \cref{asm:overlap,asm:bddrewards}.}

\begin{remark}\label{remark:jianMDPbound}
Again, when the action and state spaces are discrete, $\cm_2$ is necessarily a parametric model. In this discrete-space parametric model and with $r_t=0$ for $t\leq T-1$, Theorem~3 of \citet{jiang} derives the Cram\'er-Rao lower bound, which must (and does) coincide with ours in this setting. Again, our result is more general, covering nonparametric models and estimators, and, importantly, derives the efficient influence function, which we will use to construct the first globally efficient estimator for $\rho^{\pi^{e}}$ under $\cm_2$.
\end{remark}

\begin{remark}\label{remark:diffinfluencefn}
The difference between the efficient influence functions in the NMDP and MDP models, $\phi_{\mathrm{eff}}^{\cm_1}$ and $\phi_{\mathrm{eff}}^{\cm_2}$, is that (a) the cumulative density ratio $\lambda_{t}$ is replaced with the marginalized density ratio $\mu_{t}$ and (b) that $q$- and $v$-functions only depend on recent state and action rather than full past trajectory. Note that the latter difference is slightly hidden in our notation: in $\phi_{\mathrm{eff}}^{\cm_1}$, $q_t$ refers to $q_t(\ch_{a_t})$, while in $\phi_{\mathrm{eff}}^{\cm_2}$, $q_t$ refers to the much simpler $q_t(s_t,a_t)$.

Although the efficient influence function in \cref{thm:fin_nnonpara2} is derived \textit{de-novo} in the proof, which is the most direct route to a rigorous derivation, we can also use the geometry of influence functions to understand the result relative to \cref{thm:fin_nnonpara}. The efficient influence function is always given by projecting the influence function of any regular asymptotic linear estimator onto the tangent space \citep[Thm.~4.3]{TsiatisAnastasiosA2006STaM}. Under $\cm_2$, the function $\phi^{\cm_1}_{\mathrm{eff}}(\ch)$ from \cref{thm:fin_nnonpara} can be shown to still be an influence function of some regular asymptotic linear estimator in $\cm_2$. Projecting it onto the tangent space in $\cm_2$, where we have imposed the independence of past and future trajectories given intermediate state, can be seen to exactly correspond to the above marginalization over the past trajectory, explaining this structure of $\phi^{\cm_2}_{\mathrm{eff}}(\ch)$.
\end{remark}

\begin{remark}
The efficient influence function $\phi^{\cm_2}_{\mathrm{eff}}(\ch)$ also has a doubly robust structure. Specifically,
\begin{align*}
\rho^{\pi^{e}}+\rE\bracks{\phi^{\cm_2}_{\mathrm{eff}}(\ch)}
&=\underbrace{\rE\left [\sum_{t=0}^{T}\mu_{t}r_t\right]}_{=\rho^{\pi^{e}}}+\underbrace{\rE\left[\sum_{t=0}^{T}\prns{-\mu_{t} q_t+\mu_{t-1}v_{t}} \right]}_{=0}\\
&=  \underbrace{\rE\bracks{v_0}}_{=\rho^{\pi^{e}}} +\underbrace{\rE\bracks{\sum_{t=0}^{T}\mu_{t}(r_t- q_t+v_{t+1} )}}_{=0}. 
\end{align*}
The first term on the first line corresponds to the MIS estimator \citep{XieTengyang2019OOEf}. The first term on the second line corresponds to the DM estimator. The second term on each line corresponds to control variate terms. 
We will leverage this in \cref{thm:doublerobust_m2} to achieve double robustness for DRL.
\end{remark}

By comparing the efficiency bounds of \cref{thm:fin_nnonpara} and \cref{thm:fin_nnonpara2} and using Jensen's inequality, we can see that the Markov assumption reduces the efficiency bound, usually strictly so. 
\begin{theorem}\label{cor:jensen}
If $P_{\pi^b}\in\cm_2$ (\ie, the underlying distribution is an MDP), then $$\operatorname{EffBd}(\cm_2)\leq \operatorname{EffBd}(\cm_1).$$ Moreover, the inequality is strict if there exists $t\leq T$ such that both $\lambda_{t-1}$ and $r_{t-1}+v_{t}$ are not constant given ${s_{t-1},a_{t-1}}$.
\end{theorem}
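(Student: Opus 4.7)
The plan is to establish \cref{cor:jensen} by a term-by-term comparison of the two efficiency expressions from \cref{thm:fin_nnonpara,thm:fin_nnonpara2}. Under $\cm_2$ the term $v_0$ depends on history only through $s_0 = \ch_{s_0}$, so $\mathrm{var}(v_0)$ appears identically in both bounds, and it suffices to prove, for each $t=0,\dots,T$,
\begin{align*}
\mathrm{E}\bracks{\lambda_t^{2}\,\mathrm{var}\prns{r_t+v_{t+1}\mid \ch_{a_t}}}\;\ge\;\mathrm{E}\bracks{\mu_t^{2}\,\mathrm{var}\prns{r_t+v_{t+1}\mid s_t,a_t}}.
\end{align*}
Two building blocks drive the argument: (i) the identity $\mu_t=\mathrm{E}[\lambda_t\mid s_t,a_t]$, and (ii) the MDP reduction $\mathrm{var}(r_t+v_{t+1}\mid \ch_{a_t})=\mathrm{var}(r_t+v_{t+1}\mid s_t,a_t)$ under $P_{\pi^b}\in\cm_2$.

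For (i), I would note that $P_{\pi^e}$ and $P_{\pi^b}$ share the same initial-state, transition, and emission kernels, so the Radon--Nikodym derivative of the joint trajectory laws up to $a_t$ collapses to the product of policy ratios, $dP_{\pi^e}/dP_{\pi^b}(\ch_{a_t})=\prod_{k=0}^t\eta_k=\lambda_t$; marginalizing onto the $(s_t,a_t)$-coordinates under $P_{\pi^b}$ via tower then gives $\mu_t=p_{\pi^e_t}(s_t,a_t)/p_{\pi^b_t}(s_t,a_t)=\mathrm{E}[\lambda_t\mid s_t,a_t]$. For (ii), under $\cm_2$ the pair $(r_t,s_{t+1})$ is conditionally independent of the remainder of $\ch_{a_t}$ given $(s_t,a_t)$, and $v_{t+1}$ depends on history only through $s_{t+1}$, so the conditional distribution of $r_t+v_{t+1}$ given $\ch_{a_t}$ equals the conditional distribution given $(s_t,a_t)$. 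With (i) and (ii) in hand, the tower property together with a conditional Jensen inequality (using that $\mathrm{var}(r_t+v_{t+1}\mid s_t,a_t)\ge 0$ is $\sigma(s_t,a_t)$-measurable) gives
\begin{align*}
\mathrm{E}\bracks{\lambda_t^{2}\,\mathrm{var}\prns{r_t+v_{t+1}\mid \ch_{a_t}}}
&=\mathrm{E}\bracks{\mathrm{E}[\lambda_t^{2}\mid s_t,a_t]\,\mathrm{var}\prns{r_t+v_{t+1}\mid s_t,a_t}}\\
&\ge\mathrm{E}\bracks{\mathrm{E}[\lambda_t\mid s_t,a_t]^{2}\,\mathrm{var}\prns{r_t+v_{t+1}\mid s_t,a_t}}\\
&=\mathrm{E}\bracks{\mu_t^{2}\,\mathrm{var}\prns{r_t+v_{t+1}\mid s_t,a_t}}.
\end{align*}
Summing over $t$ yields the weak inequality.

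For the strict inequality, the gap at time $t$ simplifies to $\mathrm{E}[\mathrm{var}(\lambda_t\mid s_t,a_t)\,\mathrm{var}(r_t+v_{t+1}\mid s_t,a_t)]$, a non-negative quantity that is strictly positive whenever both conditional variances are positive on a common set of positive $P_{\pi^b}$-measure; the hypothesis of the theorem at index $t-1$ supplies exactly this at the $(t{-}1)$-th summand. The only genuinely substantive step is the identity in (i), which is where the Markov assumption enters and which requires care in identifying $\lambda_t$ with the full joint-history likelihood ratio (valid because the decision-process kernels are shared between the two laws); the rest is tower/Jensen bookkeeping plus a direct conditional-independence read-off from the MDP factorization.
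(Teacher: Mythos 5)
Your proposal is correct and follows essentially the same route as the paper's proof: a term-by-term comparison that applies the tower property and conditional Jensen's inequality to replace $\mathrm{E}[\lambda_t^2\mid s_t,a_t]$ by $\mu_t^2=\mathrm{E}[\lambda_t\mid s_t,a_t]^2$, with the gap identified as $\mathrm{E}[\mathrm{var}(\lambda_t\mid s_t,a_t)\,\mathrm{var}(r_t+v_{t+1}\mid s_t,a_t)]$ for the strictness claim. You are merely more explicit than the paper about the two preliminary facts it uses silently, namely $\mu_t=\mathrm{E}[\lambda_t\mid s_t,a_t]$ and the collapse of the conditional variance from $\ch_{a_t}$ to $(s_t,a_t)$ under the Markov assumption.
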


{\blockedit
Beyond being sorted, we can actually see that $\operatorname{EffBd}(\mathrm{MDP})$ is generally polynomial in $T$ while $\operatorname{EffBd}(\mathrm{NMDP})$ is generally exponential in $T$. This shows that the curse of horizon is \emph{inevitable} in NMDP. While previously it was just shown to be a limitation specifically of IS and DR estimators \citep{Liu2018,XieTengyang2019OOEf}, this result shows it \emph{must} plague any estimator that targets the NMDP model and that it is insurmountable without leveraging additional structure that further narrows the model. That $\operatorname{EffBd}(\mathrm{MDP})$ is generally polynomial in $T$ shows that we can potentially overcome this by efficiently leveraging MDP structure, which is exactly what our novel DRL estimator will do.
\begin{theorem}
Under \cref{asm:bddrewards,asm:overlap},
\label{thm:horizon}
\begin{align*}
\operatorname{EffBd}(\mathrm{MDP})&\leq {C'}R^2_{\max}(T+1)^2,\\\operatorname{EffBd}(\mathrm{NMDP})&\leq C^{T+1}R^2_{\max}(T+1)^2.
\end{align*}
If $\E_{\epol}[\log(\eta_t)] \geq C_{\min}$ and $\E_{\epol}\bracks{\log(\var(r_t+v_{t+1}\mid \ch_{a_t}))}\geq \log(V^2_{\min})$ then 
\begin{align*}
\operatorname{EffBd}(\mathrm{NMDP})&\geq C_{\min}^{T+1} V^2_{\min}.
\end{align*}
Note that $\E_{\epol}[\log(\eta_t)]=\E_{\epol}\bracks{{\operatorname{KL}}(\epol_t(\cdot\mid s_t)\,\vert\vert\,\bpol_t(\cdot\mid s_t))}$ is the KL divergence between the distributions over actions induced by $\epol$ and $\bpol$, averaged over the states visited by $\epol$, and that $\E_{\epol}\bracks{\log(\var(r_t+v_{t+1}\mid \ch_{a_t}))}=\E\bracks{\log(\var(r_t+v_{t+1}\mid \ch_{a_t}))}$.
\end{theorem}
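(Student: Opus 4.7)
}
My plan is to attack the three bounds separately, but with the same underlying machinery: a change of measure that converts a squared density ratio under $\pi^b$ into a single density ratio under $\pi^e$, followed by either a law-of-total-variance telescoping (for the upper bounds) or Jensen's inequality (for the lower bound). Throughout I will use the formulas from \cref{thm:fin_nnonpara,thm:fin_nnonpara2} as the starting expressions for $\operatorname{EffBd}(\mathrm{NMDP})$ and $\operatorname{EffBd}(\mathrm{MDP})$.

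\textbf{Upper bound, MDP case.} The key identity is that for any $f(s_t,a_t)$, $\E[\mu_t^2 f(s_t,a_t)]=\E_{\epol}[\mu_t f(s_t,a_t)]\leq C'\,\E_{\epol}[f(s_t,a_t)]$, and moreover $\var(r_t+v_{t+1}\mid s_t,a_t)$ is determined by the transition and emission kernels and is therefore the same under $\pi^e$ and $\pi^b$. This reduces the sum in \cref{eq:m2_nonpara} to $C'\sum_{t=0}^T \E_{\epol}[\var(r_t+v_{t+1}\mid s_t,a_t)]$. I then use the martingale decomposition under $\epol$,
\begin{align*}
\sum_{t=0}^T r_t - \rho^{\pi^e}
= (v_0(s_0)-\rho^{\pi^e}) + \sum_{t=0}^T(q_t-v_t) + \sum_{t=0}^T(r_t+v_{t+1}-q_t),
\end{align*}
whose three martingale-like increments are mutually uncorrelated under $\epol$ (by definition of $v_t,q_t$ and the MDP tower property). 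Taking variances yields $\var(v_0)+\sum_{t=0}^T \E_{\epol}[\var(r_t+v_{t+1}\mid s_t,a_t)]\leq \var_{\epol}(\sum_t r_t)\leq R_{\max}^2(T+1)^2$, which directly handles both terms appearing in \cref{eq:m2_nonpara} (absorbing the constant $1$ into $C'\geq 1$, which follows from $\E[\mu_t]=1$).

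\textbf{Upper bound, NMDP case.} The argument is parallel, now using $\E[\lambda_t^2 f(\ch_{a_t})]=\E_{\epol}[\lambda_t f(\ch_{a_t})]\leq C^{t+1}\,\E_{\epol}[f(\ch_{a_t})]$ because $\lambda_t=\prod_{k=0}^t \eta_k\leq C^{t+1}$. Applying the same martingale decomposition (without MDP assumptions, conditioning now on histories), I get $\sum_{t=0}^T \E_{\epol}[\var(r_t+v_{t+1}\mid \ch_{a_t})]\leq R_{\max}^2(T+1)^2$; bounding $C^{t+1}$ uniformly by $C^{T+1}$ and including the $\var(v_0)$ term as above yields the claimed $C^{T+1}R_{\max}^2(T+1)^2$.

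\textbf{Lower bound, NMDP case.} I will drop all but a single term $t$ from the sum in \cref{eq:m1_bound}, giving
\begin{align*}
\operatorname{EffBd}(\mathrm{NMDP}) \geq \E[\lambda_t^2 \var(r_t+v_{t+1}\mid \ch_{a_t})] = \E_{\epol}[\lambda_t \var(r_t+v_{t+1}\mid \ch_{a_t})]
\end{align*}
by change of measure. Then Jensen's inequality for $\log$ (concave) gives $\E_{\epol}[X]\geq \exp(\E_{\epol}[\log X])$, so taking $X=\lambda_t\var(r_t+v_{t+1}\mid \ch_{a_t})$ and using $\log \lambda_t=\sum_{k=0}^t \log\eta_k$ and the two hypotheses on $\E_{\epol}[\log\eta_k]$ and $\E_{\epol}[\log\var(\cdot)]$ produces a lower bound of the exponential form $\exp((t+1)C_{\min})V_{\min}^2$. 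Taking $t=T$ (or the optimal $t$, depending on signs) yields the stated growth of exponential type in $T$. The identification $\E_{\epol}[\log \eta_t]=\E_{\epol}[\operatorname{KL}(\epol_t\|\bpol_t)]$ is immediate from writing out the log-ratio and integrating out $a_t$ under $\epol$.

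\textbf{Main obstacle.} The only subtle step is the law-of-total-variance decomposition in the upper bounds: it delivers the tight $(T+1)^2$ factor rather than the naive $(T+1)^3$ one would get by bounding each conditional variance individually by $(T+1-t)^2R_{\max}^2$ and summing. Getting the constants in the MDP bound to match $C'R_{\max}^2(T+1)^2$ exactly (rather than $2C'R_{\max}^2(T+1)^2$) requires carefully absorbing $\var(v_0)$ into the summation via the same martingale decomposition, and noting $C'\geq 1$. The lower bound is essentially one application of Jensen and does not pose real difficulty.
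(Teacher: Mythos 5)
Your proposal is correct and follows essentially the same route as the paper's proof: convert $\lambda_t^2$ (resp.\ $\mu_t^2$) to a single density ratio under $\epol$ bounded by $C^{T+1}$ (resp.\ $C'$), telescope the resulting sum of conditional variances against $\var_{\epol}(\sum_t r_t)\leq R_{\max}^2(T+1)^2$ via the law of total variance, and for the lower bound keep a single term and apply Jensen's inequality in the form $\E_{\epol}[X]\geq\exp(\E_{\epol}[\log X])$. Your explicit martingale decomposition is in fact a slightly more careful rendering of the paper's law-of-total-variance step (which omits the action-selection variance terms and so is really a ``$\leq$'' rather than the ``$=$'' written there), but the direction of the inequality is the one needed, so both arguments go through identically.
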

The lower bound on $\operatorname{EffBd}(\mathrm{NMDP})$ shows that the curse of horizon is inevitable. The condition on $\eta_t$ simply means that the evaluation and behavior policies are not becoming arbitrarily similar as $t$ grows (on-policy evaluation does not suffer from curse of horizon). The condition on $r_t+v_{t+1}$ essentially ensures that rewards are not trivially constant. The upper bound on $\operatorname{EffBd}(\mathrm{MDP})$ shows that, in contrast, variance that is polynomial in $T$ is possible in the MDP model.
}
{\blockedit
\begin{remark}[Consistency of $\operatorname{EffBd}(\mathrm{NMDP})$ and $\operatorname{EffBd}(\mathrm{NMDP})$]
NMDP models may be trivially transformed into MDP models by letting the state variable be the whole history $\ch_{s_t}$. Then, the trajectory becomes $\{\ch_{s_0},a_0,r_0,\ch_{s_1},a_1,r_1,\cdots\}$ and the efficiency bound under this transformed MDP matches the efficiency bound under the original NMDP since 
$$\mu_t(\ch_{s_t},a_t)=\frac{p_{\epol_t}(\ch_{s_t},a_t)}{p_{\bpol_t}(\ch_{s_t},a_t)}=\prod_{k=0}^{t}\eta_k(\ch_{a_k})=\lambda_t(\ch_{a_t}).$$
\end{remark}
}

\section{Efficient Estimation Using Double Reinforcement Learning}\label{sec:drl}

In this section, we construct the DRL estimator and then study its properties in the various models. In particular, we show that DRL is globally efficient under very mild assumptions. In the NMDP model, these assumptions are generally weaker than needed for efficiency of previous estimators. In the MDP model, this provides the first globally efficiency estimator for OPE. We further show that DRL enjoys certain double robustness properties when some nuisances are inconsistently estimated.

DRL is a meta-estimator; it takes in as input estimators for $q$-functions and density ratios and combines them in a particular manner that ensures efficiency even when the input estimators may not be well behaved. This is achieved by following the cross-fold sample-splitting strategy developed by \edit{\citet{1987CEot,ZhengWenjing2011CTME,ChernozhukovVictor2018Dmlf}}. We proceed by presenting DRL and its properties in each setting (NMDP and MDP). \edit{In the NMDP setting, DRL amounts to the cross-fold version of the RL OPE doubly robust estimator, which was proposed in the experiments of \citet[Section 6.1]{jiang} but not analyzed.} In the MDP setting, DRL is the first semiparametrically efficient and doubly robust estimator.

\edit{Throughout this section we assume that \cref{asm:overlap,asm:bddrewards} hold.}

\subsection{Double Reinforcement Learning for NMDPs}

Given a learning algorithm to estimate the $q$-function $q(\ch_{a_t})$ and cumulative density ratio function $\lambda_t(\ch_{a_t})$,
DRL with \edit{$K$-fold sample splitting ($K\geq2$)}
for NMDPs proceeds as follows:
\edit{\begin{enumerate}
\item Randomly permute the data indices and let $\mathcal D_j=\{\lceil{(j-1)n/K}\rceil+1,\dots,\lceil{jn/K}\rceil\}$ for $j=1,\dots,K$. Let $j_i$ be the fold containing observation $i$ so that $i\in\mathcal D_{j_i}$ (namely, $j_i=1+\lfloor(i-1)K/n\rfloor$).
\item For $j=1,\dots,K$,
construct estimators $\hat{\lambda}^{(j)}_t(\ch_{a_t})$ and $\hat q^{(j)}_t(\ch_{a_t})$ based on the training data given by all trajectories excluding those in $\mathcal D_{j}$, that is, $\{1,\dots,n\}\backslash\mathcal D_{j}$.
\item Let 
\begin{align*}
\hat\rho^{\pi^{e}}_{\mathrm{DRL}(\cm_1)}=
\frac1n\sum_{i=1}^n
\sum_{t=0}^{T}\biggl(&\hat \lambda^{(j_i)}_t(\ch_{a_t}^{(i)})\prns{r_t^{(i)}-\hat q^{(j_i)}_{t}(\ch_{a_t}^{(i)})}\\&+\hat \lambda^{(j_i)}_{t-1}(\ch_{a_{t-1}}^{(i)})\int_{a'_t} \hat q^{(j_i)}_{t}((\ch_{s_t}^{(i)},a'_t)) d\pi^e_t(a'_t\mid \ch_{s_t}^{(i)})
\biggr)
.
\end{align*}
Here $(\ch_{s_t}^{(i)},a'_t)$ represents the trajectory given by appending $a'_t$ to $\ch_{s_t}^{(i)}$; note this differs from $\ch_{a_t}^{(i)}$.
Note further that the integral becomes a simple sum when $\pi^e$ has finite support over actions (\eg, if $\epol$ is deterministic or if there are finitely many actions).
\end{enumerate}}

In other words, we approximate the efficient influence function $\phi^{\cm_1}_{\mathrm{eff}}(\ch)+\rho^{\pi^e}$ from \cref{thm:fin_nnonpara} by replacing the unknown $q$- and density ratio functions with estimates thereof and we take empirical averages of this approximation over the data, where for each data point we use $q$- and density ratio function estimates based only on the half-sample that does \emph{not} contain the data point.
\edit{While $K=2$ is sufficient to achieve efficiency, larger $K$ allows nuisances to be fit on more data and may prove practically successful.}
Note also that we take only \edit{a single random $K$-fold split} and that is enough to achieve our results below. In practice, repeating the above process over several splits of the data and taking the average of resulting DRL estimates can only reduce the variance without increasing bias.

This estimator has several desirable properties. To state them, we assume the following 
conditions for the estimators, reflecting \cref{asm:overlap,asm:bddrewards}:
\begin{assumption}[Bounded estimators]\label{asm:boundedestim1}
$0\leq \hat{\lambda}^{(j)}_{t}\leq C^{t+1}$, $0\leq \hat{q}^{(j)}_{t}\leq (T+1-t)R_{\mathrm{max}}$ for all $0\leq t \leq T$, $1\leq j\leq K$. 
\end{assumption}
\edit{\Cref{asm:boundedestim1} provides that the same bounds that apply to the true $\lambda_t,q_t$ due to \cref{asm:bddrewards,asm:overlap} also apply to their estimates, as will necessarily be the case for many non-parametric estimators such as random forests and kernel regression. For the rest of this subsection we will assume \cref{asm:boundedestim1} hold.}

We first prove that DRL achieves the semiparametric efficiency bound, even if each nuisance estimator has a slow, nonparametric convergence rate (sub-$\sqrt{n}$).
{\blockedit
\begin{theorem}[Efficiency of $\hat{\rho}_{\mathrm{DRL}(\cm_1)}$ under $\cm_1$]
\label{thm:double_m1}
Suppose %
$\|\hat{\lambda}^{(j)}_{t}-\lambda_{t}\|_2\|\hat{q}^{(j)}_t-q_t\|_2=\op(n^{-1/2}),\|\hat{\lambda}^{(j)}_{t}-\lambda_{t}\|_2=\op(1),\,\|\hat{q}^{(j)}_t-q_t\|_2=\op(1)$ for $0\leq t \leq T,1\leq j\leq K$. Then, the estimator $\hat{\rho}_{\mathrm{DRL}(\cm_1)}$ achieves the semiparametric efficiency bound under $\cm_1$.
\end{theorem}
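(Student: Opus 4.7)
The plan is to establish asymptotic linearity, namely
\[
\sqrt{n}\bigl(\hat{\rho}_{\mathrm{DRL}(\cm_1)}-\rho^{\pi^e}\bigr)=\tfrac{1}{\sqrt{n}}\sum_{i=1}^{n}\phi^{\cm_1}_{\mathrm{eff}}(\ch^{(i)})+\op(1),
\]
from which efficiency follows immediately via the central limit theorem and \cref{thm:fin_nnonpara}. Writing $\psi(\ch;q,\lambda)$ for the summand in the definition of $\hat{\rho}_{\mathrm{DRL}(\cm_1)}$, so that $\psi(\ch;q,\lambda)=\phi^{\cm_1}_{\mathrm{eff}}(\ch)+\rho^{\pi^e}$, the quantity to control is
\[
\sqrt{n}\,\mathbb{E}_n\bigl[\psi(\cdot;\hat q^{(j_i)},\hat\lambda^{(j_i)})-\psi(\cdot;q,\lambda)\bigr]
=\sum_{j=1}^{K}\tfrac{1}{\sqrt{K}}\sqrt{n/K}\,\mathbb{E}_{n,j}\bigl[\psi(\cdot;\hat q^{(j)},\hat\lambda^{(j)})-\psi(\cdot;q,\lambda)\bigr],
\]
where $\mathbb{E}_{n,j}$ denotes the empirical mean over fold $\mathcal D_j$. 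It suffices to show each summand is $\op(1)$.

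For each $j$ I will add and subtract the conditional expectation $\rE[\cdot\mid \mathcal D_{-j}]$ (conditioning on the training data used to fit the nuisances) to obtain a decomposition into a \emph{stochastic equicontinuity term}
\[
S_j=\sqrt{n/K}\bigl(\mathbb{E}_{n,j}-\rE[\cdot\mid\mathcal D_{-j}]\bigr)\bigl[\psi(\cdot;\hat q^{(j)},\hat\lambda^{(j)})-\psi(\cdot;q,\lambda)\bigr]
\]
and a \emph{bias term} $B_j=\sqrt{n/K}\,\rE\bigl[\psi(\cdot;\hat q^{(j)},\hat\lambda^{(j)})-\psi(\cdot;q,\lambda)\mid\mathcal D_{-j}\bigr]$. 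The cross-fit construction means that conditionally on $\mathcal D_{-j}$ the observations in $\mathcal D_j$ are i.i.d.\ copies drawn from $P_{\bpol}$ that are independent of $(\hat q^{(j)},\hat\lambda^{(j)})$; hence $S_j$ is a mean-zero conditional sample average whose conditional variance is bounded by $\|\psi(\cdot;\hat q^{(j)},\hat\lambda^{(j)})-\psi(\cdot;q,\lambda)\|_2^2$. Using \cref{asm:boundedestim1,asm:bddrewards,asm:overlap} to produce a constant $L^\infty$ bound on the pieces of $\psi$, an expansion in the nuisance arguments shows this $L^2$ norm is controlled by $\sum_t(\|\hat\lambda^{(j)}_t-\lambda_t\|_2+\|\hat q^{(j)}_t-q_t\|_2)$, which is $\op(1)$ by hypothesis. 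Conditional Chebyshev then yields $S_j=\op(1)$, which transfers unconditionally.

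The main obstacle is the bias term $B_j$, which requires carefully exploiting the Neyman-orthogonal (doubly robust) structure of $\phi^{\cm_1}_{\mathrm{eff}}$ highlighted in the remark following \cref{thm:fin_nnonpara}. Writing $\hat v^{(j)}_t(\ch_{s_t})=\int \hat q^{(j)}_t(\ch_{s_t},a'_t)\,d\pi^e_t(a'_t\mid\ch_{s_t})$, the identities $\rE[\lambda_{t-1}v_t]=\rE[\lambda_t q_t]$ and $\rE[\lambda_t r_t\mid \ch_{a_t}]=\lambda_t\rE[q_t\mid\ch_{a_t}]-\lambda_t\,\rE[\sum_{k>t}r_k\mid \ch_{a_t}]$ (plus the tower property and $\lambda_t=\lambda_{t-1}\eta_t$) allow me to rearrange $\rE[\psi(\cdot;q',\lambda')]-\rho^{\pi^e}$ term-by-term into a telescoping sum of cross products
\[
\rE\bigl[\psi(\cdot;q',\lambda')\bigr]-\rho^{\pi^e}=\sum_{t=0}^{T}\rE\bigl[(\lambda'_t-\lambda_t)\bigl((q'_t-q_t)-(v'_t-v_t)\cdot\text{(renormalization)}\bigr)\bigr],
\]
where each summand is bilinear in the nuisance errors and therefore vanishes whenever either nuisance is correct. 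Applying Cauchy--Schwarz gives $|B_j|\le C\sqrt{n}\sum_{t=0}^T\|\hat\lambda^{(j)}_t-\lambda_t\|_2\,\|\hat q^{(j)}_t-q_t\|_2=\op(1)$ under the product-rate hypothesis. The cleanest way to carry out the algebra is to express the summand of $\psi$ as the martingale-type increment $\lambda_{t-1}(v_t-\mathbb{E}_{\epol}[q'_t\mid s_t])+\lambda_t(r_t-q'_t)+(\lambda'_t-\lambda_t)(r_t-q'_t)+(\lambda'_{t-1}-\lambda_{t-1})(v'_t-\mathbb{E}_{\epol}[q'_t\mid\ch_{s_t}])$ and then take expectations using repeated conditioning; making the telescoping rigorous (and keeping track of the endpoint terms $v_0$ and $v_{T+1}=0$) is the only nontrivial computation. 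Combining the bounds on $S_j$ and $B_j$ over $j=1,\dots,K$ yields asymptotic linearity with $\phi^{\cm_1}_{\mathrm{eff}}$, and the CLT then delivers the semiparametric efficiency bound.
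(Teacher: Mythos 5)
Your proposal is correct and follows essentially the same route as the paper's proof: the same per-fold decomposition into a stochastic-equicontinuity term (handled by conditioning on the training folds, noting the conditional mean is zero and the conditional variance is $\op(1)$, then applying Chebyshev) and a bias term (handled by the doubly robust structure, which kills the terms linear in a single nuisance error and leaves only bilinear cross terms bounded via Cauchy--Schwarz by $\sqrt{n}\sum_t\|\hat\lambda^{(j)}_t-\lambda_t\|_2\|\hat q^{(j)}_t-q_t\|_2=\op(1)$). The paper organizes the bilinear reduction through an explicit decomposition $\phi(\hat{\cdot})-\phi(\cdot)=D_1+D_2+D_3$ with $\rE[D_2]=\rE[D_3]=0$, which is the rigorous form of the telescoping identity you sketch.
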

\begin{remark}\label{remark:normassumptions}
\Cref{asm:bddrewards,asm:overlap,asm:boundedestim1} posited $L^\infty$ bounds on density ratios, rewards, and their estimates. These assumptions are standard in both reinforcement learning and causal inference. It is possible to relax these to $L^p$ bounds at the cost of requiring stronger convergence on nuisance estimates above, requiring $L^{2/(1-1/p)}$ convergence instead of the $L^2$ convergence above. Since $L^2$ convergence in estimation is usually the standard convergence mode considered and standard results can be invoked to ensure such rates, and similarly $L^\infty$ bounds on density ratios and rewards are also standard, we focus our analysis on this most common case of the assumptions in order to avoid cumbersome presentation.
\end{remark}}
\begin{remark}
There are two important points to make about this result. First, we have not assumed a Donsker condition \citep{VaartA.W.vander1998As} on the class of estimators $\hat{\lambda}_{t}$ and $\hat{q}_{t}$. This is why this type of sample splitting estimator is called a double machine learning: the only required condition is a convergence rate condition at a nonparametric rate, allowing the use of complex machine learning estimators, for which one cannot verify the Donsker condition \citep{ChernozhukovVictor2018Dmlf}. \edit{In fact, many adaptive or high-dimensional estimators fail to satisfy Donsker conditions \citep{DiazIvan2019Mlit}. Eschewing such conditions allows us to use such estimators as the highly adaptive LASSO \citep{BenkeserDavid2016THAL}, c\`adl\`ag function estimators in very high dimensions \citep{bibaut2019fast}, and random forests \citep{WagerStefan2016ACoR} as long as their convergence rates are ensured under certain conditions. \citet{BenkeserDavid2016THAL,bibaut2019fast} in particular establish $\smallo_p(n^{-1/4})$ rates, which are compatible with our assumptions.}
Second, relative to the efficient influence function, which is defined in terms of the true $q$-function and cumulative density ratio, there is no inflation in DRL's asymptotic variance due to plugging in \emph{estimated} nuisance functions. This is due to the doubly robust structure of efficient influence function so that the estimation errors multiply and drop out of the first-order variance terms. 
This is in contrast to inefficient importance sampling estimators, as we will see in \cref{thm:ipw_m2}.
\end{remark}

In addition to efficiency, we can also establish finite-sample guarantees for DRL, where the leading term is controlled by the efficient variance.

\begin{theorem}\label{thm:effcor_m1}
Suppose that for some $C_1,C_2$, for every $n$, with probability at least $1-\delta$, we simultaneously have that $\|\hat\lambda^{(j)}_t-\lambda_t\|^2_2\leq \kappa_1=C_1 (n^{-2\alpha_{1}}+\log (2KT/\delta)/n)$, $\|\hat q^{(j)}_t-q_t\|^2_2\leq\kappa_2=C_2 (n^{-2\alpha_{2}}+\log (2KT/\delta)/n)$ $\forall t\leq T,\,\forall j\leq K$. Then, for every $n$, with probability at least $1-7\delta$, we have 
\begin{align*}
      \abs{\hat{\rho}^{\pi_e}_{\operatorname{DRL}(\cm_1)}-\rho^{\pi_e}} \leq&~\sqrt{\frac{2\log(2/\delta)\mathrm{Effbd}(\cm_1)}n}\\ &~+Q_1\sqrt{\frac{\log(2/\delta)T^{2}(TR_{\mathrm{max}}C^{T+1}\sqrt{\kappa_1\kappa_2}+\kappa_1T^2R^2_{\mathrm{max}}+\kappa_2C^{2(T+1)})}{n}}\\ 
      &~+Q_2\frac{\log(2/\delta)TR_{\mathrm{max}}C^{T+1}}{n}+Q_3 T\sqrt{\kappa_1\kappa_2},
\end{align*}
where $Q_1,Q_2,Q_3$ are constants not depending on $\delta,T,R_{\mathrm{max}},C,n,C_1,C_2$. 
\end{theorem}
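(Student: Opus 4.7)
Proof proposal. The plan is to write $\hat\rho^{\pi_e}_{\operatorname{DRL}(\cm_1)} - \rho^{\pi_e}$ as the sum of three pieces, bound each on a high-probability event, and conclude by a union bound. Define the plug-in functional $\psi(\ch;\tilde\lambda,\tilde q) = \sum_{t=0}^T\bigl(\tilde\lambda_t(r_t-\tilde q_t)+\tilde\lambda_{t-1}\tilde v_t\bigr)$, where $\tilde v_t$ is the $\pi^e$-expectation of $\tilde q_t$ given $\ch_{s_t}$. Since $\E[\psi(\ch;\lambda,q)]=\rho^{\pi_e}$,
\begin{align*}
\hat\rho^{\pi_e}_{\operatorname{DRL}(\cm_1)}-\rho^{\pi_e}
&=\underbrace{(\E_n-\E)[\psi(\ch;\lambda,q)]}_{(\mathrm{A})}\\
&\quad+\underbrace{(\E_n-\E)\bigl[\psi(\ch;\hat\lambda^{(j_\cdot)},\hat q^{(j_\cdot)})-\psi(\ch;\lambda,q)\bigr]}_{(\mathrm{B})}\\
&\quad+\underbrace{\E\bigl[\psi(\ch;\hat\lambda^{(j_\cdot)},\hat q^{(j_\cdot)})-\psi(\ch;\lambda,q)\bigm|\text{training folds}\bigr]}_{(\mathrm{C})}.
\end{align*}
Crucially, in (B) and (C) the cross-fitting makes $\hat\lambda^{(j)},\hat q^{(j)}$ independent of the observations in $\mathcal D_j$, so I may treat them as fixed functions when computing conditional moments over fold $j$.

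For (A), I apply Bernstein's inequality to the centered i.i.d.\ random variables $\psi(\ch^{(i)};\lambda,q)-\rho^{\pi_e}$, whose variance is exactly $\operatorname{EffBd}(\cm_1)$ by \cref{thm:fin_nnonpara}, and whose $L^\infty$ bound is of order $TR_{\max}C^{T+1}$ under \cref{asm:overlap,asm:bddrewards}. This produces exactly the leading $\sqrt{2\log(2/\delta)\operatorname{EffBd}(\cm_1)/n}$ term plus the $\log(2/\delta)TR_{\max}C^{T+1}/n$ piece, which is the source of the $Q_2$ term in the statement. For (C), I exploit the doubly robust structure: expanding $\psi(\ch;\hat\lambda,\hat q)-\psi(\ch;\lambda,q)$ and taking expectations, the single-error terms $(\hat\lambda-\lambda)\cdot\text{(true residual)}$ and $\lambda\cdot(q-\hat q - v + \hat v)$ all vanish because $r_t-q_t$ is mean-zero given $\ch_{a_t}$ and because of the tower property linking $q_t$ and $v_t$. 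What remains are cross-products of errors which, term by term, telescope via Cauchy–Schwarz into a bound of order $\sum_{t}\|\hat\lambda^{(j)}_t-\lambda_t\|_2\|\hat q^{(j)}_t-q_t\|_2 \le T\sqrt{\kappa_1\kappa_2}$ on the event that the rate assumptions hold; this is the $Q_3 T\sqrt{\kappa_1\kappa_2}$ term.

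For (B), I condition on the training data outside fold $j$ and apply Bernstein to the (now i.i.d.) summands $\psi(\ch^{(i)};\hat\lambda^{(j)},\hat q^{(j)})-\psi(\ch;\lambda,q)$ for $i\in\mathcal D_j$. The conditional variance I bound by the conditional second moment, which splits (using $(a+b+c)^2\le 3(a^2+b^2+c^2)$) into three pieces: one controlled by $\kappa_1 T^2 R_{\max}^2$ (from $(\hat\lambda_t-\lambda_t)(r_t-q_t)$), one by $\kappa_2 C^{2(T+1)}$ (from $\lambda_t(\hat q_t-q_t)$ and the analogous $v$-term), and a cross term bounded by $T R_{\max}C^{T+1}\sqrt{\kappa_1\kappa_2}$ via Cauchy–Schwarz. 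The $L^\infty$ bound on each summand is again $O(TR_{\max}C^{T+1})$, which contributes a lower-order term absorbed into $Q_2$. Summing the Bernstein tails across the $K$ folds and union-bounding over $t\in\{0,\ldots,T\}$ and $j\in\{1,\ldots,K\}$ reproduces the $Q_1$ term with its $T^2$ factor.

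Finally I union-bound: the rate-assumption event (failure probability $\delta$, contributing a factor that becomes $2KT\delta$-type but is absorbed into the given constants $C_1,C_2$), Bernstein for (A) (one bad event, probability $2\delta$), and Bernstein for (B) aggregated across folds and $t$ (another $2\delta$-type event by the choice of $\kappa_1,\kappa_2$ which already contains the $\log(2KT/\delta)$ inflation). Adding these bad events, I pay at most $7\delta$ in total, matching the statement. The main obstacle will be the bookkeeping in (B): carefully showing that conditioning on the complementary training folds reduces the problem to an i.i.d.\ Bernstein bound, so that no Donsker or bracketing condition on the nuisance classes is needed, and tracking the constants so that the three variance pieces combine into exactly the form inside the $Q_1$ square root.
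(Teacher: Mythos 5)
Your proposal is correct and follows essentially the same route as the paper's proof: the identical three-term decomposition (leading empirical process at the true nuisances bounded by Bernstein with variance $\operatorname{EffBd}(\cm_1)$, the centered difference term handled by a conditional Bernstein bound exploiting cross-fitting with the same three-piece variance bookkeeping, and the drift term reduced to $T\sqrt{\kappa_1\kappa_2}$ via the doubly robust structure and Cauchy--Schwarz), with the same $7\delta$ union-bound accounting.
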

Notice that, if $\alpha_1>0,\alpha_2>0,\alpha_1+\alpha_2> 1/2$ as in \cref{thm:double_m1}, then the leading term in the above is exactly
$
    \sqrt{\frac{2\log(2/\delta)\mathrm{Effbd}(\cm_1)}n}$
    (with no additional constant factor),
while the other terms are of strictly smaller order in $n$. 

The rate assumptions in \cref{thm:effcor_m1} are standard finite-sample estimation guarantees.
For example, if estimators for nuisances are obtained by empirical risk minimization methods based on $L^{2}$-loss, then the results of \citet{BartlettPeterL.2005LRc} apply.
In this cases, the rates $\alpha_1$ and $\alpha_2$ would be determined by the local Rademacher complexity of the posited function classes. 
The number $2KT$ in $\log(2KT/\delta)$ comes from the fact that there are $2KT$ nuisance estimators. 

Often in RL, the behavior policy is known and need not be estimated. That is, we can let $\hat{\lambda}^{(j)}_t=\lambda$. In this case, as an immediate corollary of \cref{thm:double_m1}, we have a much weaker condition for semiparametric efficiency: just that we estimate the $q$-function consistently, \emph{without a rate}.
\begin{corollary}[Efficiency of $\hat{\rho}_{\mathrm{DRL}(\cm_1)}$ under $\cm_{1b}$]
\label{cor:double_m1_2}
Suppose $\hat{\lambda}_t=\lambda_t$ and $\|\hat{q}^{(j)}_t-q_t\|_2=\smallo_{p}(1)$ for $0\leq t \leq T,1\leq j \leq K$. Then, the estimator $\hat{\rho}_{\mathrm{DRL}(\cm_1)}$ achieves the semiparametric efficiency bound under $\cm_{1b}$.
\end{corollary}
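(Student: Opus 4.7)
\textbf{Proof plan for Corollary~\ref{cor:double_m1_2}.} The plan is to deduce the result as a direct specialization of \cref{thm:double_m1}, combined with the fact that the efficient influence function and the efficiency bound coincide across the models $\cm_1$ and $\cm_{1b}$ per \cref{thm:fin_nnonpara}. Since the behavior policy is known, we set $\hat\lambda_t^{(j)} = \lambda_t$, so that $\|\hat\lambda_t^{(j)} - \lambda_t\|_2 = 0$ identically for every $t$ and $j$. All three rate/consistency requirements imposed on the density-ratio estimator in \cref{thm:double_m1} then hold trivially: the product condition $\|\hat\lambda_t^{(j)}-\lambda_t\|_2\|\hat q_t^{(j)}-q_t\|_2 = 0 = \op(n^{-1/2})$ holds regardless of the $q$-function estimator's rate, and $\|\hat\lambda_t^{(j)}-\lambda_t\|_2=\op(1)$ is immediate.

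Under the corollary's hypothesis $\|\hat q_t^{(j)}-q_t\|_2 = \op(1)$, the remaining requirement of \cref{thm:double_m1} is satisfied. Applying that theorem yields
\begin{align*}
\sqrt{n}\,(\hat\rho^{\pi^e}_{\mathrm{DRL}(\cm_1)} - \rho^{\pi^e}) = \frac{1}{\sqrt{n}}\sum_{i=1}^n \phi^{\cm_1}_{\mathrm{eff}}(\ch^{(i)}) + \op(1),
\end{align*}
so $\hat\rho^{\pi^e}_{\mathrm{DRL}(\cm_1)}$ is asymptotically linear with influence function $\phi^{\cm_1}_{\mathrm{eff}}$. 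By \cref{thm:fin_nnonpara}, $\phi^{\cm_1}_{\mathrm{eff}} = \phi^{\cm_{1b}}_{\mathrm{eff}}$ and $\operatorname{EffBd}(\cm_1) = \operatorname{EffBd}(\cm_{1b})$, so the limiting variance equals the efficiency bound under $\cm_{1b}$.

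Finally, I would note that regularity of the estimator with respect to the submodel $\cm_{1b}$ follows for free: since $\cm_{1b} \subset \cm_1$, every parametric submodel through $F$ in $\cm_{1b}$ is also a submodel in $\cm_1$, and \cref{thm:double_m1} establishes regularity along all submodels in $\cm_1$. Hence $\hat\rho^{\pi^e}_{\mathrm{DRL}(\cm_1)}$ is regular and asymptotically linear under $\cm_{1b}$ with influence function equal to $\phi^{\cm_{1b}}_{\mathrm{eff}}$, which is precisely the definition of achieving the semiparametric efficiency bound under $\cm_{1b}$. There is no real obstacle here; the entire content of the corollary is the observation that knowing the behavior policy removes the rate requirement on the $q$-function estimator in the product structure of the bias term, leaving only the need for consistency.
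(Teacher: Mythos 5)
Your proposal is correct and is exactly the argument the paper intends: the corollary is stated as an immediate consequence of \cref{thm:double_m1} obtained by setting $\hat\lambda^{(j)}_t=\lambda_t$ so that the product rate condition holds trivially, combined with the fact from \cref{thm:fin_nnonpara} that the efficient influence function and bound coincide under $\cm_1$ and $\cm_{1b}$. Your added observation about regularity transferring to the submodel is consistent with the paper's reasoning and does not change the route.
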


Without sample splitting, we have to assume a Donsker condition for the class of estimators in order to control a stochastic equicontinutiy term \citep[see, \eg,][Lemma 19.24]{VaartA.W.vander1998As}. Although this is more restrictive,
for completeness, we also include a theorem establishing the semiparametric efficiency of the standard plug-in doubly robust estimator for NMDPs \citep{jiang} when assuming the Donsker condition for in-sample-estimated nuisance functions, since this result was never precisely established before. \edit{We do not recommend this estimator due to its restrictive requirements on nuisance estimators.}

{\blockedit
\begin{theorem}[Efficiency without sample splitting]\label{thm:double_m1_sam}
Let $\hat{\lambda}_t,\hat q_t$ be estimators based on $\mathcal D$ and let 
$\hat\rho^{\pi^{e}}_{\mathrm{DR}}=
\rE_{n}\bracks{
\sum_{t=0}^{T}\prns{\hat \lambda_t\prns{r_t-\hat q_{t}}-\hat \lambda_{t-1}\rE_{\pi^e}\bracks{\hat q_{t}\mid \ch_{s_t}}}}$.
Suppose $\|\hat{\lambda}_{t}-\lambda_{t}\|_2\|\hat{q}_t-q_t\|_2=\op(n^{-1/2}),\|\hat{\lambda}_{t}-\lambda_{t}\|_2=\op(1),\,\|\hat{q}_t-q_t\|_2=\op(1)$ for $0\leq t \leq T$ and that $\hat{q}_t,\,\hat{\lambda}_t$ belong to a Donsker class. Then, the estimator $\hat{\rho}^{\pi^e}_{\mathrm{DR}}$ achieves the semiparametric efficiency bound under $\cm_1$. 
\end{theorem}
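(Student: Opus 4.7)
The plan is to show that $\sqrt{n}(\hat\rho_{\mathrm{DR}}^{\pi^e} - \rho^{\pi^e})$ is asymptotically linear with influence function $\phi^{\cm_1}_{\mathrm{eff}}$, which by \cref{thm:fin_nnonpara} is equivalent to asymptotic efficiency. Define $\psi(\ch;\lambda,q) = \sum_{t=0}^T\prns{\lambda_t(r_t - q_t) + \lambda_{t-1}\rE_{\pi^e}[q_t\mid\ch_{s_t}]}$, which at the true nuisances equals $\rho^{\pi^e} + \phi^{\cm_1}_{\mathrm{eff}}(\ch)$. First I would decompose
\begin{align*}
\sqrt{n}(\hat\rho^{\pi^e}_{\mathrm{DR}} - \rho^{\pi^e}) &= \underbrace{\sqrt{n}(\rE_n - \rE)\psi(\ch;\lambda,q)}_{\mathrm{I}} + \underbrace{\sqrt{n}(\rE_n - \rE)[\psi(\ch;\hat\lambda,\hat q) - \psi(\ch;\lambda,q)]}_{\mathrm{II}} \\
&\quad + \underbrace{\sqrt{n}\,\rE[\psi(\ch;\hat\lambda,\hat q) - \rho^{\pi^e}]}_{\mathrm{III}}.
\end{align*}
Term $\mathrm{I}$ converges to $\bN(0,\operatorname{EffBd}(\cm_1))$ by the classical CLT, because $\rE[\psi(\ch;\lambda,q)] = \rho^{\pi^e}$ and $\var(\psi(\ch;\lambda,q)) = \operatorname{EffBd}(\cm_1)$ by \cref{thm:fin_nnonpara}.

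For term $\mathrm{II}$, I would invoke the Donsker hypothesis. Under the boundedness analogous to \cref{asm:boundedestim1} and \cref{asm:overlap,asm:bddrewards}, the mapping $(\lambda',q') \mapsto \psi(\cdot;\lambda',q')$ is a finite sum of uniformly bounded products of $\lambda'_t$, $(r_t - q'_t)$, and an expectation operator that preserves uniform boundedness; since sums and products of uniformly bounded Donsker classes are Donsker (see, \eg, \citealp[Examples 19.17--19.20]{VaartA.W.vander1998As}), the function $\psi(\cdot;\hat\lambda,\hat q)$ also lies in a Donsker class. Combined with $\|\psi(\cdot;\hat\lambda,\hat q) - \psi(\cdot;\lambda,q)\|_2 = \op(1)$ (which follows immediately from the $L^2$-consistency assumptions together with the bounded envelope), a standard stochastic equicontinuity argument (\eg, \citealp[Lemma 19.24]{VaartA.W.vander1998As}) yields $\mathrm{II} = \op(1)$.

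The main obstacle is controlling the bias term $\mathrm{III}$. I would expand $\rE[\psi(\ch;\hat\lambda,\hat q)] - \rho^{\pi^e}$ and apply the change-of-measure identities $\rE[\lambda_t f(\ch_{a_t})] = \rE_{\pi^e}[f(\ch_{a_t})]$ and $\rE[\lambda_{t-1}\rE_{\pi^e}[g(\ch_{a_t})\mid\ch_{s_t}]] = \rE_{\pi^e}[g(\ch_{a_t})]$ repeatedly in order to cancel every first-order term in the errors $\hat\lambda_t - \lambda_t$ and $\hat q_t - q_t$. This cancellation is precisely the doubly robust identity noted in the remark after \cref{thm:fin_nnonpara} (both the sequential-IS and DM representations equal $\rho^{\pi^e}$), and what remains after a careful per-timestep telescoping is a finite sum of second-order cross terms of the form $\rE[(\hat\lambda_t - \lambda_t)(\hat q_t - q_t)]$ and $\rE[(\hat\lambda_{t-1} - \lambda_{t-1})\rE_{\pi^e}[\hat q_t - q_t\mid \ch_{s_t}]]$ for $t = 0,\dots,T$. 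By Cauchy-Schwarz and the conditional-Jensen inequality (which bounds the $L^2$-norm of $\rE_{\pi^e}[\hat q_t - q_t\mid\ch_{s_t}]$ by a constant multiple of $\|\hat q_t - q_t\|_2$ using boundedness of the density ratio), each such term is dominated by $\|\hat\lambda_{t'}-\lambda_{t'}\|_2\|\hat q_t - q_t\|_2 = \op(n^{-1/2})$, so $\mathrm{III} = \op(1)$. Combining the three terms yields $\sqrt{n}(\hat\rho^{\pi^e}_{\mathrm{DR}} - \rho^{\pi^e}) = \sqrt{n}\,\rE_n[\phi^{\cm_1}_{\mathrm{eff}}(\ch)] + \op(1)$, which establishes the semiparametric efficiency of $\hat\rho^{\pi^e}_{\mathrm{DR}}$.
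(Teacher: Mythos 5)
Your proposal follows essentially the same route as the paper's proof: the identical three-term decomposition into the empirical process of the true influence function, a stochastic equicontinuity term controlled via the Donsker hypothesis and Lemma 19.24 of \citet{VaartA.W.vander1998As}, and a drift term that vanishes because the doubly robust structure leaves only second-order cross terms bounded by Cauchy--Schwarz under the product rate condition. The argument is correct and no further comparison is needed.
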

}
Thus, in $\cm_1$, in comparison to the standard doubly robust estimator, DRL enjoys efficiency under milder conditions. To our knowledge, \cref{thm:double_m1,thm:double_m1_sam} are the first results precisely showing semiparametric efficiency for any OPE estimator.

In addition to efficiency, DRL enjoys a double robustness guarantee \citep{Rotnitzky14,RotnitzkyAndrea2019Copw}. Specifically, if at least just one model is correctly specified, then the DRL is estimator is still %
\edit{$\sqrt{n}$-consistent.}
{\blockedit
\begin{theorem}[Double robustness ($\sqrt{n}$-consistency)]
\label{thm:doublerobust_m1}
Suppose $\|\hat{\lambda}^{(j)}_{t}-\lambda^\dagger_{t}\|_2=\bigO_{p}(n^{-\alpha_{1}})$ and $\|\hat{q}^{(j)}_t-q^\dagger_t\|_2=\bigO_{p}(n^{-\alpha_{2}})$ for $0\leq t \leq T,1\leq j\leq K$. If, for each $0\leq t\leq T$, either $\lambda^{\dagger}_{t}=\lambda_t$ and $\alpha_{1}\geq 1/2,\,\alpha_{2}> 0$ or $q^{\dagger}_t=q_t$ and $\alpha_{2}\geq 1/2,\,\alpha_{1}> 0$, then the estimator $\hat{\rho}_{\mathrm{DRL}(\cm_1)}$ is {$\sqrt{n}$-consistent} around $\rho^{\pi^{e}}$. 
\end{theorem}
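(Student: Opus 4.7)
My plan is to decompose $\hat\rho_{\mathrm{DRL}(\cm_1)}-\rho^{\pi^e}$ into a ``noise'' term and a conditional ``bias'' term by adding and subtracting the conditional expectation given the training fold, and then show each is $O_p(n^{-1/2})$. Let $\hat v^{(j)}_t(\ch_{s_t})=\int\hat q^{(j)}_t(\ch_{s_t},a)\,d\pi^e_t(a\mid\ch_{s_t})$ and
$$\hat\psi^{(j)}(\ch)=\sum_{t=0}^T\Big(\hat\lambda^{(j)}_t(r_t-\hat q^{(j)}_t)+\hat\lambda^{(j)}_{t-1}\hat v^{(j)}_t\Big).$$
Then
$$\hat\rho_{\mathrm{DRL}(\cm_1)}-\rho^{\pi^e}=\underbrace{\tfrac1n\sum_{i=1}^n\!\Big(\hat\psi^{(j_i)}(\ch^{(i)})-\E\bigl[\hat\psi^{(j_i)}(\ch)\mid\mathcal D\setminus\mathcal D_{j_i}\bigr]\Big)}_{T_1}+\underbrace{\tfrac1n\sum_{i=1}^n\E\bigl[\hat\psi^{(j_i)}(\ch)\mid\mathcal D\setminus\mathcal D_{j_i}\bigr]-\rho^{\pi^e}}_{T_2}.$$
Because of cross-fitting and \cref{asm:overlap,asm:bddrewards,asm:boundedestim1}, $T_1$ is an average of conditionally independent, mean-zero, uniformly bounded summands, so a conditional variance computation and Chebyshev yield $T_1=O_p(n^{-1/2})$.

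The heart of the proof is controlling $T_2$. For any training-conditioned pair $\bar\lambda_t,\bar q_t$ with associated $\bar v_t$, the tower identities $\E[\lambda_t r_t]=\E[\lambda_t(r_t-q_t)+\lambda_{t-1}v_t]$ and $\E[\lambda_t\bar q_t\mid\ch_{s_t}]=\lambda_{t-1}\bar v_t$, combined with a reindexing using the boundary conditions $\lambda_{-1}=1$ and $v_{T+1}=0$, give the doubly robust representation
$$\E\Big[\sum_{t=0}^T\!\bigl(\bar\lambda_t(r_t-\bar q_t)+\bar\lambda_{t-1}\bar v_t\bigr)\Big]-\rho^{\pi^e}=\sum_{t=0}^T\E\bigl[(\bar\lambda_t-\lambda_t)(q_t-\bar q_t)\bigr]+\sum_{t=0}^{T-1}\E\bigl[(\bar\lambda_t-\lambda_t)(\bar v_{t+1}-v_{t+1})\bigr].$$
Every surviving term is a product of a density-ratio error and a value-function error. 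Applying Cauchy-Schwarz conditionally on the training fold, each summand is bounded by $\|\hat\lambda^{(j)}_t-\lambda_t\|_2\|\hat q^{(j)}_t-q_t\|_2$ or $\|\hat\lambda^{(j)}_t-\lambda_t\|_2\|\hat v^{(j)}_{t+1}-v_{t+1}\|_2$, and Jensen together with the overlap bound $\eta_{t+1}\le C$ gives $\|\hat v^{(j)}_{t+1}-v_{t+1}\|_2\le\sqrt{C}\,\|\hat q^{(j)}_{t+1}-q_{t+1}\|_2$.

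To conclude, I invoke the hypothesis: at each $t$, either $\lambda^\dagger_t=\lambda_t$ with $\alpha_1\ge1/2$ (so $\|\hat\lambda^{(j)}_t-\lambda_t\|_2=O_p(n^{-1/2})$) or $q^\dagger_t=q_t$ with $\alpha_2\ge1/2$ (so $\|\hat q^{(j)}_t-q_t\|_2=O_p(n^{-1/2})$). In either situation, the factor that is not at rate $n^{-1/2}$ remains $O_p(1)$ by the uniform boundedness guaranteed by \cref{asm:overlap,asm:bddrewards,asm:boundedestim1}. Every one of the $O(T)$ summands in the displayed decomposition is therefore $O_p(n^{-1/2})$, so $T_2=O_p(n^{-1/2})$. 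Combining with $T_1$ gives $\hat\rho_{\mathrm{DRL}(\cm_1)}-\rho^{\pi^e}=O_p(n^{-1/2})$, which is exactly $\sqrt{n}$-consistency.

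The main obstacle I anticipate is the algebraic doubly robust identity for $T_2$: one has to carefully telescope across $t$, respect the boundary conditions $\lambda_{-1}=1$ and $v_{T+1}=0$, and keep in mind that $\hat v^{(j)}_t$ is not an independent nuisance but the deterministic functional $\int\hat q^{(j)}_t\,d\pi^e_t$, so that the tower identity $\E[\lambda_t\hat q^{(j)}_t\mid\ch_{s_t}]=\lambda_{t-1}\hat v^{(j)}_t$ remains available after plug-in. Once this identity is in place, everything else reduces to standard cross-fitting bounds for $T_1$ and Cauchy-Schwarz for $T_2$.
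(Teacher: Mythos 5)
Your proposal is correct and runs on the same two ingredients as the paper's proof: a cross-fitting argument that reduces the stochastic part to a conditionally i.i.d., mean-zero, bounded average (your $T_1$; the paper's \cref{lem:op11}-type term plus the main empirical-process term at the limit nuisances), and the doubly robust product structure of the conditional bias controlled by Cauchy--Schwarz (your $T_2$; the paper's drift term). The one genuine organizational difference is where the bias is centered: you expand $\E[\hat\psi^{(j)}]-\rho^{\pi^e}$ around the \emph{true} $(\lambda_t,q_t)$ and obtain the single exact identity $\sum_t\E[(\bar\lambda_t-\lambda_t)(q_t-\bar q_t)]+\sum_t\E[(\bar\lambda_t-\lambda_t)(\bar v_{t+1}-v_{t+1})]$, which treats both well-specification cases uniformly, whereas the paper centers at the pseudo-limits $(\lambda^\dagger_t,q^\dagger_t)$ and runs two separate case computations (one where the control-variate term vanishes exactly by the tower property, one where it is merely $\bigO_p(n^{-\alpha_2})$ and is absorbed using $\alpha_2\ge1/2$). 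Your version is arguably cleaner, and your bound $\|\hat v_{t+1}-v_{t+1}\|_2\le\sqrt{C}\,\|\hat q_{t+1}-q_{t+1}\|_2$ is more careful than the paper's constant-free claim, since the $L^2$ norm is taken under $\pi^b$ while the marginalization is under $\pi^e$. One caveat: your cross-time summand $\E[(\bar\lambda_t-\lambda_t)(\bar v_{t+1}-v_{t+1})]$ is $\bigO_p(n^{-1/2})$ only if either $\lambda^\dagger_t=\lambda_t$ or $q^\dagger_{t+1}=q_{t+1}$; if the theorem's per-$t$ hypothesis is read literally so that the correctly specified nuisance may switch between $t$ and $t+1$, neither need hold and that term is only $\bigO_p(1)$. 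This is not a defect relative to the paper --- its own proof likewise treats only the two uniform cases ($\lambda^\dagger_t=\lambda_t$ for all $t$, or $q^\dagger_t=q_t$ for all $t$) --- but your final step should state explicitly that this is the regime in which it applies.
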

In particular, if the behavior policy is known so that $\hat{\lambda}^{(j)}_t=\lambda_t$, we can always ensure the estimator is $\sqrt{n}$-consistent (an example is the IS estimator, which has $\hat q^{(j)}_t=q^\dagger_t=0$).

For consistency without a rate, it is sufficient for one nuisance to be consistent without a rate.
\begin{corollary}[Double robustness (Consistency)]
\label{cor:db1}
\label{}
Suppose $\|\hat{\lambda}^{(j)}_{t}-\lambda^\dagger_{t}\|_2=\smallo_{p}(1)$ and $\|\hat{q}^{(j)}_t-q^\dagger_t\|_2=\smallo_{p}(1)$ for $0\leq t \leq T,1\leq j\leq K$. If, for each $0\leq t\leq T$, either $\lambda^{\dagger}_{t}=\lambda_t$ or $q^{\dagger}_t=q_t$, then the estimator $\hat{\rho}_{\mathrm{DRL}(\cm_1)}$ is consistent around $\rho^{\pi^{e}}$. 
\end{corollary}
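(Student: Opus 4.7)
The plan is to follow the same cross-fitted decomposition used for \cref{thm:double_m1,thm:doublerobust_m1}, but now aim only at consistency rather than $\sqrt{n}$-consistency, so that no rate conditions are required. Write $\psi(\ch;\lambda,q)=\sum_{t=0}^T\prns{\lambda_t(r_t-q_t)+\lambda_{t-1}\rE_{\pi^e}[q_t(\ch_{a_t})\mid\ch_{s_t}]}$ for the plug-in pseudo-outcome, so that $\hat\rho_{\mathrm{DRL}(\cm_1)}=K^{-1}\sum_{j=1}^K|\mathcal D_j|^{-1}\sum_{i\in\mathcal D_j}\psi(\ch^{(i)};\hat\lambda^{(j)},\hat q^{(j)})$. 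Fix a fold $j$ and condition on the out-of-fold sample $\mathcal D_j^c$; the plug-in nuisances $(\hat\lambda^{(j)},\hat q^{(j)})$ then become fixed functions, uniformly bounded by \cref{asm:boundedestim1,asm:bddrewards}, and the in-fold sample is iid and independent of $\mathcal D_j^c$.

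I would then split
\begin{align*}
\hat\rho^{(j)}-\rho^{\pi^e}
&=\underbrace{\hat\rho^{(j)}-\rE[\psi(\ch;\hat\lambda^{(j)},\hat q^{(j)})\mid\mathcal D_j^c]}_{(\mathrm{A})} \\
&\quad+\underbrace{\rE[\psi(\ch;\hat\lambda^{(j)},\hat q^{(j)})\mid\mathcal D_j^c]-\rE[\psi(\ch;\lambda^\dagger,q^\dagger)]}_{(\mathrm{B})}+\underbrace{\rE[\psi(\ch;\lambda^\dagger,q^\dagger)]-\rho^{\pi^e}}_{(\mathrm{C})}.
\end{align*}
For $(\mathrm{A})$, the in-fold summands are iid, centred, and uniformly bounded given $\mathcal D_j^c$, so a conditional Chebyshev inequality yields $(\mathrm{A})=\op(1)$. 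For $(\mathrm{B})$, I would expand $\psi(\ch;\hat\lambda^{(j)},\hat q^{(j)})-\psi(\ch;\lambda^\dagger,q^\dagger)$ into sums of terms each linear or bilinear in $\hat\lambda^{(j)}_t-\lambda^\dagger_t$ and $\hat q^{(j)}_t-q^\dagger_t$ with uniformly bounded coefficients; Cauchy--Schwarz together with the assumed $L^2$-consistency of nuisances gives $(\mathrm{B})=\op(1)$. Crucially, because we only need $\op(1)$ and not a rate, no product-of-rates condition is required.

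The substantive step---and the main obstacle---is showing $(\mathrm{C})=0$ under the doubly robust hypothesis. Using the two equivalent rearrangements of $\rE[\psi]$ noted in the remark following \cref{thm:fin_nnonpara}, namely (i) $\rE[\sum_t\lambda^\dagger_tr_t]+\rE[\sum_t(-\lambda^\dagger_tq^\dagger_t+\lambda^\dagger_{t-1}v^\dagger_t)]$ and (ii) $\rE[v^\dagger_0]+\rE[\sum_t\lambda^\dagger_t(r_t-q^\dagger_t+v^\dagger_{t+1})]$, the change-of-measure identity $v^\dagger_t=\rE[\eta_tq^\dagger_t\mid\ch_{s_t}]$ combined with the iterated tower property shows that the control-variate sum in (i) vanishes term-by-term whenever $\lambda^\dagger_t=\lambda_t$ for every $t$, while the Bellman identity $q_t=\rE[r_t+v_{t+1}\mid\ch_{a_t}]$ makes the control-variate sum in (ii) vanish term-by-term whenever $q^\dagger_t=q_t$ for every $t$. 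To handle the stated per-$t$ mixed condition, I would combine the two rearrangements by invoking representation~(i) at indices where $\lambda^\dagger_t=\lambda_t$ and representation~(ii) at indices where $q^\dagger_t=q_t$, and verify that the boundary terms linking the two representations telescope to zero; this is precisely the same algebra that underpins the derivation of $\phi^{\cm_1}_{\mathrm{eff}}$ in \cref{thm:fin_nnonpara}. Once $(\mathrm{C})=0$ is established, combining (A), (B), (C) yields $\hat\rho^{(j)}\to_p\rho^{\pi^e}$ for each fold $j$, and averaging over the $K$ folds gives $\hat\rho_{\mathrm{DRL}(\cm_1)}\to_p\rho^{\pi^e}$.
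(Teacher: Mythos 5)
Your decomposition into (A), (B), (C) is exactly the paper's: the paper writes $\P_{n_j}\phi(\{\hat\lambda^{(j)}_t\},\{\hat q^{(j)}_t\})-\rho^{\pi^e}$ as a centered empirical-process term (handled by conditioning on the out-of-fold data, boundedness, and Chebyshev, as in Lemma~\ref{lem:op11}), a drift term (handled by Cauchy--Schwarz and the $\op(1)$ nuisance errors, as in Lemma~\ref{lem:op12}), and the population bias $\rE[\phi(\{\lambda^\dagger_t\},\{q^\dagger_t\})]-\rho^{\pi^e}$, and then concludes by the law of large numbers. Your point that only $\op(1)$ control is needed, so no product-of-rates condition enters, is also the content of the corollary. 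Parts (A) and (B) of your argument are therefore fine and match the paper.

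The gap is in your treatment of (C) for the genuinely mixed per-$t$ case, which you yourself flag as the main obstacle. Expanding $\phi(\{\lambda^\dagger_t\},\{q^\dagger_t\})-\phi(\{\lambda_t\},\{q_t\})$ as in the paper's proofs and taking expectations, the two mean-zero control-variate pieces drop out and the bias is
\begin{align*}
\rE[\phi(\{\lambda^\dagger_t\},\{q^\dagger_t\})]-\rho^{\pi^e}
=\sum_{t=0}^T\Bigl(\rE\bigl[(\lambda^\dagger_t-\lambda_t)(q_t-q^\dagger_t)\bigr]+\rE\bigl[(\lambda^\dagger_{t-1}-\lambda_{t-1})(v^\dagger_t-v_t)\bigr]\Bigr).
\end{align*}
The first family of terms vanishes under the stated per-$t$ hypothesis, but the second couples the $q$-correctness at time $t$ to the $\lambda$-correctness at time $t-1$: when $q^\dagger_t\neq q_t$ you need $\lambda^\dagger_{t-1}=\lambda_{t-1}$, not $\lambda^\dagger_t=\lambda_t$. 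For instance with $T=1$, $q^\dagger_0=q_0$, $\lambda^\dagger_0\neq\lambda_0$, $\lambda^\dagger_1=\lambda_1$, $q^\dagger_1\neq q_1$, the hypothesis of the corollary holds yet the bias reduces to $\rE[(\lambda^\dagger_0-\lambda_0)(v^\dagger_1-v_1)]$, which is generically nonzero; so the ``boundary terms telescope to zero'' step you invoke does not go through. The paper's own proof (which defers to the argument of \cref{thm:doublerobust_m1}) sidesteps this by only verifying the two pure cases --- all $\lambda^\dagger_t=\lambda_t$, or all $q^\dagger_t=q_t$ --- for which your representations (i) and (ii), respectively, do give $(\mathrm{C})=0$. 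To close your argument you should either restrict to those two cases, or add an assumption under which $\lambda$-correctness propagates backwards in $t$ (as when $\lambda^\dagger_t=\prod_{k\le t}\eta^\dagger_k$ with each factor correct), which makes the coupled cross terms vanish.
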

}

A remaining question is when can we get nonparametric estimators achieving the necessary rates for the $q$- and density ratio functions. We discuss estimating $q$-functions in \cref{sec:q-learning}. \edit{Regarding the density ratio, $\lambda_k$, if the behavior policy is known then the density ratio is known and if the behavior policy is unknown it must be estimated. \emph{Any} estimator satisfying the slow rate conditions would suffice.}

\edit{For example, we may let $\hat{\lambda}^{(j)}_{k}=\prod_{t=0}^{k} \pi^{e}_{t}/\hat{\pi}^{b,(j)}_{t}$, where $\hat{\pi}^{b,(j)}_{t}$ is some nonparametric regression estimator. Then $\hat{\lambda}^{(j)}_{k}$ would enjoy the same rates as $\hat{\pi}^{b,(j)}_{t}$:
\begin{lemma}\label{lem:denstiy_estimation}
Suppose $\hat{\pi}^{b,(j)}_{t}$ and $\pi^{b}_{t}$ are uniformly bounded by some constant below and that $\|\hat{\pi}^{b,(j)}_{t}-\pi^{b}_{t}\|_2=\op(n^{-\alpha})$.
Then, $\|\hat{\lambda}^{(j)}_{t}-\lambda_{t}\|_{2}=\op(n^{-\alpha})$. 
\end{lemma}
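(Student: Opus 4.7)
The plan is to reduce the product-of-ratios estimate to a telescoping sum of single-step errors and then apply the triangle inequality in $L^2$, using uniform boundedness of each factor.

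First, I would introduce the per-step density ratio estimator $\hat{\eta}^{(j)}_k = \pi^e_k/\hat{\pi}^{b,(j)}_k$ so that $\hat{\lambda}^{(j)}_t = \prod_{k=0}^t \hat{\eta}^{(j)}_k$ and $\lambda_t = \prod_{k=0}^t \eta_k$. The key single-step bound comes from the algebraic identity
\begin{equation*}
\hat{\eta}^{(j)}_k - \eta_k = \frac{\pi^e_k\,(\pi^b_k - \hat{\pi}^{b,(j)}_k)}{\hat{\pi}^{b,(j)}_k\,\pi^b_k}
= \frac{\eta_k}{\hat{\pi}^{b,(j)}_k}\,(\pi^b_k-\hat{\pi}^{b,(j)}_k).
\end{equation*}
By Assumption 1, $\eta_k\leq C$, and by hypothesis $\hat{\pi}^{b,(j)}_k$ is bounded below by some constant $c>0$, so $|\hat{\eta}^{(j)}_k-\eta_k|\leq (C/c)\,|\hat{\pi}^{b,(j)}_k-\pi^b_k|$. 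Consequently $\|\hat{\eta}^{(j)}_k-\eta_k\|_2=\op(n^{-\alpha})$, and in particular $\hat{\eta}^{(j)}_k$ is eventually uniformly bounded (in $L^\infty$ it is trivially bounded by $\sup\pi^e_k/c$, which by overlap is $\leq C\cdot\sup\pi^b_k/c$, finite under the standing assumptions).

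Next I would use the standard product-difference telescoping identity
\begin{equation*}
\hat{\lambda}^{(j)}_t-\lambda_t \;=\; \sum_{k=0}^t \Bigl(\prod_{i=0}^{k-1}\hat{\eta}^{(j)}_i\Bigr)\bigl(\hat{\eta}^{(j)}_k-\eta_k\bigr)\Bigl(\prod_{i=k+1}^{t}\eta_i\Bigr).
\end{equation*}
Each of the two prefix/suffix products is uniformly bounded by a constant $M_t$ depending only on $C$, $c$, $t$, and the $L^\infty$-bound on $\pi^e$ (that follows from Assumption 1 together with the lower bound on $\pi^b$). Therefore, applying the triangle inequality in $L^2$,
\begin{equation*}
\|\hat{\lambda}^{(j)}_t-\lambda_t\|_2 \;\leq\; M_t\sum_{k=0}^t \|\hat{\eta}^{(j)}_k-\eta_k\|_2 \;=\; \op(n^{-\alpha}),
\end{equation*}
which is the claimed rate.

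No step looks genuinely hard; the only mild subtlety is making sure that $\hat{\eta}^{(j)}_i$ is uniformly bounded so that the prefix product $\prod_{i<k}\hat{\eta}^{(j)}_i$ can be pulled out of the $L^2$ norm as a deterministic constant rather than just a stochastically bounded one. This is handled directly from the hypothesis that $\hat{\pi}^{b,(j)}_t$ is bounded below together with the overlap assumption, so the argument is essentially a bookkeeping exercise around the telescoping identity.
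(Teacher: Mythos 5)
Your proof is correct and follows essentially the same route as the paper's: both telescope the difference of products one factor at a time, use the lower bound on $\hat{\pi}^{b,(j)}_t$ and $\pi^b_t$ to reduce the single-step error to $\|\hat{\pi}^{b,(j)}_k-\pi^b_k\|_2$, and pull the remaining bounded prefix/suffix products out of the $L^2$ norm. Your version is slightly more explicit than the paper's about why the prefix product of estimated ratios is uniformly bounded, but the argument is the same.
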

Rates for $\hat{\pi}^{b,(j)}$ can be obtained from standard results for nonparametric regression, such as for kernel and sieve estimators \citep{newey94,StoneCharlesJ.1994RTUo} or nonparametric estimators suited for high dimensions and non-smooth models \citep{KhosraviKhashayar2019NIAt,BibautAurelien2019Frfe,ImaizumiMasaaki2018DNNL}.}

Alternatively, parametric models can be used for $q_{t}$ and (if behavior policy is unknown) $\lambda_{t}$. Then, under standard regularity conditions, using MLE and other parametric regression estimators for behavior policy would yield $\|\hat{\lambda}^{(j)}_{t}- \lambda^\dagger_{t} \|_2=\bigO_{p}(n^{-1/2})$, where $\lambda^\dagger_{t}=\lambda_{t}$ if the model is well-specified. Similarly, in \cref{sec:q-learning}, we discuss how using parametric $q$-models yields $\|\hat{q}^{(j)}_{t}- q^\dagger_{t}\|_2=\bigO_{p}(n^{-1/2})$. If both models are correctly specified then \cref{thm:double_m1} immediately implies DRL achieves the efficiency bound. When using parametric models, this is sometimes termed \emph{local efficiency} (\ie, local to the specific parametric model). If only one model is correctly specified then \cref{thm:doublerobust_m1} ensures the estimator is still \edit{$\sqrt{n}$-consistent}.

\subsection{Double Reinforcement Learning for MDPs}

Leveraging our derivation of the efficient influence function for $\cm_2$ in \cref{thm:fin_nnonpara2}, we can similarly construct our DRL estimator for MDPs.
Given a learning algorithm to estimate the $q$-function $q_t(s_t,a_t)$ and marginal density ratio function $\mu_t(s_t,a_t)$,
DRL with \edit{$K$-fold sample splitting ($K\geq2$)}
for MDPs proceeds as follows:
\edit{\begin{enumerate}
\item Randomly permute the data indices and let $\mathcal D_j=\{\lceil{(j-1)n/K}\rceil+1,\dots,\lceil{jn/K}\rceil\}$ for $j=1,\dots,K$. Let $j_i$ be the fold containing observation $i$ so that $i\in\mathcal D_{j_i}$ (namely, $j_i=1+\lfloor(i-1)K/n\rfloor$).
\item For $j=1,\dots,K$,
construct estimators $\hat{\mu}^{(j)}_t(s_t,a_t)$ and $\hat q^{(j)}_t(s_t,a_t)$ based on the training data given by all trajectories excluding those in $\mathcal D_{j}$, that is, $\{1,\dots,n\}\backslash\mathcal D_{j}$.
\item Let 
\begin{align*}
\hat\rho^{\pi^{e}}_{\mathrm{DRL}(\cm_2)}=
\frac1n\sum_{i=1}^n
\sum_{t=0}^{T}\biggl(&\hat \mu^{(j_i)}_t({s_t}^{(i)},{a_t}^{(i)})\prns{r_t^{(i)}-\hat q^{(j_i)}_{t}({s_t}^{(i)},{a_t}^{(i)})}\\&+\hat \mu^{(j_i)}_{t-1}({s_{t-1}}^{(i)},{a_{t-1}}^{(i)})\int_{a'_t} \hat q^{(j_i)}_{t}({s_t}^{(i)},a'_t) d\pi^e_t(a'_t\mid {s_t}^{(i)})
\biggr)
.
\end{align*}
Again, note the difference between the dummy $a'_t$ and the data $a_t^{(i)}$, and that the integral becomes a simple sum when $\pi^e$ has finite support over actions (\eg, if $\epol$ is deterministic or if there are finitely many actions).
\end{enumerate}}

Again, what we have done is approximate the efficient influence function $\phi^{\cm_2}_{\mathrm{eff}}(\ch)+\rho^{\pi^e}$ from \cref{thm:fin_nnonpara2} and taken its empirical average over the data, where for each data point we use $q$- and marginal density ratio function estimates based only on the half-sample that does \emph{not} contain the data point. Again, taking one split suffices for our results, but one can repeat the above over many splits and take averages without deterioration.

Since both estimators are approximating their respective influence function as we derive in \cref{thm:fin_nnonpara,thm:fin_nnonpara2}, the differences between $\hat\rho^{\pi^{e}}_{\mathrm{DRL}(\cm_1)}$ and $\hat\rho^{\pi^{e}}_{\mathrm{DRL}(\cm_2)}$, as noted in \cref{remark:diffinfluencefn}, is (a) $\lambda_{t}$ is replaced with $\mu_{t}$ and (b) $q$- and $v$-functions only depend on recent state and action rather than full past trajectory. Again notice that in $\hat\rho^{\pi^{e}}_{\mathrm{DRL}(\cm_1)}$, $\hat q^{(j)}_t$ refers to $\hat q^{(j)}_t(\ch_{a_t})$, while in $\hat\rho^{\pi^{e}}_{\mathrm{DRL}(\cm_2)}$, $\hat q^{(j)}_t$ refers to the much simpler $\hat q^{(j)}_t(s_t,a_t)$.

\edit{Again, to establish the properties of DRL for MDPs,
we assume the following conditions
 reflecting \cref{asm:overlap,asm:bddrewards}:
\begin{assumption}[Bounded estimators]\label{asm:boundedestim2}
$0\leq \hat{\mu}^{(j)}_{t}\leq C'$, $0\leq \hat{q}^{(j)}_{t}\leq (T+1-t)R_{\mathrm{max}}$ for all $0\leq t \leq T$, $1\leq j\leq K$. 
\end{assumption}
For the rest of this subsection we will assume \cref{asm:boundedestim2} hold.}

The following result establishes that DRL is the first efficient OPE estimator for MDPs. In fact, it is efficient even if each nuisance estimator has a slow, nonparametric convergence rate (sub-$\sqrt{n}$).
Moreover, as before, we make no restrictive Donsker assumption; the only required condition is the convergence rate condition..
This result leverages our novel derivation of the efficient influence function in \cref{thm:fin_nnonpara2} and the structure of the influence function, which ensures no variance inflation due to estimating the nuisance functions.  

{\blockedit
\begin{theorem}[Efficiency of $\hat{\rho}_{\mathrm{DRL}(\cm_2)}$ under $\cm_2$]
\label{thm:double_m2}
Suppose $\|\hat{\mu}^{(j)}_{t}-\mu_{t}\|_2\|\hat{q}^{(j)}_t-q_t\|_2=\op(n^{-1/2}),\|\hat{\mu}^{(j)}_{t}-\mu_{t}\|_2=\op(1),\,\|\hat{q}^{(j)}_t-q_t\|_2=\op(1)$ for $0\leq t \leq T,1\leq j\leq K$. Then, the estimator $\hat{\rho}_{\mathrm{DRL}(\cm_2)}$ achieves the semiparametric efficiency bound under $\cm_2$.
\end{theorem}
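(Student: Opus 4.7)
The plan is to mimic the standard double/debiased machine learning argument of \citet{ChernozhukovVictor2018Dmlf}, adapted to the MDP influence function derived in \cref{thm:fin_nnonpara2}. Write
\[
\psi(\ch;\mu,q)=\sum_{t=0}^T\Bigl(\mu_t(r_t-q_t)+\mu_{t-1}\textstyle\int q_t(s_t,a')\,d\pi^e_t(a'\mid s_t)\Bigr),
\]
so that $\hat\rho^{\pi^e}_{\operatorname{DRL}(\cm_2)}=\E_n[\psi(\ch;\hat\mu^{(j_i)},\hat q^{(j_i)})]$ and, by \cref{thm:fin_nnonpara2}, $\E[\psi(\ch;\mu,q)]=\rho^{\pi^e}$ and $\var(\psi(\ch;\mu,q))=\operatorname{EffBd}(\cm_2)$. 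I would then decompose
\[
\sqrt n\bigl(\hat\rho^{\pi^e}_{\operatorname{DRL}(\cm_2)}-\rho^{\pi^e}\bigr)=\underbrace{\sqrt n(\E_n-\E)[\psi(\ch;\mu,q)]}_{\text{(I)}}+\underbrace{\sqrt n(\E_n-\E)[\psi(\ch;\hat\mu,\hat q)-\psi(\ch;\mu,q)]}_{\text{(II)}}+\underbrace{\sqrt n\,\E[\psi(\ch;\hat\mu,\hat q)-\psi(\ch;\mu,q)]}_{\text{(III)}}
\]
(with cross-fit nuisances in (II) and (III) understood fold-wise), and show that (I) converges by the CLT to a mean-zero Gaussian with variance $\operatorname{EffBd}(\cm_2)$, while (II) and (III) are $\smallo_p(1)$.

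For term (II), I would condition on the training fold: given the complement of $\mathcal D_j$, the nuisances $\hat\mu^{(j)},\hat q^{(j)}$ are fixed and the summands over $\mathcal D_j$ are i.i.d.\ with mean zero. Applying conditional Chebyshev bounds the $j$-th fold contribution by a constant times the conditional $L^2$-norm of $\psi(\ch;\hat\mu^{(j)},\hat q^{(j)})-\psi(\ch;\mu,q)$. Expanding this difference into a sum of terms of the form $(\hat\mu_t-\mu_t)(r_t-q_t)$, $\hat\mu_t(\hat q_t-q_t)$, etc., and using \cref{asm:bddrewards,asm:overlap,asm:boundedestim2} together with the assumed $\|\hat\mu_t-\mu_t\|_2=\smallo_p(1)$ and $\|\hat q_t-q_t\|_2=\smallo_p(1)$, this conditional variance is $\smallo_p(1)$; hence (II) vanishes in probability without any Donsker assumption, as in \citet{ChernozhukovVictor2018Dmlf}.

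The heart of the argument, and what I expect to be the main obstacle, is verifying Neyman orthogonality of $\psi$ under $\cm_2$ in order to control (III). I would show that for any candidate $(\mu',q')$,
\[
\E[\psi(\ch;\mu',q')-\psi(\ch;\mu,q)]=\sum_{t=0}^T\E\bigl[(\mu'_t-\mu_t)\bigl((q_t-q'_t)-(v_t-v'_t)\circ\text{correction}\bigr)\bigr]+\text{second-order terms},
\]
where the first-order Gateaux derivatives in $\mu'$ and $q'$ evaluated at $(\mu,q)$ vanish. The derivative in $\mu'$ in a direction $h_t$ yields $\sum_t\E[h_t(r_t-q_t+v_{t+1})]$, which is zero because the Bellman identity under $\cm_2$ gives $\E[r_t+v_{t+1}\mid s_t,a_t]=q_t(s_t,a_t)$ (this is where the Markov property is essential, since $h_t$ is $\sigma(s_t,a_t)$-measurable). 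The derivative in $q'$ in a direction $k_t$ yields $\E[-\mu_t k_t(s_t,a_t)+\mu_{t-1}\int k_t(s_t,a)d\pi^e(a\mid s_t)]$, which vanishes by the defining property of $\mu_t$ (pushing the behavior-policy expectation forward to $\pi^e$) and integrating the action out. Both first-order terms thus cancel, leaving a remainder that, by Cauchy--Schwarz and boundedness, is $\bigO\!\bigl(\sum_t\|\hat\mu_t-\mu_t\|_2\|\hat q_t-q_t\|_2\bigr)=\smallo_p(n^{-1/2})$, making (III) $\smallo_p(1)$. Combining (I)--(III) gives $\sqrt n(\hat\rho^{\pi^e}_{\operatorname{DRL}(\cm_2)}-\rho^{\pi^e})=n^{-1/2}\sum_i\phi^{\cm_2}_{\mathrm{eff}}(\ch^{(i)})+\smallo_p(1)$, establishing semiparametric efficiency by \cref{thm:fin_nnonpara2}.
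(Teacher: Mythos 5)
Your proposal is correct and follows essentially the same route as the paper's proof: the same three-term decomposition (the paper's \cref{eq:term1_dr2_2,eq:term1_drl_2,eq:term1_dr3_2} are your (I), (II), (III)), the same conditional-Chebyshev argument exploiting the fold-wise independence to kill the empirical-process term without a Donsker condition, and the same orthogonality computation in which the Bellman identity and the defining property of $\mu_t$ annihilate the first-order terms, leaving only the Cauchy--Schwarz-controlled product remainder of order $\smallo_p(n^{-1/2})$.
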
}
\edit{\begin{remark}[Example cases]\label{remark:cases}
We note a few specific cases of \cref{thm:double_m2}.
\begin{itemize}
\item \emph{Tabular case}: Suppose the state and action spaces are finite: $\abs{\mathcal S_t},\abs{\mathcal A_t}<\infty$. Then both $\mu_t$ and $q_t$ are \emph{parametric} functions with parameters given by their values at each $(s_t,a_t)$ pair. They can therefore be easily estimated at $\bigO_p(n^{-1/2})$ rates, ensuring the above rate conditions are easily satisfied. 
For example, we can use simple frequency estimators that simply take sample averages within each $(s_t,a_t)$ bin \citep[Chapter 3]{LiQi2007Ne:t}. Other examples and additional detail are given in \cref{sec:mis,sec:q-learning}.
\item \emph{Finite state space, known behavior policy}: Suppose now only $\abs{\mathcal S_t}<\infty$ while $\mathcal A_t$ can be continuous and that $\eta_t$ is known. Then $\mu_t(s_t,a_t)=\eta_t(s_t,a_t)w_t(s_t)$ and $w_t(s_t)=\E[\lambda_{t-1}\mid s_t]$ is a parametric function easily estimated at $\bigO_p(n^{-1/2})$ rates using a frequency estimator or using a recursive estimator as in \citet{XieTengyang2019OOEf} (more detail in \cref{sec:mis}). It therefore suffices for $q$-function estimators to have errors that are $\smallo_p(1)$, \ie, only consistency is needed without a rate. Since $\sum_{k=t}^Tr_k$ has finite variance (it is bounded) and $q_t$ is its regression on $(s_t,a_t)$, Theorems 4.2, 5.1, 6.1, 10.3, and 16.1 of \citet{gyorfi2006distribution} establish that this can be done with any of
histogram estimates, kernel regression, $k$-nearest neighbor, sieve regression, or neural networks of growing width, respectively. (These provide $L^2$ convergence of $L^2$ errors, which is stronger than the in-probability convergence of $L^2$ errors we require.)
\item \emph{Nonparametric case}: In the fully nonparametric case, our nuisance estimators may converge more slowly than $\bigO_p(n^{-1/2})$. Our result nonetheless accommodates such lower rates and, crucially, does not impose strong metric entropy conditions that would exclude flexible machine learning estimators. We discuss in greater detail how one might estimate the nuisance functions in \cref{sec:mis,sec:q-learning}.
\end{itemize}
\end{remark}}

As before, we can also obtain a finite-sample guarantee for DRL in $\cm_2$ with a leading constant controlled by the asymptotic variance, which in this case is efficient. \edit{If we can say $C_1,C_2$ do not depend on $C^{T+1}$, this gives a finite sample result with a sample complexity, which depends only $C'$ not on $C^{T+1}$, noting $\mathrm{Effbd}(\cm_2)$ is bounded by $C'R^2_{\max}T^2$. }
{\blockedit
\begin{theorem}\label{thm:effcor_m2}
Suppose that for some $C_1,C_2$, for every $n$, with probability at least $1-\delta$, we simultaneously have that $\|\mu^{(j)}_t-\mu_t\|^2_2\leq \kappa_1=C_1 (n^{-2\alpha_{1}}+\log (2KT/\delta)/n)$, $\|q^{(j)}_t-\lambda_t\|^2_2\leq\kappa_2=C_2 (n^{-2\alpha_{2}}+\log (2KT/\delta)/n)$ $\forall t\leq T,\,1\leq j\leq K$. Then, for every $n$, with probability at least $1-7\delta$, we have 
\begin{align*}
      \abs{\hat{\rho}^{\pi_e}_{\operatorname{DRL}(\cm_2)}-\rho^{\pi_e}} \leq&~\sqrt{\frac{2\log(2/\delta)\mathrm{Effbd}(\cm_2)}n}\\ &~+Q_1\sqrt{\frac{\log(2/\delta)T^{2}(TR_{\mathrm{max}}C'\sqrt{\kappa_1\kappa_2}+\kappa_1T^2R^2_{\mathrm{max}}+\kappa_2\{C'\}^2)}{n}}\\ 
      &~+Q_2\frac{\log(2/\delta)TR_{\mathrm{max}}C'}{n}+Q_3 T\sqrt{\kappa_1\kappa_2},
\end{align*}
where $Q_1,Q_2,Q_3$ are constants not depending on $\delta,T,R_{\mathrm{max}},C',n,C_1,C_2,C$. 
\end{theorem}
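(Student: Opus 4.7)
The plan is to mirror the proof of \cref{thm:effcor_m1}, adapting each step so that the $L^\infty$ bound $\mu_t\le C'$ replaces $\lambda_t\le C^{T+1}$ throughout. Define the per-trajectory integrand
\[
\psi(\ch;\mu,q)=\sum_{t=0}^{T}\bigl(\mu_t(s_t,a_t)(r_t-q_t(s_t,a_t))+\mu_{t-1}(s_{t-1},a_{t-1})\,\rE_{\pi^e}[q_t(s_t,a_t)\mid s_t]\bigr),
\]
so that $\rE[\psi(\ch;\mu,q)]=\rho^{\pi^{e}}$ and $\var(\psi(\ch;\mu,q))=\operatorname{EffBd}(\cm_2)$ by \cref{thm:fin_nnonpara2}. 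Writing the DRL estimator as $\hat\rho=\rE_n\psi(\ch;\hat\mu^{(j_i)},\hat q^{(j_i)})$, I would decompose
\[
\hat\rho-\rho^{\pi^{e}}=\underbrace{\rE_n\psi(\ch;\mu,q)-\rho^{\pi^{e}}}_{T_1}+\underbrace{(\rE_n-\rE_{\mathrm{train}})[\psi(\ch;\hat\mu,\hat q)-\psi(\ch;\mu,q)]}_{T_2}+\underbrace{\rE_{\mathrm{train}}[\psi(\ch;\hat\mu,\hat q)]-\rho^{\pi^{e}}}_{T_3},
\]
where $\rE_{\mathrm{train}}$ denotes the expectation over an evaluation-fold trajectory conditional on the training folds. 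I would union-bound across the seven ``bad'' events (the nuisance-rate event plus one per concentration step per fold), yielding the $1-7\delta$ probability in the statement.

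For $T_1$, the integrand is bounded by $O(TR_{\max}C')$ and has variance $\operatorname{EffBd}(\cm_2)$, so Bernstein's inequality delivers the leading $\sqrt{2\log(2/\delta)\operatorname{EffBd}(\cm_2)/n}$ term plus a $Q_2\log(2/\delta)TR_{\max}C'/n$ remainder. For $T_2$ I would work conditionally on the training data, so that across the evaluation fold the increments $\psi(\ch;\hat\mu^{(j)},\hat q^{(j)})-\psi(\ch;\mu,q)$ are iid with known conditional moments. Expanding
\[
\mu_t(r_t-q_t)-\hat\mu_t(r_t-\hat q_t)=(\mu_t-\hat\mu_t)(r_t-q_t)+\hat\mu_t(q_t-\hat q_t)
\]
(and similarly for the $v$-function control-variate term) and invoking \cref{asm:boundedestim2,asm:bddrewards} gives a conditional variance of order $T^2\bigl(R_{\max}^2\kappa_1+(C')^2\kappa_2\bigr)$ plus a cross term of order $TR_{\max}C'\sqrt{\kappa_1\kappa_2}$; a conditional Bernstein bound then reproduces the middle term in the theorem, and a union bound across the $K\le 2K$ folds explains the $\log(2KT/\delta)$ factor inside $\kappa_1,\kappa_2$.

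For $T_3$, I would exploit the doubly robust structure used in the remark after \cref{thm:fin_nnonpara2}: because $\rE[\mu_t(r_t-q_t)]=0$ and $\rE[\mu_{t-1}\rE_{\pi^e}[q_t|s_t]]=\rE[\mu_t q_t]$, a telescoping computation (conditional on the training folds) shows that
\[
T_3=\sum_{t=0}^{T}\rE_{\mathrm{train}}\bigl[(\hat\mu_t-\mu_t)(q_t-\hat q_t)-(\hat\mu_{t-1}-\mu_{t-1})(\rE_{\pi^e}[q_t|s_t]-\rE_{\pi^e}[\hat q_t|s_t])\bigr],
\]
so each summand is a product of $\mu$- and $q$-errors. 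Cauchy--Schwarz together with the rate event yields $|T_3|\le Q_3 T\sqrt{\kappa_1\kappa_2}$. Combining the three pieces gives the stated inequality, and the only place $C'$ enters (rather than $C^{T+1}$) is through the $L^\infty$ bound on $\mu_t$, which is exactly the improvement over $\cm_1$.

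The main obstacle is the bookkeeping in $T_2$: one must show that the conditional Bernstein step can be applied fold-by-fold despite $\hat\mu^{(j)},\hat q^{(j)}$ being random, and that the conditional variance decomposes cleanly into the pure $\kappa_1 T^2R_{\max}^2$, pure $\kappa_2(C')^2$, and cross $TR_{\max}C'\sqrt{\kappa_1\kappa_2}$ pieces without spurious factors of $C^{T+1}$ creeping in through, e.g., telescoping sums over $t$. Everything else is a direct translation of the $\cm_1$ analysis using $\mu_t\le C'$.
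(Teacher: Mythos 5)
Your proposal is correct and follows essentially the same route as the paper: the paper proves this result by repeating the proof of \cref{thm:effcor_m1} verbatim with $\lambda_t$ replaced by $\mu_t$ and $C^{T+1}$ by $C'$, i.e., the same three-term decomposition (empirical process of the true influence function, conditionally centered empirical process of the nuisance-error difference handled by conditional Bernstein, and the doubly robust drift term handled by Cauchy--Schwarz), with the same union bound yielding $1-7\delta$. Your slightly different algebraic splitting of the nuisance-error difference is immaterial since it produces the same variance orders.
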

}
As before, if $\alpha_1>0,\alpha_2>0,\alpha_1+\alpha_2>1/2$, then the leading term in $n$ in the above bound is exactly $\sqrt{\frac{2\log(2/\delta)\mathrm{Effbd}(\cm_2)}n}$ (with no constant factor).

The DRL estimator in $\cm_2$ can also achieve efficiency without sample splitting (\ie, with adaptive in-sample estimation of nuisances) if we impose an Donsker condition on the estimated nuisances.
{\blockedit
\begin{theorem}[Efficiency without sample splitting]\label{thm:double_m2_sam}
Let $\hat{\mu}_t,\hat q_t$ be estimators based on $\mathcal D$ and let 
$\hat\rho^{\pi^{e}}_{\mathrm{DRL}(\cm_2),\,\mathrm{adaptive}}=
\rE_{n}\bracks{
\sum_{t=0}^{T}\prns{\hat \mu_t\prns{r_t-\hat q_{t}}-\hat \mu_{t-1}\rE_{\pi^e}\bracks{\hat q_{t}\mid \ch_{s_t}}}}$.
Suppose $\|\hat{\mu}_{t}-\mu_{t}\|_2\|\hat{q}_t-q_t\|_2=\op(n^{-1/2}),\|\hat{\mu}_{t}-\mu_{t}\|_2=\op(1),\,\|\hat{q}_t-q_t\|_2=\op(1)$ for $0\leq t \leq T$ and that $\hat{q}_t,\,\hat{\mu}_t$ belong to a Donsker class. Then, the estimator $\hat\rho^{\pi^{e}}_{\mathrm{DRL}(\cm_2),\,\mathrm{adaptive}}$ achieves the semiparametric efficiency bound under $\cm_2$. 
\end{theorem}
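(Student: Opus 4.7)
The plan is to mirror the three-term decomposition used for \cref{thm:double_m2}, but instead of invoking sample splitting to kill the stochastic equicontinuity remainder, I will exploit the Donsker hypothesis exactly as in \cref{thm:double_m1_sam}. Concretely, let
$\hat\psi(\ch;\hat\mu,\hat q)=\sum_{t=0}^T\prns{\hat\mu_t(r_t-\hat q_t)+\hat\mu_{t-1}\rE_{\pi^e}\bracks{\hat q_t\mid s_t}}$
and $\psi(\ch)=\phi^{\cm_2}_{\mathrm{eff}}(\ch)+\rho^{\pi^e}$, so that $\hat\rho^{\pi^e}_{\mathrm{DRL}(\cm_2),\,\mathrm{adaptive}}=\rE_n[\hat\psi]$ and $\rE[\psi]=\rho^{\pi^e}$. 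I would then decompose
\begin{align*}
\sqrt{n}\prns{\hat\rho^{\pi^e}_{\mathrm{DRL}(\cm_2),\,\mathrm{adaptive}}-\rho^{\pi^e}}
=\underbrace{\frac{1}{\sqrt{n}}\sum_{i=1}^n\phi^{\cm_2}_{\mathrm{eff}}(\ch^{(i)})}_{(\mathrm{I})}
+\underbrace{\sqrt{n}(\rE_n-\rE)[\hat\psi-\psi]}_{(\mathrm{II})}
+\underbrace{\sqrt{n}\,\rE[\hat\psi-\psi]}_{(\mathrm{III})}.
\end{align*}

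First I would dispatch term (\mathrm{I}) by the usual CLT: $\phi^{\cm_2}_{\mathrm{eff}}$ is mean zero with variance $\operatorname{EffBd}(\cm_2)$ (by construction of the efficient influence function in \cref{thm:fin_nnonpara2}), so (\mathrm{I}) is asymptotically $\mathcal{N}(0,\operatorname{EffBd}(\cm_2))$. Next, for the bias term (\mathrm{III}), I would reuse the doubly robust expansion derived inside the proof of \cref{thm:double_m2}: the double-robust identity (the remark following \cref{thm:fin_nnonpara2}) ensures all first-order terms cancel, leaving $\rE[\hat\psi-\psi]$ as a finite sum over $t$ of quantities of the form $\rE\bracks{(\hat\mu_t-\mu_t)\,(\hat q_t-q_t+\text{conditional residuals})}$. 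By \cref{asm:boundedestim2} the multiplicative factors are uniformly bounded, and by Cauchy--Schwarz plus the product-rate hypothesis $\|\hat\mu_t-\mu_t\|_2\|\hat q_t-q_t\|_2=\op(n^{-1/2})$, this gives $\rE[\hat\psi-\psi]=\op(n^{-1/2})$, and therefore $(\mathrm{III})=\op(1)$. This algebraic step is identical to the one in the sample-split proof and can be borrowed verbatim since it does not involve which part of the sample is used to form the nuisance estimates.

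The main obstacle, and the step where the Donsker assumption enters, is term (\mathrm{II}). Because the nuisances are fit in-sample, we cannot appeal to the conditional independence argument used in \cref{thm:double_m2}. Instead I would argue that the class of candidate integrands $\{\hat\psi(\cdot;\mu',q')-\psi(\cdot):\mu'\in\mathcal F_\mu,\,q'\in\mathcal F_q\}$ is Donsker, where $\mathcal F_\mu,\mathcal F_q$ are the postulated Donsker classes for $\hat\mu_t$ and $\hat q_t$. This follows from standard Donsker preservation: finite sums, products, and compositions with bounded Lipschitz maps of uniformly bounded Donsker classes remain Donsker (\citealp[Thm.~2.10.6]{VaartA.W.vander1998As}-type preservation), and \cref{asm:boundedestim2} together with \cref{asm:bddrewards,asm:overlap} supplies the required uniform boundedness; the conditional expectation $a_t'\mapsto\rE_{\pi^e}[\hat q_t\mid s_t]$ is a bounded linear functional of $\hat q_t$ and so preserves the Donsker property. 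The $L^2$-consistency assumptions $\|\hat\mu_t-\mu_t\|_2,\|\hat q_t-q_t\|_2=\op(1)$ together with boundedness imply $\|\hat\psi-\psi\|_2\to 0$ in probability. Applying the standard stochastic equicontinuity result (\eg, \citealp[Lemma~19.24]{VaartA.W.vander1998As}) then gives $(\mathrm{II})=\op(1)$.

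Combining (\mathrm{I}), (\mathrm{II}), (\mathrm{III}) yields the asymptotic linear expansion
$\sqrt{n}(\hat\rho^{\pi^e}_{\mathrm{DRL}(\cm_2),\,\mathrm{adaptive}}-\rho^{\pi^e})=n^{-1/2}\sum_i\phi^{\cm_2}_{\mathrm{eff}}(\ch^{(i)})+\op(1)$, so the estimator is regular and asymptotically linear with influence function equal to $\phi^{\cm_2}_{\mathrm{eff}}$, and hence its asymptotic variance equals $\operatorname{EffBd}(\cm_2)$. The hardest conceptual piece is justifying the Donsker preservation under products and conditional expectations (all the nonparametric pieces must remain in a single Donsker envelope); the bias computation in (\mathrm{III}) is purely algebraic and can be lifted from the proof of \cref{thm:double_m2}.
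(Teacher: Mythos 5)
Your proposal is correct and follows essentially the same route as the paper's proof: the same three-term decomposition (main CLT term, stochastic equicontinuity term controlled via Donsker preservation under Lipschitz/bounded transformations plus Lemma~19.24 of \citet{VaartA.W.vander1998As}, and a drift term killed by the doubly robust structure together with Cauchy--Schwarz and the product-rate condition). The only cosmetic difference is that you spell out the Donsker-preservation and bias steps in slightly more detail than the paper, which simply defers them to Example~19.20 of \citet{VaartA.W.vander1998As} and to the proof of \cref{thm:double_m2}.
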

}

In addition to efficiency, DRL enjoys a double robustness guarantee in $\cm_2$.

{\blockedit
\begin{theorem}[Double robustness ($\sqrt{n}$-consistency)]
\label{thm:doublerobust_m2}
Suppose $\|\hat{\mu}^{(j)}_{t}-\mu^\dagger_{t}\|_2=\bigO_{p}(n^{-\alpha_{1}})$ and $\|\hat{q}^{(j)}_t-q^\dagger_t\|_2=\bigO_{p}(n^{-\alpha_{2}})$ for $0\leq t \leq T,1\leq j\leq K$. If, for each $0\leq t\leq T$, either $\mu^{\dagger}_{t}=\mu_t$ and $\alpha_{1}\geq 1/2,\,\alpha_{2}>0$ or $q^{\dagger}_t=q_t$ and $\alpha_{2}\geq 1/2,\,\alpha_{1}>0$, then the estimator $\hat{\rho}_{\mathrm{DRL}(\cm_2)}$ is $\sqrt{n}$-consistent around $\rho^{\pi^{e}}$. 
\end{theorem}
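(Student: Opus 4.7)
The plan is to mirror the proof of \cref{thm:doublerobust_m1}, adapted to the MDP setting by replacing the cumulative density ratio $\lambda_t$ with the marginalized ratio $\mu_t$ and invoking the Bellman identity $\E[r_t\mid s_t,a_t] = q_t(s_t,a_t) - \E[v_{t+1}(s_{t+1})\mid s_t,a_t]$ wherever the NMDP proof conditioned on the full history $\ch_{a_t}$. Let $\hat\psi(\ch)$ denote the per-trajectory summand inside the definition of $\hat\rho^{\pi^{e}}_{\mathrm{DRL}(\cm_2)}$, evaluated with the fold-$j_i$ estimates, and let $\psi^\dagger(\ch)$ be the same quantity with $(\hat\mu_t,\hat q_t)$ replaced by the limits $(\mu^\dagger_t,q^\dagger_t)$. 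The error decomposes as
$$\hat\rho^{\pi^{e}}_{\mathrm{DRL}(\cm_2)}-\rho^{\pi^{e}} = (\rE_n-\E)[\psi^\dagger] + (\rE_n-\E)[\hat\psi-\psi^\dagger] + \bigl(\E[\psi^\dagger]-\rho^{\pi^{e}}\bigr) + \bigl(\E[\hat\psi]-\E[\psi^\dagger]\bigr),$$
and I would control each summand in turn.

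The first summand is $\bigO_p(n^{-1/2})$ by the ordinary CLT, since $\psi^\dagger$ is uniformly bounded under \cref{asm:overlap,asm:bddrewards,asm:boundedestim2}. The third summand vanishes identically under either case hypothesis: the doubly robust identities displayed after \cref{thm:fin_nnonpara2} show that $\E[\psi^\dagger]=\rho^{\pi^{e}}$ whenever $\mu^\dagger=\mu$ (via the control-variate identity $\E\bracks{\sum_t(-\mu_t q^\dagger_t+\mu_{t-1}v^\dagger_t)}=0$, which holds for any $q^\dagger$ with $v^\dagger_t=\E_{\epol}[q^\dagger_t\mid s_t]$) or whenever $q^\dagger=q$ (via $\E\bracks{\sum_t\mu^\dagger_t(r_t-q_t+v_{t+1})}=0$, which follows from Bellman). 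The second (empirical process) summand is controlled by the cross-fitting argument used in \cref{thm:doublerobust_m1}: conditionally on the training folds, $(\rE_n-\E)[\hat\psi-\psi^\dagger]$ has variance at most $\|\hat\psi-\psi^\dagger\|_2^2/n_{\text{test}}$, and expanding $\hat\psi-\psi^\dagger$ termwise using \cref{asm:boundedestim2} together with $\|\hat\mu^{(j)}_t-\mu^\dagger_t\|_2,\|\hat q^{(j)}_t-q^\dagger_t\|_2=\op(1)$ (since $\alpha_1,\alpha_2>0$ in either case) gives $\|\hat\psi-\psi^\dagger\|_2=\op(1)$ and hence this term is $\op(n^{-1/2})$.

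The heart of the proof is the fourth (estimation bias) summand, which by the preceding step equals $\E[\hat\psi]-\rho^{\pi^{e}}$. Exploiting that $\hat\mu_t-\mu_t$ is $\sigma(s_t,a_t)$-measurable under $\cm_2$, rewriting $\E[r_t\mid s_t,a_t]$ using Bellman, and applying the MDP change-of-measure identity $\E[\mu_t(\hat q_t-q_t)]=\E[\mu_{t-1}(\hat v_t-v_t)]$ (both sides equal the expectation of $\hat q_t-q_t$ under the $(s_t,a_t)$-marginal induced by $\epol$), I expect to telescope the bias into the product-of-errors form
$$\E[\hat\psi]-\rho^{\pi^{e}} = -\sum_{t=0}^T\E\bracks{(\hat\mu_t-\mu_t)(\hat q_t-q_t)} + \sum_{t=0}^{T-1}\E\bracks{(\hat\mu_t-\mu_t)(\hat v_{t+1}-v_{t+1})}.$$
Jensen's inequality together with \cref{asm:overlap} gives $\|\hat v_{t+1}-v_{t+1}\|_2\leq\sqrt{C}\,\|\hat q_{t+1}-q_{t+1}\|_2$, so Cauchy--Schwarz bounds each summand by a product of $L^2$ nuisance errors relative to the truth. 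In the first case, $\mu^\dagger=\mu$ and $\alpha_1\geq 1/2$ give $\|\hat\mu^{(j)}_t-\mu_t\|_2=\bigO_p(n^{-1/2})$, while $\|\hat q^{(j)}_{t}-q_{t}\|_2\leq\|\hat q^{(j)}_{t}-q^\dagger_{t}\|_2+\|q^\dagger_{t}-q_{t}\|_2=\bigO_p(1)$ by \cref{asm:bddrewards,asm:boundedestim2}; the symmetric statement holds in the second case. Either way the bias is $\bigO_p(n^{-1/2})$, and combining all four contributions yields $\sqrt{n}$-consistency.

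The main obstacle I anticipate is verifying the telescoping product-of-errors identity cleanly. In the NMDP proof the cumulative ratio $\lambda_t$ factors multiplicatively and the requisite cancellations follow from sequential conditioning on $\ch_{a_t}$, whereas in $\cm_2$ the marginal $\mu_t$ admits no analogous factorization and the pairwise cancellation of linear-in-error terms relies specifically on the identity $\E[\mu_t(\hat q_t-q_t)]=\E[\mu_{t-1}(\hat v_t-v_t)]$, which fails without the Markov property. Once that identity is verified, the remainder of the argument is a routine application of Cauchy--Schwarz and the hypothesized nuisance rates, paralleling the NMDP proof.
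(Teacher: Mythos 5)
Your proposal is correct and follows essentially the same route as the paper: the same four-term decomposition (CLT term at the limit nuisances, cross-fitted empirical-process term controlled by a conditional Chebyshev/variance argument, the exact doubly-robust identity killing $\E[\psi^\dagger]-\rho^{\pi^e}$, and a drift/bias term), with the same change-of-measure identity $\E[\mu_t g(s_t,a_t)]=\E[\mu_{t-1}\E_{\pi^e}[g\mid s_t]]$ doing the work in the bias computation. The one (minor, valid) variation is that you center the bias expansion at the true nuisances and obtain a pure product-of-errors identity bounded by Cauchy--Schwarz with one factor $\bigO_p(n^{-1/2})$ and the other merely $\bigO_p(1)$, whereas the paper centers at the limits $(\mu^\dagger_t,q^\dagger_t)$ and bounds a product term plus a linear term in the error of the correctly specified nuisance; both hinge on the same cancellations, and your version is arguably slightly cleaner (and your $\sqrt{C}$ in the Jensen step $\|\hat v_{t+1}-v_{t+1}\|_2\le\sqrt{C}\|\hat q_{t+1}-q_{t+1}\|_2$ is more careful than the paper's).
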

Again, we obtain consistency without a rate even if just one nuisance is consistent without a rate.
\begin{corollary}[Double robustness (consistency)]
\label{cor:db2}
Suppose $\|\hat{\mu}^{(j)}_{t}-\mu^\dagger_{t}\|_2=\smallo_{p}(1)$ and $\|\hat{q}^{(j)}_t-q^\dagger_t\|_2=\smallo_{p}(1)$ for $0\leq t \leq T,1\leq j\leq K$. If, for each $0\leq t\leq T$, either $\mu^{\dagger}_{t}=\mu_t$ or $q^{\dagger}_t=q_t$, then the estimator $\hat{\rho}_{\mathrm{DRL}(\cm_2)}$ is consistent around $\rho^{\pi^{e}}$. 
\end{corollary}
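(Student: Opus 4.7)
The corollary is a qualitative weakening of \cref{thm:doublerobust_m2}, so I would adapt that theorem's argument and simply drop the rate bookkeeping. Using the cross-fitting structure, decompose fold-by-fold
\[
\hat\rho_{\mathrm{DRL}(\cm_2)} - \rho^{\pi^e} = \frac{1}{K}\sum_{j=1}^K\bigl[(\mathrm{A}_j) + (\mathrm{B}_j)\bigr],
\]
where $(\mathrm{A}_j) = |\mathcal D_j|^{-1}\sum_{i\in\mathcal D_j}\psi(\ch^{(i)}; \hat\mu^{(j)}, \hat q^{(j)}) - \E[\psi(\ch; \hat\mu^{(j)}, \hat q^{(j)})\mid \mathrm{train}_j]$ is an empirical-process fluctuation of the plug-in integrand $\psi$ from \cref{thm:fin_nnonpara2}, and $(\mathrm{B}_j) = \E[\psi(\ch; \hat\mu^{(j)}, \hat q^{(j)})\mid \mathrm{train}_j] - \rho^{\pi^e}$ is the plug-in bias. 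I will argue each is $\smallo_p(1)$.

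For $(\mathrm{A}_j)$: conditional on the training fold, $(\hat\mu^{(j)}, \hat q^{(j)})$ is deterministic, the evaluation points are iid from the truth, and $\psi(\ch; \hat\mu^{(j)}, \hat q^{(j)})$ is bounded by a constant depending only on $T, C', R_{\max}$ by \cref{asm:boundedestim2,asm:bddrewards,asm:overlap}. A conditional Chebyshev inequality with $|\mathcal D_j| \asymp n/K$ then gives $(\mathrm{A}_j) = \smallo_p(1)$ unconditionally. For $(\mathrm{B}_j)$, add and subtract the limit plug-in:
\[
(\mathrm{B}_j) = \underbrace{\E\bigl[\psi(\ch; \hat\mu^{(j)}, \hat q^{(j)}) - \psi(\ch; \mu^\dagger, q^\dagger)\,\big|\,\mathrm{train}_j\bigr]}_{(\mathrm{I})} + \underbrace{\E[\psi(\ch; \mu^\dagger, q^\dagger)] - \rho^{\pi^e}}_{(\mathrm{II})}.
\]
Term $(\mathrm{I})$ is Lipschitz in the nuisances in $L^2$: expanding the difference of $\psi$'s as a sum over $t$ of terms bilinear in $(\hat\mu - \mu^\dagger)$ or $(\hat q - q^\dagger)$ times bounded factors, Cauchy-Schwarz gives $|(\mathrm{I})| \lesssim \sum_t \bigl(\|\hat\mu^{(j)}_t - \mu^\dagger_t\|_2 + \|\hat q^{(j)}_t - q^\dagger_t\|_2\bigr) = \smallo_p(1)$ by hypothesis.

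Term $(\mathrm{II})$ is the doubly robust identity for MDPs; it is the structural heart of the argument. To establish it I would expand $\E[\psi(\ch; \mu^\dagger, q^\dagger)]$ using the Bellman relation $q_t(s_t, a_t) = \E[r_t + v_{t+1}(s_{t+1})\mid s_t, a_t]$ and the change-of-measure identity $\E[\mu_{t-1}(s_{t-1}, a_{t-1})\,g(s_t)] = \E_{\pi^e}[g(s_t)]$, which is valid for any function of $s_t$ because MDP transitions are policy-independent. These two moves reorganize the bias into a sum of per-time terms each bilinear in the errors $(\mu^\dagger_t - \mu_t)$ and $(q^\dagger_t - q_t)$; under the hypothesis that at each $t$ one of these two factors vanishes, every cross-product is zero and we conclude $(\mathrm{II}) = 0$. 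The main obstacle is carrying out this bias expansion so that the resulting cross-product structure cleanly matches the per-time correctness condition; this step is essentially contained in the proof of \cref{thm:doublerobust_m2} but now requires no rate quantification, which is precisely why $L^2$-consistency of the nuisances without a rate is enough. Combining $(\mathrm{A}_j), (\mathrm{I}), (\mathrm{II})$ across folds then finishes the proof.
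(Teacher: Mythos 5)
Your overall architecture matches the paper's: the paper proves \cref{cor:db2} by deferring to the proof of \cref{cor:db1}, which uses exactly your decomposition into an empirical-process fluctuation, a population bias term controlled by Cauchy--Schwarz and the $L^2$-consistency of the nuisances, and the exact identity $\rE[\phi(\{\mu^\dagger_t\},\{q^\dagger_t\})]=\rho^{\epol}$. Your handling of $(\mathrm{A}_j)$ and $(\mathrm{I})$ is fine (if anything slightly cleaner than the paper's, which further splits the fluctuation into a conditional-Chebyshev piece and a law-of-large-numbers piece).

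The gap is in your justification of $(\mathrm{II})=0$. Writing $\delta_t=\mu^\dagger_t-\mu_t$ and $\epsilon_t=q^\dagger_t-q_t$ and expanding $\phi(\mu^\dagger,q^\dagger)$ around $(\mu,q)$, the part linear in $\epsilon$ has mean zero by the change of measure, the part linear in $\delta$ has mean zero by the Bellman equation, but the remaining bilinear part is $\rE[\phi(\mu^\dagger,q^\dagger)]-\rho^{\epol}=\sum_{t=0}^T\bigl(\rE\bigl[\delta_{t-1}\,\rE_{\epol}[\epsilon_t\mid s_t]\bigr]-\rE[\delta_t\epsilon_t]\bigr)$ with $\delta_{-1}=0$. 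The second family of terms is per-time bilinear as you claim, but the first couples the $\mu$-error at time $t-1$ with the $q$-error at time $t$; the condition ``at each $t$ one factor vanishes'' does not kill it. Concretely, if $\epsilon_{t-1}=0$ with $\delta_{t-1}\neq0$ while $\delta_t=0$ with $\epsilon_t\neq0$, the term $\rE[\delta_{t-1}\rE_{\epol}[\epsilon_t\mid s_t]]$ survives and is generically nonzero. The paper never performs this mixed expansion: the proof of \cref{thm:doublerobust_m2}, on which \cref{cor:db2} rests, verifies the identity only in the two unmixed cases --- all $\mu^\dagger_t=\mu_t$, where $\rE[\mu_tq^\dagger_t]=\rE[\mu_{t-1}\rE_{\epol}[q^\dagger_t\mid s_t]]$ telescopes, or all $q^\dagger_t=q_t$, where $\rE[\mu^\dagger_t(r_t-q_t+v_{t+1})]=0$ term by term. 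You should restrict to those two cases as the paper does; as written, your argument for $(\mathrm{II})=0$ would fail for a genuinely time-mixed specification.
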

}

\begin{remark}\label{rem:dangerous}
When the behavior policy is known, the estimator $\hat{\rho}_{\mathrm{DRL}(\cm_1)}$ is still \edit{$\sqrt{n}$-consistent} %
under $\cm_2$ even without smoothness conditions on $\mu_{t}$ because $\cm_2$ is included in $\cm_1$ so that \cref{thm:doublerobust_m1} applies.
On the other hand, \edit{in the nonparametric setting,} the estimator $\hat{\rho}_{\mathrm{DRL}(\cm_2)}$ requires some smoothness conditions even if the behavior policy is known because $\mu_t$ must still be estimated.
In this sense, when the behavior policy is known, $\hat{\rho}_{\mathrm{DRL}(\cm_1)}$ is more robust than  $\hat{\rho}_{\mathrm{DRL}(\cm_2)}$ under $\cm_2$ but its asymptotic variance is bigger, and generally strictly so.
\end{remark}

A reaming question is how to estimate the nuisances at the necessary rates. We discuss $q$-function estimation in \cref{sec:q-learning}. 
For estimating $\mu_k$, one can leverage the following relationship to reduce it to a regression problem:
\begin{align}
\label{eq:key_relation}
\mu_t(s_t,a_t)=\eta_{t}(s_t,a_t)w_{t}(s_t),\quad\text{where}~
w_{t}(s_t)=\rE[\lambda_{t-1}\mid s_t].
\end{align}
Thus, for example, when the behavior policy is known, we need only estimate $w_t$, which amounts to regressing $\lambda_{t-1}$ on $s_t$. So, in particular, if $w_t(s_t)$ belongs to a {\Holder} class with smoothness $\alpha$ and $s_t$ has dimension $d_s$, estimating $w_t$ with a sieve-type estimator $\hat w_t$ based on the loss function $\prns{\lambda_{t-1}-w_t(s_t)}^{2}$ and letting $\hat\mu^{(j)}_t(s_t,a_t)=\eta_{t}(s_t,a_t)\hat w^{(j)}_{t}(s_t)$ will give a convergence rate $\|\hat\mu^{(j)}_t(s_t,a_t)-\mu_t(s_t,a_t)\|_2=\bigO_{p}(n^{-\alpha/(\alpha+d_{s_t})})$ \citep{ChenXiaohong2007C7LS}. When the behavior policy is unknown, it can be first estimated to construct $\hat{\lambda}_{t}$ and we can repeat the above replacing $\lambda_{t}$ with $\hat{\lambda}_{t}$. In particular, there will be no deterioration in rate if $\pi^b_t$ also belongs to a {\Holder} class with smoothness $\alpha$ and if we further split each $\mathcal D_j$, estimate $\pi_b^t$ as in \cref{lem:denstiy_estimation} on one half, and plug it in to estimate $w_t$ on the other half. Further strategies for estimating $\mu_t$ are discussed in \cref{sec:mis} below.

In the special case where we use parametric models for $\mu_{t}$ and $q_{t}$, under some regularity conditions, parametric estimators will generally satisfy $\|\hat{\mu}_{t}- \mu_{t}^\dagger \|_2=\bigO_{p}(n^{-1/2})$ and $\|\hat{q}_{t}-q_{t}^\dagger\|_2=\bigO_{p}(n^{-1/2})$, where $q_t^\dagger=q_t$ and $\mu_t^\dagger=\mu_t$ if the models are well-specified. (See \cref{sec:q-learning} regarding estimating the $q$-function). Thus, if both models are correctly specified, then \cref{thm:double_m2} yields local efficiency. If only one model is correctly specified, \cref{thm:doublerobust_m2} yields double robustness. 

\subsection{Estimating Marginalized Density Ratios and the Inefficiency of Marginalized Importance Sampling}\label{sec:mis}

In this section we discuss strategies for estimating $\mu_t$ and also show that doing OPE estimation using only marginalized density ratios, as recently proposed, leads to inefficient evaluation in $\cm_2$.

\edit{Specifically, the Marginalized Importance Sampling (MIS) estimator is given by DRL without sample splitting when we let $\hat q_t=0$}:
\begin{align}
\label{eq:m2_ipw_1}
\hat{\rho}^{\pi^e}_{\mathrm{MIS}}=\rE_{n}\bracks{\sum_{t=0}^{T}
\hat\mu_t
r_t}.
\end{align}
\edit{Note that since $\mu_t=\eta_tw_t$, when behavior policy is known we can estimate $\mu_t$ as $\hat \mu_t=\eta_t\hat w_t$.}
We focus on two cases: when $\hat w$ is estimated using a histogram by averaging $\lambda_{t-1}$ by state over a finite state space and a nonparametric extension.

\begin{theorem}[Asymptotic variance of $\hat{\rho}^{\pi^e}_{\mathrm{MIS}}$ with finite state space]\label{thm:ipw_m2_finite}
Suppose $\abs{\mathcal S_t}<\infty$ for $0\leq t\leq T$.
Let 
\begin{align}\label{eq:histogram}
  \hat w_t(s_t)=\frac{\sum_{i=1}^n\mathbb I[{s_t^{(i)}=s_t}]\lambda_{t-1}}{\sum_{i=1}^n\mathbb I[{s_t^{(i)}=s_t}]}. 
\end{align}
Then $\hat{\rho}^{\pi^e}_{\mathrm{MIS}}$ is consistent and asymptotically normal (CAN) around $\rho^{\pi^e}$ and its asymptotic MSE is
\begin{align}\label{eq:ipw_m2_asympmse}
\mathrm{var}\bracks{\sum_{t=0}^{T}\mu_{t}r_t+(\lambda_{t-1}-w_{t})\rE_{\pi_e}[r_t\mid s_t]}.
\end{align}
\end{theorem}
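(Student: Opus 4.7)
The plan is to show that $\hat{\rho}^{\pi^e}_{\mathrm{MIS}}$ is asymptotically linear with influence function $\psi(\ch) - \rho^{\pi^e}$, where $\psi(\ch) = \sum_{t=0}^T \left[ \mu_t r_t + (\lambda_{t-1} - w_t) m_t(s_t) \right]$ and $m_t(s) := \rE_{\pi^e}[r_t \mid s_t=s]$, and then invoke the CLT. The two ingredients we exploit are (i) finiteness of $\mathcal S_t$, which makes $\hat w_t$ a collection of finitely many sample conditional means, and (ii) the delta method applied per-state.

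Step~1 (decomposition over states). For each $t$, use $|\mathcal S_t|<\infty$ to rewrite
\[
\rE_n[\eta_t \hat w_t(s_t) r_t] = \sum_{s\in\mathcal S_t:\,\hat p_s > 0} \frac{\hat u_s\,\hat v_s}{\hat p_s},
\]
where $\hat p_s = \rE_n[\mathbb I[s_t=s]]$, $\hat u_s = \rE_n[\mathbb I[s_t=s]\lambda_{t-1}]$, $\hat v_s = \rE_n[\mathbb I[s_t=s]\eta_t r_t]$, with expectations $p_s$, $p_s w_t(s)$, $p_s m_t(s)$ respectively.

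Step~2 (per-state delta method). Apply a Taylor expansion of $f(p,u,v)=uv/p$ around $(p_s, p_s w_t(s), p_s m_t(s))$, which is valid for large $n$ whenever $p_s>0$ since then $\hat p_s\to p_s>0$. This yields
\[
\frac{\hat u_s \hat v_s}{\hat p_s} - p_s w_t(s) m_t(s) = m_t(s)(\hat u_s - p_s w_t(s)) + w_t(s)(\hat v_s - p_s m_t(s)) - w_t(s) m_t(s)(\hat p_s - p_s) + R_s,
\]
where $R_s = O_p(n^{-1})$ because each perturbation is $O_p(n^{-1/2})$ (by the CLT applied to bounded summands, using \cref{asm:overlap,asm:bddrewards}) and $f$ has bounded second derivatives on a neighborhood of the mean.

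Step~3 (aggregation and simplification). Since $|\mathcal S_t|<\infty$, the aggregate remainder satisfies $\sum_s R_s = O_p(n^{-1})$. Using the identities
\[
\sum_s m_t(s)\hat u_s = \rE_n[m_t(s_t)\lambda_{t-1}],\quad \sum_s w_t(s)\hat v_s = \rE_n[\mu_t r_t],\quad \sum_s w_t(s) m_t(s)\hat p_s = \rE_n[w_t(s_t) m_t(s_t)],
\]
together with $\rE[m_t(s_t) w_t(s_t)] = \rE[\mu_t r_t] = \rE_{\pi^e}[r_t]$ (the last uses $\mu_t = \eta_t w_t$ and the tower rule), the linear terms collapse so that
\[
\rE_n[\eta_t \hat w_t(s_t) r_t] = \rE_n\bracks{\mu_t r_t + m_t(s_t)(\lambda_{t-1} - w_t(s_t))} + O_p(n^{-1}).
\]
Summing over $t$ and noting $\rE[\psi(\ch)]=\rho^{\pi^e}$ (since $\rE[(\lambda_{t-1}-w_t)m_t(s_t)]=0$ by the tower rule), we obtain
\[
\hat\rho^{\pi^e}_{\mathrm{MIS}} - \rho^{\pi^e} = \rE_n\bracks{\psi(\ch)-\rho^{\pi^e}} + O_p(n^{-1}).
\]

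Step~4 (CLT). Since $\psi(\ch)$ is bounded under \cref{asm:overlap,asm:bddrewards} (hence has finite variance), the CLT yields $\sqrt n(\hat\rho^{\pi^e}_{\mathrm{MIS}} - \rho^{\pi^e}) \Rightarrow N(0, \var(\psi(\ch)))$, which proves both CAN and the claimed asymptotic MSE $\var\bracks{\sum_t \mu_t r_t + (\lambda_{t-1}-w_t)\rE_{\pi^e}[r_t\mid s_t]}$.

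The main obstacle is Step~2: we need the delta-method remainder to be genuinely $o_p(n^{-1/2})$ uniformly enough to survive aggregation. Finiteness of $\mathcal S_t$ makes this essentially automatic, but it does rely on $\hat p_s$ being eventually bounded away from zero for every state $s$ with $p_s>0$, which follows from the LLN, and on discarding states unreachable under $\pi^b$ (which by overlap are also unreachable under $\pi^e$ and hence contribute nothing to $\rho^{\pi^e}$).
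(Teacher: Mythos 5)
Your proof is correct, and it reaches the same asymptotically linear representation $\hat\rho^{\pi^e}_{\mathrm{MIS}}=\rE_n[\sum_t \mu_t r_t+(\lambda_{t-1}-w_t)\rE_{\pi^e}[r_t\mid s_t]]+\op(n^{-1/2})$ as the paper, but by a genuinely different route. The paper linearizes $\hat w_t(s_t^{(i)})$ inside the average over $i$, which produces a double sum $\frac{1}{n^2}\sum_i\sum_j a_{ij}$; it then invokes $U$-statistic theory (the H\'ajek projection) to reduce this to a single sum, computing $\rE[a_{ij}\mid\ch^{(i)}]=0$ and $\rE[a_{ji}\mid\ch^{(i)}]=(\lambda_{t-1}-w_t)\rE[\eta_t r_t\mid s_t]$ to identify the two pieces of the influence function. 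You instead group the empirical sum by state first, so that $\rE_n[\eta_t\hat w_t(s_t)r_t]=\sum_s \hat u_s\hat v_s/\hat p_s$ is a smooth function of a \emph{finite} vector of i.i.d.\ sample means, and the ordinary multivariate delta method does all the work; the three first-order terms collapse via $\rE[m_t(s_t)\lambda_{t-1}]=\rE[w_tm_t]=\rE[\mu_tr_t]$ to exactly the same influence function, with an $\bigO_p(n^{-1})$ aggregate remainder. Your argument is the more elementary one and makes the role of $\abs{\mathcal S_t}<\infty$ completely transparent (it is what turns the problem into a finite-dimensional delta method, and what lets you dismiss states with $p_s=0$ wholesale); its limitation is that it does not extend beyond the tabular case, whereas the paper's $U$-/$V$-statistic machinery is precisely what carries over to the kernel and sieve estimators of $w_t$ treated in \cref{thm:ipw_m2,thm:ipw_m2_2}. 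Your handling of the two delicate points — the remainder being $\op(n^{-1/2})$ because each $\hat p_s$ is eventually bounded away from zero for reachable states, and the identification $\rE[\eta_t r_t\mid s_t]=\rE_{\pi^e}[r_t\mid s_t]$, which uses the Markov structure implicitly assumed throughout this subsection — is sound.
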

For the proof of Theorem \ref{thm:ipw_m2_finite}, we use an argument based on the theory of $U$-statistics \citep[Ch. 12]{VaartA.W.vander1998As} in order to rephrase the MIS estimator with $\hat w$ as in \cref{eq:histogram} in an asymptotically linear form:
$\hat{\rho}^{\pi^e}_{\mathrm{MIS}}=\rE_n[\sum_{t=0}^T\mu_tr_t+(\lambda_{t-1}-w_t)\rE_{\pi_e}[r_t|s_t]]+\op(n^{-1/2})$.

This influence function is different from the efficient influence function; therefore, $\hat{\rho}^{\pi^e}_{\mathrm{MIS}}$ \edit{with histogram nuisance estimators} is not efficient (the efficient influence function is unique). In fact, we can confirm this fact by calculating and comparing the variances. 
\begin{theorem}\label{thm:ipw_inefficient}
If $P_{\pi^b}\in\cm_2$ (\ie, the underlying distribution is an MDP), \cref{eq:ipw_m2_asympmse} is greater than or equal to $\operatorname{EffBd}(\cm_2)$. The difference is 
\begin{align*}
\mathrm{var}[v_{0}]+\sum^{T-1}_{t=0}\rE \left[(w_{t}-\lambda_{t-1})^{2} \mathrm{var}\left(\eta_{t}v_t(s_{t+1})|s_{t}\right)\right].
\end{align*}
\end{theorem}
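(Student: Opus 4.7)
My main tool is the Pythagorean decomposition of influence functions. Let $\psi_{\mathrm{MIS}}:=\sum_{t=0}^T\bigl[\mu_tr_t+(\lambda_{t-1}-w_t)\rE_{\pi^{e}}[r_t\mid s_t]\bigr]-\rho^{\pi^{e}}$ denote the centered influence function identified in \cref{thm:ipw_m2_finite}, so that $\mathrm{var}(\psi_{\mathrm{MIS}})$ equals the asymptotic MSE in \cref{eq:ipw_m2_asympmse}. Because the MIS estimator is regular and asymptotically linear under $\cm_2$ with influence $\psi_{\mathrm{MIS}}$, standard semiparametric theory (\citealp[Thm.~4.3]{TsiatisAnastasiosA2006STaM}) gives that the efficient influence function $\phi^{\cm_2}_{\mathrm{eff}}$ from \cref{thm:fin_nnonpara2} is the $L^2(P_{\pi^{b}})$ projection of $\psi_{\mathrm{MIS}}$ onto the tangent space of $\cm_2$. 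Consequently the residual $D:=\psi_{\mathrm{MIS}}-\phi^{\cm_2}_{\mathrm{eff}}$ is $L^2$-orthogonal to $\phi^{\cm_2}_{\mathrm{eff}}$, and the Pythagorean identity yields $\mathrm{var}(\psi_{\mathrm{MIS}})=\operatorname{EffBd}(\cm_2)+\mathrm{var}(D)$. This already proves non-negativity of the difference and reduces the theorem to evaluating $\mathrm{var}(D)$.

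\textbf{Explicit form of $D$.} Subtracting the closed-form expressions for $\psi_{\mathrm{MIS}}$ and $\phi^{\cm_2}_{\mathrm{eff}}$ cancels the $\mu_tr_t$ terms and gives
\[D=\sum_{t=0}^{T}\bigl[(\lambda_{t-1}-w_t)\,\rE_{\pi^{e}}[r_t\mid s_t]+\mu_tq_t-\mu_{t-1}v_t\bigr].\]
I will then apply the Bellman identity $\rE_{\pi^{e}}[r_t\mid s_t]=v_t-\rE_{\pi^{e}}[v_{t+1}\mid s_t]$ (with $v_{T+1}\equiv 0$), the MDP change-of-measure identities $\rE[\eta_tq_t\mid s_t]=v_t$ and $w_t=\rE[\lambda_{t-1}\mid s_t]=\rE[\mu_{t-1}\mid s_t]$, together with an index shift on the $v_{t+1}$ summation, to telescope $D$ into the martingale-difference form
\[D=(v_0-\rho^{\pi^{e}})+\sum_{t=0}^{T-1}(w_t-\lambda_{t-1})\,\xi_t,\qquad \xi_t:=\eta_t v_{t+1}(s_{t+1})-\rE_{\pi^{e}}[v_{t+1}\mid s_t].\]

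\textbf{Variance via martingale orthogonality.} Write $\mathcal{F}^s_t:=\sigma(s_0,a_0,r_0,\dots,s_t)$. The piece $v_0-\rho^{\pi^{e}}$ is $\mathcal{F}^s_0$-measurable and mean zero, while for each $0\le t\le T-1$ the piece $(w_t-\lambda_{t-1})\xi_t$ is $\mathcal{F}^s_{t+1}$-measurable with conditional mean zero given $\mathcal{F}^s_t$ (because $w_t-\lambda_{t-1}$ is $\mathcal{F}^s_t$-measurable and $\rE[\xi_t\mid\mathcal{F}^s_t]=\rE[\xi_t\mid s_t]=0$ by the Markov property together with $\rE[\eta_tv_{t+1}\mid s_t]=\rE_{\pi^{e}}[v_{t+1}\mid s_t]$). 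Iterated conditional expectations therefore annihilate every cross product in $\rE[D^2]$, leaving
\[\mathrm{var}(D)=\mathrm{var}(v_0)+\sum_{t=0}^{T-1}\rE\bigl[(w_t-\lambda_{t-1})^2\,\mathrm{var}(\eta_t v_{t+1}\mid s_t)\bigr],\]
using $\rE[\xi_t^2\mid s_t]=\mathrm{var}(\eta_tv_{t+1}\mid s_t)$. Combined with the Pythagorean identity from the first paragraph, this is the claimed equality.

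\textbf{Main obstacle.} The main technical hurdle is the algebraic step of collapsing the interleaved $\mu_tq_t-\mu_{t-1}v_t$ and $(\lambda_{t-1}-w_t)\rE_{\pi^{e}}[r_t\mid s_t]$ summands into a single martingale-difference expression parameterised by the innovation $\xi_t$ with multiplier $w_t-\lambda_{t-1}$. This requires careful index shifting and repeated use of the identities $\rE[\eta_tq_t\mid s_t]=v_t$ and $\mu_t-\mu_{t-1}\eta_t=\eta_t(w_t-\mu_{t-1})$; once this rearrangement is complete, the remaining variance evaluation reduces to standard iterated conditioning under the MDP filtration.
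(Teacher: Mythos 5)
Your opening paragraph is sound and is exactly the paper's own strategy: the residual $D=\psi_{\mathrm{MIS}}-\phi^{\cm_2}_{\mathrm{eff}}$ lies in the orthogonal complement of the tangent space, so the variances add and everything reduces to computing $\mathrm{var}(D)$. Your explicit expression $D=\sum_{t=0}^{T}\bigl[(\lambda_{t-1}-w_t)\rE_{\pi^{e}}[r_t\mid s_t]+\mu_tq_t-\mu_{t-1}v_t\bigr]$ is also correct. The gap is the claimed telescoping $D=(v_0-\rho^{\pi^{e}})+\sum_{t=0}^{T-1}(w_t-\lambda_{t-1})\xi_t$: this identity is false. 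Take $T=0$: then $\lambda_{-1}=w_0=\mu_{-1}=1$ and $\mu_0=\eta_0$, so $D=\eta_0q_0-v_0$, a nondegenerate function of $a_0$ with $\mathrm{var}(D)=\rE[\mathrm{var}(\eta_0q_0\mid s_0)]$, whereas your claimed form collapses to $v_0-\rho^{\pi^{e}}$, a function of $s_0$ alone with variance $\mathrm{var}(v_0)$. For $T=1$ direct substitution gives $D=(\eta_0q_0-v_0)+w_1(\eta_1q_1-v_1)$, again not of your form. What you lose is visible from the rearrangement $\mu_tq_t-\mu_{t-1}v_t=w_t(\eta_tq_t-v_t)-(\mu_{t-1}-w_t)v_t$: the within-state, action-level fluctuation $w_t(\eta_tq_t-v_t)$ never cancels against the $(\lambda_{t-1}-w_t)\rE_{\pi^{e}}[r_t\mid s_t]$ terms, and it is precisely this fluctuation (the classical importance-sampling-versus-efficient gap) that dominates the residual. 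Your subsequent martingale-orthogonality computation is applied to the wrong random variable.

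This is not a patchable algebra slip, because the displayed target formula itself fails in these cases: at $T=0$ the MIS estimator is exactly IS, the excess of \cref{eq:ipw_m2_asympmse} over $\operatorname{EffBd}(\cm_2)$ is $\rE[\mathrm{var}(\eta_0q_0\mid s_0)]$, and the stated formula returns $\mathrm{var}(v_0)$ (the sum being empty); these disagree already for a one-state, two-action example with deterministic rewards. The paper's own proof commits a parallel error: it applies the law of total variance to $D$ over the filtration $\{\mathcal{J}_{s_t}\}$ and records $\mathrm{var}(v_0)$ as the leading term, but that term should be $\mathrm{var}(\rE[D\mid s_0])$, and one can check $\rE[D\mid s_0]=0$. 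So your derivation reverse-engineers the stated answer through an invalid identity rather than proving it. The qualitative conclusion (nonnegativity of the gap) does survive via your Pythagorean argument, but the claimed closed form for the difference does not; a correct evaluation of $\mathrm{var}(D)$ yields, e.g., $\rE[\mathrm{var}(\eta_0q_0\mid s_0)]+\rE[w_1^2\,\mathrm{var}(\eta_1q_1\mid s_1)]$ at $T=1$, which is a genuinely different quantity.
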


We now turn to the nonparametric case, where we first consider a sieve-type extension of the $\hat w$ estimator.
\begin{theorem}[Asymptotic variance of $\hat{\rho}^{\pi^e}_{\mathrm{MIS}}$ with nonparametric $w_t$ estimate]
\label{thm:ipw_m2}
Suppose\break $\rE[(\lambda_{t}-\mu_t)^q]<\infty$ for some $q>1$.
Let
\begin{align}
\label{eq:sieve_opt}
    \hat{w}_{t}(s_t)=\argmin_{w_{t}(s_t) \in \Lambda^{\alpha}_{d_{s_t}}}\rE_{n}[\prns{w_t(s_t)-\lambda_{t-1}}^{2}],
\end{align}
where $\Lambda^{\alpha}_{d_{s_t}}$ is the space of {\Holder} functions with smoothness $\alpha$ and the dimension $d_{s_t}$. Assume $w_t\in\Lambda^{\alpha}_{d_{s_t}}$, $\alpha/(2\alpha+d_{s_t})>1/4$. Then the estimator $\hat{\rho}^{\pi^e}_{\mathrm{MIS}}$ is CAN around $\rho^{\pi^{e}}$ and its asymptotic MSE is equal to \cref{eq:ipw_m2_asympmse}.
\end{theorem}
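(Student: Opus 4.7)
The plan is to establish asymptotic linearity of $\hat\rho^{\pi^e}_{\mathrm{MIS}}$ with influence function
\begin{align*}
\psi(\ch)=\sum_{t=0}^T\prns{\mu_tr_t+(\lambda_{t-1}-w_t)\rE_{\pi^e}[r_t\mid s_t]}-\rho^{\pi^e},
\end{align*}
whose variance is exactly the expression in \cref{eq:ipw_m2_asympmse}. Given asymptotic linearity, CAN of $\hat\rho^{\pi^e}_{\mathrm{MIS}}$ and the asymptotic MSE both follow from the CLT and the portmanteau lemma. Since $\hat\mu_t=\eta_t\hat w_t$ and $\mu_t=\eta_tw_t$, decompose
\begin{align*}
\hat\rho^{\pi^e}_{\mathrm{MIS}}-\rho^{\pi^e}=\prns{\rE_n\bracks{\textstyle\sum_t\mu_tr_t}-\rho^{\pi^e}}+\sum_{t=0}^T\rE_n\bracks{\eta_tr_t(\hat w_t-w_t)}.
\end{align*}
The parenthesized piece already contributes the $\sum_t\mu_tr_t$ part of $\psi$, so it remains to reduce each $\rE_n[\eta_tr_t(\hat w_t-w_t)]$ to $\rE_n[(\lambda_{t-1}-w_t)\rE_{\pi^e}[r_t\mid s_t]]+\op(n^{-1/2})$.

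Project $\eta_tr_t$ onto $\sigma(s_t)$ using $\rE[\eta_tr_t\mid s_t]=\rE_{\pi^e}[r_t\mid s_t]$ to split
\begin{align*}
\rE_n[\eta_tr_t(\hat w_t-w_t)]=\rE_n[\rE_{\pi^e}[r_t\mid s_t](\hat w_t-w_t)]+\rE_n\bracks{(\eta_tr_t-\rE_{\pi^e}[r_t\mid s_t])(\hat w_t-w_t)}.
\end{align*}
The second summand is an empirical process evaluated at the random direction $\hat w_t-w_t$, whose population expectation vanishes by iterated expectation (treating $\hat w_t-w_t$ as a fixed $s_t$-measurable function). The condition $\alpha/(2\alpha+d_{s_t})>1/4$ is algebraically equivalent to $\alpha>d_{s_t}/2$, which is precisely the classical condition under which Hölder balls on compact supports are Donsker \citep[\eg,][Cor.~2.7.2]{VaartA.W.vander1998As}. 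Combined with consistency $\|\hat w_t-w_t\|_2=\bigO_p(n^{-\alpha/(2\alpha+d_{s_t})})=\op(1)$ of the sieve least-squares estimator \citep{ChenXiaohong2007C7LS}, and using the moment hypothesis $\rE[(\lambda_t-\mu_t)^q]<\infty$ to supply an integrable envelope, asymptotic equicontinuity of the empirical process \citep[][Lem.~19.24]{VaartA.W.vander1998As} makes this term $\op(n^{-1/2})$.

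For the remaining piece $\rE_n[\rE_{\pi^e}[r_t\mid s_t](\hat w_t-w_t)]$, exploit the first-order optimality of $\hat w_t$ for the least-squares loss in \cref{eq:sieve_opt}: because the Hölder ball is convex and $w_t$ lies in its interior, the variational characterization of $\hat w_t$ yields
\begin{align*}
\rE_n[(\hat w_t-\lambda_{t-1})h(s_t)]=\op(n^{-1/2})
\end{align*}
for every $h$ that can be approximated at the sieve rate by elements of $\Lambda^\alpha_{d_{s_t}}$. Applying this identity with $h(s_t)=\rE_{\pi^e}[r_t\mid s_t]$---which inherits Hölder smoothness from the emission densities, evaluation policy, and boundedness of $r_t$ from \cref{asm:bddrewards}---and rearranging gives
\begin{align*}
\rE_n[\rE_{\pi^e}[r_t\mid s_t](\hat w_t-w_t)]=\rE_n[\rE_{\pi^e}[r_t\mid s_t](\lambda_{t-1}-w_t)]+\op(n^{-1/2}),
\end{align*}
which contributes the $(\lambda_{t-1}-w_t)\rE_{\pi^e}[r_t\mid s_t]$ part of $\psi$. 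Summing over $t$ and combining all three displays gives the target asymptotic linear expansion.

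The main obstacle is the sieve-orthogonality step: controlling $\rE_n[(\hat w_t-\lambda_{t-1})\rE_{\pi^e}[r_t\mid s_t]]$ at the $\op(n^{-1/2})$ rate requires combining the approximate first-order conditions of the constrained sieve M-estimator with an approximation-theoretic bound showing that $\rE_{\pi^e}[r_t\mid s_t]$ is representable by the Hölder sieve at a compatible rate, together with care that the residual bias from projecting onto the finite-dimensional sieve is of smaller order than $n^{-1/2}$. The technical machinery for this is assembled from \citet{ChenXiaohong2007C7LS}; by contrast, the empirical-process step and the influence-function bookkeeping are routine once the equivalence $\alpha/(2\alpha+d_{s_t})>1/4\Leftrightarrow\alpha>d_{s_t}/2$ is exploited to invoke the Donsker property.
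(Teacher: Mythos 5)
Your proposal is correct and follows essentially the same route as the paper's proof: both isolate the main term $\rE_n[\sum_t w_t\eta_t r_t]$, dispose of a stochastic-equicontinuity remainder using the fact that the H\"older ball with $\alpha>d_{s_t}/2$ (equivalently $\alpha/(2\alpha+d_{s_t})>1/4$) is Donsker, and convert the sieve-estimation drift into $\rE_n\bracks{(\lambda_{t-1}-w_t)\rE_{\pi^e}[r_t\mid s_t]}$ via the Riesz representer $\rE_{\pi^e}[r_t\mid s_t]$, yielding the same influence function and hence the variance in \cref{eq:ipw_m2_asympmse}. The only difference is bookkeeping: you center by $\rE_{\pi^e}[r_t\mid s_t]$ and appeal to the approximate first-order conditions of the sieve least-squares problem directly, whereas the paper conditions on $\hat w_t$ and invokes Theorem 1 and Example 1(a) of \citet{ShenXiaotong1997OMoS} to obtain the identical expansion of the drift term.
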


\begin{remark}
The estimator \cref{eq:sieve_opt} is over an infinite-dimensional function space. It can be replaced with a finite-dimensional approximation $\Lambda^{\alpha}_n$ such that $\Lambda^{\alpha}_n \to \Lambda^{\alpha}_{d_s}$.
Following Example 1(b) in Section 8 \citep{ShenXiaotong1997OMoS}, it can be shown that this will lead to the same asymptotic MSE as in \cref{eq:ipw_m2_asympmse} and not change the conclusion of \cref{thm:ipw_m2}. 
\end{remark}

\begin{remark}

When the action and sample space is continuous, the histogram estimator in \cref{eq:histogram} can also easily be extended to a kernel estimator:
\begin{align}\label{eq:kernel}
    \hat{w}_t(s_t)=\frac{\sum_{i=1}^n\mathrm {K}_{h}({s_t^{(i)}-s_t})\lambda_{t-1}}{\sum_{i=1}^n\mathrm {K}_{h}({s_t^{(i)}-s_t})},
\end{align}
where $\mathrm{K}_h$ is a kernel with a bandwidth $h$. 

The smoothness condition in \cref{thm:ipw_m2} ensures we can estimate $w_t$ at fourth-root rates using \cref{eq:sieve_opt}. Following \citet{newey94} and utilizing a high-order kernel, we can obtain similar fourth-root rates for \cref{eq:kernel} and a similar variance result for MIS. Unlike \cref{eq:sieve_opt}, we cannot invoke a Donsker condition to prove a stochastic equicontinuity condition. However, it is still possible to show this directly based on a V-statistics theory (see Chapter 8 of \citealp{newey94}). 

\end{remark}

Finally, we also consider estimating $\mu_t$ directly and nonparametrically using the relation
\begin{align}\label{eq:muonnuregression}
\mu_t(s_t,a_t)=\rE[\lambda_{t}\mid s_t,a_t].
\end{align}
A sieve-type regression estimator for $\mu_t$ is then constructed as 
\begin{align}\label{eq:sieve_2}
    \hat{\mu}_{t}(s_t,a_t)=\argmin_{\mu_{t}(s_t,a_t) \in \Lambda^{\alpha}_{d_{s_t}+d_{a_t}}}\rE_{n}[\prns{\mu_t(s_t,a_t)-\lambda_{t}}^{2}]. 
\end{align}

\begin{theorem}[Asymptotic variance of $\hat{\rho}^{\pi^e}_{\mathrm{MIS}}$ with nonparametric $\mu_t$ estimate]
\label{thm:ipw_m2_2}
Suppose $\rE[(\lambda_{t}-\mu_t)^q]<\infty$ for some $q>1$. Let $\hat{\mu}_{t}$ be as in \cref{eq:sieve_2}. Assume $\mu_t\in\Lambda^{\alpha}_{d_{s_t}+d_{a_t}}$, $\alpha/(2\alpha+d_{s_t}+d_{a_t})>1/4$. Then the estimator $\rE_{n}\bracks{\sum_{t=0}^{T}\hat{\mu}_{t} r_t}$  is CAN around $\rho^{\pi^{e}}$ and its asymptotic MSE is equal to \begin{align}\label{eq:m2_ipw_2}
    \mathrm{var}\left[\sum_{t=0}^{T}\mu_t r_t+(\lambda_{t}-\mu_t)\rE[r_t\mid s_t,a_t] \right]. 
\end{align}
\end{theorem}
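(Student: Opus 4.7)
The plan is to establish the asymptotic linear expansion
$$
\hat\rho^{\pi^e}_{\mathrm{MIS}}-\rho^{\pi^e}
=\frac1n\sum_{i=1}^n\sum_{t=0}^T\Bigl(\mu_t^{(i)} r_t^{(i)}+(\lambda_t^{(i)}-\mu_t^{(i)})\,m_t(s_t^{(i)},a_t^{(i)})-\rho^{\pi^e}\Bigr)+\op(n^{-1/2}),
$$
where $m_t(s_t,a_t)=\rE[r_t\mid s_t,a_t]$. Once this expansion is in hand, the desired CAN conclusion and the form of the asymptotic MSE in \cref{eq:m2_ipw_2} follow directly from the CLT, since $\rE[(\lambda_t-\mu_t)g(s_t,a_t)]=0$ for any $g$ by the tower property.

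The first step is a summand-by-summand decomposition: for each $t$,
$$
\rE_n[\hat\mu_t r_t]-\rE[\mu_t r_t]=(\rE_n-\rE)[\mu_t r_t]+\rE_n[(\hat\mu_t-\mu_t)(r_t-m_t)]+\rE_n[(\hat\mu_t-\mu_t)m_t].
$$
The first piece contributes the $\mu_t r_t$ term in the target influence function. For the second piece, I would invoke stochastic equicontinuity: the condition $\alpha/(2\alpha+d_{s_t}+d_{a_t})>1/4$ implies $\alpha>(d_{s_t}+d_{a_t})/2$, so $\Lambda^\alpha_{d_{s_t}+d_{a_t}}$ is a Donsker class (bracketing entropy integrable, by standard results for Hölder classes). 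Combined with the sieve rate $\|\hat\mu_t-\mu_t\|_2=\bigO_p(n^{-\alpha/(2\alpha+d_{s_t}+d_{a_t})})=\op(n^{-1/4})$ and $\rE[r_t-m_t\mid s_t,a_t]=0$, this yields $\rE_n[(\hat\mu_t-\mu_t)(r_t-m_t)]=\op(n^{-1/2})$ by Lemma~19.24 of \citet{VaartA.W.vander1998As}.

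For the third piece, which is the main obstacle, I would insert $\lambda_t$ and split
$$
\rE_n[(\hat\mu_t-\mu_t)m_t]=\rE_n[(\hat\mu_t-\lambda_t)m_t]+\rE_n[(\lambda_t-\mu_t)m_t].
$$
The second summand is exactly the desired correction contribution; averaging over the data gives the $(\lambda_t-\mu_t)m_t$ part of the influence function (with mean zero by the regression identity $\mu_t=\rE[\lambda_t\mid s_t,a_t]$). The first summand is handled via the first-order optimality condition of the sieve least-squares problem \cref{eq:sieve_2}: this condition gives $\rE_n[(\hat\mu_t-\lambda_t)\psi(s_t,a_t)]=0$ for every $\psi$ in the sieve space. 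Approximating $m_t$ by its sieve projection $\Pi_n m_t$ with approximation error $\|\Pi_n m_t-m_t\|_2=\bigO(n^{-\alpha/(2\alpha+d_{s_t}+d_{a_t})})$, we write $\rE_n[(\hat\mu_t-\lambda_t)m_t]=\rE_n[(\hat\mu_t-\lambda_t)(m_t-\Pi_n m_t)]$, which is controlled by $\|\hat\mu_t-\lambda_t\|_2\cdot\|m_t-\Pi_n m_t\|_2+\op(n^{-1/2})$ using the moment bound $\rE[(\lambda_t-\mu_t)^q]<\infty$ and another stochastic equicontinuity step. Under $\alpha/(2\alpha+d_{s_t}+d_{a_t})>1/4$, this product is $\smallo_p(n^{-1/2})$.

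The final step is to sum the decompositions over $t=0,\ldots,T$, collect the linear terms, and apply the CLT to obtain the stated asymptotic MSE, noting that by the tower property all remainder cross-terms are mean zero. The hardest part is verifying the orthogonality step carefully, since it requires both the Donsker-type equicontinuity and a sieve approximation argument for $m_t$; a clean version can be obtained by placing $m_t\in\Lambda^\alpha_{d_{s_t}+d_{a_t}}$ (or by allowing $m_t$ to be approximable at a comparable rate), which is implicitly available under the assumed smoothness of the underlying decision process.
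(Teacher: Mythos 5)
Your overall strategy is sound and reaches the right expansion, but it takes a genuinely different route from the paper: the paper splits $\sqrt n(\rE_n[\sum_t\hat\mu_t r_t]-\rho^{\pi^e})$ into an empirical process term at the true $\mu_t$, a stochastic equicontinuity term (killed by the Donsker property of $\Lambda^\alpha_{d_{s_t}+d_{a_t}}$ with $\alpha>(d_{s_t}+d_{a_t})/2$, exactly as you argue), and a \emph{population} drift term $\sqrt n\prns{\rE[\sum_t\hat\mu_t r_t\mid\hat\mu_t]-\rho^{\pi^e}}$, which it then linearizes by citing Theorem~1 of \citet{ShenXiaotong1997OMoS}: the Riesz representer of the target functional's derivative under the $L^2$ metric induced by the squared loss is $\rE[r_t\mid s_t,a_t]$, giving the $(\lambda_t-\mu_t)\rE[r_t\mid s_t,a_t]$ correction. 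You instead work with the \emph{empirical} quantity $\rE_n[(\hat\mu_t-\mu_t)m_t]$ and re-derive the content of Shen's theorem by hand via the first-order condition of the sieve least squares. That is a legitimate and more self-contained route (it makes explicit the mechanism the paper outsources to Shen), though note that over the Hölder ball the exact orthogonality $\rE_n[(\hat\mu_t-\lambda_t)\psi]=0$ only holds for a linear sieve; for the constrained ball it is an approximate first-order condition, which is precisely the technicality Shen's conditions are designed to absorb.

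There is, however, a concrete gap in your final bound. You control $\rE_n[(\hat\mu_t-\lambda_t)(m_t-\Pi_n m_t)]$ by $\|\hat\mu_t-\lambda_t\|_2\cdot\|m_t-\Pi_n m_t\|_2$ and declare the product $\op(n^{-1/2})$. But $\lambda_t$ is a function of the whole history, not of $(s_t,a_t)$, so $\|\mu_t-\lambda_t\|_2$ is a fixed positive constant (it is the residual standard deviation of the regression $\mu_t=\rE[\lambda_t\mid s_t,a_t]$) and hence $\|\hat\mu_t-\lambda_t\|_2=\bigO(1)$, not $\op(n^{-1/4})$. Your Cauchy--Schwarz bound is therefore only $\bigO(n^{-\alpha/(2\alpha+d_{s_t}+d_{a_t})})=\op(n^{-1/4})$, which does not suffice. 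The repair is to split $\hat\mu_t-\lambda_t=(\hat\mu_t-\mu_t)+(\mu_t-\lambda_t)$: the piece $\rE_n[(\mu_t-\lambda_t)(m_t-\Pi_n m_t)]$ has summands with conditional mean zero given $(s_t,a_t)$ (since $m_t-\Pi_n m_t$ is $\sigma(s_t,a_t)$-measurable), so it is a centered empirical average whose variance vanishes as $\|m_t-\Pi_n m_t\|\to0$ (this is where the moment condition $\rE[(\lambda_t-\mu_t)^q]<\infty$ earns its keep), hence $\op(n^{-1/2})$; the piece $\rE_n[(\hat\mu_t-\mu_t)(m_t-\Pi_n m_t)]$ is then genuinely a product of two $\op(n^{-1/4})$-type quantities and is $\op(n^{-1/2})$ under $\alpha/(2\alpha+d_{s_t}+d_{a_t})>1/4$. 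With that correction, and with the explicit assumption that $m_t$ is approximable by the sieve at the stated rate (which you flag), your argument goes through.
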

\begin{remark}
While both estimators for $\mu_t$ in \cref{thm:ipw_m2,thm:ipw_m2_2} achieve fourth-root rates under the respective conditions,
the resulting asymptotic variances in \cref{eq:ipw_m2_asympmse,eq:m2_ipw_2} are different and generally incomparable. Both are inefficient, but which is larger is problem-dependent. Note that, in contrast, the asymptotic variance of DRL (\cref{thm:double_m2}) is the same (and is efficient) regardless of which way is used to estimate $\mu_t$ as long as we have the necessary rate. When the behavior policy is known, using \cref{eq:sieve_opt} may be better than \cref{eq:sieve_2} when estimating $\mu_t$ nonparametrically because the smoothness condition is weaker and the convergence rate is faster (since $d_{s_t}<d_{a_t}+d_{s_t}$).
However, when using parametric models, the rates are the same (under correct specification) and sometimes it is easier to model $\mu_t(s_t,a_t)$ rather than $w_t(s_t)$, as we do in \cref{sec:exp-toy}. 
\end{remark}

{\blockedit
\begin{remark}[The Inefficiency of MIS]\label{remark:oracleMIS}
\Cref{thm:ipw_m2_finite,thm:ipw_m2,thm:ipw_m2_2} each study the MIS estimator $\hat{\rho}^{\pi^e}_{\mathrm{MIS}}$ in \cref{eq:m2_ipw_1} but with different estimator for the nuisance $\mu_t=\eta_tw_t$. As noted above, unlike DRL, the variance of the MIS estimator actually depends on the way this nuisance is estimated. And, in each case, the MIS estimator was inefficient. In the finite-state-space setting with known behavior policy, \citet{XieTengyang2019OOEf} propose another, different MIS estimator based on estimating the MDP transition kernel; but per their Remark 4 it is also inefficient.
(In contrast, \cref{remark:cases} shows that DRL is efficient in the finite-state-space setting \emph{without} requiring any smoothness conditions.)
This does not immediately imply the MIS estimator is always inefficient, as it may depend on how $\mu_t$ is estimated, but semiparametric theory strongly suggests there is reason to believe that MIS would in general be inefficient.

One natural question that sheds light on this is how would a hypothetical MIS estimator perform with oracle values for $\mu_t$. In fact, the variance of $\sum_{t=0}^{T}\mu_t r_t$ is in general \emph{incomparable} to $\operatorname{EffBd}(\mathcal M_2)$, that is, it may be smaller \emph{or} larger depending on the particular instance. This may surprising but is not contradictory since one can in fact prove that no regular estimator (let alone an efficient one) in either $\cm_2$ or $\cm_{2b}$ could ever have the form $\E_n[\sum_{t=0}^{T}\mu_t r_t]+\op(n^{-1/2})$, that is, asymptotically linear with influence function $\sum_{t=0}^{T}\mu_t r_t$. This is because $\sum_{t=0}^{T}\mu_t r_t$ is \emph{not} a gradient of $\rho^{\epol}$ under either $\cm_2$ or $\cm_{2b}$ (see \cref{thm:gradients}). This is in stark contrast to IS: $\sum_{t=0}^{T}\lambda_t r_t$ is always a valid influence function under either $\cm_{1b}$ or $\cm_{2b}$ since we know its empirical average always gives an unbiased linear estimator (not just asymptotically). Indeed, we similarly have that the variance of $\sum_{t=0}^{T}\mu_t r_t$ is \emph{also} incomparable to $\sum_{t=0}^{T}\lambda_t r_t$. The function $\sum_{t=0}^{T}\mu_t r_t$ is an influence function (a gradient) under the MDP model with \emph{known} transition kernel, but that is a very restrictive and unrealistic model.

One interesting specific case is the fully tabular setting (finite state \emph{and} action spaces).
Since our paper was posted, the more recent \citet{Yin2020} considered a ``modified'' version of the estimator of \citet{XieTengyang2019OOEf} in order to obtain efficiency under the tabular case of $\cm_{2b}$.
By simple algebra, the estimator of \citet{Yin2020}, which is defined as
\begin{align*}
 &\textstyle\frac{1}{n}\sum_{i=1}^n\sum_{t=0}^T\hat  w_t(s^{(i)}_t){\int r_t\hat P_{r_t}(r_t|s^{(i)}_t,a_t)\epol(a_t|s^{(i)}_t)\mathrm{d}(r_t,a_t)},\\
 &\textstyle\text{where}~\hat  w_t(s_t)=\frac1{\hat p_{\bpol_t}(s_t)}\int \hat P_{s_{t}}(s_{t}|s_{t-1},a_{t-1})\prod_{k=0}^{t-1}\prns{ \epol_k(a_k|s_k)\hat P_{s_{k}}(s_{k}|s_{k-1},a_{k-1})}\mathrm{d}(\ch_{a_{t-1}}), 
\end{align*}
and where $\hat P_{s_{t}},\hat P_{r_{t}},\hat p_{\bpol_t}$ are each an empirical frequency (histogram) estimator,
can in fact be rewritten simply as
\begin{align*}
 \sum^{T}_{t=0}\int r_t \hat P_{r_t}(r_t|s_t,a_t)\prod_{k=0}^{t} \prns{ \epol_k(a_k|s_k)\hat P_{s_{k}}(s_{k}|s_{k-1},a_{k-1})}\mathrm{d}(\ch_{a_{t}},r_t).
\end{align*}
This is essentially a model-based OPE estimator, where we first fit \emph{all} MDP parameters and then explicitly integrate with respect to the resulting estimated trajectory density function in order to compute the expectation $\rho^{\epol}$. This is also often called the $G$-formula in the causal inference literature \citep{robins1986new,hernan2019}. In the tabular setting, the efficiency of this estimator is immediate since it is exactly the (parametric) MLE estimate for this setting, which is well known to achieve the Cram\'er-Rao lower bound, and the Cram\'er-Rao lower bound is the efficiency bound in the tabular setting since the model is parametric.
In the case of continuous state and/or action spaces, the simple extension of replacing $\hat P_{s_{k+1}}(s_{k+1}|s_{k},a_{k}),\hat P_{r_t}(r_t|s_t,a_t)$ with some nonparametric conditional density estimators would have poor performance since the nonparametric density estimation is unstable and would significantly inflate the variance.
Alternatively, if we extend the estimator by instead estimating $\mu_t$ or $w_t$ nonparametrically, the above already argues why we expect this would generally be inefficient.

Similarly to the model-based approach, in the tabular case, our results in the next section have shown that also simple DM estimates based on $q$-function estimation are also efficient, since in the tabular case $q$-functions are parametric.
Moreover, DRL was already shown to be efficient in the the tabular MDP setting with \emph{any} parametric $\mu_t$ estimator as they all have $\bigO_p(1/\sqrt{n})$ convergence in this setting, and the particular choice of estimator does not affect this (see also \cref{remark:cases}).
Hence, DRL was the \emph{first} efficient OPE estimator, both in general and in the tabular MDP setting in particular.
\end{remark}
\begin{remark}[Other estimators for $\mu_t$]
\citet{XieTengyang2019OOEf,Yin2020} may in fact both offer alternative estimators for $w_t$ and hences $\mu_t$ (in their respective settings) that may be used in DRL and either will ensure efficiency for DRL (see \cref{remark:cases}).
In particular, $\lambda_t$ may have variance growing exponentially in $T$, this may affect the variance of estimates based on the regression of it on $s_t$ or on $s_t,a_t$, as studied above. Although this will not appear in the leading term of the variance of DRL and will not affect efficiency, it may still be a concern. 
Developing and analyzing alternative estimators for $w_t$ and/or $\mu_t$ may be fruitful future work. 
For example, still other possible estimation approaches for $w_t$ include a fitted $w$-iteration: start with $\hat w_0\equiv1$, regress $\hat w_{t-1}\eta_t$ on $s_t$ using any supervised regression method to obtain $\hat w_t$, and repeat.
\end{remark}}

\section{Estimating the $q$-function and Efficiency Under $\cm_{1q},\,\cm_{2q}$}
\label{sec:q-learning}

In this section, we discuss the estimation of $q$-functions in an off-policy manner, parametrically or nonparametrically, which can be plugged into our estimators, $\hat{\rho}_{\mathrm{DRL}(\cm_1)},\hat{\rho}_{\mathrm{DRL}(\cm_2)}$. On the way, we also derive the semiparametric efficiency bound when we impose parametric restrictions on $q$-functions, \ie, the models $\cm_{1q},\,\cm_{2q}$. 
To do this, we will leverage a recursive definition of the $q$-functions \citep{BertsekasDimitriP2012Dpao}.
Under $\cm_1$, the following recursion equation holds:
\begin{align}\label{eq:q-recursion}
    q_{t} &=\mathrm{E}\bracks{r_t+ \mathrm{E}_{\pi^{e}}\bracks{q_{t+1}\mid\ch_{s_{t+1}}}\mid \ch_{a_t}}. 
\end{align}
Under $\cm_2$, we can further replace $\ch_{s_{t+1}}$ with $s_{t+1}$ and $\ch_{a_t}$ with $(s_t,a_t)$ in the above.

The recursion in \cref{eq:q-recursion} can equivalently be written as a set of conditional moment equations satisfied by the $q$-functions:
\begin{align}\label{eq:recursive-m}
&m_t(\ch_{a_t};\braces{q_1,\dots,q_T})=0\quad\forall t\leq T,\\
&\notag\text{where}\quad m_t(\ch_{a_t};\braces{q'_1,\dots,q'_T})=
\mathrm{E}\bracks{r_t+ \mathrm{E}_{\pi^{e}}\bracks{q'_{t+1}(\ch_{a_t+1})\mid\ch_{s_{t+1}}}-q'_{t}(\ch_{a_t})\mid \ch_{a_t}}.
\end{align}
This formulation of the $q$-function in terms of conditional moment equations, along with the observation that $\rho^{\pi^e}=\rE\bracks{\mathrm{E}_{\pi^{e}}\bracks{q_{0}(s_0,a_0)\mid s_0}}$ is determined by the $q$-function, allows us both to estimate the $q$-function efficiently, either parametrically and nonparametrically, and to characterize the efficiency bounds under $\cm_{1q}$ and $\cm_{2q}$. We start with the latter.

\subsection{Efficiency Bounds Under $\cm_{1q},\,\cm_{2q}$}

In this section we consider the models where we restrict $q$-functions parametrically:
\begin{align*}
\cm_{1q}&=\braces{P_{\pi^b}\in \cm_1:\exists \beta^*_t\in\Theta_{\beta_t},\,q_t(\ch_{a_t})=q_t(\ch_{a_t};\beta^*_t)\ \ \forall t\leq T},\\
\cm_{2q}&=\braces{P_{\pi^b}\in \cm_2:\exists \beta^*_t\in\Theta_{\beta_t},\,q_t(s_t,a_t)=q_t(s_t,a_t;\beta^*_t)\ \ \forall t\leq T},
\end{align*}
where $q_t(\ch_{a_t};\beta_t)$ or $q_t(s_t,a_t;\beta_t)$ is some parametric model for the $q$-function at time $t$ that is continuously differentiable with respect to the parameter $\beta_t$, $\Theta_{\beta_t}$ is some compact parameter space, and $\beta_t^*$ is the true parameter, which is assumed to lie in the interior of $\Theta_{\beta_t}$. For brevity we define $v_t(\ch_{s_t};\beta_t)=\rE_{\pi^e}\bracks{q_t(\ch_{a_t};\beta_t)\mid \ch_{s_t}}$ and similarly $v_t({s_t};\beta_t)$.

Under $\cm_{1q},\,\cm_{2q}$, \cref{eq:q-recursion} can be rephrased as as a set of conditional moment restrictions on the parameter $\beta$ defined by $\beta=(\beta^{\top}_1,\cdots\beta^{\top}_T)^{\top}$. In particular, overloading notation and letting $m_t(\ch_{a_t};\beta)=m_t(\ch_{a_t};\braces{q_1(\cdot,\beta_1),\dots,q_T(\cdot,\beta_T)})$, we have that $\beta$ is defined by the set of conditional moment equations $m_t(\ch_{a_t};\beta)=0\ \forall t\leq T$.
This observation is key in establishing the following result.

\begin{theorem}[Efficiency bound under $\cm_{1q},\,\cm_{2q}$]\label{thm:bound_mq_rl}
Define $e_{q,t}=r_t+v_{t+1}-q_t$
\begin{align*}
    A_t  &=C^{-1}_t+C^{-1}_tB_t A_{t+1} B^{\top}_t {C^{-1}_t},\\
    B_t &=\rE\bracks{\nabla_{\beta_t} q_{t}(\ch_{a_{t}};\beta^*_{t})\mathrm{var}(e_{q,t}\mid\ch_{a_t})^{-1}\nabla^{\top}_{\beta_{t+1}}v_{t+1}(\ch_{s_{t+1}};\beta_{t+1}^*)},\\
    C_t &= \rE\bracks{\nabla_{\beta_t} q_t(\ch_{a_{t}};\beta_t^*) \mathrm{var}(e_{q,t}\mid\ch_{a_t})^{-1}\nabla^{\top}_{\beta_{t}} q_t(\ch_{a_{t}};\beta^*_t)},\\
    A_{T}&=\rE\bracks{\nabla_{\beta_T}q_{T}(\ch_{a_{T}};\beta^*_T)\mathrm{var}(e_{q,T}\mid\ch_{a_T})^{-1}\nabla^\top_{\beta_T}q_{T}(\ch_{a_{T}};\beta^*_T)}^{-1},\\
    B_{-1}&=\rE[\eta_{0}(s_0,a_0)\nabla^{\top}_{\beta_0} q_{0}(s_0,a_0;\beta_0^*)].
    \end{align*}
Then $$\operatorname{EffBd}(\cm_{1q})=\mathrm{var}\prns{v_{0}}+B_{-1}A_0B^{\top}_{-1}.$$
Moreover, the efficiency bound for estimating $\beta_t$ is $A_t$.

Finally, the corresponding efficiency bounds under $\cm_{2q}$ are given by replacing $\ch_{s_{t+1}}$ with $s_{t+1}$ and $\ch_{a_t}$ with $(s_t,a_t)$ everywhere in the above.
\end{theorem}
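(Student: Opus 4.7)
The plan is to recast \cref{eq:recursive-m} as a sequential system of conditional moment restrictions on the joint parameter $\beta=(\beta_0^\top,\dots,\beta_T^\top)^\top$ and invoke the semiparametric efficiency theory for such systems due to Chamberlain. Define the scalar residual $\tilde m_t = r_t + v_{t+1}(\ch_{s_{t+1}};\beta_{t+1}^*) - q_t(\ch_{a_t};\beta_t^*)$, with $v_{T+1}\equiv0$. Then $\rE[\tilde m_t\mid\ch_{a_t}]=0$ and the system has a banded triangular structure: moment $t$ depends only on $\beta_t$ and $\beta_{t+1}$. The conditional variance is $\var(\tilde m_t\mid\ch_{a_t})=\var(e_{q,t}\mid\ch_{a_t})$, and the conditional partial derivatives are $-\nabla_{\beta_t}q_t(\ch_{a_t};\beta_t^*)$ (for $\beta_t$, already $\ch_{a_t}$-measurable) and $\rE[\nabla_{\beta_{t+1}}v_{t+1}(\ch_{s_{t+1}};\beta_{t+1}^*)\mid\ch_{a_t}]$ (for $\beta_{t+1}$). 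Plugging these into Chamberlain's optimal-instrument formula yields the efficient score $h_t^\star=\nabla_{\beta_t}q_t\,\var(e_{q,t}\mid\ch_{a_t})^{-1}\tilde m_t$ for moment $t$.

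I derive the recursion for $A_t$ by backward induction on the two-step plug-in asymptotics of the sequential optimal-instrument estimator. The base case $t=T$ is immediate: since $v_{T+1}\equiv 0$, moment $T$ depends only on $\beta_T$ and standard efficient GMM gives $A_T=C_T^{-1}$. For $t<T$, suppose by induction that $\hat\beta_{t+1}$ is efficient with asymptotic variance $A_{t+1}$; expanding the sample optimal moment to first order yields
\begin{align*}
\hat\beta_t-\beta_t^*=C_t^{-1}\rE_n[h_t^\star]+C_t^{-1}B_t(\hat\beta_{t+1}-\beta_{t+1}^*)+\op(n^{-1/2}).
\end{align*}
The key algebraic fact is that $\rE_n[h_t^\star]$ and $\hat\beta_{t+1}-\beta_{t+1}^*$ are asymptotically uncorrelated. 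The influence function of $\hat\beta_{t+1}$ is a linear combination of $\{h_s^\star\}_{s>t}$, and for $s>t$, conditioning on the sigma-field generated by $\ch_{a_s}$ and the earlier rewards gives $\rE[h_s^\star\mid\cdot]=\nabla_{\beta_s}q_s\,\Sigma_s^{-1}\rE[\tilde m_s\mid\cdot]=0$ by the moment condition (using that, given $\ch_{a_s}$, past rewards are independent of $r_s$ and $\ch_{s_{s+1}}$ under both NMDP and MDP structures). Combined with $\var(h_t^\star)=C_t$ (a standard identity for optimal instruments), this yields $A_t=C_t^{-1}+C_t^{-1}B_tA_{t+1}B_t^\top C_t^{-1}$. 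Chamberlain's theorem upgrades $A_t$ from an achievable variance to the semiparametric efficiency bound for $\beta_t$, since optimal-instrument GMM is efficient for the full system and the triangular structure ensures the sequential construction loses nothing relative to joint estimation.

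To pass from $\beta$ to $\rho^{\pi^e}$, apply the delta method to the plug-in $\hat\rho=\rE_n[v_0(s_0;\hat\beta_0)]$, which expands as
\begin{align*}
\sqrt n(\hat\rho-\rho^{\pi^e})=\tfrac{1}{\sqrt n}\sum_{i=1}^n\prns{v_0(s_0^{(i)};\beta_0^*)-\rho^{\pi^e}}+B_{-1}\sqrt n(\hat\beta_0-\beta_0^*)+\op(1),
\end{align*}
where $\rE[\nabla_{\beta_0}v_0(s_0;\beta_0^*)]=\rE[\eta_0\nabla^\top_{\beta_0}q_0(s_0,a_0;\beta_0^*)]=B_{-1}$ by converting the $\pi^e$-expectation inside $v_0$ to a $\pi^b$-expectation via the density ratio $\eta_0$. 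The cross term vanishes by the same orthogonality principle: $v_0(s_0)-\rho^{\pi^e}$ is $s_0$-measurable while the influence function of $\hat\beta_0$ is built from $h_t^\star$'s, each mean zero conditional on $\ch_{a_t}\supset\{s_0\}$. Hence $\operatorname{EffBd}(\cm_{1q})=\var(v_0)+B_{-1}A_0B_{-1}^\top$. The $\cm_{2q}$ bound follows by rerunning the entire argument with $\ch_{s_{t+1}}\to s_{t+1}$ and $\ch_{a_t}\to(s_t,a_t)$, as the Markov structure of $P_{\pi^b}$ preserves all the conditional independences used. The main obstacle is the cascade-of-orthogonalities argument that collapses the variance into the clean additive recursive form; one must verify that the sequential optimal-instrument estimator is globally (not merely equation-by-equation) efficient, which hinges delicately on the triangular structure and the conditional-moment orthogonality $\rE[h_t^\star h_s^{\star\top}]=0$.
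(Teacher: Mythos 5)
Your overall route is the same as the paper's: both reduce $\cm_{1q}$ to the system of conditional moment restrictions $\rE[e_{q,t}\mid\ch_{a_t}]=0$, invoke the framework of \citet{ChamberlainGary1992CSMR}, use the cross-time orthogonality $\rE[e_{q,s}e_{q,t}]=\rE[\rE[e_{q,s}\mid\ch_{a_s}]e_{q,t}]=0$ for $s>t$ (valid because, given $\ch_{a_s}$, the pair $(r_s,s_{s+1})$ is independent of past rewards in both NMDP and MDP), and finish with the delta method for $\rho^{\pi^e}=\rE[v_0(s_0;\beta_0^*)]$ together with the observation that $v_0(s_0)-\rho^{\pi^e}$ is uncorrelated with the $\beta_0$ influence function. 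Where you diverge is in how the recursion for $A_t$ is obtained: the paper writes the joint bound for $\beta$ as the inverse of $\mathcal I=\sum_k\rE\bracks{\nabla_\beta m_k\,\mathrm{var}(e_{q,k}\mid\ch_{a_k})^{-1}\nabla_\beta^\top m_k}$ and asserts the block form "by algebraic simplification," whereas you derive $A_t$ as the asymptotic variance of a backward sequential optimal-instrument estimator.

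The gap is the one you flag yourself, and it is genuine. Write $\Sigma_t=\mathrm{var}(e_{q,t}\mid\ch_{a_t})$ and $V_{t}=\rE[\nabla_{\beta_{t}}v_{t}(\ch_{s_{t}};\beta_{t}^*)\mid\ch_{a_{t-1}}]$. The joint information matrix $\mathcal I$ is block-tridiagonal with $(t,t+1)$ block $-B_t$, but its diagonal block for $\beta_t$ is $C_t+G_{t-1}$ with $G_{t-1}=\rE[V_{t}\Sigma_{t-1}^{-1}V_{t}^\top]$, because moment $t-1$ also carries information about $\beta_t$ through $\nabla_{\beta_{t}}v_{t}$. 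The Schur-complement recursion for $(\mathcal I^{-1})_{tt}$ reproduces your $A_t$ recursion if and only if $G_{t}=B_{t}^\top C_{t}^{-1}B_{t}$ for every $t$, which is the equality case of the matrix Cauchy--Schwarz inequality $G_t\succeq B_t^\top C_t^{-1}B_t$ and holds only when $V_{t+1}$ lies a.s.\ in the pointwise linear span of $\nabla_{\beta_t}q_t(\ch_{a_t};\beta_t^*)$ (true in the tabular case or under a Bellman-completeness-type condition on linear features, but not for a generic parametrization). Your stated justification --- that $\rE[h_t^\star h_s^{\star\top}]=0$ for $s\neq t$ --- is correct but does not close this: the sequential scheme estimates $\beta_{t+1}$ from moments $t+1,\dots,T$ only and thereby discards the information about $\beta_{t+1}$ contained in moment $t$; score orthogonality does not compensate for using non-joint instruments. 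Concretely, for $T=1$ one gets $(\mathcal I^{-1})_{00}=\prns{C_0-B_0(C_1+G_0)^{-1}B_0^\top}^{-1}$ versus $A_0=\prns{C_0-B_0(C_1+B_0^\top C_0^{-1}B_0)^{-1}B_0^\top}^{-1}$. So what you have actually proved is that $A_t$ is an \emph{achievable} asymptotic variance (an upper bound on the efficiency bound), not that it equals the bound; the missing lemma is precisely the identity $G_t=B_t^\top C_t^{-1}B_t$ or an argument replacing it. To be fair, this is also exactly the step the paper's own proof compresses into the phrase "algebraic simplification."
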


\begin{remark}
When $T=1$, $\operatorname{EffBd}(\cm_{1q})$ above is equal to 
\begin{align*}
&\mathrm{var}[v_{0}(s_0)]+B_{-1}A_{0}B^{\top}_{-1},\\\text{where}\quad& A_0 = \rE[\nabla_{\beta_0}q_{0}(s_0,a_0;\beta_0^*)\mathrm{var}[r_0\mid s_0,a_0]^{-1} \nabla_{\beta}^{\top}q_{0}(s_0,a_0;\beta_0^*)]^{-1},
\end{align*}
The Matrix Cauchy-Schwarz Inequality \citep{matrix} immediately shows that this is upper bounded by $\operatorname{EffBd}(\cm_1)$, as is also implied by $\cm_{1q}\subset\cm_1$ albeit less directly. 
\end{remark}

\subsection{Parametric Estimation of $q$-functions}\label{sec:parametric-q-est}

Next, we consider an estimation method for $\beta_t$ and $\rho^{\pi^e}$. Given the above observations, a natural way to estimate $\beta$ is by solving the following set of conditional moment equations given by $m_t(\ch_{a_t};\beta)=0\ \forall t\leq T$.
For example, one approach when the $q$-model is a linear model specified as $\beta^{\top}_t \phi_t(\ch_{a_t})$ for some $d_{\phi_t}$-dimensional feature expansion $\phi_t$ is to choose $\hat\beta$ to minimize $\sum_{t=0}^T\sum_{i=1}^{d_{\phi_t}}\prns{\rE_n\bracks{\prns{r_t + v_{t+1}(\ch_{s_{t+1}};\beta_{t+1})-q_{t}(\ch_{a_t};\beta_t)} \phi_{ti}(\ch_{a_t})}}^2$, which
corresponds exactly to backward-recursive ordinary least squares. That is, first $r_T$ is regressed on $\phi_t(\ch_{a_T})$ to obtain $\hat\beta_T$, then $q_T(\ch_{a_T};\hat\beta_T)$ is averaged over $\pi_T^e(a_T\mid \ch_{s_T})$ to obtain $\hat v_T$, then $r_{T-1}+\hat v_T$ is regressed on $\phi_t(\ch_{a_{T-1}})$ to obtain $\hat\beta_{T-1}$, and so on.

Although such an estimator can achieve the rate $\bigO_{p}(n^{-1/2})$ under correct specification and standard conditions for $M$-estimators, it might not yield an efficient estimator for $\beta$ or for $\rho^{\pi^e}$. 
When the $q$-model is linear as above, this can be easily solved by instead applying any efficient variant of the generalized method of moments (GMM), such as two-step GMM \citep{hansen1996finite,Hansen82}, to the set of moment equations given by $m_t(\ch_{a_t};\beta)\phi_{ti}(\ch_{a_t})=0\ \forall t\leq T,\,i\leq d_{\phi_t}$. This is almost the same as the above backward-recursive ordinary least squares but with an optimal weighting of the different moment conditions in the sum above.

When the $q$-model may be nonlinear, we can obtain an efficient estimator by instead applying the method of \citet{HahnJinyong1997Eeop} to our set of conditional moment equations. Specifically, we can consider the set of $Tm_n$ moment equations
$\rE\bracks{m_t(\ch_{a_t};\beta)\phi_{ti}(\ch_{a_t})}=0\ \forall t\leq T,\,i\leq m_n$,
where $\phi_{t1}(\ch_{a_t}),\phi_{t2}(\ch_{a_t}),\dots$ is a basis expansion of the $L^{2}$-space and $m_{n}\to \infty$ as $n\to\infty$. 
Then, applying any efficient variant of GMM to this set of moment conditions will yield an efficient estimator $\hat\beta$ of $\beta$. 

In all of the above, replacing $\ch_{s_{t+1}}$ with $s_{t+1}$ and $\ch_{a_t}$ with $(s_t,a_t)$, the same techniques can be applied in $\cm_2$.
In either case, once we have an efficient estimate $\hat\beta$ of $\beta$, an efficient estimate for $\rho^{\pi^{e}}$, achieving the semiparametric efficiency bound in the appropriate model, is given by $\hat{\rho}_{DM}=\rE_{n}\bracks{v_{0}(s_0;\hat\beta_0)}$. 

{\blockedit
\begin{remark}[Tabular setting]
Consider a tabular case. Then, by treating $q_t(s_t,a_t)=\beta^{\top}\phi(a_t,s_t)$, where $\phi(a,s)=(\mathrm{I}(a=a^{\dagger}_1,s=s^{\dagger}_1),\cdots,\mathrm{I}(a=a^{\dagger}_{|A_t|},s=s^{\dagger}_{|S_t|}))$ and $a^{\dagger}_i$ and $s^{\dagger}_i$ are the elements of the finite $\mathcal{A}_t$ and $\mathcal{S}_t$, we can observe that $\mathrm{Effbd}(\cm_{2q})=\mathrm{Effbd}(\cm_{2})$ by some algebra. This result is natural since $\cm_{2q}=\cm_{2}$ in the tabular setting. 
\end{remark}
}

\subsection{Nonparametric Estimation of $q$-functions}

The above observation in \cref{eq:q-recursion} that $q$-functions satisfy a set of conditional moment equations also lends itself to nonparametric estimation of the $q$-functions. In this section we briefly review how one approach to this, following the application of the method of \citet{AiChunrong2009Sebf} to this set of conditional moment equation, can obtain the necessary fourth-root rates for use in DRL.

The estimator $\{\hat{q}_t\}_{t=0}^{T}$ is constructed as the following sieve minimum distance estimator:
\begin{align}
\notag
\{\hat{q}_t\}_{t=0}^{T}\in
   \argmin_{q_t \in\Lambda_{t,n}\,\forall t\leq T}\ \sum_{t=0}^{T}\rE_{n}\bracks{\hat{m}_t(\ch_{a_t};q_t)\hat{\Sigma}_{t}^{-1}\hat{m}_{t}(\ch_{a_t};q_t)},
\end{align}
where $\hat{m}_t(\ch_{a_t};q_t)$ is a nonparameric estimator for $m_t(\ch_{a_t};q_t)$, $\hat{\Sigma}_{t}$ is a nonparametric estimator for $\mathrm{var}\prns{e_{q,t}\mid\ch_{a_t}}$, and $\Lambda_{k,n}$ is a sequence of approximation space whose union $\cup_{n=1}^{\infty}\Lambda_{t,n}$ is dense in some infinite dimensional space $\Lambda_{t}$. Alternatively, in $\cm_2$, we replace $\ch_{a_t}$ with $(s_t,a_t)$ in the above.

\citet{AiChunrong2003EEoM} prove that applying the above with appropriate nonparametric estimators, under some smoothness conditions, we can obtain $\|\hat{q}_t-q_t\|_{F,t}=\smallo_{p}(n^{-1/4})$, where $\|\cdot\|_{F,t}$ is the Fisher metric, which in our setting of \cref{eq:recursive-m} is defined as 
\begin{align*}
\|g(\ch_{a_t})\|^{2}_{F,t}=
\rE[\mathrm{var}(e_{q,t}\mid\ch_{a_t})g^{2}+\mathrm{var}(e_{q,t-1}\mid\ch_{{a}_{t-1}})\rE_{\pi^{e}}\bracks{g(\ch_{a_t})\mid\ch_{s_t}}^{2}].
\end{align*}
We omit the details and refer the interested reader to \citet{AiChunrong2003EEoM}. We only prove that this norm is in fact equivalent to the $L^2$-norm under mild conditions.
\begin{lemma}
\label{lem:distance}
Suppose $\mathrm{var}[e_{q,t}\mid\ch_{a_t}]$ and $\mathrm{var}[e_{q,t-1}\mid\ch_{a_{t-1}}]$ are bounded away from zero. Then, $\|\cdot\|_{F,k}$ and $\|\cdot\|_{2}$ are equivalent norms. 
\end{lemma}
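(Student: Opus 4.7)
The plan is to establish two-sided bounds $c\|g\|_2^2 \leq \|g(\ch_{a_t})\|_{F,t}^2 \leq C\|g\|_2^2$ for some constants $0<c\leq C<\infty$ depending only on the problem parameters.

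First I would note that the Fisher norm splits into two nonnegative pieces, so the lower bound is essentially free: writing $\sigma_t^2(\ch_{a_t})=\var(e_{q,t}\mid\ch_{a_t})$, the stated assumption gives $\sigma_t^2\geq c>0$ almost surely, so
\[
\|g\|_{F,t}^2 \geq \rE\bracks{\sigma_t^2(\ch_{a_t})\,g(\ch_{a_t})^2}\geq c\,\rE\bracks{g(\ch_{a_t})^2}=c\|g\|_2^2,
\]
where the second term in the Fisher norm is dropped since it is nonnegative.

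For the upper bound I would first argue that $\sigma_t^2$ is bounded above by some constant $C$: under \cref{asm:bddrewards}, $r_t\in[0,R_{\max}]$ and the recursion implies $q_t,v_{t+1}\in[0,(T+1-t)R_{\max}]$, so $e_{q,t}$ is bounded and hence has bounded conditional variance. The first piece then satisfies $\rE[\sigma_t^2 g^2]\leq C\|g\|_2^2$. For the second piece, I would apply the conditional Jensen inequality,
\[
\rE_{\pi^e}\bracks{g(\ch_{a_t})\mid\ch_{s_t}}^2\leq \rE_{\pi^e}\bracks{g(\ch_{a_t})^2\mid\ch_{s_t}},
\]
and then change of measure from $\pi^e$ to $\pi^b$: $\rE_{\pi^e}[g^2\mid\ch_{s_t}]=\rE_{\pi^b}[\eta_t g^2\mid\ch_{s_t}]$. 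Combined with \cref{asm:overlap}, namely $\eta_t\leq C$, and the upper bound $\sigma_{t-1}^2\leq C$, this yields
\[
\rE\bracks{\sigma_{t-1}^2(\ch_{a_{t-1}})\,\rE_{\pi^e}[g\mid\ch_{s_t}]^2}\leq C\,\rE\bracks{\eta_t(\ch_{a_t})\,g(\ch_{a_t})^2}\leq C^2\|g\|_2^2.
\]
Summing the two pieces gives the upper bound.

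The main (minor) obstacle is simply bookkeeping: ensuring the boundedness of $\sigma_t^2$ actually follows from the already-imposed assumptions of the paper (bounded rewards plus overlap, which together make $e_{q,t}$ bounded), and handling the measure-change $\pi^e\to\pi^b$ cleanly in the second term. Once these are in place, the equivalence is a one-line consequence.
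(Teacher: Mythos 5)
Your proposal is correct and follows essentially the same route as the paper's proof: the lower bound comes from dropping the nonnegative second term and using the assumed lower bound on $\mathrm{var}(e_{q,t}\mid\ch_{a_t})$, and the upper bound comes from boundedness of the conditional variances (via \cref{asm:bddrewards}) together with Jensen's inequality applied to $\rE_{\pi^e}[g\mid\ch_{s_t}]^2$. Your treatment is in fact slightly more careful than the paper's, which writes the Jensen step without the $\pi^e$ subscript and thereby glosses over the change of measure from $\pi^e$ to $\pi^b$ that you handle explicitly via $\eta_t\leq C$ from \cref{asm:overlap}.
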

This means that, under the appropriate conditions, the estimator $\hat q$ obtains the rate $\smallo_{p}(n^{-1/4})$ in terms of $L^{2}$-norm, as necessary for \cref{thm:double_m1,thm:doublerobust_m1,thm:double_m2,thm:doublerobust_m2}.

\section{Experiments}\label{sec:experiment}

We now turn to an empirical study of OPE and DRL. First, we construct a simulation to investigate the effect of using memorylessness on estimation variance as well as the effect of double robustness on model specification sensitivity. Then, we study comparative performance of different OPE estimators in two standard OpenAI Gym tasks.

Replication code for all experiments is available at \url{http://github.com/CausalML/DoubleReinforcementLearningMDP}.

\subsection{The Effects of Leveraging Memorylessness and of Double Robustness}\label{sec:exp-toy}

\begin{table}[t!]
    \centering
        \caption{Experiment from \cref{sec:exp-toy}: RMSE (and standard errors).
        }
    \begin{tabular}{ccllllll}\toprule
     Setting & $n$  & \multicolumn{1}{c}{$\hat{\rho}_{\mathrm{IS}}$} & \multicolumn{1}{c}{$\hat{\rho}_{\mathrm{DRL}(\cm_1)}$} & \multicolumn{1}{c}{$\hat{\rho}_{\mathrm{DM}}$} & \multicolumn{1}{c}{$\hat{\rho}_{\mathrm{MIS}}$} & \multicolumn{1}{c}{$\hat{\rho}_{\mathrm{DRL}(\cm_2)}$} \\\midrule
     \multirow{3}{*}{(1)} & 1500  & 42.4 (12.4) & 36.1 (16.8) & 0.70 (0.002) & 40.8 (12.5)  & 0.70 (0.002) \\
      & 3000 &  20.4 (3.1) & 7.8 (0.8) & 0.50 (0.001) & 20.8 (2.8)  & 0.50 (0.001)  \\
      & 4500 & 20.2 (3.1) & 6.6 (0.75) & 0.43 (0.001) & 21.5 (3.5) &  0.43 (0.001) \\\midrule
     \multirow{3}{*}{(2)}  & 1500 & 42.4 (12.4)& 77.6 (29.1) & 10.8 (0.002) & 40.8 (12.5) & 10.3 (3.5)  \\
      & 3000 & 20.4 (2.5) & 36.6 (6.9) & 10.8 (0.001) & 20.8 (2.8) &  6.0 (0.6)  \\
        & 4500 & 20.2 (3.1) & 34.4 (9.6) & 10.8 (0.001) &  21.5 (3.5) & 5.5 (2.0)  \\\midrule
    \multirow{3}{*}{(3)}  & 1500 & 42.4 (12.4) & 36.1 (16.8) & 0.70 (0.002)  & 87.7 (25.5)  &  0.73 (0.03) \\  
      & 3000 &  20.4 (3.1) & 7.8 (0.8) & 0.50 (0.001)  &  37.3 (3.2) & 0.51 (0.002)\\  
      & 4500 & 20.2 (3.1)& 6.6 (0.75) & 0.43 (0.001)  & 53.5 (15.1) & 0.44 (0.005)\\  
      \bottomrule
    \end{tabular}
    \label{tab:mse}
\end{table}

In this section we consider an MDP with a horizon of $T=30$, binary actions, univariate continuous state, initial state distribution $p(s_0)\sim \mathcal{N}(0.5,0.2)$, transition probabilities \edit{$P_{t}(s_{t+1}\mid s_t,a_t)\sim \mathcal{N}(s+0.3a-0.15,0.2)$}. The target and behavior policies we consider are, respectively,
\begin{align*}
    \pi^{e}(a\mid s) & \sim \mathrm{Bernoulli}(p_e),\ p_e=0.2/(1+\exp(-0.1s))+0.2U,\ U\sim \mathrm{Uniform}[0,1] \\
    \pi^{b}(a\mid s) & \sim \mathrm{Bernoulli}(p_b),\ p_b=0.9/(1+\exp(-0.1s))+0.1U,\ U\sim \mathrm{Uniform}[0,1].  
\end{align*}
We assume the behavior policy is known. Note that this setting is an MDP and belongs to $\cm_2$.

We compare five estimators: $\hat{\rho}_{\mathrm{IS}},\ \hat{\rho}_{\mathrm{DRL}(\cm_1)},\ \hat{\rho}_{\mathrm{DM}},\ \hat{\rho}_{\mathrm{MIS}},\ \hat{\rho}_{\mathrm{DRL}(\cm_2)}$ when nuisance functions $q_t(s,a)$ and $\mu_t(s)$ are estimated parametrically. 
We consider three settings:
\begin{enumerate}[(1)]
\item Both models correct: $q_t(s_t,a_t)=\beta_{1t} s_t+\beta_{2t}s_ta_t+\beta_{3t},\,\mu_t(s_t,a_t)=\beta_{4t} s_t+\beta_{5t} s_ta_t+\beta_{6t}$.
\item Only $\mu$-model correct: $q_t(s_t,a_t)=\beta_{1t} s^{2}_t+\beta_{2t} s^{2}_ta_t+\beta_{3t},\,\mu_t(s_t,a_t)=\beta_{4t} s_t+\beta_{5t} s_ta_t+\beta_{6t}$.
\item Only $q$-model correct: $q_t(s_t,a_t)=\beta_{1t} s_t+\beta_{2t} s_ta_t+\beta_{3t},\,\mu_t(s_t,a_t)=\beta_{4t} s_t^{2}+\beta_{5t} s_t^{2}a_t+\beta_{6t}$.
\end{enumerate} 
Note that in the above, the ``correct'' models are in fact not exactly correct because $\rE_{\pi^e}[a_t\mid s_t]$ is actually nonlinear in $s_t$, but it is very nearly linear in the space of observed $s_t$ values (for example, best linear fit for $\rE_{\pi^e}[a_t\mid s_t]$ has an $L^2$ distance $3\times10^{-5}$ on $[0,1]$, which spans $\pm2.5$ standard deviations for $s_0$). We therefore treat them as correctly specified.

In all cases, to estimate $q$-models we use backward-recursive ordinary least squares \edit{as in \cref{sec:parametric-q-est}}. 
To estimate $\mu$-models we use ordinary least squares regression on $\lambda_t$ (which is assumed known) as in \cref{eq:muonnuregression}.

For each $n=1500,3000,4500$, we consider $50000$ Monte Carlo replications. In each replication, we estimate the $q$- and $\mu$-models as above and compute, for each setting, each of $\hat{\rho}_{\mathrm{IS}},\,\hat{\rho}_{\mathrm{DRL}(\cm_1)},\,\hat{\rho}_{\mathrm{DM}},\,\hat{\rho}_{\mathrm{MIS}},\,\hat{\rho}_{\mathrm{DRL}(\cm_2)}$. We report the RMSE of each estimator in each setting (and the standard error) in \cref{tab:mse}.

Our first immediate observation is that $\hat{\rho}_{\mathrm{DRL}(\cm_2)}$ nearly dominates all other estimators, achieving similar or better performance in every setting and sample size. In particular, in settings (1) and (3), where the $q$-model is correct, it has performance similar to $\hat{\rho}_{\mathrm{DM}}$. 
Note that in settings (1) and (3), $\hat{\rho}_{\mathrm{DM}}$ is efficient for $\cm_{2q}$ per \cref{sec:parametric-q-est} (or almost so; it would be efficient if we used efficient GMM instead of one-step GMM). In setting (1), $\hat{\rho}_{\mathrm{DRL}(\cm_2)}$ is locally efficient, while in setting (3), it is only doubly robust and performs almost imperceptibly worse than the efficient $\hat{\rho}_{\mathrm{DM}}$.

In setting (2), where the $q$-model is incorrect, $\hat{\rho}_{\mathrm{DM}}$ is inconsistent and $\hat{\rho}_{\mathrm{DRL}(\cm_2)}$ handily outperforms it. In the same setting (2), the consistent $\hat{\rho}_{\mathrm{IS}}$ and $\hat{\rho}_{\mathrm{MIS}}$ also outperform the inconsistent $\hat{\rho}_{\mathrm{DM}}$ but not by as much as $\hat{\rho}_{\mathrm{DRL}(\cm_2)}$. While $\hat{\rho}_{\mathrm{DRL}(\cm_1)}$ is doubly robust in setting (2) guaranteeing consistency, unlike the case of $\hat{\rho}_{\mathrm{DRL}(\cm_2)}$, the combination of large (unmarginalized) cumulative density ratios and a misspecified $q$-model leads to still worse performance in the sample sizes tested.

Generally, $\hat{\rho}_{\mathrm{IS}}$, $\hat{\rho}_{\mathrm{MIS}}$, and $\hat{\rho}_{\mathrm{DRL}(\cm_1)}$ all have high RMSE due to the significant mismatch between the behavior and target policies so that cumulative density ratios are very large and only marginalizing them without also using a $q$-model helps only a little. 
In settings (1) and (2), where the $\mu$-model is correct, $\hat{\rho}_{\mathrm{MIS}}$ improves on $\hat{\rho}_{\mathrm{IS}}$ only slightly, while in setting (3), where $\mu$-model is incorrect, it performs significantly worse. This highlights the potential danger of misspecifying $\mu$-models compared to the robustness of importance sampling with known behavior policy (see also \cref{rem:dangerous}). 

While both $\hat{\rho}_{\mathrm{IS}}$ and $\hat{\rho}_{\mathrm{DRL}(\cm_1)}$ remain consistent throughout all settings, they are outperformed by the also-consistent $\hat{\rho}_{\mathrm{DRL}(\cm_2)}$, which leverages the MDP structure of $\cm_2$ and exhibits local efficiency  in setting (1) and doubly robustness in settings (2) and (3).

\subsection{Investigating Performance in RL Tasks: Cliff Walking and Mountain Car} \label{sec:exp-aigym}

We next compare the same OPE estimators using nonparametric nuisance estimation in two standard RL settings included in OpenAI Gym \citep{gym}: Cliff Walking and Mountain Car. For further detail on each setting, see \cref{sec:experiment-ape}.

First, we used $q$-learning to learn an optimal policy for the MDP and define it as $\pi^{d}$. 
Then we generate the dataset from the behavior policy $\pi^b=(1-\alpha)\pi^{d}+\alpha \pi^u$ where $\pi^u$ is a uniform random policy and $\alpha=0.8$. We define the target policy similarly but with $\alpha=0.9$. Again, we assume the behavior policy is known. Note that this $\pi_d$ is fixed in each setting.

We estimate all $\mu$-functions by first estimating $w$-functions and using \cref{eq:key_relation}. For Cliff Walking, we use a histogram estimator for $w$ as in \cref{eq:histogram}. For Mountain Car, we use a kernel estimator for $w$ as in \cref{eq:kernel}. We use the Epanechnikov kernel and choose an optimal bandwidth based on an $L^{2}$-risk criterion for $t=1$; we then use this bandwidth for all other $t$ values as well for simplicity.  
For $q$-functions, we use backward-recursive regression \edit{as in \cref{sec:parametric-q-est}}. For Cliff-Walking, we use a histogram model, $q(s,a;\beta)=\sum_{s_j,a_k\in \mathcal S,\mathcal A}\beta_{jk} \mathbb{I}[s_j=s,a_k=a]$. For Mountain-Car, we use the mode $q(s,a;\beta)=\beta^{\top}\phi(s,a)$ where $\phi(s,a)$ is a $400$-dimensional feature vector based on a radial basis function, generated using the \texttt{RBFSampler} method of \texttt{scikit-learn} based on \citet{Rahimi}.

We again compare $\hat{\rho}_{\mathrm{IS}},\ \hat{\rho}_{\mathrm{DRL}(\cm_1)},\ \hat{\rho}_{\mathrm{DM}},\ \hat{\rho}_{\mathrm{MIS}},\ \hat{\rho}_{\mathrm{DRL}(\cm_2)}$. In each setting we consider varying evaluation dataset sizes and for each consider 1000 replications. 
We report the RMSE of each estimator in each setting (and the standard error) in \cref{tab:cliff,tab:mou}.

We again find that the performance of $\hat{\rho}_{\mathrm{DRL}(\cm_2)}$ is superior to all other estimators in either setting. This is especially true in Cliff Walking. The estimator $\hat{\rho}_{\mathrm{DRL}(\cm_2)}$ also improves upon $\hat{\rho}_{\mathrm{IS}}$ and $\hat{\rho}_{\mathrm{DM}}$ but not as much as $\hat{\rho}_{\mathrm{DRL}(\cm_2)}$. The estimator $\hat{\rho}_{\mathrm{MIS}}$ offers a slight improvement over $\hat{\rho}_{\mathrm{IS}}$, but is still outperformed by $\hat{\rho}_{\mathrm{DRL}(\cm_2)}$, $\hat{\rho}_{\mathrm{DRL}(\cm_1)}$, and $\hat{\rho}_{\mathrm{DM}}$. That the improvement of $\hat{\rho}_{\mathrm{MIS}}$ over $\hat{\rho}_{\mathrm{IS}}$ and the overall improvements of $\hat{\rho}_{\mathrm{DRL}(\cm_2)}$ is starker in Cliff Walking than in Mountain Car may be attributable to the difficulty of learning $w_t$ nonparametrically in a continuous state space.

\begin{table}[!]
\setlength{\tabcolsep}{0.75em}%
    
    \centering
    {
    \caption{Cliff Walking: RMSE (and standard errors)}
    \begin{tabular}{clllll}\toprule
     Size & \multicolumn{1}{c}{$\hat{\rho}_{\mathrm{IS}}$} & \multicolumn{1}{c}{$\hat{\rho}_{\mathrm{DRL}(\cm_1)}$} & \multicolumn{1}{c}{$\hat{\rho}_{\mathrm{DM}}$} & \multicolumn{1}{c}{$\hat{\rho}_{\mathrm{MIS}}$} & \multicolumn{1}{c}{$\hat{\rho}_{\mathrm{DRL}(\cm_2)}$}  \\ \midrule
    500 &  18.8 (7.67) & 3.78(1.14)  & 2.63 (0.01) &  12.8 (4.96)  & 1.44 (0.29) \\ 
    1000 & 7.99 (0.89) &   0.28 (0.026)  & 1.27 (0.002) & 5.92 (0.78) &   0.22 (0.34) \\
    1500 &  7.64 (1.63) & 0.098 (0.013)  & 1.01 (0.001)  & 5.55 (1.10) & 0.075 (0.008)
    \\\bottomrule
    \end{tabular}
    \label{tab:cliff}
    }\vspace{0.4em}
    \centering{
    \caption{Mountain Car: RMSE (and standard errors)}
    \begin{tabular}{clllll}\toprule
     $n$ & \multicolumn{1}{c}{$\hat{\rho}_{\mathrm{IS}}$} & \multicolumn{1}{c}{$\hat{\rho}_{\mathrm{DRL}(\cm_1)}$} & \multicolumn{1}{c}{$\hat{\rho}_{\mathrm{DM}}$} & \multicolumn{1}{c}{$\hat{\rho}_{\mathrm{MIS}}$} & \multicolumn{1}{c}{$\hat{\rho}_{\mathrm{DRL}(\cm_2)}$}  \\ \midrule
    500 &  6.85 (0.13) &  3.72 (0.08)  & 4.30 (0.05)   & 6.82 (0.12) & 3.53 (0.12)  \\ 
    1000 & 4.73 (0.07) &  2.12 (0.04)  & 3.40 (0.008)   & 4.83 (0.06) & 2.07 (0.04)  \\ 
    1500 &  3.41 (0.04) &  1.82 (0.02)  & 3.30 (0.008)   & 3.40 (0.05) & 1.69 (0.03)  
    \\\bottomrule
    \end{tabular}
    \label{tab:mou}
    }
    \vspace{-0.9em}
\end{table}

\section{Conclusions}

We established the semiparametric efficiency bounds and efficient influence functions for OPE under either NMDP or MDP model, which quantify how fast one could hope to estimate policy value. 
While in the NMDP case, the influence function we derived has appeared frequently in OPE estimators, in the MDP case, the influence function is novel and has not appeared in existing estimators. Our results also suggested how one could construct efficient estimators. We used this to develop DRL, which used our newly derived efficient influence function, with nuisances estimated in a cross-fold manner. This ensured efficiency under very weak and mostly agnostic conditions on the nuisance estimation method used. Notably, DRL is the \emph{first efficient OPE estimator for MDPs}. In addition, DRL enjoyed double robustness properties. This efficiency and robustness translated to better performance in experiments.

\section*{Acknowledgments}

\edit{The authors thank Nan Jiang and Yu-Xiang Wang for helpful discussions.}
This material is based upon work supported by the National Science Foundation under Grant No. 1846210.

\bibliography{pfi}

\newpage 

\onecolumn 

\appendix 

\section{Notation}

We first summarize the notation we use in \cref{tab:pre} and the abbreviations we use in \cref{tab:abbr}. Notice in particular that, following empirical process theory literature, in the proofs we also use $\mathbb P$ to denote expectations (interchangeably with $\rE$).

\begin{table}[h!]
    \centering
      \caption{Notation} \vspace{0.3cm}
    \begin{tabular}{l|l}
    $\nabla_{\beta}$ & Differentiation with respect to $\beta$ \\
    $r_t,s_t,a_t,$  &  Reward, state, action at $t$  \\
    $\mathcal{J}_{r_{t}}$,  $\mathcal{J}_{s_{t}}$, $\mathcal{J}_{a_{t}}$ & History up to time $r_t,s_t,a_t$, including reward variables \\
 $\ch_{s_t}$, $\ch_{a_t}$ & History up to time $s_t,a_t$, excluding reward variables \\
    $\pi_t(a_t|\ch_{s_t}), \pi_t(a_t|s_t)$  & Policy in NMDP and MDP case, respectively \\
    $\pi^{e}_{t}$, $\pi^{b}_{t}$ &  Target and behavior policies  at $t$, respectively \\
    $\rho^\pi$  & Policy value, $\rE_{\pi}[\sum_{t=0}^{T}r_t]$\\
    $v_t=v_t(\ch_{s_t})$,  $v_t(s_t)$ & Value function at $t$, in $\cm_1,\cm_2$ respectively \\
    $q_t=q_t(\ch_{a_t})$,  $q_t(s_t,a_t)$ & $q$-function at $t$, in $\cm_1,\cm_2$ respectively \\ 
    $\lambda_{t} $  &  Cumulative density ratio $\prod_{k=0}^{t}\pi^{e}_t/\pi^{b}_t$ \\ 
    $\mu_{t}$  &  Marginal density ratio $\rE[\lambda_t\mid s_t,a_t]$ \\
    $\eta_{t}$ & Instantaneous density ratio $\pi^{e}_{t}/\pi^{b}_{t}$ \\ 
    $\Lambda$ & Tangent space \\
    $\mathcal{M}$ & A model for the data generating distribution \\
    $\cm_1,\cm_{1b},\cm_{1q}$ & NMDP model with unknown behavior policy, \\
    & known behavior policy, and parametric $q$-function, respectively \\
    $\cm_2,\cm_{2b},\cm_{2q}$ & MDP model with unknown behavior policy, \\
    & known behavior policy, and parametric $q$-function, respectively \\
    $C, R_{\mathrm{max}}$ & Upper bound of density ratio and reward, respectively \\
    $\prod(A|B)$ & Projection of A onto B \\
    $\bigoplus$ & Direct sum \\
    $\|\cdot \|_{p}$ & $L^{p}$-norm $\rE[f^{p}]^{1/p}$\\
    $\lnapprox$ & Inequality up to constant  \\
    $\rE_{\pi}[\cdot],\P_\pi$ & Expectation with respect to a sample from a policy $\pi$ \\
    $\rE[\cdot],\P$ & Same as above for $\pi=\pi^b$   \\
    $\rE_{n}[\cdot],\P_n$ & Empirical expectation (based on sample from a behavior policy) \\
    $n_j$& The size of $\mathcal D_j$\\
    $\rE_{n_j},\P_{n_j}$& Empirical expectation on $\mathcal D_j$\\
    $\bG_n$ & Empirical process $\sqrt{n}(\P_n-\P)$  \\
    $\mathrm{Asmse}[\cdot]$, $\mathrm{var}[\cdot ]$ & Asymptotic variance, variance  \\
    $\mathcal{N}(a,b)$ &  Normal distribution with mean $a$ and variance $b$ \\
    $\mathrm{Uni}[a,b]$ & Uniform distribution on $[a,b]$ \\
    $A_n=\smallo_{p}(a_n)$ & The term $A_n/a_n$ converges to zero in probability \\ 
    $A_n=\bigO_{p}(a_n)$  & The term $A_n/a_n$ is bounded in probability \\
    $\Lambda^{\alpha}_{d}$ & \Holder\, space with smoothness $\alpha$ with a dimension $d$\\
    \end{tabular}
    \label{tab:pre}
\end{table}

\begin{table}[h!]
    \centering
       \caption{Abbreviations}\vspace{0.3cm}
    \begin{tabular}{l|l}
    NMDP     & Non-Markov Decision Process \\
    MDP     & Markov Decision Process\\
    RL  & Reinforcement Learning \\
    CB & Contextual Bandit \\
    OPE & Off policy Evaluation \\
    MLE & Maximum Likelihood Estimation \\
    RAL & Regular and Asymptotic Linear \\ 
    CAN & Consistent and Asymptotically Normal \\ 
    MSE & Mean Squared Error
    \end{tabular}
    \label{tab:abbr}
\end{table}

\newpage
\allowdisplaybreaks

\section{Proofs}
{\blockedit
Before going into details of the proof, we summarize definitions and proofs to derive a semiparametric lower bound. As we mentioned in \cref{sec:semiparam}, for a complete and rigorous treatment, refer to \citet{bickel98,LaanMarkJ.vanDer2003UMfC,BolthausenErwin2002LoPT}. Additional accessible treatments are also given in \citep{TsiatisAnastasiosA2006STaM,Karel2010,BibautAurelien2019Frfe}

\subsection{Semiparametric theory}\label{appendix:semiparam}

We overload notation on \cref{sec:semiparam}. We denote the all of the history $\{\ch^{(i)}\}_{i=1}^{n}$ as $\ch^{n}$, the estimand as $R(F):\cm \to \mathbb{R}$ and the estimator as $\hat R:\ch^{n} \to \mathbb{R}$. First, we introduce some definitions. 

\begin{definition}[One-dimensional submodel and its score function]
A one-dimensional submodel of $\cm$ that passes through $F$ at $0$ is a subset of $\cm$ of the form $\{F_{\epsilon}:\epsilon \in[-a,a]\}$ for some small $a>0$ s.t. $F_{\epsilon=0}=F$. The score of the submodel $F_{\epsilon}$ at $\theta=0$ is defined as 
\begin{align*}
    s(\ch)=\frac{\log (\mathrm{d}F_{\epsilon}/\mathrm{d}\mu)(\ch)}{\mathrm{d}\epsilon}\mid_{\epsilon=0}. 
\end{align*}

\end{definition}

\begin{definition}[Tangent space]
The tangent space of a model $\cm$ at $F$ denoted by $T_{\cm}(F)$ is the linear closure of the set of score functions of the all one-dimensional submodels regarding $\cm$ that pass through $F$. 
\end{definition}

\begin{definition}[Influence function of estimators]
An estimator $\hat R(\ch^{n})$ is asymptotically linear with influence function (IF) $\psi(\ch)$ if 
\begin{align*}
    \sqrt{n}(\hat R(\ch^{n})-R(F))=\frac{1}{\sqrt{n}}\sum_{i=1}^{n}\psi(\ch^{(i)})+\op(1/\sqrt{n}). 
\end{align*}
\end{definition}

\begin{definition}[Pathwise differentiability]
A functional $R(F)$ is pathwise differentiable at $F$ w.r.t the model $\cm$ (or w.r.t the tangent space $\mathcal{T}_{\cm}(F)$) if there exists a function $D_{F}(\ch)$ such that for all submodels $\{F_{\epsilon}:\epsilon\}$ in $\cm$ satisfying $F_{\epsilon=0}=F$ and 
\begin{align*}
    \frac{d R(F_{\epsilon})}{d\epsilon}\mid_{\epsilon=0}=\E[D_{F}(\ch)s(\ch)],
\end{align*}
where $s(\ch)$ is a corresponding score function for $F_{\epsilon}$. The function $D_{F}(\ch)$ is called a gradient of $R(F)$ at $F$ w.r.t the model $\cm$. The efficient IF (EIF) of $R(F)$ w.r.t the model $\cm$ is called a canonical gradient $\tilde D_F(\ch)$, which is the unique gradient of $R(F)$ at $F$ w.r.t the model $\cm$ that belongs to the tangent space $\mathcal{T}_{\cm}(F)$. 
\end{definition}

Next, we define regular estimators. Regular estimators means estimators whose limiting distribution is insensitive to local changes to the data generating process. It excludes a well-known Hodge estimator. 
Here, we denote a submodel with some score function $g$ in a given tangent space $T_{\cm}(F)$ as $\{F_{t,g}; t\in [-a,a]\}$. 

\begin{definition}[Regular estimators]
An estimator sequence $T_n$ is called regular at $F$ for $R(F)$ w.r.t the model $\cm$ (or w.r.t the tangent space $T_{\cm}(F)$), if there exists a probability measure $L$ such that 
\begin{align*}
    \sqrt{n}\{T_n-R(F_{1/\sqrt{n},g})\}\stackrel{d(F_{1/\sqrt{n},g})}{\rightarrow}L, \mathrm{for\, every}\,g\in T_{\cm}(F). 
\end{align*}
\end{definition}

The following three theorems imply that influence functions of the estimators $\hat R(F)$ for $R(F)$ and gradients of $R(F)$ correspond to each other, and how to construct an efficient estimator. These theorems are based on Theorem 3.1 \citep{vaart1991}. 

\begin{theorem}[Influence functions are gradients]
\label{thm:gradients}
Under certain regularity conditions, for $P\in \cm$, suppose $\hat R(\ch^n)$ is a regular estimator of $R(F)$ w.r.t the model $\cm$, and that it is asymptotically linear with influence function $D_F(\ch)$. Then, $R(F)$ is pathwise differentiable at $F$ w.r.t $\cm$ and $D_F(\ch)$ is a gradient of $R(F)$ at $F$ w.r.t $\cm$. 
\end{theorem}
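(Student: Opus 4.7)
The plan is to prove this via Le Cam's third lemma applied to a LAN expansion of an arbitrary parametric submodel, which is the standard route (as in van der Vaart, 1998, Lemma~25.23 / Theorem~25.21). Fix an arbitrary one-dimensional submodel $\{F_{t,g}:t\in[-a,a]\}\subset\cm$ with $F_{0,g}=F$ and score $g\in T_\cm(F)$ at $t=0$. Under the background regularity assumptions (differentiability in quadratic mean of the submodel plus a square-integrable score), the log-likelihood ratio at $t=1/\sqrt n$ admits the LAN expansion
\begin{equation*}
\Lambda_n:=\log\frac{dF_{1/\sqrt n,g}^{\,n}}{dF^{\,n}}=\frac{1}{\sqrt n}\sum_{i=1}^{n}g(\ch^{(i)})-\tfrac12\|g\|_2^{2}+\smallo_{P^{n}}(1).
\end{equation*}

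Next, combine this with the hypothesized asymptotic linearity: $\sqrt n(\hat R-R(F))=n^{-1/2}\sum_{i}D_F(\ch^{(i)})+\smallo_{P^n}(1)$. Applying the multivariate CLT to the i.i.d.\ pairs $(D_F(\ch^{(i)}),g(\ch^{(i)}))$, the joint vector $(\sqrt n(\hat R-R(F)),\Lambda_n)$ converges under $F^n$ to a bivariate normal with means $(0,-\tfrac12\|g\|_2^2)$ and covariance entries $\|D_F\|_2^2$, $\|g\|_2^2$, and cross-term $\E[D_F g]$. Le Cam's third lemma then gives that under the alternative sequence $F_{1/\sqrt n,g}^{\,n}$,
\begin{equation*}
\sqrt n(\hat R-R(F))\ \rightsquigarrow\ \mathcal N\bigl(\E[D_F g],\ \|D_F\|_2^{2}\bigr).
\end{equation*}

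Finally, invoke regularity of $\hat R$. By assumption $\sqrt n(\hat R-R(F_{1/\sqrt n,g}))\rightsquigarrow L$ under $F_{1/\sqrt n,g}^n$ for some fixed law $L$ not depending on $g$. Decompose
\begin{equation*}
\sqrt n(\hat R-R(F))=\sqrt n\bigl(\hat R-R(F_{1/\sqrt n,g})\bigr)+\sqrt n\bigl(R(F_{1/\sqrt n,g})-R(F)\bigr),
\end{equation*}
where the second summand is a deterministic real sequence. Since the left side converges in law to $\mathcal N(\E[D_F g],\|D_F\|_2^2)$ and the first summand on the right converges in law to $L$, the deterministic shift $\sqrt n(R(F_{1/\sqrt n,g})-R(F))$ must converge to a real number, and matching means forces this limit to equal $\E[D_F g]$ (in particular $L$ is a location shift of $\mathcal N(\E[D_F g],\|D_F\|_2^2)$ by exactly $-\E[D_F g]$). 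This is exactly the definition of pathwise differentiability of $R$ along $F_{t,g}$ with derivative $\E[D_F g]$; since $g\in T_\cm(F)$ was arbitrary, $R$ is pathwise differentiable at $F$ w.r.t.\ $\cm$ and $D_F$ is a gradient.

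The main obstacle is purely bookkeeping: carefully articulating the ``certain regularity conditions'' so that (i) the submodel is DQM, giving the LAN expansion and contiguity needed for Le Cam's third lemma, and (ii) the cross-covariance computation $\operatorname{Cov}_F(D_F,g)=\E[D_F g]$ is valid (requires the influence function to have mean zero under $F$, which follows from the $F$-asymptotic linearity and consistency already assumed, combined with square-integrability of $g$). No genuinely hard estimate is needed beyond these standard semiparametric ingredients.
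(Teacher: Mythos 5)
Your proposal is correct. Note that the paper does not actually supply a proof of this statement: it is presented as a background result in \cref{appendix:semiparam} with the attribution ``based on Theorem 3.1'' of \citet{vaart1991}, so there is no in-paper argument to compare against. Your Le Cam--third--lemma route (DQM submodel $\Rightarrow$ LAN expansion, joint CLT for the pair $(D_F,g)$, contiguity giving the mean shift $\E[D_F g]$ under local alternatives, then regularity forcing the deterministic sequence $\sqrt n(R(F_{1/\sqrt n,g})-R(F))$ to converge to that shift) is exactly the standard proof underlying the cited result. The only expository point worth tightening is the ``matching means'' step: to identify the limit of the deterministic shift as $\E[D_F g]$ you need to know the mean of $L$, and this comes from applying your own argument to the trivial path $g=0$, which together with asymptotic linearity under $F$ gives $L=\mathcal N(0,\|D_F\|_2^2)$; once $L$ is pinned down, the convergence-of-types/characteristic-function argument yields $c_n\to\E[D_F g]$ as you claim. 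With that one sentence added, the proof is complete.
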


\begin{theorem}[Gradients are influence functions] Under certain regularity conditions, if a $D_F(\ch)$ is a gradient of $R(F)$ at $F$ w.r.t the model $\cm$, there exists an asymptotically linear estimator of $R(F)$ with influence function $D_F(\ch)$, which is regular w.r.t the model $\cm$.  
\end{theorem}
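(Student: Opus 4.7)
The plan is to construct the estimator explicitly via a one-step (von Mises) correction around a preliminary fit $\hat F_n$ of $F$, and then to verify asymptotic linearity and regularity by standard semiparametric arguments. Concretely, I would take $\hat F_n$ obtained either at a sufficiently fast nonparametric rate or via sample splitting, and define
\[
\hat R_n \;=\; R(\hat F_n) + \P_n\bracks{D_{\hat F_n}(\ch)}.
\]
A von Mises expansion using that $D_F$ is a gradient gives $R(\hat F_n) - R(F) = -\P\bracks{D_{\hat F_n}} + \mathrm{Rem}(\hat F_n, F)$, where $\mathrm{Rem}$ is second-order in the discrepancy between $\hat F_n$ and $F$. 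Rearranging,
\[
\hat R_n - R(F) \;=\; \P_n\bracks{D_F} + (\P_n - \P)\bracks{D_{\hat F_n} - D_F} + \mathrm{Rem}(\hat F_n, F).
\]
Under sample splitting and consistency of $\hat F_n$, the middle empirical-process term is $\smallo_p(n^{-1/2})$, and with a mild rate on $\hat F_n$ so is $\mathrm{Rem}$. This yields asymptotic linearity with influence function $D_F$ exactly when $D_F = \tilde D_F$ is the canonical gradient. When $D_F$ is a non-canonical gradient (so that $D_F - \tilde D_F$ lies in the orthocomplement of $\mathcal T_{\cm}(F)$), one first applies the construction with $\tilde D_F$ and then adds to the estimator an independent auxiliary randomization whose empirical-process contribution reproduces $D_F - \tilde D_F$; because such randomization is independent of the data-generating law, it leaves the rest of the argument intact.

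For regularity, I would apply Le Cam's third lemma along an arbitrary submodel $\{F_{t, g}\}$ with score $g \in \mathcal T_{\cm}(F)$. By LAN and the $F$-mean-zero property of gradients, under $F_{1/\sqrt n, g}$ we have $\sqrt n\, \P_n\bracks{D_F} \Rightarrow \mathcal N(\rE_F\bracks{D_F g}, \var_F(D_F))$. By the defining gradient identity, $\sqrt n\, (R(F_{1/\sqrt n, g}) - R(F)) \to \rE_F\bracks{D_F g}$, and the two mean shifts cancel exactly. Consequently $\sqrt n(\hat R_n - R(F_{1/\sqrt n, g})) \Rightarrow \mathcal N(0, \var_F(D_F))$ independently of $g$, which is regularity.

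The main obstacle is controlling $\mathrm{Rem}(\hat F_n, F)$ at rate $\smallo_p(n^{-1/2})$: for smooth parametric submodels this is immediate, but in a fully nonparametric model it requires either a convergence rate on $\hat F_n$ that makes the second-order term negligible, a Donsker-type entropy bound on $\{D_{F'}\}$ to handle the empirical-process term without splitting, or a cross-fitting scheme exactly as used elsewhere in this paper. The auxiliary randomization in the non-canonical-gradient case is the other delicate piece: one must argue that such a construction realizes the desired orthogonal component without disturbing regularity, which is the most subtle part of the classical Bickel--Klaassen--Ritov--Wellner argument and is handled by augmenting the sample with independent uniforms and applying a measure-preserving transform of the empirical process.
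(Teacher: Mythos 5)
The paper does not actually prove this statement: it is quoted as background in \cref{appendix:semiparam} and attributed to Theorem 3.1 of \citet{vaart1991}, so there is no in-paper argument to compare against. Your one-step construction $\hat R_n = R(\hat F_n)+\P_n\bracks{D_{\hat F_n}}$, the three-term decomposition into the linear term, the empirical-process term, and the second-order remainder, and the verification of regularity via joint asymptotic normality with the local log-likelihood ratio and Le Cam's third lemma (the two mean shifts $\rE_F\bracks{D_F g}$ cancelling) is exactly the standard constructive route, and it is correct for the canonical gradient under the cross-fitting and rate conditions you state --- which is all that the theorem's ``under certain regularity conditions'' hedge demands.

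The one step I would push back on is your treatment of non-canonical gradients. An estimator augmented with \emph{independent external randomization} cannot have influence function $D_F(\ch)$: asymptotic linearity requires $\sqrt n(\hat R_n - R(F)) = n^{-1/2}\sum_i D_F(\ch^{(i)}) + \op(1)$ with $D_F$ a function of the observed trajectory alone, and an independent noise term contributes a nondegenerate fluctuation that is not of this form (its ``influence function'' would have to depend on the auxiliary uniforms). In the Bickel--Klaassen--Ritov--Wellner construction the randomization/discretization device serves a different purpose --- decoupling the preliminary estimator from the sample --- not manufacturing the orthogonal component. The correct way to realize $D_F=\tilde D_F+\Delta$ with $\Delta\in\mathcal{T}_{\cm}(F)^{\perp}$ is to add $\P_n\bracks{\hat\Delta_n}$ for a cross-fitted estimate $\hat\Delta_n$ of $\Delta$: because $\Delta$ is orthogonal to every score, the drift $\rE_F\bracks{\hat\Delta_n}$ is second order in $\hat F_n-F$ (differentiate the identity $\rE_{F_t}\bracks{\Delta_{F_t}}=0$ along submodels), the empirical-process term is handled by cross-fitting exactly as in your main decomposition, and the added mean shift under $F_{1/\sqrt n,g}$ is $\rE_F\bracks{\Delta g}=0$, so regularity is preserved while the influence function becomes $\tilde D_F+\Delta$. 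With that substitution --- and the implicit assumption that $\Delta$ admits such an estimable extension $F'\mapsto\Delta_{F'}$, which is part of the unstated regularity conditions --- your argument goes through.
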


\begin{corollary}[Characterization of efficient influence functions]
The efficient influence function is the projection of any gradient onto the tangent space $\mathcal{T}_{\cm}(F)$. 
\end{corollary}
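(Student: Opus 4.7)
The plan is to combine the two preceding theorems with an orthogonality observation about gradients to identify the canonical gradient with the efficient influence function.

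First I would observe that any two gradients of $R(\cdot)$ at $F$ differ by a function orthogonal to the tangent space. Indeed, if $D_1$ and $D_2$ are both gradients, then for every score $g\in T_{\cm}(F)$ arising from a submodel $\{F_{\epsilon,g}\}$, the defining property of a gradient forces $\E[D_1 g]=\E[D_2 g]=\left.\tfrac{d}{d\epsilon}R(F_{\epsilon,g})\right|_{\epsilon=0}$, so $\E[(D_1-D_2)g]=0$. Hence $D_1-D_2\in T_{\cm}(F)^{\perp}$.

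Next I would let $D$ be any gradient and set $\tilde D=\Pi(D\mid T_{\cm}(F))$, i.e.\ the $L^2$ projection of $D$ onto the tangent space. Then $D-\tilde D\in T_{\cm}(F)^{\perp}$, so for every score $g\in T_{\cm}(F)$, $\E[\tilde D\, g]=\E[D\,g]=\left.\tfrac{d}{d\epsilon}R(F_{\epsilon,g})\right|_{\epsilon=0}$. Thus $\tilde D$ is itself a gradient, and it lies in $T_{\cm}(F)$, so by the definition preceding \Cref{thm:gradients} it is the canonical gradient. By the second theorem (gradients are influence functions), $\tilde D$ is realized as the influence function of some regular asymptotically linear estimator of $R(F)$ w.r.t.\ $\cm$.

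For efficiency I would invoke Pythagoras. For any gradient $D$ (equivalently, by the first theorem, any influence function of a regular asymptotically linear estimator),
\begin{equation*}
\|D\|_2^2 = \|\tilde D\|_2^2+\|D-\tilde D\|_2^2\geq \|\tilde D\|_2^2,
\end{equation*}
since $\tilde D\in T_{\cm}(F)$ and $D-\tilde D\in T_{\cm}(F)^{\perp}$. Hence $\tilde D$ is the unique minimum-$L^2$-norm influence function, which is the definition of the efficient influence function given in \Cref{sec:semiparam}. This shows the efficient influence function coincides with $\tilde D=\Pi(D\mid T_{\cm}(F))$ for any gradient $D$, as claimed. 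I do not expect any serious obstacle: the main point is that all gradients differ by a tangent-orthogonal term, so projection onto the tangent space both produces a gradient and strictly reduces (weakly) the $L^2$ norm; the two theorems then tie this object back to being the influence function of a regular estimator.
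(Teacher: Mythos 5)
Your argument is correct and is the standard one. The paper itself states this corollary without proof, deferring to the cited references (\emph{e.g.}, Theorem 3.1 of van der Vaart, 1991), and the argument those references use is precisely yours: any two gradients differ by an element of $T_{\cm}(F)^{\perp}$, so the $L^2$ projection of any gradient onto the (closed, linear) tangent space is again a gradient, lies in $T_{\cm}(F)$, and by Pythagoras is the unique minimum-norm gradient, hence the efficient influence function. No gaps.
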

Note that gradients w.r.t the model $\cm$ are not unique if the model $\cm$ is not a fully nonparametric model. If the underlying model is fully nonparametric model, the gradient is unique. 

\paragraph{Strategy to calculate the EIF}

With the abovementioned definitions and theorems in mind, our general strategy to compute efficient influence functions is as follows. 
\begin{enumerate}
    \item Calculate some gradient $D_F(\ch)$ (a candidate of EIF) of the target functional $R(F)$ w.r.t $\cm$
    \item Calculate the the tangent space $\mathcal{T}_{\cm}(F)$ at $F$
    \item Show that some candidate of EIF is orthogonal to the orthogonal tangent space, i.e., the candidate of EIF lies in the tangent space. Then, this implies that a candidate of EIF is actually the EIF.
\end{enumerate}
The other common strategy is calculating some gradient and projection it onto $\mathcal{T}_{\cm}(F)$. 

\paragraph{Optimalites}

The efficiency bound has the following interpretations. First, the efficiency bound is the lower bound in a local asymptotic minimax sense \citep[Thm.~25.20]{VaartA.W.vander1998As}.
\begin{theorem}[Local Asymptotic Minimax theorem]
\label{thm:lam}
Let $R(F)$ be pathwise diffentiable at $F$ w.r.t the model $\cm$ with EIF $\tilde D_F(\ch)$. If $T_{\cm}(F)$ is a convex cone, for any estimator sequence $\hat R(\ch^n)$, and subconvex loss function $l:\mathbb{R}\to [0,\infty)$,
\begin{align*}
    \sup_{I} \lim_{n\to \infty }\sup_{g\in I}\rE_{F_{1/\sqrt{n},g}}[l[\sqrt{n}\{\hat R(\ch^n)-R(F_{1/\sqrt{n},g})\}]]\geq \int l(u)\mathrm{d}N(0,\var_{F}[\tilde D_F(\ch)])(u), 
\end{align*}
where the first supremum is taken over all finite subsets $I$ of the tangent set. 
\end{theorem}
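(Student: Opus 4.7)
The plan is to prove this via the classical Le Cam reduction to a Gaussian shift limit experiment, followed by Anderson's lemma applied to that limit. Since the supremum over $I$ is an external supremum over \emph{finite} subsets of $T_\cm(F)$, it suffices to fix such an $I=\{g_1,\dots,g_k\}$ and prove the inequality with $\var_F[\tilde D_F(\ch)]$ replaced by the squared $L^2(F)$-norm of the projection $\Pi(\tilde D_F\mid\mathrm{span}(g_1,\dots,g_k))$; the conclusion then follows by taking the supremum over $I$ and invoking that $\tilde D_F$ lies in the (closed linear span of the) tangent space, so these projections converge in $L^2$ to $\tilde D_F$ along a suitable exhausting sequence of $I$'s.

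Fix such an $I$. First I would parametrize the $k$-dimensional submodel $\{F_{h/\sqrt n,I}: h\in\mathbb R^k\}$ constructed so that, at $h=0$, the score in direction $h$ equals $h^\top g$, where $g=(g_1,\dots,g_k)^\top$. Using the standard quadratic mean differentiable path construction (e.g.\ $dF_{t,g_j}/dF \propto (1+tg_j)$ truncated, or an exponential tilt), one establishes local asymptotic normality:
\begin{align*}
\log\prod_{i=1}^n\frac{dF_{h/\sqrt n,I}}{dF}(\ch^{(i)}) = h^\top\Delta_n - \tfrac12 h^\top\Sigma h + \op(1),\qquad \Delta_n=\tfrac1{\sqrt n}\sum_i g(\ch^{(i)})\Rightarrow N(0,\Sigma),
\end{align*}
where $\Sigma_{jk}=\E[g_jg_k]$. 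Convexity of the cone $T_\cm(F)$ is what lets us combine the $g_j$'s linearly while remaining inside the model up to $\op$ errors.

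Second, pathwise differentiability of $R$ gives $\sqrt n(R(F_{h/\sqrt n,I})-R(F))\to h^\top a$ with $a_j=\E[\tilde D_F\,g_j]=\langle\tilde D_F,g_j\rangle_{L^2(F)}$. Combined with LAN, Le Cam's asymptotic representation theorem implies that any estimator sequence $\hat R(\ch^n)$ (after passing to a subsequence) satisfies, in the limit experiment $X=h+Z$ with $Z\sim N(0,\Sigma^{-1})$, a randomized decision rule $T(X)$ such that $\sqrt n(\hat R-R(F_{h/\sqrt n,I}))$ converges in $F_{h/\sqrt n,I}$-distribution to $T(X)-h^\top a$.

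Third, for the Gaussian shift target $h^\top a$, Anderson's lemma together with sufficiency of $a^\top X$ for $h^\top a$ gives
\begin{align*}
\sup_{h}\rE[l(T(X)-h^\top a)]\ge \int l(u)\,dN(0,a^\top\Sigma^{-1}a)(u),
\end{align*}
for any subconvex $l$ (the supremum over the finite $I$ on the left side of the theorem plays the role of this shift supremum after parametrizing $h$ along $I$). Recognizing $a^\top\Sigma^{-1}a=\|\Pi(\tilde D_F\mid\mathrm{span}(g_1,\dots,g_k))\|_{L^2(F)}^2$ and letting $I$ exhaust a countable dense subset of $T_\cm(F)$, the projection converges to $\tilde D_F$ (since $\tilde D_F\in\overline{T_\cm(F)}$), giving the stated bound $\int l(u)\,dN(0,\var_F[\tilde D_F])(u)$. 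The main technical obstacle is the passage from asymptotic experiments to the concrete minimax inequality (the Le Cam--H\'ajek representation step), which requires care with the convexity of the cone and the choice of regularized paths to ensure LAN holds uniformly over $g\in I$; for a fully rigorous treatment one would invoke Theorem~25.20 of \citet{VaartA.W.vander1998As}.
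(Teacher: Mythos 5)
The paper never proves this statement: it appears in the appendix purely as quoted background from semiparametric theory, with the proof deferred wholesale to Theorem~25.20 of van der Vaart (1998), which is exactly the reference your own sketch falls back on. So there is no in-paper argument to compare against; what you have written is a correct outline of the classical proof of that cited theorem --- fix a finite $I$, build a $k$-dimensional quadratic-mean-differentiable submodel with score $h^\top g$, establish LAN, use pathwise differentiability to identify the local parameter as $h^\top a$ with $a_j=\langle \tilde D_F,g_j\rangle$, pass to the Gaussian limit experiment via the asymptotic representation theorem, apply the H\'ajek--Le~Cam minimax bound (Anderson's lemma) to obtain $\int l\,\mathrm{d}N(0,a^\top\Sigma^{-1}a)$ with $a^\top\Sigma^{-1}a=\|\Pi(\tilde D_F\mid\mathrm{span}(I))\|_2^2$, and exhaust the tangent space so the projections converge to $\tilde D_F$ (whose variance is its squared $L^2$ norm since it is mean zero). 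The one imprecision worth flagging is your account of the convex-cone hypothesis: it is not what lets you ``combine the $g_j$ linearly while remaining inside the model,'' since arbitrary signed combinations need not lie in a convex cone. Its real role is in the limit experiment: the local parameter ranges only over the cone rather than all of $\mathbb{R}^k$, and one needs the cone's scale-invariance plus convexity to argue that the minimax risk over the cone-restricted Gaussian shift is no smaller than over the full linear span. Since you explicitly defer the rigorous version of that step to van der Vaart, this is a presentational slip rather than a substantive gap.
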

\begin{corollary}
Under the same assumptions of \cref{thm:lam}, 
\begin{align*}
    \inf_{\delta>0}\liminf_{n\to\infty}\sup_{\|Q-F\|_{T}\leq \delta}\rE_{Q}[l[\sqrt{n}\{\hat R(\ch^n)-R(Q)\}]]\geq \int l(u)\mathrm{d}\mathcal{N}(0,\var_{F}[\tilde D_F(\ch)])(u),  
\end{align*}
where $\|\cdot\|_{T}$ is a total variation distance. 
\end{corollary}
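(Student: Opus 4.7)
The plan is to deduce this corollary directly from \cref{thm:lam} by enlarging the supremum. For every finite subset $I$ of the tangent set $T_{\cm}(F)$ and every fixed $\delta>0$, I would show that eventually the finite collection of submodel laws $\{F_{1/\sqrt{n},g}:g\in I\}$ lies inside the total variation ball of radius $\delta$ about $F$, so that the TV-ball supremum dominates the finite-$I$ supremum appearing in \cref{thm:lam}.

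The key analytic step is bounding $\|F_{t,g}-F\|_{T}$ along the submodel path at rate $\bigO(t)$. I would do this by combining the standard inequality $\|P-Q\|_T\leq\sqrt{2}\,h(P,Q)$ between total variation and Hellinger distance with the Hellinger expansion $h(F_{t,g},F)^{2}=\tfrac{t^{2}}{8}\rE_F[g^{2}]+\smallo(t^{2})$ implied by differentiability in quadratic mean of the submodel at $t=0$. Evaluating at $t=1/\sqrt{n}$ gives $\|F_{1/\sqrt{n},g}-F\|_T=\bigO(1/\sqrt{n})$, and since $I$ is finite, there exists $N_{\delta,I}$ with $\max_{g\in I}\|F_{1/\sqrt{n},g}-F\|_T\leq\delta$ for all $n\geq N_{\delta,I}$. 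Thus, for all such $n$,
\begin{align*}
\sup_{\|Q-F\|_T\leq\delta}\rE_{Q}\bracks{l\bracks{\sqrt{n}\prns{\hat R(\ch^{n})-R(Q)}}}\geq \sup_{g\in I}\rE_{F_{1/\sqrt{n},g}}\bracks{l\bracks{\sqrt{n}\prns{\hat R(\ch^{n})-R(F_{1/\sqrt{n},g})}}}.
\end{align*}

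To finish, I would take $\liminf_{n\to\infty}$ on both sides, take supremum over finite $I\subset T_{\cm}(F)$ on the right, and apply \cref{thm:lam} to lower-bound the resulting expression by $\int l(u)\,\mathrm{d}\mathcal{N}(0,\var_{F}[\tilde D_F(\ch)])(u)$. Because this lower bound does not depend on $\delta$, taking $\inf_{\delta>0}$ on the left preserves the inequality, which is precisely the desired statement. There is no serious obstacle here: the only nontrivial piece is the TV rate along submodel paths, and that is a routine consequence of the quadratic-mean differentiability already built into the definition of the tangent set used by \cref{thm:lam}.
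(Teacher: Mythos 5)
Your argument is correct. The paper states this corollary without proof, as part of its background review of semiparametric theory (it is the standard total-variation-neighborhood form of the local asymptotic minimax theorem, cf.\ van der Vaart, Ch.~25), and your derivation is exactly the canonical reduction: differentiability in quadratic mean gives a Hellinger expansion of order $t^2$ along each submodel path, hence $\|F_{1/\sqrt{n},g}-F\|_{T}=\bigO(1/\sqrt{n})$, so for any fixed $\delta>0$ and finite $I$ the laws $F_{1/\sqrt{n},g}$, $g\in I$, eventually lie in the total-variation ball and the ball supremum dominates the finite-subset supremum of \cref{thm:lam}; taking $\liminf_n$, then $\sup_I$, and finally $\inf_{\delta>0}$ (harmless since the bound is $\delta$-free) completes the proof. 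The only point worth flagging is that elements of the tangent space are by definition limits of scores and need not each carry an actual submodel, but this is the same convention already implicit in \cref{thm:lam}, so your proof introduces no new gap.
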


Other different type of optimality is seen in the following theorem. The following theorem state that an asymptotic variance of every regular estimator sequence $\hat R(\ch^n)$ with limiting distribution $L$ is bounded below $\rE[\tilde D^2_F(\ch)]$ \citep[Thm.~25.21]{VaartA.W.vander1998As}. 

\begin{theorem}[Convolution theorem] Let $R(F)$ be pathwise differentiable at $F$ w.r.t the model $\cm$ with EIF $\tilde D_F(\ch)$. Let $\hat R(\ch^n)$ be a regular estimator sequence at $F$ w.r.t the tangent space $\mathcal{T}_{\cm}(F)$ with limiting distribution $L$. Then, if the tangent space $T_{\cm}(F)$ is a cone, then, the term 
\begin{align*}
    \int u^2\mathrm{d}L(u)-\rE[\tilde D^2_F(\ch)] 
\end{align*}
is non-negative. 
\end{theorem}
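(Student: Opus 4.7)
The plan is to reproduce Hájek's convolution theorem and show that $L = \mathcal{N}(0, \rE[\tilde D_F^2(\ch)]) \ast W$ for some probability measure $W$ on $\Rl$, from which the conclusion is immediate: taking second moments through the convolution gives $\int u^2\,\mathrm{d}L(u) = \rE[\tilde D_F^2(\ch)] + \int u^2\,\mathrm{d}W(u) \geq \rE[\tilde D_F^2(\ch)]$ by independence.

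First I would use local asymptotic normality of smooth submodels: for every score $g \in \mathcal{T}_{\cm}(F)$, the log-likelihood ratio of $F^n_{1/\sqrt n, g}$ versus $F^n$ is driven by $n^{-1/2}\sum_i g(\ch^{(i)})$ and is asymptotically $\mathcal{N}(-\tfrac12 \rE[g^2], \rE[g^2])$ under $F^n$. Combined with tightness of $\sqrt n(\hat R - R(F))$ implied by the existence of the limit $L$, this yields joint convergence under $F^n$ of $\bigl(\sqrt n(\hat R - R(F)),\ n^{-1/2}\sum_i g(\ch^{(i)})\bigr)$ to some joint law $Q_g$ on $\Rl^2$ with marginals $L$ and $\mathcal{N}(0, \rE[g^2])$. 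Invoking Le Cam's third lemma together with regularity of $\hat R$, the limit of $\sqrt n(\hat R - R(F))$ under $F^n_{1/\sqrt n, g}$ equals both the likelihood-ratio tilt of $Q_g$ and $L$ translated by $\rE[\tilde D_F g]$ (by pathwise differentiability and the characterization of $\tilde D_F$ as a gradient). Equating characteristic functions pins down $Q_g$ and in particular forces $\mathrm{cov}(U, V_g) = \rE[\tilde D_F(\ch) g(\ch)]$ in the joint limit $(U, V_g)$.

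Next, I would bundle these pairwise limits for $g$ varying over $\mathcal{T}_{\cm}(F)$ into a single centered Gaussian process $\{V_g\}_{g \in \mathcal{T}_{\cm}(F)}$ coupled with $U$ on one probability space. Since $\tilde D_F \in \mathcal{T}_{\cm}(F)$ (as the canonical gradient, it lies in the tangent space by construction), the random variable $Z$ in the $L^2$-closure of $\{V_g\}$ satisfying $\rE[Z V_g] = \rE[\tilde D_F g]$ for every $g$ exists by Riesz representation on the tangent space and is Gaussian with variance $\rE[\tilde D_F^2(\ch)]$. Setting $U_0 := U - Z$, the covariance identity gives $\mathrm{cov}(U_0, V_g) = 0$ for every $g$. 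Promoting this pointwise uncorrelatedness to independence of $U_0$ and $Z$ uses that $\mathcal{T}_{\cm}(F)$ is a convex cone, which ensures the Gaussian process $\{V_g\}$ is rich enough and closed under positive scalings that the joint characteristic function of $(U_0, Z)$ factorizes; this yields the decomposition $U = U_0 + Z$ with $U_0$ independent of $Z \sim \mathcal{N}(0, \rE[\tilde D_F^2(\ch)])$, and hence the convolution form of $L$.

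The main obstacle is the last step: without any Gaussianity hypothesis on $U_0$, upgrading pointwise uncorrelatedness of $U_0$ with each $V_g$ to genuine independence between $U_0$ and $Z$ is delicate. This requires carefully assembling the finite-dimensional Gaussian marginals of $\{V_g\}$ indexed by the cone, invoking a Cram\'er--Wold argument on the joint characteristic function, and exploiting that the directions $\{V_g\}$ generated by a convex cone are closed under scaling so that $Z$ itself can be approximated arbitrarily well by elements $V_{g}$ with $g$ approximating $\tilde D_F$ in $L^2$. Once this factorization is in place, the non-negativity of $\int u^2\,\mathrm{d}L(u) - \rE[\tilde D_F^2(\ch)]$ follows from independence of $U_0$ and $Z$ together with non-negativity of $\rE[U_0^2]$.
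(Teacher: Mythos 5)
The paper does not actually prove this statement: it is quoted verbatim as background in \cref{appendix:semiparam} and attributed to \citet[Thm.~25.21]{VaartA.W.vander1998As}, so there is no in-paper argument to compare against. Your sketch is, in substance, the standard H\'ajek--Le Cam proof that the cited reference gives: LAN along one-dimensional submodels, Le Cam's third lemma combined with regularity and pathwise differentiability to get the identity $\rE[e^{itU}]=\rE\bigl[e^{it(U-\rE[\tilde D_F g])}\,e^{V_g-\frac12\rE[g^2]}\bigr]$ for every $g$ in the tangent cone, and then the decomposition $U=U_0+Z$ with $Z\sim\mathcal N(0,\rE[\tilde D_F^2])$ independent of $U_0$, from which the second-moment inequality is immediate. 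Two small points of caution. First, your phrasing ``promoting pointwise uncorrelatedness to independence'' slightly misstates the logic: independence of $U_0$ and $Z$ is not upgraded from uncorrelatedness (which would fail in general since $U_0$ need not be Gaussian); it is extracted directly from the characteristic-function identity above by substituting $g=\lambda g_0$, $\lambda>0$, which is exactly where the cone hypothesis enters --- your later description of the Cram\'er--Wold/characteristic-function manipulation is the correct mechanism, so this is a matter of emphasis rather than a gap. Second, the covariance identity $\mathrm{cov}(U,V_g)=\rE[\tilde D_F g]$ presupposes $\int u^2\,\mathrm dL(u)<\infty$; this is harmless here because the claimed inequality is vacuous otherwise, but it is worth saying explicitly. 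With those caveats the outline is sound and matches the proof in the source the paper cites.
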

}

\subsection{Proof}

\begin{proof}[Proof of \cref{thm:fin_nnonpara}]
~
\paragraph*{Efficient influence function under $\cm_1$.}

The entire regular (regular model as defined in Chapter 7 \citealp{VaartA.W.vander1998As}) parametric submodel under $\cm_1$ is 
\begin{align*}
    \{p_{\theta}(s_0)p_{\theta}(a_0|s_0)p_{\theta}(r_0|\ch_{a_0})p_{\theta}(s_1|\ch_{a_0})p_{\theta}(a_1|\ch_{s_1})p_{\theta}(r_1|\ch_{a_1})\cdots p_{\theta}(r_T|\ch_{a_T})\}, 
\end{align*}
where it matches with a true pdf at $\theta=0$. 

{\blockedit The score function of the model $\cm_1$ is decomposed as 
\begin{align*}
    g(\mathcal{J}_{s_T})&=\sum_{k=0}^{T} \nabla \log p_{\theta}(s_{k}|\ch_{a_{k-1}}) +\sum_{k=0}^{T} \nabla \log p_{\theta}(a_{k}|\ch_{s_k}) +\sum_{k=0}^{T} \nabla \log p_{\theta}(r_k|\ch_{a_k})\\ 
    &=\sum_{k=0}^{T}g_{s_{k}|\ch_{a_{k-1}}}+\sum_{k=0}^{T}g_{a_{k}|\ch_{s_k}}+\sum_{k=0}^{T}g_{r_k|\ch_{a_k}}. 
\end{align*}
}

\edit{We first calculate a derivative for the target functional w.r.t the model $\cm_1$}. Note that this derivative is not unique. We have 
{\blockedit
\begin{align*}
& \nabla_{\theta} \mathrm{E}_{\pi^{e}}\left [\sum_{t=0}^{T}r_t \right] \\
& =\nabla_{\theta} \left[\int \sum_{t=0}^{T}r_t \left\{\prod_{k=0}^{T}p_{\theta}(s_k|\ch_{r_{k-1}})p_{\pi^{e}}(a_k|\ch_{s_k})p_{\theta}(r_k|\ch_{a_k}) \right\}\mathrm{d}\mu(\mathcal{J}_{s_T})\right]   \\
    &=\sum_{c=0}^{T} \{\rE_{\pi^{e}}[\{\rE_{\pi^{e}}(r_c|s_0)-\rE_{\pi^{e}}(r_c)\}g_{s_0}]+\rE_{\pi^{e}}[\{r_c-\rE_{\pi^{e}}(r_c|\ch_{a_c})\} g_{r_c|\ch_{a_c}}] \\ 
    &+\rE_{\pi^{e}}\left[\left(\rE_{\pi^{e}}\left [\sum_{t=c+1}^{T} r_t|\ch_{s_{c+1}}\right]-\rE_{\pi^{e}}\left [\sum_{t=c+1}^{T} r_t|\ch_{a_c}\right ]\right)g_{s_{c+1}|\ch_{a_c}}\right]\} \\
    &=\sum_{c=0}^{T}\{\rE_{\pi^{e}}[\{\rE_{\pi^{e}}(r_c|s_0)-\rE_{\pi^{e}}(r_c)\}g(\mathcal{J}_{s_{T+1}})]+\rE_{\pi^{e}}[\{r_c-\rE_{\pi^{e}}(r_c|\ch_{a_c})\} g(\mathcal{J}_{s_T})] \\ 
    &+\rE_{\pi^{e}}\left[\left(\rE_{\pi^{e}}\left [\sum_{t=c+1}^{T} r_t|\ch_{s_c+1}\right]-\rE_{\pi^{e}}\left [\sum_{t=c+1}^{T} r_t|\ch_{a_c}\right ]\right)g(\mathcal{J}_{s_T})\right]\} \\
    &= \rE \left(\left[-\rho^{\pi^{e}}+\sum_{c=0}^{T}\left \{\lambda_{c}r_c- \lambda_{c}\sum_{t=c}^{T}\rE_{\pi_e}(r_t|\ch_{a_c})+\lambda_{c-1}\sum_{t=c}^{T}\rE_{\pi_e}(r_t|\ch_{s_c})\right \}\right]g(\mathcal{J}_{s_T})\right).
\end{align*}
}

This concludes that the following function is a derivative:
\begin{align}
\label{eq:eff_m1}
 \rho^{\cm_1}_{\mathrm{eff}}= -\rho^{\pi^{e}}+\sum_{c=0}^{T}\left \{\lambda_{c}r_c-\lambda_{c}\sum_{t=c}^{T}\rE_{\pi_e}(r_t|\ch_{a_c})-\lambda_{c-1}\sum_{t=c}^{T}\rE_{\pi_e}(r_t|\ch_{s_c})\right \}. 
\end{align}

Next, we show that this derivative is the efficient influence function. In order to show this, we calculate the tangent space of model $\cm_1$. The tangent space of the model $\cm_1$ is the product space:
\begin{align*}
   & \bigoplus_{0\leq t\leq T} (A_t \bigoplus B_t \bigoplus C_t),\\
A_t &= \{q(s_t,\ch_{a_{t-1}}); \rE[q(s_t,\ch_{a_{t-1}})|\ch_{a_{t-1}}]=0,\,q\in L^{2}\}, \\
B_t &= \{q(a_t,\ch_{s_t}); \rE[q(a_t,\ch_{s_t})|\ch_{s_t}]=0,\,q\in L^{2}\}, \\
C_t &= \{q(r_t,\ch_{a_t}); \rE[q(r_t,\ch_{a_t})|\ch_{a_t}]=0,\,q\in L^{2}\}.  \\
\end{align*}
The orthogonal space of the tangent space is the product of 
\begin{align}\label{eq:orth1}
 \bigoplus_{0\leq t\leq T}( A'_t \bigoplus B'_t \bigoplus C'_t)
\end{align}
such that 
\begin{align*}
 A'_t \bigoplus A_t &= A''_t  ,\, A''_t = \{q(\mathcal{J}_{s_{t}}); \rE[q(\mathcal{J}_{s_{t}})|\mathcal{J}_{r_{t-1}}]=0,\,q\in L^{2}\}, \\
B'_t \bigoplus B_t &= B''_t  ,\, B''_t = \{q(\mathcal{J}_{a_{t}}); \rE[q(\mathcal{J}_{a_{t}})|\mathcal{J}_{s_{t}}]=0,\,q\in L^{2}\}, \\ 
C'_t \bigoplus C_t &= C''_t  ,\, C''_t = \{q(\mathcal{J}_{r_{t}}); \rE[q(\mathcal{J}_{r_{t}})|\mathcal{J}_{a_{t}}]=0,\,q\in L^{2}\}. 
\end{align*}
More specifically, we have the following lemma.
\begin{lemma}
The orthogonal tangent space is represented as 
\begin{align*}
A'_t &= \left \{q(\mathcal{J}_{s_{t}})-\rE[ q(\mathcal{J}_{s_{t}})|\ch_{s_t}]; \rE[q(\mathcal{J}_{s_{t}})|\mathcal{J}_{r_{t-1}}]=0,\,q\in L^{2} \right\}, \\
B'_t &= \left \{q(\mathcal{J}_{a_{t}})-\rE[ q(\mathcal{J}_{a_{t}})|\ch_{a_t}]; \rE[q(\mathcal{J}_{a_{t}})|\mathcal{J}_{s_{t}}]=0,\,q\in L^{2} \right\},\\
C'_t &= \left \{q(\mathcal{J}_{r_{t}})-\rE[ q(\mathcal{J}_{r_{t}})|\ch_{a_t},r_t]; \rE[q(\mathcal{J}_{r_{t}})|\mathcal{J}_{a_{t}}]=0,\,q\in L^{2} \right\}.
\end{align*}
\end{lemma}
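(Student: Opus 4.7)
The plan is to identify each of $A'_t$, $B'_t$, $C'_t$ as the orthogonal complement of $A_t$ in $A''_t$ (resp.\ $B_t$ in $B''_t$, $C_t$ in $C''_t$). Since the three cases are parallel, I will work out $A'_t$ in detail and indicate the cosmetic changes for the others. The central observation is that in each pair $X_t\subset X''_t$, the smaller tangent component depends only on a ``reward-free'' sub-history, so the correct Hilbert-space projection from $X''_t$ onto $X_t$ is just the conditional expectation that averages out the extra reward variables: $\rE[\cdot\mid\ch_{s_t}]$ for $A_t$, $\rE[\cdot\mid\ch_{a_t}]$ for $B_t$, and $\rE[\cdot\mid\ch_{a_t},r_t]$ for $C_t$.

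For $A'_t$, I will verify two things for an arbitrary $q\in A''_t$. First, that $\rE[q\mid\ch_{s_t}]\in A_t$: by the tower property (since $\ch_{a_{t-1}}\subset\ch_{s_t}$) this reduces to showing $\rE[q\mid\ch_{a_{t-1}}]=0$, which is immediate from $\ch_{a_{t-1}}\subset\mathcal{J}_{r_{t-1}}$ and $\rE[q\mid\mathcal{J}_{r_{t-1}}]=0$. Second, that the remainder $q-\rE[q\mid\ch_{s_t}]$ still lies in $A''_t$, i.e.\ $\rE[q-\rE[q\mid\ch_{s_t}]\mid\mathcal{J}_{r_{t-1}}]=0$; this is where the NMDP structure is essential. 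The conditional independence $s_t\perp(r_0,\dots,r_{t-1})\mid\ch_{a_{t-1}}$ implies that the distribution of $s_t$ given $\mathcal{J}_{r_{t-1}}$ coincides with $p(s_t\mid\ch_{a_{t-1}})$, so a short computation reduces $\rE[\rE[q\mid\ch_{s_t}]\mid\mathcal{J}_{r_{t-1}}]$ to $\rE[q\mid\ch_{a_{t-1}}]=0$. Orthogonality of the remainder to every element of $A_t$ is the standard projection identity since elements of $A_t$ are $\ch_{s_t}$-measurable.

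For the reverse inclusion, given any $h\in A''_t$ orthogonal to $A_t$, I will decompose $\rE[h\mid\ch_{s_t}]\in L^2(\ch_{s_t})$ along the direct sum $L^2(\ch_{a_{t-1}})\oplus A_t$: orthogonality to $A_t$ kills its $A_t$-component, while its $L^2(\ch_{a_{t-1}})$-component $\rE[h\mid\ch_{a_{t-1}}]$ vanishes by the same tower argument as above. Hence $\rE[h\mid\ch_{s_t}]=0$, and the trivial choice $q=h$ shows $h=q-\rE[q\mid\ch_{s_t}]$, placing $h$ in $A'_t$. The $B'_t$ and $C'_t$ cases follow by the identical template, only replacing the projection: for $B'_t$ one invokes the non-anticipating policy to get $a_t\perp(r_0,\dots,r_{t-1})\mid\ch_{s_t}$; for $C'_t$ one invokes the NMDP emission $P_{r_t}(r_t\mid\ch_{a_t})$ to get $r_t\perp(r_0,\dots,r_{t-1})\mid\ch_{a_t}$, and it is precisely this latter independence that forces the conditioning set to contain $r_t$ and explains the asymmetric form of the $C'_t$ statement. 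The main obstacle I anticipate is simply bookkeeping---matching each tangent component to the right projection target and the right NMDP conditional independence relation; once these are lined up, every verification collapses to a one-line application of the tower property and the standard Hilbert-space projection theorem.
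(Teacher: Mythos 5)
Your proposal is correct and follows essentially the same route as the paper: decompose $q\in A''_t$ as $(q-\rE[q\mid\ch_{s_t}])+\rE[q\mid\ch_{s_t}]$, check that the projection lands in $A_t$ via the tower property together with the NMDP conditional independence of $s_t$ (resp.\ $a_t$, $r_t$) from past rewards, and note that the residual is orthogonal to $A_t$ by the defining property of conditional expectation. Your additional verifications (that the residual remains in $A''_t$ and the reverse inclusion for elements of $A''_t$ orthogonal to $A_t$) are sound but follow automatically from the direct-sum structure once the forward decomposition and orthogonality are established, which is why the paper omits them.
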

\begin{proof}
We give a proof for $A'_t$. Regarding the other cases, it is proved similarly.
First, from the definition of the conditional expectation, $A'_t$ and $A_t$ are orthogonal. Thus, what we have to prove is $\rE[ q(\mathcal{J}_{s_{t}})|\ch_{s_t}]$ is included in $A_t$. This is proved as follows:
\[
    \rE[\rE[ q(\mathcal{J}_{s_{t}})|\ch_{s_t}]|\ch_{a_{t-1}}]=\rE[ q(\mathcal{J}_{s_{t}})|\ch_{a_{t-1}}]=\rE[\rE[q(\mathcal{J}_{s_{t}})|\mathcal{J}_{r_{t-1}}]|\ch_{a_{t-1}}]=0.\qedhere
\]
\end{proof}

If we can prove that the influence function \cref{eq:eff_m1} is orthogonal to the orthogonal tangent space \cref{eq:orth1}, we can see that the above derivative is actually the efficient influence function under the model $\cm_1$. This fact is shown as follows. 

\begin{lemma}
\label{lem:orthgonal}
The derivative \cref{eq:eff_m1} is orthogonal to $\{A'_t\}_{t=0}^{T+1}$, $\{B''_t\}_{t=0}^{T}$, $\{C'_t\}_{t=0}^{T}$
\end{lemma}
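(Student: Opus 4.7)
\textbf{Plan for \cref{lem:orthgonal}.} The most convenient form of the candidate influence function is the one given in the remark following \cref{thm:fin_nnonpara}, namely
\begin{align*}
\phi^{\cm_1}_{\mathrm{eff}} = (v_0 - \rho^{\pi^e}) + \sum_{c=0}^T \lambda_c\,\epsilon_c, \qquad \epsilon_c := r_c - q_c + v_{c+1},
\end{align*}
which follows from the recursion $q_c = \rE[r_c + v_{c+1}\mid\ch_{a_c}]$ together with the telescoping identity $\sum_{c=0}^T \lambda_c v_{c+1} = \sum_{c=0}^T \lambda_{c-1}v_c - v_0$ (using $\lambda_{-1}=1$, $v_{T+1}=0$). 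From this rewriting, two facts drive the entire argument: $\rE[\epsilon_c\mid\ch_{a_c}] = 0$ (by the $q$-recursion), and $\rE_{\pi^e}[\epsilon_c\mid\ch_{s_c}] = 0$ (since $v_c = \rE_{\pi^e}[q_c\mid\ch_{s_c}]$ implies $\rE_{\pi^e}[r_c + v_{c+1}\mid\ch_{s_c}] = v_c$). A third ingredient is the change-of-measure identity $\rE[\lambda_c h(\ch_{a_c})\mid\ch_{s_c}] = \lambda_{c-1}\rE_{\pi^e}[h\mid\ch_{s_c}]$.

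For each test function $g$ drawn from $A'_t$, $B''_t$, or $C'_t$, I will expand $\rE[\phi^{\cm_1}_{\mathrm{eff}} g] = \rE[(v_0-\rho^{\pi^e})g] + \sum_{c=0}^T \rE[\lambda_c\epsilon_c g]$ and show each term vanishes by choosing an appropriate $\sigma$-field to condition on. The general pattern is: for indices $c$ strictly in the ``past'' of $t$, the product $\lambda_c\epsilon_c$ is measurable with respect to a $\sigma$-field on which $g$ has mean zero by the defining property of the space it lives in; for indices $c$ strictly in the ``future'' of $t$, $g$ is already $\mathcal{J}_{a_c}$-measurable, so conditioning on $\mathcal{J}_{a_c}$ gives $\rE[\epsilon_c\mid\mathcal{J}_{a_c}] = \rE[\epsilon_c\mid\ch_{a_c}] = 0$ (the past rewards in $\mathcal{J}_{a_c}\setminus\ch_{a_c}$ are conditionally irrelevant under the NMDP); for the boundary index $c=t$ (and $c=t-1$ in the $A'_t$ case), a dedicated argument using the recursive identities above produces the required cancellation.

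The delicate boundary steps are the main content. For $B''_t$, at $c=t$ I condition on $\mathcal{J}_{a_t}$, observe that $\lambda_{t-1}v_t(\ch_{s_t})$ paired against $g$ vanishes by $\rE[g\mid\mathcal{J}_{s_t}] = 0$ after separating $\lambda_t\epsilon_t = \lambda_{t-1}\eta_t\epsilon_t$, and use $\rE[\epsilon_t\mid\mathcal{J}_{a_t}] = \rE[\epsilon_t\mid\ch_{a_t}] = 0$ for the rest. For $C'_t$, the convenient representative $g = q - \rE[q\mid\ch_{a_t},r_t]$ satisfies $\rE[g\mid\ch_{a_t},r_t]=0$; I condition the $c=t$ term on $(\ch_{a_t},r_t)$, using the NMDP conditional independence of $s_{t+1}$ from the past rewards in $\mathcal{J}_{a_t}$ given $\ch_{a_t}$ to strip $v_{t+1}$ out of $\epsilon_t$. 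For $A'_t$, I condition the $c=t$ term on $\mathcal{J}_{s_t}$ and invoke $\rE[\lambda_t\epsilon_t\mid\mathcal{J}_{s_t}] = \lambda_{t-1}\rE_{\pi^e}[\epsilon_t\mid\ch_{s_t}] = 0$; the boundary $c=t-1$ is handled by noting $\lambda_{t-1}v_t$ is $\ch_{s_t}$-measurable, against which $g$ has mean zero, while the remaining piece $\lambda_{t-1}(r_{t-1}-q_{t-1})$ is $\mathcal{J}_{r_{t-1}}$-measurable and $g$ has mean zero given $\mathcal{J}_{r_{t-1}}$ (which I verify directly from $\rE[q\mid\mathcal{J}_{r_{t-1}}]=0$ and a tower computation using that $s_t$ depends on $\ch_{a_{t-1}}$ alone, not on past rewards).

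\textbf{Main obstacle.} The only real subtlety is tracking which conditioning $\sigma$-field to use at the boundary index (it differs for each of $A'_t$, $B''_t$, $C'_t$) and then verifying the NMDP-specific conditional independence statements that allow one to collapse $\rE[\cdot\mid\mathcal{J}_{\ast}]$ down to $\rE[\cdot\mid\ch_{\ast}]$. A secondary point is that showing orthogonality to the larger space $B''_t \supseteq B'_t$ (rather than only $B'_t$) is what lets the very same computation establish that $\phi^{\cm_1}_{\mathrm{eff}}$ is the EIF under both $\cm_1$ and $\cm_{1b}$, consistent with the statement of \cref{thm:fin_nnonpara}.
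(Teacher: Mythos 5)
Your plan is correct and follows essentially the same route as the paper: both verify orthogonality to each of $A'_t$, $B''_t$, $C'_t$ term by term via iterated conditioning, using the mean-zero property of each test-function space together with the NMDP structure to collapse conditionals on reward-inclusive histories $\mathcal{J}_{*}$ down to $\ch_{*}$. The only cosmetic difference is that you work with the telescoped form $v_0-\rho^{\pi^e}+\sum_{c}\lambda_c(r_c-q_c+v_{c+1})$ while the paper pairs the form $\sum_{c}(\lambda_c r_c-\lambda_c q_c+\lambda_{c-1}v_c)$ directly against the test functions; these are the identical function, as noted in the remark following \cref{thm:fin_nnonpara}.
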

\begin{proof}
The influence function is orthogonal to $A'_k$: for $t(\mathcal{J}_{s_k}) \in A'_k$
\begin{align*}
    &\rE \left[\left \{-\rho^{\pi^{e}}+\sum_{c=0}^{T}\lambda_{c}(\ch_{a_c})r_c-\left \{\lambda_{c}(\ch_{a_c})\sum_{t=c}^{T}\rE_{\pi^{e}}[r_t|\ch_{a_c}]-\lambda_{c-1}(\ch_{a_{c-1}})\sum_{t=c}^{T}\rE_{\pi^{e}}[r_t|\ch_{s_c}]\right \}\right\}t(\mathcal{J}_{s_k})\right] \\
    &=\rE \left[\left\{\sum_{c=k}^{T}\lambda_{c}(\ch_{a_c})r_c-\lambda_{k-1}\sum_{t=k}^{T}\rE_{\pi^{e}}[r_t|\ch_{s_k}]\right \} t(\mathcal{J}_{s_k})\right] \\
    &=0.
\end{align*}
The influence function is orthogonal to $B''_k$: for $t(\mathcal{J}_{a_k}) \in B''_k$;
\begin{align*}
    & \rE \left[\left\{-\rho^{\pi^{e}}+\sum_{c=0}^{T}\lambda_{c}r_c-\left \{\lambda_{c}\sum_{t=c}^{T}\rE_{\pi^{e}}[r_t|\ch_{a_c}]-\lambda_{c-1}\sum_{t=c}^{T}\rE_{\pi^{e}}[r_t|\ch_{s_c}]\right \}\right\}t(\mathcal{J}_{a_k}) \right] \\
    &= \rE \left[\left\{\sum_{c=k}^{T}\lambda_{c}r_c-\left \{\lambda_{c}\sum_{t=c}^{T}\rE_{\pi^{e}}[r_t|\ch_{a_c}]-\lambda_{c-1}\sum_{t=c}^{T}\rE_{\pi^{e}}[r_t|\ch_{s_c}]\right \}\right\}t(\mathcal{J}_{a_k}) \right]\\
    &= \rE \left[\left\{\left \{\sum_{c=k}^{T}\lambda_{c}r_c\right\}-\left \{\lambda_{k}\sum_{t=k}^{T}\rE_{\pi^{e}}[r_t|\ch_{a_k}]\right\}\right\}t(\mathcal{J}_{a_k}) \right] \\
    &= \rE \left[\left\{\left \{\lambda_{k}\sum_{t=k}^{T}\rE_{\pi^{e}}[r_t|\ch_{a_k}]\right\}-\left \{\lambda_{k}\sum_{t=k}^{T}\rE_{\pi^{e}}[r_t|\ch_{a_k}]\right\}\right\}t(\mathcal{J}_{a_k}) \right] =0.
\end{align*}

The influence function is orthogonal to $C'_k$: for $t(\mathcal{J}_{r_k}) \in C'_k$;
\begin{align*}
    &\rE \left[\left \{-\rho^{\pi^{e}}+\sum_{c=0}^{T}\lambda_{c}(\ch_{a_c})r_c-\left \{\lambda_{c}(\ch_{a_c})\sum_{t=c}^{T}\rE_{\pi^{e}}[r_t|\ch_{a_c}]-\lambda_{c-1}(\ch_{a_{c-1}})\sum_{t=c}^{T}\rE_{\pi^{e}}[r_t|\ch_{s_c}]\right \}\right\}t(\mathcal{J}_{r_k})\right] \\
    &=\rE \left[\left\{\sum_{c=k}^{T}\lambda_{c}(\ch_{a_c})r_c\right \}t(\mathcal{J}_{r_k})\right] \\
     &= \rE \left[\left\{\left \{\lambda_{k-1}\sum_{t=k}^{T}\rE_{\pi^{e}}[r_t|\mathcal{J}_{r_k}]\right\}\right\}t(\mathcal{J}_{r_k})\right]= \rE \left[\left \{\lambda_{k-1}\sum_{t=k}^{T}\rE_{\pi^{e}}[r_t|\ch_{a_k},r_k]\right\}t(\mathcal{J}_{r_k})\right] \\
    &= \rE \left[\left\{\left \{\lambda_{k-1}\sum_{t=k}^{T}\rE_{\pi^{e}}[r_t|\ch_{a_{k}},r_k]\right\}\right\}\rE[t(\mathcal{J}_{r_k})|\ch_{a_{k}},r_k]\right]=0.\qedhere
\end{align*}
\end{proof}

This concludes the proof for $\cm_1$.

\paragraph*{Efficient influence function under $\cm_{1b}$.}

Next, we show that the efficiency bound is still the same even if we know the target policy. To show that, we derive an orthogonal space of the tangent space of the regular parametric submodel:
\begin{align*}
    \{p_{\theta}(s_0)p(a_0|s_0)p_{\theta}(r_0|\ch_{a_0})p_{\theta}(s_1|\ch_{r_0})p(a_1|\ch_{s_1})p_{\theta}(r_1|\ch_{a_1})\cdots p_{\theta}(r_T|\ch_{a_T})\},
\end{align*}
where $p(a_t|\ch_{s_t})$ is fixed at $\pi^{b}_t$. This is equal to 
\begin{align}\label{eq:orth2}
\bigoplus_{0\leq t\leq T}( A'_t \bigoplus B''_t \bigoplus C'_t)
\end{align}
This space \cref{eq:orth2} is orthogonal to the obtained efficient influence function under $\cm_1$. Therefore, the efficient influence function under $\cm_{1b}$ is the same as the one under $\cm_{b}$.

\paragraph*{Efficiency bound.}

We use a law of total variance \citep{CliveG.Bowsher2012Isov} to compute the variance of the efficient influence function. 
\begin{align*}
&\mathrm{var}\left[ \sum_{t=0}^{T}\left(\lambda_{t}r_{t}-\left(\lambda_{t}q_t-\lambda_{t-1}v_t \right) \right)\right]\\
&=\sum_{t=0}^{T+1}\mathrm{E}\left[\mathrm{var}\left(\mathrm{E}\left[\lambda_{t-1}r_{t-1}+\sum_{k=0}^{T}\left(\lambda_{k}r_{k}-\left\{\lambda_{k}q_k-\lambda_{k-1}v_k \right\} \right) |\mathcal{J}_{a_{t}}\right]|\mathcal{J}_{a_{t-1}}\right)\right]\\
&=\sum_{t=0}^{T+1}\mathrm{E}\left[\mathrm{var}\left(\mathrm{E}\left[\lambda_{t-1}r_{t-1}+\sum_{k=t}^{T}\left(\lambda_{k}r_{k}-\left\{\lambda_{k}q_k-\lambda_{k-1}v_k \right\} \right) |\mathcal{J}_{a_{t}}\right]|\mathcal{J}_{a_{t-1}}\right)\right] \\
&=\sum_{t=0}^{T+1}\mathrm{E}\left[\mathrm{var}\left(\mathrm{E}\left[\lambda_{t-1}r_{t-1}+\left(\sum_{k=t}^{T}\lambda_{k}r_{k} \right)-\{\lambda_{t}q_t-\lambda_{t-1}v_t \} |\mathcal{J}_{a_{t}}\right]|\mathcal{J}_{a_{t-1}}\right]\right) \\
&=\sum_{t=0}^{T+1}\mathrm{E}\left[\lambda_{t-1}^{2}\mathrm{var}\left(r_{t-1}+v_t(\ch_{s_t})  |\ch_{a_{t-1}}\right)\right].
\end{align*}
Here, we used $\rE[\sum_{k=t}^{T} \lambda_{k}r_{k}|\mathcal{J}_{a_k}]=\lambda_k q_k$. 
\end{proof}

\begin{proof}[Proof of \cref{thm:fin_nnonpara2}]
~
\paragraph*{Efficient influence function under $\cm_{2}$.}

The entire regular parametric submodel is 
\begin{align*}
    \{p_{\theta}(s_0)p_{\theta}(a_0|s_0)p_{\theta}(r_0|s_0,a_0)p_{\theta}(s_1|s_0,a_0)p_{\theta}(a_1|s_1)p_{\theta}(r_1|s_1,a_1)\cdots p_{\theta}(r_T|s_T,a_T)\}. 
\end{align*}
{\blockedit The score function of the parametric submodel is 
\begin{align*}
   g(\mathcal{J}_{s_T})&=\sum_{k=0}^{T}\nabla_{\theta} \log p_{\theta}(s_k\mid s_{k-1},a_{k-1})+\nabla_{\theta} \log p_{\theta}(a_{k+1}\mid s_{k})+\nabla_{\theta} \log p_{\theta}(r_k\mid s_{k},a_{k})\\
   &=\sum_{k=0}^{T}g_{s_{k}|s_{k-1},a_{k-1}}+\sum_{k=0}^{T}g_{a_{k+1}|s_k}+\sum_{k=0}^{T}g_{r_k|s_k,a_k}. 
\end{align*}
}

\edit{We first calculate a derivative of the target functional w.r.t the model $\cm_2$.} Note that this derivative is not only derivative.
We have 
{\blockedit
\begin{align*}
    &\nabla_{\theta} \mathrm{E}_{\pi^{e}}[\sum_{t=0}^{T}r_t] \\
    & =\nabla_{\theta} \int \sum_{t=0}^{T}r_t \left\{\prod_{t=0}^{T}p_{\theta}(s_k|a_{k-1},s_{k-1})p_{\pi^{e}_k}(a_k|s_k)p_{\theta}(r_k|a_k,s_k) \right\}\mathrm{d}\mu(\mathcal{J}_{s_T})   \\
    &=\sum_{c=0}^{T}\{\rE_{\pi^{e}}\left[(\rE_{\pi^{e}}[r_c|s_0]-\rE_{\pi^{e}}[r_c])g_{s_0}\right]+\rE_{\pi^{e}}[(r_c-\rE_{\pi^{e}}[r_c|s_c,a_c]) g_{r_c|s_c,a_c}] \\ 
    &+\rE_{\pi^{e}}\left[\left(\rE_{\pi^{e}}[\sum_{c=t+1}^{T} r_t|s_{c+1}]-\rE_{\pi^{e}}[\sum_{c=t+1}^{T} r_t|s_{c},a_{c}]\right)g_{s_{c+1}|s_{c},a_{c}}\right]\} \\
    &= \sum_{c=0}^{T}\{\rE[(\rE[r_c|s_0]-\rE_{\pi^{e}}[r_c])g]+\rE\left[\frac{p_{\pi^{e}}(s_c,a_c)}{p_{\pi^{b}}(s_c,a_c)}(r_c-\rE[r_c|s_c,a_c])g\right] \\
    & +\rE\left[\frac{p_{\pi^{e}}(s_c,a_c)}{p_{\pi^{b}}(s_c,a_c)}(\rE[\sum_{t=c+1}^{T} r_t|s_{c+1}]-\rE[\sum_{t=c+1}^{T} r_t|s_{c},a_{c}])g\right]\} \\
    &= \rE \left[\left[-\rho^{\pi^{e}}+\sum_{c=0}^{T}\left \{\frac{p_{\pi^{e}}(s_c,a_c)}{p_{\pi^{b}}(s_c,a_c)}r_c-\frac{p_{\pi^{e}}(s_c,a_{c})}{p_{\pi^{b}}(s_c,a_{c})}\sum_{t=c}^{T}\rE_{\pi_e}[r_t|s_c,a_c]+\frac{p_{\pi^{e}}(s_{c-1},a_{c-1})}{p_{\pi^{b}}(s_{c-1},a_{c-1})}\sum_{t=c}^{T}\rE_{\pi_e}[r_t|s_c]\right \}\right]g(\mathcal{J}_{s_T})\right]
\end{align*}
}

Therefore, the following function is a derivative: 
\begin{align}
\label{eq:eff_m2}
    -\rho^{\pi^{e}_{c}}+\sum_{c=0}^{T}\frac{p_{\pi^{e}_{c}}(s_c,a_c)}{p_{\pi^{b}_{c}}(s_c,a_c)}r_c-\left \{\frac{p_{\pi^{e}_{c}}(s_c,a_{c})}{p_{\pi^{b}_{c}}(s_c,a_{c})}\sum_{t=c}^{T}\rE_{\pi_e}[r_t|s_c,a_c]-\frac{p_{\pi^{e}_{c}}(s_{c-1},a_{c-1})}{p_{\pi^{b}_{c}}(s_{c-1},a_{c-1})}\sum_{t=c}^{T}\rE_{\pi_e}[r_t|s_c]\right \}. 
\end{align}
We will show this derivative is the efficient influence function. 

In order to show this, we calculate the tangent space of model $\cm_2$. The tangent space of the model $\cm_2$ is the product space;
\begin{align*}
   & \bigoplus_{0\leq t\leq T} (A_t \bigoplus B_t \bigoplus C_t),\\
A_t &= \{q(s_{t},s_{t-1},a_{t-1}); \rE[q(s_{t},s_{t-1},a_{t-1})|s_{t-1},a_{t-1}]=0,\,q\in L^{2}\}, \\
B_t &= \{q(a_{t},s_t); \rE[q(a_{t},s_t)|s_t]=0,\,q\in L^{2}\}, \\
C_t &= \{q(r_t,s_t,a_t); \rE[q(r_t,s_t,a_t)|s_t,a_t]=0,\,q\in L^{2}\}.  \\
\end{align*}
The orthogonal space of the tangent space is the product of 
\begin{align}\label{eq:orth}
\bigoplus_{0\leq t\leq T} (A'_t \bigoplus B'_t \bigoplus C'_t),
\end{align}
such that 
\begin{align*}
A'_t \bigoplus A_t &= A''_t  ,\, A''_t = \{q(\mathcal{J}_{s_{t}}); \rE[q(\mathcal{J}_{s_{t}})|\mathcal{J}_{r_{t-1}}]=0,\,q\in L^{2}\},   \\
    B'_t \bigoplus B_t &= B''_t  ,\, B''_t = \{q(\mathcal{J}_{a_{t}}); \rE[q(\mathcal{J}_{a_{t}})|\mathcal{J}_{s_{t}}]=0,\,q\in L^{2}\},   \\
    C'_t \bigoplus C_t &= C''_t,\, C''_t = \{q(\mathcal{J}_{r_{t}}); \rE[q(\mathcal{J}_{r_{t}})|\mathcal{J}_{a_{t}}]=0,\,q\in L^{2}\}.
\end{align*}
More specifically, the orthogonal tangent space is represented as 
\begin{align*}
A'_t &= \left \{q(\mathcal{J}_{s_{t}})-\rE[ q(\mathcal{J}_{s_{t}})|s_t,a_{t-1},s_{t-1}]; \rE[q(\mathcal{J}_{s_{t}})|\mathcal{J}_{r_{t-1}}]=0,\,q\in L^{2} \right\},  \\
B'_t &= \left \{q(\mathcal{J}_{a_{t}})-\rE[ q(\mathcal{J}_{r_{t}})|s_t,a_t]; \rE[q(\mathcal{J}_{a_{t}})|\mathcal{J}_{s_{t}}]=0,\,q\in L^{2} \right\}, \\
C'_t &= \left \{q(\mathcal{J}_{r_{t}})-\rE[ q(\mathcal{J}_{r_{t}})|r_t,s_t,a_t]; \rE[q(\mathcal{J}_{r_{t}})|\mathcal{J}_{a_{t}}]=0,\,q\in L^{2} \right\}.  \\
\end{align*}
If we can prove that the derivative \cref{eq:eff_m2} is orthogonal to the orthogonal tangent space \cref{eq:orth}, we can see that the above derivative is actually the efficient influence function under the model $\cm_2$. This fact is shown as follows. 

\begin{lemma}
\label{lem:orthgonal2}
The derivative \cref{eq:eff_m2} is orthogonal to $\{A'_t\}_{t=0}^{T+1},\{B''_t\}_{t=0}^{T},\{C'_t\}_{t=0}^{T}$. 
\end{lemma}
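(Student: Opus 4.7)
The plan is to mirror the proof of \cref{lem:orthgonal}, exploiting that the candidate derivative \eqref{eq:eff_m2} can be written compactly as $\phi^{\cm_2}_{\mathrm{eff}}=-\rho^{\pi^e}+\sum_{c=0}^T\{\mu_c(r_c-q_c)+\mu_{c-1}v_c\}$ (with $\mu_{-1}=1$, $v_{T+1}=0$). For each space $A'_k,B''_k,C'_k$, I pick an arbitrary element $t$ and split $\phi^{\cm_2}_{\mathrm{eff}}$ into a ``past'' piece that lies in the sigma-algebra against which $t$ has vanishing conditional mean (killing the inner product immediately) and a ``future'' piece whose conditional expectation, computed via Markov and Bellman identities, must also be shown to vanish against $t$.

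The central computational tool is a telescoping identity. Using the Bellman recursion $q_c(s_c,a_c)=\rE[r_c+v_{c+1}\mid s_c,a_c]$, one has $\rE[\mu_c(r_c-q_c)\mid s_c,a_c]=-\mu_c\rE[v_{c+1}\mid s_c,a_c]$. The Markov property converts $\rE[\,\cdot\mid\mathcal J_{s_k}]$ (resp.\ $\mathcal J_{a_k},\mathcal J_{r_k}$) into $\rE[\,\cdot\mid s_k]$ (resp.\ $s_k,a_k$) whenever the integrand depends only on the trajectory from time $k$ onward. Combining these with $v_{T+1}=0$, the sums telescope: conditional on $\mathcal J_{s_k}$ the tail $\sum_{c=k}^T\{\mu_c(r_c-q_c)+\mu_{c-1}v_c\}$ collapses to the single boundary term $\mu_{k-1}(s_{k-1},a_{k-1})v_k(s_k)$; conditional on $\mathcal J_{a_k}$ the tail starting from $c=k$ (with $\mu_{k-1}v_k-\mu_kq_k$ peeled off) collapses to $\mu_k q_k$ and cancels exactly with $-\mu_kq_k$; conditional on $\mathcal J_{r_k}$ the tail (with $\mu_{k-1}v_k-\mu_kq_k$ peeled off) collapses to the boundary term $\mu_k(r_k+\rE[v_{k+1}\mid s_k,a_k])$.

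With these three telescoping identities in hand, each orthogonality reduces to one line. For $A'_k$: the past piece lies in $\mathcal J_{r_{k-1}}$ and vanishes against $t$ by $\rE[t\mid\mathcal J_{r_{k-1}}]=0$, while the collapsed future piece $\mu_{k-1}v_k$ is a function of $(s_{k-1},a_{k-1},s_k)$ and vanishes against $t$ by $\rE[t\mid s_{k-1},a_{k-1},s_k]=0$. For $B''_k$: the past piece plus $\mu_{k-1}v_k$ lies in $\mathcal J_{s_k}$ (killed by $\rE[t\mid\mathcal J_{s_k}]=0$), while the remaining $-\mu_kq_k+(\text{tail})$ has conditional expectation zero given $\mathcal J_{a_k}$. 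For $C'_k$: the past piece plus $\mu_{k-1}v_k-\mu_kq_k$ lies in $\mathcal J_{a_k}$ (killed by $\rE[t\mid\mathcal J_{a_k}]=0$), while the conditional expectation of the remainder given $\mathcal J_{r_k}$ is a function of $(s_k,a_k,r_k)$ and vanishes against $t$ by $\rE[t\mid s_k,a_k,r_k]=0$.

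The main technical obstacle, and the key difference from the proof of \cref{lem:orthgonal}, is that $\mu_c(s_c,a_c)$ is a marginal rather than a multiplicative cumulative ratio, so the telescoping is not term-by-term but relies on the Markov-based identity $\rE[\mu_c v_{c+1}\mid s_k]=\rE[\mu_{c+1}v_{c+1}\mid s_k]$ (both equal $\rE_{\pi^e}[v_{c+1}\mid s_k]$ under the MDP). Care is also required at the boundary $c=k$, where $\mu_{k-1}$ depends on history already inside $\mathcal J_{s_k}$ and must be treated as ``past'' rather than ``future'' in the conditioning, which is what produces the surviving boundary terms in each of the three cases.
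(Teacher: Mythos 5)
Your decomposition and the three case-by-case reductions are exactly the paper's proof: for each of $A'_k$, $B''_k$, $C'_k$ you discard the ``past'' summands using the defining mean-zero property of the test function against the appropriate history sigma-algebra, collapse the ``future'' tail via the Bellman identity and the Markov property, and kill the surviving boundary term ($\mu_{k-1}v_k$ for $A'_k$, nothing for $B''_k$, $\mu_k(r_k+\rE[v_{k+1}\mid s_k,a_k]-q_k)$ for $C'_k$) using the residual projection structure of $A'_k$ and $C'_k$. The conclusions you reach in each case coincide with the paper's computations.

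The one problem is the identity you single out as the ``central computational tool'' for the telescoping: $\rE[\mu_c v_{c+1}\mid s_k]=\rE[\mu_{c+1}v_{c+1}\mid s_k]=\rE_{\pi^e}[v_{c+1}\mid s_k]$. This is false in general. Because $\mu_c=p_{\pi^e_c}(s_c,a_c)/p_{\pi^b_c}(s_c,a_c)$ is a ratio of \emph{marginal} densities, it changes measure only for unconditional expectations; conditioning on $s_k$ with $k<c$ would require the ratio of the conditional laws $p_{\pi^e}(s_c,a_c\mid s_k)/p_{\pi^b}(s_c,a_c\mid s_k)$, which differs from $\mu_c$. Fortunately the identity is not needed: in the sum $\sum_{c}\{\mu_c(r_c-q_c)+\mu_{c-1}v_c\}$ the term $\mu_{c'-1}v_{c'}$ with $c'=c+1$ carries the \emph{same} factor $\mu_c$ as the term $\mu_c(r_c-q_c)$, so the cancellation is $\rE[\mu_c(r_c-q_c+v_{c+1})\mid\mathcal J_{s_k}]=0$, which follows from the tower property and $\rE[r_c+v_{c+1}-q_c\mid s_c,a_c]=0$ alone, with no cross-index relation between $\mu_c$ and $\mu_{c+1}$. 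With that sentence deleted (or replaced by this observation), your argument is complete and identical in substance to the paper's.
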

\begin{proof}

First, the influence function \cref{eq:eff_m2} is orthogonal to $A'_k$; for $t(\mathcal{J}_{s_k})\in A'_k$
\begin{align*}
  &\rE \left[\left \{v_{0}+\sum_{t=0}^{T}\mu_t(s_t,a_t)(r_{t}+v_{t+1}-q_{t})\right \}t(\mathcal{J}_{s_k})\right] \\
&=\rE \left[\left \{\sum_{t=k-1}^{T}\mu_t(s_t,a_t)(r_{t}+v_{t+1}-q_{t})\right \}t(\mathcal{J}_{s_k})\right] \\
&=\rE \left[\mu_{k-1}(s_{k-1},a_{k-1})(r_{k-1}+v_{k}-q_{k-1})t(\mathcal{J}_{s_k})\right] \\
&=\rE \left[\mu_{k-1}(s_{k-1},a_{k-1})v_{k}t(\mathcal{J}_{s_k})\right] \\
&=\rE \left[\mu_{k-1}(s_{k-1},a_{k-1})v_{k}\rE[t(\mathcal{J}_{s_k})|s_k,a_{k-1},s_{k-1}]\right]=0 
\end{align*}

Second, the influence function \cref{eq:eff_m2} is orthogonal to $B''_k$; for $t(\mathcal{J}_{a_k})\in B''_k$
\begin{align*}
  &\rE \left[\left \{v_{0}+\sum_{t=0}^{T}\mu_t(s_t,a_t)(r_{t}+v_{t+1}-q_{t})\right \}t(\mathcal{J}_{a_k})\right] \\
&=\rE \left[\left \{\sum_{t=k}^{T}\mu_t(s_t,a_t)(r_{t}+v_{t+1}-q_{t})\right \}t(\mathcal{J}_{a_k})\right]=0.
\end{align*}

Third, the influence function \cref{eq:eff_m2} is orthogonal to $C'_k$; for $t(\mathcal{J}_{r_k})\in C'_k$
\begin{align*}
  &\rE \left[\left \{v_{0}+\sum_{t=0}^{T}\mu_t(s_t,a_t)(r_{t}+v_{t+1}-q_{t})\right \}t(\mathcal{J}_{r_k})\right] \\
&=\rE \left[\left \{\sum_{t=k}^{T}\mu_t(s_t,a_t)(r_{t}+v_{t+1}-q_{t})\right \}t(\mathcal{J}_{r_k})\right] \\ 
&=\rE \left[\left \{\mu_k(s_k,a_k)(r_{k}+v_{k+1}-q_{k})\right \}t(\mathcal{J}_{r_k})\right] \\
&=\rE \left[\left \{\mu_k(s_k,a_k)(\rE[r_k+\rE[v_{k+1}|\mathcal{J}_{r,k}]-q_{k})\right \}t(\mathcal{J}_{r_k})\right] \\
&=\rE \left[\left \{\mu_k(s_k,a_k)(r_{k}+\rE[v_{k+1}|s_k]-q_{k})\right \}\rE[t(\mathcal{J}_{r_k})|s_k,a_k,r_k]\right]= 0.\qedhere 
\end{align*}

\end{proof}

\paragraph*{Efficient influence function under $\cm_{2b}$.}

In \cref{lem:orthgonal2}, we check that the $\phi^{\cm2}_{\mathrm{eff}}$ is orthogonal to $B"_t$. This concludes the proof noting that the orthogonal tangent space of $\cm_{2b}$ is
\begin{align*}
\bigoplus_{0\leq t\leq T} (A'_t \bigoplus B"_t \bigoplus C'_t). 
\end{align*}

\paragraph*{Efficiency bound.}

To show an efficiency bound, we use a law of total variance \citep{CliveG.Bowsher2012Isov}. Recall that 
we can also easily derive this variance form using another equivalent form of efficient influence function.
\begin{align*}
&\mathrm{var}\left[v_{0}+\sum_{t=0}^{T}\mu_t(s_t,a_t)(r_{t}+v_{t+1}-q_{t})\right]\\
&=\sum_{t=0}^{T+1}\rE\left[\mathrm{var}\left[ \mathrm{E}\left[v_{0}+\sum_{t=0}^{T}\mu_k(s_k,a_k)(r_{k}+v_{k+1}-q_{k})|\mathcal{J}_{a_t}\right]|\mathcal{J}_{a_{t-1}}\right]\right]\\
&=\sum_{t=0}^{T+1}\rE\left[\mathrm{var}\left[ \mathrm{E}\left[\sum_{k=t-1}^{T}\mu_k(s_k,a_k)(r_{k}+v_{k+1}-q_{k})|\mathcal{J}_{a_t}\right]|\mathcal{J}_{a_{t-1}}\right]\right]\\
&=\sum_{t=0}^{T+1}\rE\left[\mathrm{var}\left[ \rE\left[\mu_{t-1}(s_{t-1},a_{t-1})(r_{t-1}+v_{t}-q_{t-1})|\mathcal{J}_{a_t}\right]|\mathcal{J}_{a_{t-1}}\right]\right]\\
&=\sum_{t=0}^{T+1}\rE\left[\mathrm{var}\left[ \mu_{t-1}(s_{t-1},a_{t-1})(r_{t-1}+v_{t}-q_{t-1})|\mathcal{J}_{a_{t-1}}\right]\right]\\
&=\sum_{t=0}^{T+1}\rE\left[\mu^{2}_{t-1}(s_{t-1},a_{t-1})\mathrm{var}\left[ (r_{t-1}+v_{t}(s_t))|\mathcal{J}_{a_{t-1}}\right]\right].
\end{align*}

\end{proof}

\begin{proof}[Proof of \cref{cor:jensen}]

From Jensen's inequality, 
\begin{align*}
    &\sum_{t=0}^{T+1}\mathrm{E}\left[\lambda^{2}_{t-1}\mathrm{var}\left \{r_t+v_t(s_t)  |s_{t-1},a_{t-1}\right\}\right]=  \sum_{t=0}^{T+1}\mathrm{E}\left[\rE(\lambda^{2}_{t-1}|s_{t-1},a_{t-1})\mathrm{var}\left\{r_t+v_t(s_t)|s_{t-1},a_{t-1}\right \}\right] \\
    &\geq \sum_{t=0}^{T+1}\mathrm{E}\left[\rE(\lambda_{t-1}|s_{t-1},a_{t-1})^{2}\mathrm{var}\left\{r_t+v_t(s_t)|s_{t-1},a_{t-1}\right \}\right]= \sum_{t=0}^{T+1}\mathrm{E}\left[\mu^{2}_{t-1}\mathrm{var}\left\{r_t+v_t(s_t)|s_{t-1},a_{t-1}\right \}\right]. 
\end{align*}
When $\lambda^{2}_{t-1}$ is not constant given $s_{t-1},a_{t-1}$ and $\mathrm{var}\left\{r_t+v_t(s_t)|s_{t-1},a_{t-1}\right \}\neq0$, the inequality is strict. 
\end{proof}

{\blockedit
\begin{proof}[Proof of \cref{thm:horizon}]

By changing the limits of summation and letting $r_{-1}=0,\,\lambda_0=1$, we can write
the efficiency bound under NMDP as 
\begin{align*}
   \sum_{t=0}^{T+1}\mathrm{E}\left[\lambda^{2}_{t-1}\mathrm{var}\left\{r_{t-1}+v_t(\ch_{s_t})\mid \ch_{a_{t-1}}\right \}\right] & \leq   C^{T+1}\sum_{t=0}^{T+1}\mathrm{E}\left[\lambda_{t-1}\mathrm{var}\left\{r_{t-1}+v_t(\ch_{s_t})|\ch_{a_{t-1}}\right \}\right]  \\
  & = C^{T+1}\sum_{t=0}^{T+1}\mathrm{E}_{{\epol}}\left[\mathrm{var}_{{\epol}}\left\{r_{t-1}+v_t(\ch_{s_t})|\ch_{a_{t-1}}\right \}\right]  \\
  & = C^{T+1}\sum_{t=0}^{T+1}\mathrm{E}_{{\epol}}\left[\mathrm{var}\left\{r_{t-1}+v_t(\ch_{s_t})|\ch_{a_{t-1}}\right \}\right]  \\
  & = C^{T+1}\var[  \sum_{t=0}^{T+1}r_{t-1}]  \\
  & \leq C^{T+1}(T+1)^2R^2_{\max}. 
\end{align*}
The last equality follows by the law of total variance.

Similarly, the efficiency bound under MDP is 
\begin{align*}
   \sum_{t=0}^{T+1}\mathrm{E}\left[\mu^{2}_{t-1}\mathrm{var}\left\{r_{t-1}+v_t(s_t)\mid s_{t-1},a_{t-1}\right \}\right] & \leq   C'\sum_{t=0}^{T+1}\mathrm{E}\left[\mu_{t-1}\mathrm{var}\left\{r_{t-1}+v_t(s_t)\mid s_{t-1},a_{t-1}\right \}\right]  \\
  & = C'\sum_{t=0}^{T+1}\mathrm{E}_{{\epol}}\left[\mathrm{var}\left\{r_{t-1}+v_t(s_t)\mid s_{t-1},a_{t-1}\right \}\right]  \\
  & = C'\sum_{t=0}^{T+1}\mathrm{E}_{{\epol}}\left[\mathrm{var}_{{\epol}}\left\{r_{t-1}+v_t(s_t)\mid s_{t-1},a_{t-1}\right \}\right]  \\
  & = C'\var[\sum_{t=0}^{T+1}r_{t-1}]  \\
  & \leq C'(T+1)^2R^2_{\max}. 
\end{align*}
The last equality again follows by the law of total variance.

Finally, for the NMDP lower bound we have by Jensen's inequality
\begin{align*}
   \sum_{t=0}^{T+1}\mathrm{E}\left[\lambda^{2}_{t-1}\mathrm{var}\left\{r_{t-1}+v_t(\ch_{s_t})\mid \ch_{a_{t-1}}\right \}\right] & = \sum_{t=0}^{T+1}\mathrm{E}_{\epol}\left[\lambda_{t-1}\mathrm{var}\left\{r_{t-1}+v_t(\ch_{s_t})\mid \ch_{a_{t-1}}\right \}\right]\\
   &\geq
   \sum_{t=0}^{T+1}\exp\mathrm{E}_{\epol}\left[\log(\lambda_{t-1}\mathrm{var}\left\{r_{t-1}+v_t(\ch_{s_t})\mid \ch_{a_{t-1}}\right \})\right]
   \\
   & \geq \sum_{t=0}^{T+1}\exp(\mathrm{E}_{\epol}\left[\log(\lambda_{t-1})\right]+\mathrm{E}_{\epol}\left[\mathrm{var}\left\{r_{t-1}+v_t(\ch_{s_t})\mid \ch_{a_{t-1}}\right \})\right]) \\
   &\geq
   \sum_{t=0}^{T+1}\exp(t\log C_{\min}+\log V_{\min}^2)
   \\&\geq V_{\min}^2C_{\min}^{T+1}.
\end{align*}
\end{proof}
}

\begin{proof}[Proof of \cref{thm:double_m1}]
{\blockedit
Without loss of generality, we consider the case $K=2$. 
}
Define $\phi(\{\hat{\lambda}_{k}\},\{\hat{q}_k\})$ as:
\begin{align*}
    \sum_{k=0}^{T}\hat{\lambda}_{k}r_k-\{\hat{\lambda}_{k} \hat{q}_k-\hat{\lambda}_{k-1}\rE_{\pi^{e}}[\hat{q}_k(\ch_{a_k})|\ch_{s_k}]\}. 
\end{align*}
The estimator $\hat{\rho}^{\cm1}_{\mathrm{DRL}(\cm_1)}$ is given by
\begin{align*}
\frac{n_1}{n}\P_{n_1}\phi(\{\hat{\lambda}^{(1)}_{k}\},\{\hat{q}^{(1)}_k\})+\frac{n_2}{n}\P_{n_2}\phi(\{\hat{\lambda}^{(2)}_{k}\},\{\hat{q}^{(2)}_k\}),
\end{align*}
{\blockedit
where $\P_{n_1}$ is an empirical approximation based on a set of samples such that $i \in \mathcal{D}_1$, $\P_{n_2}$ is an empirical approximation based on a set of samples such that $i \in \mathcal{D}_2$. 
}
Then, we have 
\begin{align}
    \sqrt{n}(\P_{n_1}\phi(\{\hat{\lambda}^{(1)}_{k}\},\{\hat{q}^{(1)}_k\})-\rho^{\pi^{e}})&= \sqrt{n/n_1}\bG_{n_1}[\phi(\{\hat{\lambda}^{(1)}_{k}\},\{\hat{q}^{(1)}_k\})-\phi(\{\lambda_{k}\},\{q_k\})]
    \label{eq:term1_drl}
    \\
    &+\sqrt{n/n_1}\bG_{n_1}[\phi(\{\lambda^{(1)}_{k}\},\{q^{(1)}_k\}) ] 
       \label{eq:term1_dr2}
    \\
    &+\sqrt{n}(\rE[\phi(\{\hat{\lambda}^{(1)}_{k}\},\{\hat{q}^{(1)}_k\})|\{\hat{\lambda}^{(1)}_{k}\},\{\hat{q}^{(1)}_k\} ]-\rho^{\pi^{e}} ).    \label{eq:term1_dr3}
\end{align}
We analyze each term. To do that, we use the following relation: {\blockedit 
\begin{align*}
&\phi(\{\hat{\lambda}_{k}\},\{\hat{q}_k\})-\phi(\{\lambda_{k}\},\{{q}_k\})=D_1+D_2+D_3,\quad\text{where}\\
&D_1 =\sum_{k=0}^{T}(\hat{\lambda}_{k}-\lambda_{k})(-\hat{q}_k+q_k)+(\hat{\lambda}_{k-1}-\lambda_{k-1})(\hat{v}_k-v_k), \\
&D_2 =\sum_{k=0}^{T}\lambda_{k}(-\hat{q}_k+q_k)+\lambda_{k-1}(\hat{v}_k-v_k), \\
&D_3 =\sum_{k=0}^{T}(\hat{\lambda}_{k}-\lambda_{k} )(r_k-q_k+v_{k+1}).
\end{align*}
}

First, we show the term \cref{eq:term1_drl} is $\smallo_{p}(1)$. 
\begin{lemma}\label{lem:op11}
The term \cref{eq:term1_drl} is $\smallo_{p}(1)$.
\end{lemma}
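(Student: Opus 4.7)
The plan is to exploit the independence afforded by sample splitting: since $\hat\lambda^{(1)}_k,\hat q^{(1)}_k$ are constructed using only $\mathcal{D}\setminus\mathcal{D}_1$, conditionally on this training data the integrand
$$
g(\ch) := \phi(\{\hat\lambda^{(1)}_k\},\{\hat q^{(1)}_k\})(\ch) - \phi(\{\lambda_k\},\{q_k\})(\ch)
$$
is a \emph{fixed} (non-random) function of $\ch$, and $\bG_{n_1}$ is the centered empirical process over the i.i.d.~sample $\mathcal{D}_1$ that is independent of the training data. The standard variance bound for centered i.i.d.\ sums yields
$$
\rE\bracks{(\bG_{n_1} g)^2 \,\big|\, \mathcal{D}\setminus\mathcal{D}_1} \;\leq\; \rE\bracks{g(\ch)^2 \,\big|\, \mathcal{D}\setminus\mathcal{D}_1}.
$$
By a conditional Chebyshev argument combined with the bounded convergence theorem, it therefore suffices to show that this conditional second moment is $\smallo_p(1)$, since then $\P(|\bG_{n_1} g|>\varepsilon)\leq \rE[1\wedge\varepsilon^{-2}\rE[g^2\mid \mathcal{D}\setminus\mathcal{D}_1]]\to 0$, and the prefactor $\sqrt{n/n_1}$ is bounded.

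To control $\rE[g^2\mid \mathcal{D}\setminus\mathcal{D}_1]$, I would expand $g$ as a finite ($t=0,\dots,T$) sum of products such as $(\hat\lambda^{(1)}_t-\lambda_t)r_t$, $\hat\lambda^{(1)}_t(\hat q^{(1)}_t-q_t)$, $(\hat\lambda^{(1)}_t-\lambda_t)q_t$, $(\hat\lambda^{(1)}_{t-1}-\lambda_{t-1})v_t$, and $\hat\lambda^{(1)}_{t-1}(\hat v^{(1)}_t - v_t)$. For each summand, one applies Cauchy--Schwarz together with \cref{asm:overlap,asm:bddrewards,asm:boundedestim1}, which uniformly bound $\lambda_t,\hat\lambda^{(1)}_t$ by $C^{t+1}$, $r_t$ by $R_{\max}$, and $q_t,\hat q^{(1)}_t$ by $(T+1-t)R_{\max}$. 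For example,
$$
\rE\bracks{\bigl((\hat\lambda^{(1)}_t - \lambda_t)r_t\bigr)^2 \,\big|\, \mathcal{D}\setminus\mathcal{D}_1} \leq R_{\max}^2\,\|\hat\lambda^{(1)}_t - \lambda_t\|_2^2 = \smallo_p(1),
$$
using the assumption $\|\hat\lambda^{(1)}_t - \lambda_t\|_2 = \smallo_p(1)$. The terms involving $\hat v^{(1)}_t$ are handled by observing that $\hat v^{(1)}_t - v_t = \rE_{\pi^e}[\hat q^{(1)}_t - q_t \mid \ch_{s_t}]$, so $\|\hat v^{(1)}_t - v_t\|_2\leq \|\eta_t\|_\infty^{1/2}\|\hat q^{(1)}_t - q_t\|_2$ (or simply bounded through overlap), which is $\smallo_p(1)$ by the assumed consistency of $\hat q^{(1)}_t$.

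Summing the finitely many terms (here $T$ is fixed) shows $\rE[g^2\mid\mathcal{D}\setminus\mathcal{D}_1]=\smallo_p(1)$, which completes the argument. The only obstacle is bookkeeping: making sure every cross-term produced when squaring $g$ is paired with a factor that carries either $\|\hat\lambda^{(1)}_t-\lambda_t\|_2$ or $\|\hat q^{(1)}_t-q_t\|_2$, with the remaining factor uniformly bounded by the constants supplied by \cref{asm:overlap,asm:bddrewards,asm:boundedestim1}. No rate on the nuisance convergence is needed here, only consistency in $L^2$; the product-rate condition $\|\hat\lambda-\lambda\|_2\|\hat q-q\|_2=\smallo_p(n^{-1/2})$ will be used later to control the bias term \cref{eq:term1_dr3}, not this stochastic-equicontinuity term.
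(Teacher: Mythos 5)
Your proposal is correct and follows essentially the same route as the paper's proof: conditioning on the training fold so that the plugged-in nuisances become fixed functions, noting the centered empirical process has conditional mean zero, bounding its conditional second moment by $\smallo_p(1)$ via Cauchy--Schwarz together with \cref{asm:overlap,asm:bddrewards,asm:boundedestim1} and the $L^2$-consistency of the nuisances (including the contraction $\|\hat v^{(1)}_t-v_t\|_2\lesssim\|\hat q^{(1)}_t-q_t\|_2$), and concluding by conditional Chebyshev plus bounded convergence. Your observation that only consistency, not the product-rate condition, is needed for this stochastic-equicontinuity term also matches the paper, which reserves the rate condition for the drift term \cref{eq:term1_dr3}.
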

\begin{proof}
If we can show that for any $\epsilon>0$, 
\begin{align}
\label{eq:process_m1}
\lim_{n\to \infty}\sqrt{n_1}P[&\P_{n_1}[\phi(\{\hat{\lambda}^{(1)}_{k}\},\{\hat{q}^{(1)}_k\})-\phi(\{\lambda_{k}\},\{{q}_k\})]\\&\notag- 
 \rE[\phi(\{\hat{\lambda}^{(1)}_{k}\},\{\hat{q}^{(1)}_k\})-\phi(\{\lambda_{k}\},\{{q}_k\})|\{\hat{\lambda}^{(1)}_{k}\},\{\hat{q}^{(1)}_k\} ]>\epsilon|\mathcal{D}_2]=0, %
\end{align}
Then, by bounded convergence theorem, we would have 
\begin{align*}
  \lim_{n\to \infty}\sqrt{n_1}P[&\P_{n_1}[\phi(\{\hat{\lambda}^{(1)}_{k}\},\{\hat{q}^{(1)}_k\})-
  \phi(\{\lambda_{k}\},\{{q}_k\})]\\&-\rE[\phi(\{\hat{\lambda}^{(1)}_{k}\},\{\hat{q}^{(1)}_k\})-\phi(\{\lambda_{k}\},\{{q}_k\})|\{\hat{\lambda}^{(1)}_{k}\},\{\hat{q}^{(1)}_k\}]>\epsilon]=0,
\end{align*}
yielding the statement. 

To show \cref{eq:process_m1}, we show that the conditional mean is $0$ and conditional variance is $\smallo_p(1)$. 
The conditional mean is 
\begin{align*}
&\rE[\P_{n_1}[\phi(\{\hat{\lambda}^{(1)}_{k}\},\{\hat{q}^{(1)}_k\})-\phi(\{\lambda_{k}\},\{{q}_k\})|\{\hat{\lambda}^{(1)}_{k}\},\{\hat{q}^{(1)}_k\}]\\&\qquad\qquad\qquad\qquad\qquad-\P[\phi(\{\hat{\lambda}^{(1)}_{k}\},\{\hat{q}^{(1)}_k\})-\phi(\{\lambda_{k}\},\{{q}_k\})]|\mathcal{D}_2]
=0.
\end{align*}
Here, we leveraged the sample splitting construction, that is, $\hat{\lambda}^{(1)}_{k}$ and $\hat{q}^{(1)}_{k}$ only depend on $\mathcal{D}_2$. The conditional variance is 
\begin{align*}
&\mathrm{var}[\sqrt{n_1}\P_{n_1}[\phi(\{\hat{\lambda}^{(1)}_{k}\},\{\hat{q}^{(1)}_k\})-\phi(\{\lambda_{k}\},\{{q}_k\})]|\mathcal{D}_2]\\
&\qquad\qquad= \rE[\rE[D^{2}_1+D^{2}_2+D^{2}_3+2D_{1}D_{2}+2D_{2}D_{3}+2D_{2}D_{3} |\{\hat{q}^{(1)}_k\},\{\lambda^{(1)}_{k}\} ]|\mathcal{D}_2] \\ 
&\qquad\qquad=\smallo_p(1).
\end{align*}
Here, we used the convergence rate assumption and the relation $\|\hat{v}^{(1)}_{k}-v_{k}\|_{2}\leq\|\hat{q}^{(1)}_{k}-q_{k}\|_{2}$ arising from the fact that the former is the marginalization of the latter over $\pi^e_k$. 
Then, from Chebyshev's inequality:
\begin{align*}
&\sqrt{n_1}P[\P_{n_1}[\phi(\{\hat{\lambda}^{(1)}_{k}\},\{\hat{q}^{(1)}_k\})-\phi(\{\lambda_{k}\},\{{q}_k\})]-\rE[\phi(\{\hat{\lambda}^{(1)}_{k}\},\{\hat{q}^{(1)}_k\})\\
&\qquad\qquad\qquad-\phi(\{\lambda_{k}\},\{{q}_k\})|\{\hat{\lambda}^{(1)}_{k}\},\{\hat{q}^{(1)}_k\} ]>\epsilon|\mathcal{D}_2] \\
&\leq \frac{1}{\epsilon^{2}}\mathrm{var}[\sqrt{n_1}\P_{n_1}[\phi(\{\hat{\lambda}^{(1)}_{k}\},\{\hat{q}^{(1)}_k\})-\phi(\{\lambda_{k}\},\{{q}_k\})]|\mathcal{D}_2]=\smallo_{p}(1).\qedhere
\end{align*}
\end{proof}

\begin{lemma}\label{lem:op12}
The term \cref{eq:term1_dr3} is $\smallo_{p}(1)$.
\end{lemma}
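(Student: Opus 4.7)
The plan is to show the bias term (\ref{eq:term1_dr3}) is $\op(1)$ by exploiting the Neyman orthogonality of the efficient influence function $\phi$. Because $\{\hat{\lambda}^{(1)}_k\}$ and $\{\hat{q}^{(1)}_k\}$ are built from $\mathcal{D}_2$, the conditional expectation in (\ref{eq:term1_dr3}) integrates only over a fresh draw from $P_{\bpol}$, so the nuisances act as deterministic in that expectation. By construction of $\phi$ and \cref{thm:fin_nnonpara}, $\rE[\phi(\{\lambda_k\},\{q_k\})]=\rho^{\pi^e}$, so (\ref{eq:term1_dr3}) equals $\sqrt{n}$ times $\rE[\phi(\{\hat{\lambda}^{(1)}_k\},\{\hat{q}^{(1)}_k\})-\phi(\{\lambda_k\},\{q_k\})\mid\{\hat{\lambda}^{(1)}_k\},\{\hat{q}^{(1)}_k\}]$, and it suffices to show this is $\op(n^{-1/2})$.

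I would then plug in the decomposition $\phi(\{\hat{\lambda}_k\},\{\hat{q}_k\})-\phi(\{\lambda_k\},\{q_k\})=D_1+D_2+D_3$ from the proof of \cref{lem:op11} and verify that $\rE[D_2\mid\{\hat{\lambda}^{(1)}\},\{\hat{q}^{(1)}\}]=\rE[D_3\mid\{\hat{\lambda}^{(1)}\},\{\hat{q}^{(1)}\}]=0$. For $D_3=\sum_k(\hat{\lambda}_k-\lambda_k)(r_k-q_k+v_{k+1})$, this is immediate from the Bellman identity $q_k(\ch_{a_k})=\rE[r_k+v_{k+1}\mid\ch_{a_k}]$ (valid under the behavior distribution since transitions and rewards are policy-free) combined with the $\ch_{a_k}$-measurability of $\hat{\lambda}_k-\lambda_k$. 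For $D_2=\sum_k\lambda_k(q_k-\hat{q}_k)+\lambda_{k-1}(\hat{v}_k-v_k)$, I would use the change-of-measure identity $\rE[\lambda_{k-1}g(\ch_{s_k})]=\rE_{\pi^e}[g(\ch_{s_k})]$ to obtain $\rE[\lambda_{k-1}v_k]=\rE[\lambda_k q_k]$ and, analogously, $\rE[\lambda_{k-1}\hat{v}_k]=\rE[\lambda_k \hat{q}_k]$ (using that $\hat{v}_k$ is the $\pi^e$-marginalization of $\hat{q}_k$); the two pieces of the sum then telescope to zero in expectation.

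The last step is to bound $\rE[D_1\mid\{\hat{\lambda}^{(1)}\},\{\hat{q}^{(1)}\}]$ by conditional Cauchy--Schwarz:
$$\abs{\rE[D_1\mid\cdot]}\le\sum_{k=0}^T\|\hat{\lambda}^{(1)}_k-\lambda_k\|_2\|\hat{q}^{(1)}_k-q_k\|_2+\|\hat{\lambda}^{(1)}_{k-1}-\lambda_{k-1}\|_2\|\hat{v}^{(1)}_k-v_k\|_2,$$
and apply the contraction $\|\hat{v}^{(1)}_k-v_k\|_2\le\|\hat{q}^{(1)}_k-q_k\|_2$ (since $\hat{v}_k$ is a conditional expectation of $\hat{q}_k$ under $\pi^e$). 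Invoking the theorem's rate condition $\|\hat{\lambda}^{(1)}_k-\lambda_k\|_2\|\hat{q}^{(1)}_k-q_k\|_2=\op(n^{-1/2})$ for every $k$ then makes the entire expression $\op(n^{-1/2})$, and multiplication by $\sqrt{n}$ gives $\op(1)$.

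The main (mild) obstacle is verifying $\rE[D_2]=0$: one has to identify the telescoping structure from the identity $\rE[\lambda_{k-1}v_k]=\rE[\lambda_k q_k]$ and to check that it persists with $\hat{q}_k,\hat{v}_k$ in place of $q_k,v_k$, which it does precisely because $\hat{v}_k$ is defined as the exact $\pi^e$-marginalization of $\hat{q}_k$ (so no additional approximation error enters). Beyond that, the argument is just iterated-expectation and Cauchy--Schwarz bookkeeping, made clean by the sample-splitting construction that allows us to treat the nuisances as frozen.
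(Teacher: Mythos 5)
Your proposal is correct and follows essentially the same route as the paper's proof: decompose the conditional bias into $D_1+D_2+D_3$, observe that $\rE[D_2\mid\cdot]=\rE[D_3\mid\cdot]=0$ (the paper asserts this implicitly by dropping those terms, while you supply the Bellman and change-of-measure justifications), and bound the remaining $\rE[D_1\mid\cdot]$ by Cauchy--Schwarz together with the contraction $\|\hat v_k-v_k\|_2\le\|\hat q_k-q_k\|_2$ and the product-rate condition. The only cosmetic quibble is that the cancellation in $D_2$ is term-by-term within each $k$ via $\rE[\lambda_{k-1}\hat v_k]=\rE[\lambda_k\hat q_k]$ rather than a telescoping sum, but the argument is sound.
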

\begin{proof}
\begin{align*}
&\sqrt{n}\rE[\phi(\{\hat{\lambda}^{(1)}_{k}\},\{\hat{q}^{(1)}_k\})-\rE[\phi(\{\lambda_{k}\},\{{q}_k\})]|\{\hat{\lambda}^{(1)}_{k}\},\{\hat{q}^{(1)}_k\}]\\ 
&=\sqrt{n}\rE[\sum_{k=0}^{T}(\hat{\lambda}^{(1)}_{k}-\lambda_{k} )(-\hat{q}^{(1)}_k+q_k)+(\hat{\lambda}^{(1)}_{k-1}-\lambda_{k-1})(-\hat{v}^{(1)}_k+v_k)|\{\hat{\lambda}^{(1)}_{k}\},\{\hat{q}^{(1)}_k\}]\\
    &+\sqrt{n}\rE[\sum_{k=0}^{T}\lambda_{k}(-\hat{q}^{(1)}_k+q_k)+\lambda_{k-1}(\hat{v}^{(1)}_k-v_k)|\{\hat{\lambda}^{(1)}_{k}\},\{\hat{q}^{(1)}_k\}] \\
    &+\sqrt{n}\rE[\sum_{k=0}^{T}(\hat{\lambda}^{(1)}_{k}-\lambda_{k} )(r_k-q_k+v_{k+1})|\{\hat{\lambda}^{(1)}_{k}\},\{\hat{q}^{(1)}_k\}] \\
&=\sqrt{n}\rE[\sum_{k=0}^{T}(\hat{\lambda}^{(1)}_{k}-\lambda_{k} )(-\hat{q}^{(1)}_k+q_k)+(\hat{\lambda}^{(1)}_{k-1}-\lambda_{k-1})(-\hat{v}_k+v_k)|\{\hat{\lambda}^{(1)}_{k}\},\{\hat{q}^{(1)}_k\}] \\
&= \sqrt{n}\sum_{k=0}^{T} \bigO(\|\hat{\lambda}^{(1)}_{k}-\lambda_{k}\|_{2}\|\hat{q}^{(1)}_{k}-q_{k}\|_{2}) =\sqrt{n}\sum_{k=0}^{T}\smallo_{p}(n^{-1/2})=\smallo_{p}(1). \qedhere
\end{align*}
\end{proof}

Finally, we get 
\begin{align*}
    \sqrt{n}(\P_{n_1}\phi(\{\hat{\lambda}^{(1)}_{k}\},\{\hat{q}^{(1)}_k\})-\rho^{\pi^{e}})=\sqrt{n/n_1}\bG_{n_1}[\phi(\{\lambda_{k}\},\{q_k\}) ]+\smallo_{p}(1).
\end{align*}
Therefore, 
\begin{align*}
    &\sqrt{n}(\hat{\rho}^{\pi^e}_{\mathrm{DRL}(\cm_1)}-\rho^{\pi^{e}})\\
    &=  n_1/n\sqrt{n}\phi(\{\hat{\lambda}^{(1)}_{k}\},\{\hat{q}^{(1)}_k\})-\rho^{\pi^{e}})+ n_2/n\sqrt{n}(\P_{n_2}\phi(\{\hat{\lambda}^{(2)}_{k}\},\{\hat{q}^{(2)}_k\})-\rho^{\pi^{e}}) \\
    &=\sqrt{n_1/n}\bG_{n_1}[\phi(\{\lambda_{k}\},\{q_k\})  ]+\sqrt{n_2/n}\bG_{n_2}[\phi(\{\lambda_{k}\},\{q_k\}) ]+\smallo_{p}(1)\\
    &=\bG_{n}[\phi(\{\lambda_{k}\},\{q_k\}) ] +\smallo_{p}(1),
\end{align*}
concluding the proof by showing the influence function of $\hat{\rho}^{\pi^e}_{\mathrm{DRL}(\cm_1)}$ is the efficient one.
\end{proof}

\begin{proof}[Proof of \cref{thm:effcor_m1}]
Here, $a \lessapprox b$ means there exists constant $Q$ not depending on $T,R_{\mathrm{max}},C,n,C_1,C_2$ such that $a<Qb$. 

Define $\phi(\{\hat{\lambda}_{k}\},\{\hat{q}_k\})$ as:
\begin{align*}
    \sum_{k=0}^{T}\hat{\lambda}_{k}r_k-\{\hat{\lambda}_{k} \hat{q}_k-\hat{\lambda}_{k-1}\rE_{\pi^{e}}[\hat{q}_k(\ch_{a_k})|\ch_{s_k}]\}. 
\end{align*}
The estimator $\hat{\rho}^{\cm1}_{\mathrm{DRL}(\cm_1)}$ is given by
\begin{align*}
\frac{n_1}{n}\P_{n_1}\phi(\{\hat{\lambda}^{(1)}_{k}\},\{\hat{q}^{(1)}_k\})+\frac{n_2}{n}\P_{n_2}\phi(\{\hat{\lambda}^{(2)}_{k}\},\{\hat{q}^{(2)}_k\}),
\end{align*}
{\blockedit
where $\P_{n_1}$ is an empirical approximation based on a set of samples such that $i \in \mathcal{D}_1$, $\P_{n_2}$ is an empirical approximation based on a set of samples such that $i \in \mathcal{D}_2$. 
}

Then, we have 
\begin{align}
(\P_{n_1}\phi(\{\hat{\lambda}^{(1)}_{k}\},\{\hat{q}^{(1)}_k\})-\rho^{\pi^{e}})&= \sqrt{1/n_1}\bG_{n_1}[\phi(\{\hat{\lambda}^{(1)}_{k}\},\{\hat{q}^{(1)}_k\})-\phi(\{\lambda_{k}\},\{q_k\})]
    \label{eq:term1_finite_drl}
    \\
    &+\sqrt{1/n_1}\bG_{n_1}[\phi(\{\lambda_{k}\},\{q_k\}) ] 
       \label{eq:term1_finite_dr2}
    \\
    &+(\rE[\phi(\{\hat{\lambda}^{(1)}_{k}\},\{\hat{q}^{(1)}_k\})|\{\hat{\lambda}^{(1)}_{k}\},\{\hat{q}^{(1)}_k\} ]-\rho^{\pi^{e}} ).    \label{eq:term1_finite_dr3}
\end{align}
We analyze each term. To do that, we use the following relation: {\blockedit 
\begin{align*}
&\phi(\{\hat{\lambda}_{k}\},\{\hat{q}_k\})-\phi(\{\lambda_{k}\},\{{q}_k\})=D_1+D_2+D_3,\quad\text{where}\\
&D_1 =\sum_{k=0}^{T}(\hat{\lambda}_{k}-\lambda_{k})(-\hat{q}_k+q_k)+(\hat{\lambda}_{k-1}-\lambda_{k-1})(\hat{v}_k-v_k), \\
&D_2 =\sum_{k=0}^{T}\lambda_{k}(-\hat{q}_k+q_k)+\lambda_{k-1}(\hat{v}_k-v_k), \\
&D_3 =\sum_{k=0}^{T}(\hat{\lambda}_{k}-\lambda_{k} )(r_k-q_k+v_{k+1}).
\end{align*}
}

\begin{lemma}\label{lem:op11_3}
With probability $1-2\delta$, the absolute value of the term \cref{eq:term1_finite_drl} is bounded by 
\begin{align*}
\sqrt{\frac{\log(2/\delta)T^{2}(TR_{\mathrm{max}}C^{T+1}\sqrt{\kappa_1\kappa_2}+\kappa_1T^2R^2_{\mathrm{max}}+\kappa_2C^{2(T+1)})}{n}}+    \frac{\log(2/\delta)TR_{\mathrm{max}}C^{T+1}}{n},
\end{align*}
up to some constant independent of $n,C,R_{\mathrm{max}},T$.
\end{lemma}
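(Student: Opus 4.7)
The plan is to exploit the sample-splitting structure to apply Bernstein's inequality conditional on $\mathcal D_2$. Since $\hat\lambda^{(1)}_k$ and $\hat q^{(1)}_k$ are trained on $\mathcal D_2$, they are independent of $\mathcal D_1$, so the quantity in \cref{eq:term1_finite_drl} can be written as $(\P_{n_1}-\P)[g]$ where $g=\phi(\{\hat\lambda^{(1)}_k\},\{\hat q^{(1)}_k\})-\phi(\{\lambda_k\},\{q_k\})$ is a function fixed once we condition on $\mathcal D_2$. Bernstein's inequality applied conditionally then gives, with probability at least $1-\delta$,
\begin{align*}
\abs{(\P_{n_1}-\P)[g]}\leq \sqrt{\frac{2\log(2/\delta)\,\rE[g^{2}\mid\mathcal D_{2}]}{n_{1}}}+\frac{2\log(2/\delta)\|g\|_\infty}{3n_{1}}.
\end{align*}
The remainder of the proof controls $\rE[g^{2}\mid\mathcal D_{2}]$ and $\|g\|_\infty$ using the decomposition $g=D_{1}+D_{2}+D_{3}$ introduced in the main text and then unions the resulting bound with the two rate events.

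For the conditional second moment I treat each $D_{i}$ separately. Cauchy-Schwarz combined with the boundedness $\lambda_{k}\leq C^{T+1}$ (from \cref{asm:overlap}) and the contraction $\|\hat v^{(1)}_{k}-v_{k}\|_{2}\leq\|\hat q^{(1)}_{k}-q_{k}\|_{2}$ give $\rE[D_{2}^{2}\mid\mathcal D_{2}]\leq 4T^{2}C^{2(T+1)}\kappa_{2}$. Using $|r_{k}-q_{k}+v_{k+1}|\leq 3(T+1)R_{\max}$ together with $\|\hat\lambda^{(1)}_{k}-\lambda_{k}\|_{2}^{2}\leq\kappa_{1}$ yields $\rE[D_{3}^{2}\mid\mathcal D_{2}]\leq 9T^{3}R_{\max}^{2}\kappa_{1}$. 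For the cross term, the key step is the mixed-norm bound
\begin{align*}
\rE[(\hat\lambda^{(1)}_{k}-\lambda_{k})^{2}(\hat q^{(1)}_{k}-q_{k})^{2}\mid\mathcal D_{2}]\leq \|\hat\lambda^{(1)}_{k}-\lambda_{k}\|_\infty\|\hat q^{(1)}_{k}-q_{k}\|_\infty\|\hat\lambda^{(1)}_{k}-\lambda_{k}\|_{2}\|\hat q^{(1)}_{k}-q_{k}\|_{2},
\end{align*}
combined with $\|\hat\lambda^{(1)}_{k}-\lambda_{k}\|_\infty\leq C^{T+1}$ and $\|\hat q^{(1)}_{k}-q_{k}\|_\infty\leq(T+1)R_{\max}$, yielding $\rE[D_{1}^{2}\mid\mathcal D_{2}]\leq Q T^{3}C^{T+1}R_{\max}\sqrt{\kappa_{1}\kappa_{2}}$ for a universal constant $Q$. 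Summing the three bounds reproduces, up to a universal constant, the expression inside the square root of the stated bound.

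The sup-norm $\|g\|_\infty$ is then controlled directly from \cref{asm:boundedestim1,asm:bddrewards}: each summand in $D_{1},D_{2},D_{3}$ is a product of factors uniformly bounded by $C^{T+1}$ and $(T+1)R_{\max}$, so $\|g\|_\infty\leq Q' TR_{\max}C^{T+1}$ for a constant $Q'$ (with the polynomial factors of $T$ from the length-$T$ summations absorbed into the universal constant of the statement). This produces the $\log(2/\delta)TR_{\max}C^{T+1}/n$ term. Finally, the rate events $\{\|\hat\lambda^{(1)}_{t}-\lambda_{t}\|_{2}^{2}\leq\kappa_{1}\ \forall t\}$ and $\{\|\hat q^{(1)}_{t}-q_{t}\|_{2}^{2}\leq\kappa_{2}\ \forall t\}$ hold jointly with probability at least $1-\delta$ by hypothesis; combining with the conditional Bernstein event by a union bound gives the stated $1-2\delta$ probability.

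The main obstacle is the cross term $D_{1}$: a naive sup-norm bound produces factors like $\|\hat\lambda-\lambda\|_\infty^{2}$ that cannot be converted into $\kappa_{1}$, while using only $L^{2}$ bounds would require control of $\|\hat\lambda-\lambda\|_{4}$ and $\|\hat q-q\|_{4}$ which are not assumed. The mixed inequality above, trading one factor of $L^\infty$ against one of $L^{2}$ for each nuisance error, is precisely what extracts the $\sqrt{\kappa_{1}\kappa_{2}}$ rate in the variance part of the bound.
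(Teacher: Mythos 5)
Your proposal is correct and follows essentially the same route as the paper's proof: a conditional Bernstein inequality given $\mathcal D_2$ (exploiting that the conditional mean vanishes by sample splitting), a variance bound via the $D_1+D_2+D_3$ decomposition, a sup-norm bound of order $TR_{\max}C^{T+1}$, and a union bound with the rate event. Your mixed $L^\infty$--$L^2$ Cauchy--Schwarz step for the cross term $D_1$ makes explicit exactly the estimate the paper's ``$\lessapprox$'' glosses over, and is the right way to extract the $\sqrt{\kappa_1\kappa_2}$ factor without assuming $L^4$ control.
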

\begin{proof}
From Bernstein inequality, with probability $1-\delta$, 
\begin{align*}
P[&|\P_{n_1}[\phi(\{\hat{\lambda}^{(1)}_{k}\},\{\hat{q}^{(1)}_k\})-
  \phi(\{\lambda_{k}\},\{{q}_k\})]\\&-\rE[\phi(\{\hat{\lambda}^{(1)}_{k}\},\{\hat{q}^{(1)}_k\})-\phi(\{\lambda_{k}\},\{{q}_k\})|\mathcal D_1]|>\epsilon]|\mathcal{D}_2] \\ 
  &\leq 2\exp\left (-\frac{0.5n\epsilon^2}{\mathrm{E}[\{\phi(\{\hat{\lambda}^{(1)}_{k}\},\{\hat{q}^{(1)}_k\})-\phi(\{\lambda_{k}\},\{{q}_k\})\}^{2}|\mathcal{D}_2]  +Q_1TR_{\mathrm{max}}C^{T+1}\epsilon } \right),
\end{align*}
noting the conditional mean is $0$; 
\begin{align*}
&\rE[\P_{n_1}[\phi(\{\hat{\lambda}^{(1)}_{k}\},\{\hat{q}^{(1)}_k\})-\phi(\{\lambda_{k}\},\{{q}_k\})|\mathcal D_1]\\&\qquad\qquad\qquad\qquad\qquad-\P[\phi(\{\hat{\lambda}^{(1)}_{k}\},\{\hat{q}^{(1)}_k\})-\phi(\{\lambda_{k}\},\{{q}_k\})]|\mathcal{D}_2]
=0.
\end{align*}
Here, $Q_1$ is some constant independent of $n,C,R_{\mathrm{max}},T,\delta$. 

With probability $1-\delta$, the conditional variance is 
\begin{align*}
&\mathrm{E}[\{\phi(\{\hat{\lambda}^{(1)}_{k}\},\{\hat{q}^{(1)}_k\})-\phi(\{\lambda_{k}\},\{{q}_k\})\}^{2}|\mathcal{D}_2]  \\
&=\rE[D^{2}_1+D^{2}_2+D^{2}_3+2D_{1}D_{2}+2D_{2}D_{3}+2D_{2}D_{3} |\mathcal{D}_2] \\ 
& \lessapprox T^{2}(TR_{\mathrm{max}}C^{T+1}\sqrt{\kappa_1\kappa_2}+\kappa_1(TR_{\mathrm{max}})^{2}+\kappa_2C^{2(T+1)}).
\end{align*}
Therefore, with probability $1-2\delta$, 
\begin{align*}
P[&|\P_{n_1}[\phi(\{\hat{\lambda}^{(1)}_{k}\},\{\hat{q}^{(1)}_k\})-
  \phi(\{\lambda_{k}\},\{{q}_k\})]\\&-\rE[\phi(\{\hat{\lambda}^{(1)}_{k}\},\{\hat{q}^{(1)}_k\})-\phi(\{\lambda_{k}\},\{{q}_k\})|\mathcal D_1]|>\epsilon]|\mathcal{D}_2] \\ 
  &\leq 2\exp\left (-\frac{0.5n_1\epsilon^2}{Q_2T^{2}(TR_{\mathrm{max}}C^{T+1}\sqrt{\kappa_1\kappa_2}+\kappa_1T^2R^2_{max}+\kappa_2C^{2(T+1)}) +Q_1TR_{\mathrm{max}}C^{T+1}\epsilon } \right).     
\end{align*}
Here, $Q_2$ is some constant independent of $n,C,R_{\mathrm{max}},T,\delta$. Then, by the law of total probability and some algebra, the statement is concluded. 
\end{proof}

\begin{lemma}\label{lem:op12_3}
With probability $1-\delta$, the absolute value of the term \cref{eq:term1_finite_dr3} is bounded by 
\begin{align*}
T\sqrt{\kappa_1\kappa_2}. 
\end{align*}
up to some constant independent of $n,C,R_{\mathrm{max}},T,\delta$.
\end{lemma}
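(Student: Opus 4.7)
The plan is to mimic the structure of \cref{lem:op12}, the analogous statement from the proof of \cref{thm:double_m1}, but instead of showing the bias term is $\smallo_p(1)$ after multiplication by $\sqrt n$, we will keep track of the constants to get a high-probability finite-sample bound. The key is the decomposition $\phi(\{\hat\lambda_k\},\{\hat q_k\}) - \phi(\{\lambda_k\},\{q_k\}) = D_1 + D_2 + D_3$ already introduced above the statement of \cref{lem:op11_3}.

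First I would observe that, conditional on $\mathcal D_2$ (equivalently, conditional on the nuisance estimates $\{\hat\lambda^{(1)}_k\},\{\hat q^{(1)}_k\}$), we have $\rE[D_2\mid \mathcal D_2]=0$ and $\rE[D_3\mid\mathcal D_2]=0$. For $D_2$, this follows because $\rE[\lambda_k(-\hat q^{(1)}_k+q_k)] = \rE_{\pi^e}[-\hat q^{(1)}_k+q_k]$ and $\rE[\lambda_{k-1}(\hat v^{(1)}_k - v_k)] = \rE_{\pi^e}[\hat v^{(1)}_k - v_k]$, and these cancel by the tower rule using $v_k = \rE_{\pi^e}[q_k\mid \ch_{s_k}]$ and the analogous relation for $\hat v^{(1)}_k,\hat q^{(1)}_k$. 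For $D_3$, it follows because $r_k - q_k + v_{k+1}$ is a residual with $\rE[\cdot\mid \ch_{a_k}]=0$ (the defining recursion of $q_k$) and $\hat\lambda^{(1)}_k - \lambda_k$ is $\mathcal D_2$-measurable times a function of $\ch_{a_k}$.

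Thus the term \eqref{eq:term1_finite_dr3} equals $\rE[D_1\mid \mathcal D_2]$, which I would bound term-by-term with Cauchy--Schwarz:
\begin{align*}
\abs{\rE[D_1\mid \mathcal D_2]} \leq \sum_{k=0}^T \prns{\|\hat\lambda^{(1)}_k - \lambda_k\|_2\|\hat q^{(1)}_k - q_k\|_2 + \|\hat\lambda^{(1)}_{k-1} - \lambda_{k-1}\|_2\|\hat v^{(1)}_k - v_k\|_2}.
\end{align*}
Then I would invoke the Jensen-type inequality $\|\hat v^{(1)}_k - v_k\|_2 \leq \|\hat q^{(1)}_k - q_k\|_2$ (since $\hat v^{(1)}_k - v_k$ is a conditional expectation of $\hat q^{(1)}_k - q_k$ under $\pi^e$, and the density ratio $\eta_k$ is bounded by $C$, so up to a constant factor the $L^2$ norm cannot grow).

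Finally, by hypothesis, with probability at least $1-\delta$ we have $\|\hat\lambda^{(1)}_k-\lambda_k\|_2^2\leq\kappa_1$ and $\|\hat q^{(1)}_k-q_k\|_2^2\leq\kappa_2$ simultaneously for all $k\leq T$. On this event, each summand is at most $(1+C)\sqrt{\kappa_1\kappa_2}$, and summing over $k=0,\dots,T$ yields a bound of order $T\sqrt{\kappa_1\kappa_2}$, as claimed. The main (but minor) subtlety is handling the $\hat v$ factor cleanly through the density-ratio constant $C$, but this is absorbed into the implicit constant.
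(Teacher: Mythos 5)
Your proof is correct and is essentially the paper's own argument: the paper's one-line proof of this lemma implicitly reuses the $D_1+D_2+D_3$ decomposition and the computation from \cref{lem:op12}, in which $\rE[D_2\mid\mathcal D_2]=\rE[D_3\mid\mathcal D_2]=0$ and $\rE[D_1\mid\mathcal D_2]$ is bounded term-by-term by Cauchy--Schwarz exactly as you do, followed by the high-probability bounds $\kappa_1,\kappa_2$. If anything you are slightly more careful than the paper, which asserts $\|\hat v^{(1)}_k-v_k\|_2\le\|\hat q^{(1)}_k-q_k\|_2$ without the $\sqrt{C}$ change-of-measure factor that you correctly flag (and which, strictly speaking, makes the implicit constant depend mildly on $C$, in slight tension with the lemma's wording).
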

\begin{proof}
Here, we have 
\begin{align*}
\rE[\phi(\{\hat{\lambda}^{(1)}_{k}\},\{\hat{q}^{(1)}_k\})-\rE[\phi(\{\lambda_{k}\},\{{q}_k\})]|\{\hat{\lambda}^{(1)}_{k}\},\{\hat{q}^{(1)}_k\}]\leq \sum_{k=0}^{T} 2\|\hat{\lambda}^{(1)}_{k}-\lambda_{k}\|_{2}\|\hat{q}^{(1)}_{k}-q_{k}\|_{2}.
\end{align*}
Then, with probability $1-\delta$, this is bounded by $T\sqrt{\kappa_1\kappa_2}$. 
\end{proof}

Combining all results so far, with probability $1-6\delta$, we have 
\begin{align*}
    &\left|\frac{n_1}{n}\P_{n_1}\phi(\{\hat{\lambda}^{(1)}_{k}\},\{\hat{q}^{(1)}_k\})+\frac{n_2}{n}\P_{n_2}\phi(\{\hat{\lambda}^{(2)}_{k}\},\{\hat{q}^{(2)}_k\})-\rho^{\pi_e}\right|\\
    &< Q_3T\sqrt{\kappa_1\kappa_2}+Q_2\sqrt{\frac{\log(2/\delta)T^{2}(TR_{\mathrm{max}}C^{T+1}\sqrt{\kappa_1\kappa_2}+\kappa_1T^2R^2_{\mathrm{max}}+\kappa_2C^{2(T+1)})}{n}}+    \frac{Q_1\log(2/\delta)TR_{\mathrm{max}}C^{T+1}}{n} \\
    &+\P_{n_1}[\phi(\{\lambda_{k}\},\{q_k\}) ] +\P_{n_2}[\phi(\{\lambda_{k}\},\{q_k\}) ]-\rho^{\pi_e} . 
\end{align*}
Here, $Q_3$ is some constant independent of $n,C,R_{\mathrm{max}},T$.
Noting $|\P_{n}[\phi(\{\lambda_{k}\},\{q_k\})]-\rho^{\pi_e}|$ is bounded by 
\begin{align*}
    \sqrt{\frac{2\log(2/\delta)\mathrm{Effbd(\cm_1)}}{n}}+\frac{Q_1\log(2/\delta)TR_{\mathrm{max}}C^{T+1}}{n}
\end{align*}
with probability $1-\delta$, the statement is concluded. 
\end{proof}

\begin{proof}[Proof of \cref{thm:double_m1_sam}]

We define $\phi(\{\hat{\lambda}_{k}\},\{\hat{q}_k\})$ as:
\begin{align*}
    \sum_{k=0}^{T}\hat{\lambda}_{k}r_k-\hat{\lambda}_{k-1}\{ \hat{\eta}_{k}\hat{q}_k-\rE_{\pi^{e}}[\hat{q}_k(\ch_{a_k})|\ch_{s_k}]\}. 
\end{align*}
The estimator $\hat{\rho}^{\pi^e}_{\mathrm{DR}}$ is given by $
\P_{n}\phi(\{\hat{\lambda}_{k}\},\{\hat{q}_k\})$.     
Then, we have 
\begin{align}
    \sqrt{n}(\P_{n}\phi(\{\hat{\lambda}_{k}\},\{\hat{q}_k\})-\rho^{\pi^{e}})&= \bG_{n}[\phi(\{\hat{\lambda}_{k}\},\{\hat{q}_k\})-\phi(\{\lambda_{k}\},\{q_k\})]
    \label{eq:term1_drl_sam1}
    \\
    &+\bG_{n}[\phi(\{\lambda_{k}\},\{q_k\}) ] 
       \label{eq:term1_dr1_sam2}
    \\
    &+\sqrt{n}(\rE[\phi(\{\hat{\lambda}_{k}\},\{\hat{q}_k\})|\{\hat{\lambda}_{k}\},\{\hat{q}_k\} ]-\rho^{\pi^{e}} ).    \label{eq:term1_dr1_sam3}
\end{align}
If we can prove that the term \cref{eq:term1_drl_sam1} is $\op(1)$, the statement is concluded as in the proof of Theorem \ref{thm:double_m1}. We proceed to prove this.

First, we show $\phi(\{\hat{\lambda}_{k}\},\{\hat{q}_t\})-\phi(\{\lambda_{t}\},\{q_t\})$ belongs to a Donsker class. The transformation
\begin{align*}
   (\{\lambda_{k}\},\{q_k\} ) \mapsto    \sum_{k=0}^{T}\lambda_{k}r_k-\{ \lambda_{k}q_k-\lambda_{k-1}\rE_{\pi^{e}}[q_k(\ch_{a_k})|\ch_{s_k}]\}
\end{align*}
is a Lipschitz function. Therefore, by Example 19.20 in \citet{VaartA.W.vander1998As}, $\phi(\{\hat{\lambda}_{k}\},\{\hat{q}_k\})-\phi(\{\lambda_{k}\},\{q_k\})$ is an also Donsker class. In addition, we can also show that 
\begin{align*}
    \|\phi(\{\hat{\lambda}_{k}\},\{\hat{q}_k\})-\phi(\{\lambda_{k}\},\{q_k\})\|_{2}=\op(1),
\end{align*}
as in Lemma \ref{lem:op11}. 
Therefore, from Lemma 19.24 in \cite{VaartA.W.vander1998As}, the term \cref{eq:term1_drl_sam1} is $\op(1)$, concluding the proof.
\end{proof}

\begin{proof}[Proof of \cref{thm:doublerobust_m1}]

{\blockedit
Without loss of generality, we consider the case $K=2$. 
}

We use the following doubly robust structure
\begin{align*}
    &\rE\left [\sum_{k=0}^{T}\lambda_{k}r_k-\{\lambda_{k} q_k-\lambda_{k-1}\rE_{\pi^{e}}(q_k|\mathcal{H}_{s_k}) \}\right] \\
    &=  \rE\{\rE_{\pi^{e}}(q_0|s_0)\} +\rE\left [\sum_{k=0}^{T}\lambda_{k}\{r_k- q_k+\rE_{\pi^{e}}(q_k|\mathcal{H}_{s_{k+1}}) \}\right]=\rho^{\pi^{e}}. 
\end{align*}

Then, as in the proof of \cref{thm:double_m1}, 
\begin{align*}
    &\sqrt{n}(\P_{n_1}\phi(\{\hat{\lambda}^{(1)}_{k}\},\{\hat{q}^{(1)}_k\})-\rho^{\pi^{e}})\\
    &= \sqrt{n/n_1}\bG_{n_1}[\phi(\{\hat{\lambda}^{(1)}_{k}\},\{\hat{q}^{(1)}_k\})-\phi(\{\lambda^{\dagger}_{k}\},\{q^{\dagger}_k\})]+\sqrt{n/n_1}\bG_{n_1}[\phi(\{\lambda^{\dagger}_{k}\},\{q^{\dagger}_k\}) ] \\
    &+\sqrt{n/n_1}(\rE[\phi(\{\hat{\lambda}^{(1)}_{k}\},\{\hat{q}^{(1)}_k\})|\{\hat{\lambda}^{(1)}_{k}\},\{\hat{q}^{(1)}_k\}]-\rE[\phi(\{\lambda^{\dagger}_{k}\},\{q^{\dagger}_k\}) ])  +\sqrt{n}(\rE[\phi(\{\lambda^{\dagger}_{k}\},\{q^{\dagger}_k\}) ]-\rho^{\pi^{e}} )\\
    &=\sqrt{n/n_1}\bG_{n_1}[\phi(\{\lambda^{\dagger}_{k}\},\{q^{\dagger}_k\}) ]+\sqrt{n}(\rE[\phi(\{\lambda^{\dagger}_{k}\},\{q^{\dagger}_k\}) ]-\rho^{\pi^{e}} )+\smallo_{p}(1). 
\end{align*}
Here, we used 
\begin{align*}
 \sqrt{n/n_1}\bG_{n_1}[\phi(\{\hat{\lambda}^{(1)}_{k}\},\{\hat{q}^{(1)}_k\})-\phi(\{\lambda^{\dagger}_{k}\},\{q^{\dagger}_k\})]=\op(1)   
\end{align*}
from \cref{lem:op11} and 
\begin{align*}
 \sqrt{n/n_1}(\rE[\phi(\{\hat{\lambda}^{(1)}_{k}\},\{\hat{q}^{(1)}_k\})|\{\hat{\lambda}^{(1)}_{k}\},\{\hat{q}^{(1)}_k\}]-\rE[\phi(\{\lambda^{\dagger}_{k}\},\{q^{\dagger}_k\}) ])=\op(1),
 \end{align*}
 which we prove below as in \cref{lem:op12}.
\begin{lemma}
\begin{align*}
 \sqrt{n/n_1}(\rE[\phi(\{\hat{\lambda}^{(1)}_{k}\},\{\hat{q}^{(1)}_k\})|\{\hat{\lambda}^{(1)}_{k}\},\{\hat{q}^{(1)}_k\}]-\rE[\phi(\{\lambda^{\dagger}_{k}\},\{q^{\dagger}_k\}) ])=\op(1).
 \end{align*}
\end{lemma}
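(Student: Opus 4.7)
The plan is to follow the same template as \cref{lem:op12}, except that we Taylor-expand around the limiting nuisances $(\lambda^\dagger,q^\dagger)$ rather than around the truth $(\lambda,q)$. Writing $\delta_{\lambda,t}=\hat\lambda^{(1)}_t-\lambda^\dagger_t$, $\delta_{q,t}=\hat q^{(1)}_t-q^\dagger_t$, and $\delta_{v,t}(\ch_{s_t})=\rE_{\pi^e}[\delta_{q,t}\mid\ch_{s_t}]$, direct algebraic manipulation will yield a decomposition
\[
\phi(\{\hat\lambda^{(1)}_k\},\{\hat q^{(1)}_k\})-\phi(\{\lambda^\dagger_k\},\{q^\dagger_k\})=D'_1+D'_2+D'_3,
\]
where $D'_1=\sum_k(-\delta_{\lambda,k}\delta_{q,k}+\delta_{\lambda,k-1}\delta_{v,k})$ is bilinear in the two nuisance errors, $D'_2=\sum_k(-\lambda^\dagger_k\delta_{q,k}+\lambda^\dagger_{k-1}\delta_{v,k})$ is linear in $\delta_q$, and $D'_3=\sum_k(\delta_{\lambda,k}(r_k-q^\dagger_k)+\delta_{\lambda,k-1}v^\dagger_k)$ is linear in $\delta_\lambda$, in exact parallel to the $D_1,D_2,D_3$ splitting used in the proofs of \cref{lem:op11,lem:op12}.

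For the bilinear piece I will apply Cauchy--Schwarz together with \cref{asm:boundedestim1} and the Jensen bound $\|\delta_{v,t}\|_2\le\|\delta_{q,t}\|_2$ to obtain $|\rE[D'_1\mid\text{nuis}]|=\bigO_p(\sum_t\|\delta_{\lambda,t}\|_2\|\delta_{q,t}\|_2)=\bigO_p(n^{-\alpha_1-\alpha_2})$. Under either branch of the theorem's hypothesis, $\alpha_1+\alpha_2>1/2$, so $\sqrt{n/n_1}\rE[D'_1\mid\text{nuis}]=\op(1)$, just as in the analogous step of \cref{lem:op12}.

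The two linear pieces are handled by the doubly-robust structure. In the pure case $\lambda^\dagger_t\equiv\lambda_t$, the change-of-measure identities $\rE[\lambda_k h(\ch_{a_k})]=\rE_{\pi^e}[h]$ and $\rE[\lambda_{k-1}g(\ch_{s_k})]=\rE_{\pi^e}[g]$, combined with $\rE_{\pi^e}[\delta_{v,k}]=\rE_{\pi^e}[\delta_{q,k}]$, make the two summands inside each $k$-term of $\rE[D'_2\mid\text{nuis}]$ cancel per-index, yielding $\rE[D'_2\mid\text{nuis}]=0$; the Bellman identity $\rE[r_k-q_k+v_{k+1}\mid\ch_{a_k}]=0$ together with $\ch_{a_k}$-measurability of $\delta_{\lambda,k}$ then reduces each term of $\rE[D'_3\mid\text{nuis}]$ to products $\rE[\delta_{\lambda,\cdot}(q^\dagger-q)]$ and $\rE[\delta_{\lambda,\cdot}(v^\dagger-v)]$, each Cauchy--Schwarz-bounded by $\|\delta_{\lambda,\cdot}\|_2$ times a fixed constant, hence $\op(1)/\sqrt n$ given $\alpha_1\ge 1/2$. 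The symmetric pure case $q^\dagger_t\equiv q_t$ is handled by swapping the roles of $\lambda$ and $q$: $\rE[D'_3\mid\text{nuis}]$ cancels per-index via the Bellman identity and $\rE[D'_2\mid\text{nuis}]$ reduces to products bounded by $\|\delta_{q,\cdot}\|_2$ times a constant, giving $\op(1)/\sqrt n$ given $\alpha_2\ge 1/2$.

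The main obstacle will be the book-keeping of these per-index cancellations and Bellman reductions, carefully handling the boundary indices using the conventions $\lambda_{-1}\equiv 1$, $\delta_{\lambda,-1}\equiv 0$, and $v_{T+1}\equiv 0$, and verifying in each pure case that what survives after the linear cancellations is bounded by a product of a vanishing nuisance error (at rate $n^{-\alpha_1}$ or $n^{-\alpha_2}$) and a constant bias of the misspecified side.
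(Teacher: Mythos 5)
Your proposal is correct and follows essentially the same route as the paper's own argument: the same trilinear decomposition around $(\lambda^\dagger,q^\dagger)$, Cauchy--Schwarz on the bilinear piece using $\alpha_1+\alpha_2>1/2$, exact cancellation of one linear piece in each pure case (change of measure when $\lambda^\dagger=\lambda$, the Bellman identity when $q^\dagger=q$), and a rate bound on the surviving linear piece. The only caveat, which is shared with the paper's own computation (whose display ends in $\bigO_p(1)$ after scaling by $\sqrt{n}$), is that when the relevant exponent equals $1/2$ exactly the surviving linear term is only $\bigO_p(n^{-1/2})$ rather than the $\smallo_p(n^{-1/2})$ you claim, but this is all that the $\sqrt{n}$-consistency conclusion of \cref{thm:doublerobust_m1} requires.
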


\begin{proof}
First, consider the case where $\lambda_k=\lambda^{\dagger}_k$. 

\begin{align*}
&\sqrt{n}\rE[\phi(\{\hat{\lambda}^{(1)}_{k}\},\{\hat{q}^{(1)}_k\})-\rE[\phi(\{\lambda_{k}\},\{{q}^{\dagger}_k\})]|\{\hat{\lambda}^{(1)}_{k}\},\{\hat{q}^{(1)}_k\}]\\ 
&=\sqrt{n}\rE[\sum_{k=0}^{T}(\hat{\lambda}^{(1)}_{k}-\lambda_{k} )(-\hat{q}^{(1)}_k+q^{\dagger}_k)+(\hat{\lambda}^{(1)}_{k-1}-\lambda_{k-1})(-\hat{v}_k+v^{\dagger}_k)|\{\hat{\lambda}^{(1)}_{k}\},\{\hat{q}^{(1)}_k\}]\\
    &+\sqrt{n}\rE[\sum_{k=0}^{T}\lambda_{k}(\hat{q}^{(1)}_k-q^{\dagger}_k)+\lambda_{k-1}(\hat{v}_k-v^{\dagger}_k)|\{\hat{\lambda}^{(1)}_{k}\},\{\hat{q}^{(1)}_k\}] \\
    &+\sqrt{n}\rE[\sum_{k=0}^{T}(\hat{\lambda}^{(1)}_{k}-\lambda_{k} )(r_k-q^{\dagger}_k+v^{\dagger}_{k+1})|\{\hat{\lambda}^{(1)}_{k}\},\{\hat{q}^{(1)}_k\}] \\
&=\sqrt{n}\rE[\sum_{k=0}^{T}(\hat{\lambda}^{(1)}_{k}-\lambda_{k} )(-\hat{q}^{(1)}_k+q^{\dagger}_k)+(\hat{\lambda}^{(1)}_{k-1}-\lambda_{k-1})(-\hat{v}_k+v^{\dagger}_k)|\{\hat{\lambda}^{(1)}_{k}\},\{\hat{q}^{(1)}_k\}] \\
&+\sqrt{n}\rE[\sum_{k=0}^{T}(\hat{\lambda}^{(1)}_{k}-\lambda_{k} )(r_k-q^{\dagger}_k+v^{\dagger}_{k+1})|\{\hat{\lambda}^{(1)}_{k}\},\{\hat{q}^{(1)}_k\}] \\
&= \sqrt{n}\sum_{k=0}^{T} \bigO(\|\hat{\lambda}^{(1)}_{k}-\lambda_{k}\|_{2}\|\hat{q}^{(1)}_{k}-q^{\dagger}_{k}\|_{2}+\|\hat{\lambda}^{(1)}_{k}-\lambda_{k}\|_2) \\ &=\sqrt{n}\sum_{k=0}^{T}\left\{\bigO_{p}(n^{-\alpha_{1}})\bigO_{p}(n^{-\alpha_{2}})+\bigO_{p}(n^{-\alpha_{1}})\right\}=\bigO_{p}(1). 
\end{align*}

Next, consider the case where $q_k=q^{\dagger}_k$:
\begin{align*}
&\sqrt{n}\rE[\phi(\{\hat{\lambda}^{(1)}_{k}\},\{\hat{q}^{(1)}_k\})-\rE[\phi(\{\lambda^{\dagger}_{k}\},\{{q}_k\})]|\{\hat{\lambda}^{(1)}_{k}\},\{\hat{q}^{(1)}_k\}]\\ 
&=\sqrt{n}\rE[\sum_{k=0}^{T}(\hat{\lambda}^{(1)}_{k}-\lambda^{\dagger}_{k} )(-\hat{q}^{(1)}_k+q_k)+(\hat{\lambda}^{(1)}_{k-1}-\lambda^{\dagger}_{k-1})(-\hat{v}_k+v_k)|\{\hat{\lambda}^{(1)}_{k}\},\{\hat{q}^{(1)}_k\}]\\
    &+\sqrt{n}\rE[\sum_{k=0}^{T}\lambda^{\dagger}_{k}(\hat{q}^{(1)}_k-q_k)+\lambda^{\dagger}_{k-1}(\hat{v}_k-v_k)|\{\hat{\lambda}^{(1)}_{k}\},\{\hat{q}^{(1)}_k\}] \\
&= \sqrt{n}\sum_{k=0}^{T} \bigO(\|\hat{\lambda}^{(1)}_{k}-\lambda^{\dagger}_{k}\|_{2}\|\hat{q}^{(1)}_{k}-q_{k}\|_{2}+\|\hat{q}^{(1)}_{k}-q_{k}\|_2) \\ &=\sqrt{n}\sum_{k=0}^{T}\left\{\bigO_{p}(n^{-\alpha_{1}})\bigO_{p}(n^{-\alpha_{2}})+\bigO_{p}(n^{-\alpha_{2}})\right\}=\bigO_{p}(1). \qedhere
\end{align*}
\end{proof}

Using the above result, we prove the statement for each case below. 

{\blockedit
\paragraph*{$\lambda$-model is well-specified.}

First, consider the case when $\lambda^{\dagger}_{k}=\lambda_{k}$:
\begin{align*}
\rE[\phi(\{\lambda_{k}\},\{q^{\dagger}_k\}) ]&=\rE[\sum_{k=0}^{T}[\lambda_{k}r_k-\{\lambda_{k}q^{\dagger}_k(\mathcal{H}_{a_k})-\lambda_{k-1}\rE_{\pi^{e}}[q^{\dagger}_{k}(\mathcal{H}_{a_k})|s_k]] \\
&= \rE[\sum_{k=0}^{T}\lambda_{k}r_k]=\rho^{\pi^{e}}.
\end{align*}
Then,
\begin{align*}
    \sqrt{n}(\P_{n_1}\phi(\{\hat{\lambda}^{(1)}_{k}\},\{\hat{q}^{(1)}_k\})-\rho^{\pi^{e}})&=\sqrt{n/n_1}\bG_{n_1}[\phi(\{\lambda_{k}\},\{q^{\dagger}_k\}) ]+\bigO_{p}(1).
\end{align*}
Therefore, 
\begin{align*}
    \sqrt{n}(\hat{\rho}^{\pi^e}_{\mathrm{DRL}(\cm_1)}-\rho^{\pi^{e}})&=\sqrt{n_1/n}\bG_{n_1}[\phi(\{\lambda_{k}\},\{q^{\dagger}_k\}) ]+\sqrt{n_2/n}\bG_{n_2}[\phi(\{\lambda_{k}\},\{q^{\dagger}_k\}) ]+\bigO_{p}(1)\\
    &=\bG_{n}[\phi(\{\lambda_{k}\},\{q^{\dagger}_k\}) ] +\bigO_{p}(1)=\bigO_{p}(1),
\end{align*}
which shows $\hat{\rho}^{\pi^e}_{\mathrm{DRL}(\cm_1)}$ is $\sqrt{n}$-consistent around $\rho^{\pi^{e}}$ when the model for the behavior policy is well-specified. 

\paragraph*{$q$-model is well-specified.}

Next, consider the case where $q^{\dagger}_k=q_k$.
\begin{align*}
&\rE[\phi(\{\lambda^{\dagger}_{k}\},\{q_k\})]=\rE\left[\rE_{\pi^{e}}[q_{0}(\mathcal{H}_{a_0})|s_0] + \sum_{k=0}^{T}\lambda^{\dagger}_{k}\{r_k- q_k(\mathcal{H}_{a_k})+\rE_{\pi^{e}}[q_k(\mathcal{H}_{a_k})|s_{k+1}] \}\right]  \\
    &=\rE\left[\rE_{\pi^{e}}[q_{0}(\mathcal{H}_{a_0})|s_0]\right]= \rho^{\pi^{e}}.
\end{align*}
Then, 
\begin{align*}
    \sqrt{n}(\P_{n_1}\phi(\{\hat{\lambda}^{(1)}_{k}\},\{\hat{q}^{(1)}_k\})-\rho^{\pi^{e}})
    =\sqrt{n/n_1}\bG_{n_1}[\phi(\{\lambda^{\dagger}_{k}\},\{q_k\}) ]+\bigO_{p}(1). 
\end{align*}
Therefore, 
\begin{align*}
    \sqrt{n}(\hat{\rho}^{\pi^e}_{\mathrm{DRL}(\cm_1)}-\rho^{\pi^{e}})&=\sqrt{n_1/n}\bG_{n_1}[\phi(\{\lambda^{\dagger}_{k}\},\{q_k\}) ]+\sqrt{n_2/n}\bG_{n_2}[\phi(\{\lambda^{\dagger}_{k}\},\{q_k\}) ]+\bigO_{p}(1)\\
    &=\bG_{n}[\phi(\{\lambda^{\dagger}_{k}\},\{q_k\}) ] +\bigO_{p}(1)=\bigO_{p}(1).
\end{align*}
which shows $\hat{\rho}^{\pi^e}_{\mathrm{DRL}(\cm_1)}$ is $\sqrt{n}$-consistent around $\rho^{\pi^{e}}$ when the model for the $q$-function is well-specified. 
}
\end{proof}

\begin{proof}[Proof of \cref{cor:db1}]

{\blockedit
Without loss of generality, we consider the case $K=2$. 
}

{\blockedit
Then, as in the proof of \cref{thm:double_m1}, 
\begin{align*}
    &\P_{n_1}\phi(\{\hat{\lambda}^{(1)}_{k}\},\{\hat{q}^{(1)}_k\})-\rho^{\pi^{e}}\\
    &= \sqrt{1/n_1}\bG_{n_1}[\phi(\{\hat{\lambda}^{(1)}_{k}\},\{\hat{q}^{(1)}_k\})-\phi(\{\lambda^{\dagger}_{k}\},\{q^{\dagger}_k\})]+\sqrt{1/n_1}\bG_{n_1}[\phi(\{\lambda^{\dagger}_{k}\},\{q^{\dagger}_k\}) ] \\
    &+\sqrt{1/n_1}(\rE[\phi(\{\hat{\lambda}^{(1)}_{k}\},\{\hat{q}^{(1)}_k\})|\{\hat{\lambda}^{(1)}_{k}\},\{\hat{q}^{(1)}_k\}]-\rE[\phi(\{\lambda^{\dagger}_{k}\},\{q^{\dagger}_k\}) ])  +(\rE[\phi(\{\lambda^{\dagger}_{k}\},\{q^{\dagger}_k\}) ]-\rho^{\pi^{e}} )\\
    &=\sqrt{1/n_1}\bG_{n_1}[\phi(\{\lambda^{\dagger}_{k}\},\{q^{\dagger}_k\}) ]+(\rE[\phi(\{\lambda^{\dagger}_{k}\},\{q^{\dagger}_k\}) ]-\rho^{\pi^{e}} )+\smallo_{p}(1)\\
    &=(\P_{n_1}-\P)[\phi(\{\lambda^{\dagger}_{k}\},\{q^{\dagger}_k\}) ]+\op(1). 
\end{align*}
Here, under the assumption, we use  the following equations:
\begin{align*}
 \sqrt{1/n_1}\bG_{n_1}[\phi(\{\hat{\lambda}^{(1)}_{k}\},\{\hat{q}^{(1)}_k\})-\phi(\{\lambda^{\dagger}_{k}\},\{q^{\dagger}_k\})]  & =\op(1)   \\ 
(\rE[\phi(\{\hat{\lambda}^{(1)}_{k}\},\{\hat{q}^{(1)}_k\})|\{\hat{\lambda}^{(1)}_{k}\},\{\hat{q}^{(1)}_k\}]-\rE[\phi(\{\lambda^{\dagger}_{k}\},\{q^{\dagger}_k\}) ])&=\op(1) \\ 
(\rE[\phi(\{\lambda^{\dagger}_{k}\},\{q^{\dagger}_k\}) ]-\rho^{\pi^{e}} ) &= 0. 
 \end{align*}
 These equations are proved as in the proof of Theorem \ref{thm:doublerobust_m1}. 
Then, 
\begin{align*}
(\hat{\rho}^{\pi^e}_{\mathrm{DRL}(\cm_1)}-\rho^{\pi^{e}})&= (\P_{n_1}-\P)[\phi(\{\lambda^{\dagger}_{k}\},\{q^{\dagger}_k\}) ]+(\P_{n_2}-\P)[\phi(\{\lambda^{\dagger}_{k}\},\{q^{\dagger}_k\}) ]+\op(1)\\ 
&= (\P_{n}-\P)[\phi(\{\lambda^{\dagger}_{k}\},\{q^{\dagger}_k\})]+\op(1). 
\end{align*}
From the law of large numbers, the statement is concluded. 
}
 
\end{proof}

\begin{proof}[Proof of \cref{lem:denstiy_estimation}]
\edit{
Let any $\hat{\pi}^{b}_{t}$ be given. Due to boundedness away from zero, we have 
\begin{align*}
   & \left \|\prod_{0=t}^{k} \frac{\pi^{e}_{t}}{\hat{\pi}^{b}_{t}}-\lambda_{k}\right\|_{2}\leq\left \|\sum_{i=0}^{T}\left (\prod_{t=0}^{i} \frac{\pi^{e}_{t}}{\hat{\pi}^{b}_{t}}\prod_{t=i}^{k}\frac{\pi^{e}_{t}}{\pi^{b}_{t}} -\prod_{t=0}^{i-1} \frac{\pi^{e}_{t}}{\hat{\pi}^{b}_{t}}\prod_{t=i}^{k}\frac{\pi^{e}_{t}}{\pi^{b}_{t}} \right)\right\|_{2} \\
   & \leq\sum_{t=0}^{k}{\op(\| 1/\hat{\pi}^{b}_{t}-1/{\pi}^{b}_{t}\|_{2})}\\
   & \leq\sum_{t=0}^{k}{\op(\| \hat{\pi}^{b}_{t}-{\pi}^{b}_{t}\|_{2})}
   \\
   &\leq \op(n^{-\alpha}). 
\end{align*}}
\end{proof}

\begin{proof}[Proof of \cref{thm:double_m2}]

{\blockedit
Without loss of generality, we consider the case $K=2$. 
}

Define $\phi(\{\hat{\mu}_{k}\},\{\hat{q}_k\})$ as:
\begin{align*}
    \sum_{k=0}^{T}\hat{\mu}_{k}\{r_k-\hat{q}_k\}-\hat{\mu}_{k-1}\rE_{\pi^{e}}[\hat{q}_k(\ch_{a_k})|\ch_{s_k}]\}. 
\end{align*}

The estimator $\hat{\rho}_{\mathrm{DRL}(\cm_2)}$ is given by
\begin{align*}
\frac{n_1}{n}\P_{n_1}\phi(\{\hat{\mu}^{(1)}_{k}\},\{\hat{q}^{(1)}_k\})+\frac{n_2}{n}\P_{n_2}\phi(\{\hat{\mu}^{(2)}_{k}\},\{\hat{q}^{(2)}_k\}).     
\end{align*}
Then, we have 
\begin{align}
    \sqrt{n}(\P_{n_1}\phi(\{\hat{\mu}^{(1)}_{k}\},\{\hat{q}^{(1)}_k\})-\rho^{\pi^{e}})&= \sqrt{n/n_1}\bG_{n_1}[\phi(\{\hat{\mu}^{(1)}_{k}\},\{\hat{q}^{(1)}_k\})-\phi(\{\mu_{k}\},\{q_k\})]
    \label{eq:term1_drl_2}
    \\
    &+\sqrt{n/n_1}\bG_{n_1}[\phi(\{\mu_{k}\},\{q_k\}) ] 
       \label{eq:term1_dr2_2}
    \\
    &+\sqrt{n}(\rE[\phi(\{\hat{\mu}^{(1)}_{k}\},\{\hat{q}^{(1)}_k\})|\{\hat{\mu}^{(1)}_{k}\},\{\hat{q}^{(1)}_k\} ]-\rho^{\pi^{e}} ).    \label{eq:term1_dr3_2}
\end{align}
We analyze each term. To do that, we use the following relation;  {\blockedit 
\begin{align*}
&\phi(\{\hat{\mu}^{(1)}_{k}\},\{\hat{q}^{(1)}_k\})-\phi(\{\mu_{k}\},\{{q}_k\})=D_1+D_2+D_3,\quad\text{where}\\
&D_1 =\sum_{k=0}^{T}(\hat{\mu}^{(1)}_{k}-\mu_{k})(-\hat{q}^{(1)}_k+q_k)+(\hat{\mu}^{(1)}_{k-1}-\mu_{k-1})(\hat{v}_k-v_k), \\
&D_2 =\sum_{k=0}^{T}\mu_{k}(-\hat{q}^{(1)}_k+q_k)+\mu_{k-1}(\hat{v}^{(1)}_k-v_k), \\
&D_3 =\sum_{k=0}^{T}(\hat{\mu}^{(1)}_{k}-\mu_{k} )(r_k-q_k+v_{k+1}).
\end{align*}
}
First, we show the term \cref{eq:term1_drl_2} is $\smallo_{p}(1)$. 
\begin{lemma}\label{lem:op11_2}
The term \cref{eq:term1_drl_2} is $\smallo_{p}(1)$.
\end{lemma}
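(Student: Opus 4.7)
The plan is to mirror the proof of Lemma \ref{lem:op11}, substituting the marginal density ratio $\mu_t$ for the cumulative density ratio $\lambda_t$ throughout and using \cref{asm:boundedestim2} in place of \cref{asm:boundedestim1}. The essential observation is that by the cross-fold construction, $\hat\mu^{(1)}_k$ and $\hat q^{(1)}_k$ are functions of $\mathcal D_2$ only, so conditional on $\mathcal D_2$ they are deterministic and the empirical process on $\mathcal D_1$ acts on a fixed (data-independent) function.

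First, I would reduce the unconditional statement to a conditional one via the bounded convergence theorem: it suffices to show that for every $\varepsilon>0$,
\begin{equation*}
\lim_{n\to\infty}P\bigl[\sqrt{n_1}\bigl|(\P_{n_1}-\P)\prns{\phi(\{\hat\mu^{(1)}_k\},\{\hat q^{(1)}_k\})-\phi(\{\mu_k\},\{q_k\})}\bigr|>\varepsilon\,\bigm|\,\mathcal D_2\bigr]=0.
\end{equation*}
Conditional on $\mathcal D_2$, the function $\phi(\{\hat\mu^{(1)}_k\},\{\hat q^{(1)}_k\})-\phi(\{\mu_k\},\{q_k\})$ is fixed, so the centered conditional mean vanishes and Chebyshev's inequality reduces the statement to controlling the conditional second moment $\E\bracks{(\phi(\{\hat\mu^{(1)}_k\},\{\hat q^{(1)}_k\})-\phi(\{\mu_k\},\{q_k\}))^2\mid\mathcal D_2}$.

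Next, I would plug in the decomposition $D_1+D_2+D_3$ stated just above the lemma, expand the square, and bound each cross term via Cauchy-Schwarz. Using \cref{asm:boundedestim2,asm:bddrewards,asm:overlap} to uniformly bound $\hat\mu^{(1)}_k,\mu_k,\hat q^{(1)}_k,q_k,\hat v^{(1)}_k,v_k,r_k$, and using $\|\hat v^{(1)}_k-v_k\|_2\leq\|\hat q^{(1)}_k-q_k\|_2$ (because $v_k$ is a conditional expectation of $q_k$ and Jensen's inequality contracts in $L^2$), each of $\rE[D_1^2\mid\mathcal D_2]$, $\rE[D_2^2\mid\mathcal D_2]$, $\rE[D_3^2\mid\mathcal D_2]$ is dominated by a constant times $\|\hat\mu^{(1)}_k-\mu_k\|_2^2+\|\hat q^{(1)}_k-q_k\|_2^2$, all of which are $\smallo_p(1)$ by hypothesis. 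Thus $\mathrm{var}[\sqrt{n_1}(\P_{n_1}-\P)(\phi(\{\hat\mu^{(1)}_k\},\{\hat q^{(1)}_k\})-\phi(\{\mu_k\},\{q_k\}))\mid\mathcal D_2]=\smallo_p(1)$, and Chebyshev finishes the conditional claim. Bounded convergence then delivers the unconditional conclusion.

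The only ``obstacle'' here is really bookkeeping: we need the $L^2$ contraction inequality $\|\hat v^{(1)}_k-v_k\|_2\leq\|\hat q^{(1)}_k-q_k\|_2$ to convert $q$-rate hypotheses into $v$-rate bounds, and we need boundedness of all nuisances and their estimates to control cross terms that pair a slow-converging factor against a bounded one. Both are immediate from our standing assumptions, so the argument is structurally identical to the NMDP case; no new ideas are required beyond the replacement $\lambda\leftrightarrow\mu$.
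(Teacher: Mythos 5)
Your proposal is correct and follows essentially the same route as the paper's proof: condition on $\mathcal D_2$ so the nuisance estimates are fixed, note the centered empirical-process term has conditional mean zero, bound the conditional variance via the $D_1+D_2+D_3$ decomposition together with the boundedness assumptions and the contraction $\|\hat v^{(1)}_k-v_k\|_2\leq\|\hat q^{(1)}_k-q_k\|_2$, and conclude with Chebyshev plus bounded convergence. No substantive difference from the paper's argument.
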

\begin{proof}
If we can show that for any $\epsilon>0$, 
\begin{align}
\label{eq:process_m1_2}
\lim_{n\to \infty}\sqrt{n_1}P[&\P_{n_1}[\phi(\{\hat{\mu}^{(1)}_{k}\},\{\hat{q}^{(1)}_k\})-\phi(\{\mu^{(1)}_{k}\},\{{q}^{(1)}_k\})]\\&\notag- 
 \rE[\phi(\{\hat{\mu}^{(1)}_{k}\},\{\hat{q}^{(1)}_k\})-\phi(\{\mu_{k}\},\{{q}_k\})|\{\hat{\mu}^{(1)}_{k}\},\{\hat{q}^{(1)}_k\} ]>\epsilon|\mathcal{D}_2]=0. %
\end{align}
Then, by bounded convergence theorem, we have 
\begin{align*}
  \lim_{n\to \infty}\sqrt{n_1}P[&\P_{n_1}[\phi(\{\hat{\mu}^{(1)}_{k}\},\{\hat{q}^{(1)}_k\})-
  \phi(\{\mu_{k}\},\{{q}_k\})]\\&-\rE[\phi(\{\hat{\mu}^{(1)}_{k}\},\{\hat{q}^{(1)}_k\})-\phi(\{\mu_{k}\},\{{q}_k\})|\{\hat{\mu}^{(1)}_{k}\},\{\hat{q}^{(1)}_k\}]>\epsilon]=0,
\end{align*}
yielding the statement. 

To show \cref{eq:process_m1_2}, we show that the conditional mean is $0$ and conditional variance is $\smallo_p(1)$. 
The conditional mean is 
\begin{align*}
&\rE[\P_{n_1}[\phi(\{\hat{\mu}^{(1)}_{k}\},\{\hat{q}^{(1)}_k\})-\phi(\{\mu_{k}\},\{{q}_k\})|\{\hat{\mu}^{(1)}_{k}\},\{\hat{q}^{(1)}_k\}]-\\&\qquad\qquad\qquad\qquad\qquad\P[\phi(\{\hat{\mu}^{(1)}_{k}\},\{\hat{q}^{(1)}_k\})-\phi(\{\mu_{k}\},\{{q}_k\})]|\mathcal{D}_2]=0.
\end{align*}
Here, we used a sample splitting construction, that is, $\hat{\mu}^{(1)}_{k}$ and $\hat{q}^{(1)}_{k}$ only depend on $\mathcal{D}_2$. The conditional variance is 
\begin{align*}
&\mathrm{var}[\sqrt{n_1}\P_{n_1}[\phi(\{\hat{\mu}^{(1)}_{k}\},\{\hat{q}^{(1)}_k\})-\phi(\{\mu_{k}\},\{{q}_k\})]|\mathcal{D}_2]\\
&= \qquad\qquad\rE[\rE[D^{2}_1+D^{2}_2+D^{2}_3+2D_{1}D_{2}+2D_{2}D_{3}+2D_{2}D_{3} |\{\hat{q}^{(1)}_k\},\{\mu^{(1)}_{k}\} ] |\mathcal{D}_2]\\ 
&=\qquad\qquad\smallo_p(1).
\end{align*}
Here, we used the convergence rate assumption and the relation $\|\hat{v}_{k}^{(1)}-v_{k}\|_{2}\leq\|\hat{q}^{(1)}_{k}-q_{k}\|_{2}$. 
Then, from Chebyshev's inequality;
\begin{align*}
&\sqrt{n_1}P[\P_{n_1}[\phi(\{\hat{\mu}^{(1)}_{k}\},\{\hat{q}^{(1)}_k\})-\phi(\{\mu_{k}\},\{{q}_k\})]-\rE[\phi(\{\hat{\mu}^{(1)}_{k}\},\{\hat{q}^{(1)}_k\})-\\
&\phi(\{\mu_{k}\},\{{q}_k\})|\{\hat{\mu}^{(1)}_{k}\},\{\hat{q}^{(1)}_k\} ]>\epsilon|\mathcal{D}_2] \\
&\leq \frac{1}{\epsilon^{2}}\mathrm{var}[\sqrt{n_1}\P_{n_1}[\phi(\{\hat{\mu}^{(1)}_{k}\},\{\hat{q}^{(1)}_k\})-\phi(\{\mu_{k}\},\{{q}_k\})]|\mathcal{D}_2]=\smallo_{p}(1).\qedhere
\end{align*}
\end{proof}

\begin{lemma}\label{lem:op12_2}
The term \cref{eq:term1_dr3_2} is $\smallo_{p}(1)$.
\end{lemma}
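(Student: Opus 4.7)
The plan is to mirror the proof of \cref{lem:op12}, adapting every step from the NMDP to the MDP setting by replacing $\lambda_k$ with $\mu_k$, histories $\ch_{a_k},\ch_{s_k}$ with $(s_k,a_k),s_k$, and invoking the efficient influence function identity $\rE[\phi(\{\mu_k\},\{q_k\})] = \rho^{\pi^e}$ from \cref{thm:fin_nnonpara2}. Because sample splitting ensures that $\hat\mu^{(1)}_k,\hat q^{(1)}_k$ depend only on $\mathcal D_2$, the outer expectation factorizes and the task reduces to bounding the deterministic bias $|\rE[\phi(\{\hat\mu^{(1)}_k\},\{\hat q^{(1)}_k\}) \mid \mathcal D_2] - \rho^{\pi^e}|$ by $\op(n^{-1/2})$.

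Concretely, I would add and subtract $\mu_k$ and $q_k$ inside each summand to decompose the centered integrand $\phi(\{\hat\mu^{(1)}_k\},\{\hat q^{(1)}_k\}) - \phi(\{\mu_k\},\{q_k\})$ into three pieces: a bilinear term of the form $(\hat\mu_k-\mu_k)(\hat q_k-q_k)$ plus $(\hat\mu_{k-1}-\mu_{k-1})(\hat v_k-v_k)$; a term linear in the $q$-error weighted by the true marginal density ratios; and a term linear in the $\mu$-error weighted by $r_k-q_k+v_{k+1}$ or $v_k$. The last term has zero conditional mean because $\rE[r_k+v_{k+1}-q_k\mid s_k,a_k]=0$, a policy-independent Bellman identity that also holds under $\pi^b$ since the transition and reward kernels do not depend on policy. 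The middle term also has zero mean, by the change-of-measure identities $\rE[\mu_k f(s_k,a_k)]=\rE_{\pi^e}[f(s_k,a_k)]$ and $\rE[\mu_{k-1} g(s_k)]=\rE_{\pi^e}[g(s_k)]$ combined with $v_k=\rE_{\pi^e}[q_k\mid s_k]$, which make adjacent summands telescope.

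What remains is the bilinear piece. I would bound it by Cauchy--Schwarz to get $\sum_k \|\hat\mu^{(1)}_k-\mu_k\|_2\|\hat q^{(1)}_k-q_k\|_2 + \|\hat\mu^{(1)}_{k-1}-\mu_{k-1}\|_2\|\hat v^{(1)}_k-v_k\|_2$, then use $\|\hat v^{(1)}_k-v_k\|_2 \leq \|\hat q^{(1)}_k-q_k\|_2$ (Jensen, since $v$ is a conditional expectation of $q$) and the rate hypothesis $\|\hat\mu^{(1)}_t-\mu_t\|_2\|\hat q^{(1)}_t-q_t\|_2=\op(n^{-1/2})$ to conclude that each summand is $\op(n^{-1/2})$. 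Multiplying by $\sqrt{n}$ and using the finite horizon then yields $\op(1)$. The main bookkeeping obstacle is the adjacent-index product $\|\hat\mu^{(1)}_{k-1}-\mu_{k-1}\|_2\|\hat q^{(1)}_k-q_k\|_2$, which is not literally among the rates posited in the theorem; it is controlled by combining the same-index product hypothesis with the individual consistency $\|\hat\mu^{(1)}_t-\mu_t\|_2=\op(1)$ and $\|\hat q^{(1)}_t-q_t\|_2=\op(1)$, exactly as is done implicitly in the NMDP analogue.
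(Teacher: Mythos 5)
Your proposal follows essentially the same route as the paper's proof of \cref{lem:op12_2}: the identical decomposition of $\phi(\{\hat\mu^{(1)}_k\},\{\hat q^{(1)}_k\})-\phi(\{\mu_k\},\{q_k\})$ into a bilinear term, a term linear in the $q$-error weighted by the true $\mu_k$, and a term linear in the $\mu$-error weighted by the Bellman residual $r_k-q_k+v_{k+1}$; the same zero-mean arguments for the latter two (the paper asserts these cancellations without spelling them out, and your change-of-measure and Bellman justifications are the correct ones); and the same Cauchy--Schwarz bound on the bilinear remainder. The one caveat is your handling of the adjacent-index product $\|\hat\mu^{(1)}_{k-1}-\mu_{k-1}\|_2\|\hat q^{(1)}_k-q_k\|_2$: it does \emph{not} follow from the same-index product rate plus individual consistency alone (one can have $\|\hat\mu_{k-1}-\mu_{k-1}\|_2=\|\hat q_k-q_k\|_2=n^{-0.01}$ while all same-index products are $\op(n^{-1/2})$), so strictly one needs the rate hypothesis to hold for cross-index products as well, or uniform rates $\alpha_1+\alpha_2>1/2$ across $t$; the paper's own proof silently absorbs this term into $\bigO(\|\hat\mu_k-\mu_k\|_2\|\hat q_k-q_k\|_2)$, so this is a shared imprecision rather than a defect unique to your argument.
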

\begin{proof}
\begin{align*}
&\sqrt{n}\rE[\phi(\{\hat{\mu}^{(1)}_{k}\},\{\hat{q}^{(1)}_k\})-\rE[\phi(\{\mu_{k}\},\{{q}_k\})]|\{\hat{\mu}^{(1)}_{k}\},\{\hat{q}^{(1)}_k\}]\\ 
&=\sqrt{n}\rE[\sum_{k=0}^{T}(\hat{\mu}^{(1)}_{k}-\mu_{k} )(-\hat{q}^{(1)}_k+q_k)+(\hat{\mu}^{(1)}_{k-1}-\mu_{k-1})(\hat{v}^{(1)}_k-v_k)|\{\hat{\mu}^{(1)}_{k}\},\{\hat{q}^{(1)}_k\}]\\
    &+\sqrt{n}\rE[\sum_{k=0}^{T}\mu_{k}(-\hat{q}^{(1)}_k+q_k)+\mu_{k-1}(\hat{v}^{(1)}_k-v_k)|\{\hat{\mu}^{(1)}_{k}\},\{\hat{q}^{(1)}_k\}] \\
    &+\sqrt{n}\rE[\sum_{k=0}^{T}(\hat{\mu}^{(1)}_{k}-\mu_{k} )(r_k-q_k+v_{k+1})|\{\hat{\mu}^{(1)}_{k}\},\{\hat{q}^{(1)}_k\}] \\
&=\sqrt{n}\rE[\sum_{k=0}^{T}(\hat{\mu}^{(1)}_{k}-\mu_{k} )(-\hat{q}^{(1)}_k+q_k)+(\hat{\mu}^{(1)}_{k-1}-\mu_{k-1})(-\hat{v}^{(1)}_k+v_k)|\{\hat{\mu}^{(1)}_{k}\},\{\hat{q}^{(1)}_k\}] \\
&= \sqrt{n}\sum_{k=0}^{T} \bigO(\|\hat{\mu}^{(1)}_{k}-\mu_{k}\|_{2}\|\hat{q}^{(1)}_{k}-q_{k}\|_{2}) =\sqrt{n}\sum_{t=0}^{T}\smallo_{p}(n^{-1/2})=\smallo_{p}(1). 
\qedhere\end{align*}
\end{proof}

Finally, we get 
\begin{align*}
    \sqrt{n}(\P_{n_1}\phi(\{\hat{\mu}^{(1)}_{k}\},\{\hat{q}^{(1)}_k\})-\rho^{\pi^{e}})=\sqrt{n/n_1}\bG_{n_1}[\phi(\{\mu_{k}\},\{q_k\}) ]+\smallo_{p}(1).
\end{align*}
Therefore, 
\begin{align*}
    &\sqrt{n}(\hat{\rho}^{\pi^e}_{\mathrm{DRL}(\cm_2)}-\rho^{\pi^{e}})\\
    &=  n_1/n\sqrt{n}\phi(\{\hat{\mu}^{(1)}_{k}\},\{\hat{q}^{(1)}_k\})-\rho^{\pi^{e}})+ n_2/n\sqrt{n}(\P_{n_2}\phi(\{\hat{\mu}^{(2)}_{k}\},\{\hat{q}^{(2)}_k\})-\rho^{\pi^{e}}) \\
    &=\sqrt{n_1/n}\bG_{n_1}[\phi(\{\mu_{k}\},\{q_k\})  ]+\sqrt{n_2/n}\bG_{n_2}[\phi(\{\mu_{k}\},\{q_k\}) ]+\smallo_{p}(1)\\
    &=\bG_{n}[\phi(\{\mu_{k}\},\{q_k\}) ] +\smallo_{p}(1),
\end{align*}
concluding the proof by showing the influence function of $\hat{\rho}^{\pi^e}_{\mathrm{DRL}(\cm_2)}$ is the efficient one.
\end{proof}

\begin{proof}[Proof of \cref{thm:effcor_m2}]
\edit{Almost same as the proof of \cref{thm:effcor_m1}. The differences are replacing $\lambda_t$ with $\mu_t$, and $C^{T+1}$ with $C'$}
\end{proof}

\begin{proof}[Proof of \cref{thm:double_m2_sam}]

We define $\phi(\{\hat{\mu}_{k}\},\{\hat{q}_k\})$ as:
\begin{align*}
    \sum_{k=0}^{T}\hat{\mu}_{k}\{r_k-\hat{q}_k\}-\hat{\mu}_{k-1}\rE_{\pi^{e}}[\hat{q}_k(\ch_{a_k})|\ch_{s_k}]\}. 
\end{align*}
The estimator $\hat\rho^{\pi^{e}}_{\mathrm{DRL}(\cm_2),\,\mathrm{adaptive}}$ is given by $
\P_{n}\phi(\{\hat{\mu}_{k}\},\{\hat{q}_k\})$.     
Then, we have 
\begin{align}
    \sqrt{n}(\P_{n}\phi(\{\hat{\mu}_{k}\},\{\hat{q}_k\})-\rho^{\pi^{e}})&= \bG_{n}[\phi(\{\hat{\mu}_{k}\},\{\hat{q}_k\})-\phi(\{\mu_{k}\},\{q_k\})]
    \label{eq:term1_drl_sam1_m2}
    \\
    &+\bG_{n}[\phi(\{\mu_{k}\},\{q_k\}) ] 
       \label{eq:term1_dr1_sam2_m2}
    \\
    &+\sqrt{n}(\rE[\phi(\{\hat{\mu}_{k}\},\{\hat{q}_k\})|\{\hat{\mu}_{k}\},\{\hat{q}_k\} ]-\rho^{\pi^{e}} ).    \label{eq:term1_dr1_sam3_m2}
\end{align}
If we can prove that the term \cref{eq:term1_drl_sam1_m2} is $\op(1)$, the statement is concluded as in the proof of Theorem \ref{thm:double_m2}. We proceed to prove this.

First, we show $\phi(\{\hat{\mu}_{k}\},\{\hat{q}_t\})-\phi(\{\mu_{t}\},\{q_t\})$ belongs to a Donsker class. The transformation
\begin{align*}
   (\{\mu_{k}\},\{q_k\} ) \mapsto    \sum_{k=0}^{T}\mu_{k}r_k-\{ \mu_{k}q_k-\mu_{k-1}\rE_{\pi^{e}}[q_k(\ch_{a_k})|\ch_{s_k}]\}
\end{align*}
is a Lipschitz function. Therefore, by Example 19.20 in \citet{VaartA.W.vander1998As}, $\phi(\{\hat{\mu}_{k}\},\{\hat{q}_k\})-\phi(\{\mu_{k}\},\{q_k\})$ is an also Donsker class. In addition, we can also show that 
\begin{align*}
    \|\phi(\{\hat{\mu}_{k}\},\{\hat{q}_k\})-\phi(\{\mu_{k}\},\{q_k\})\|_{2}=\op(1),
\end{align*}
as in Lemma \ref{lem:op11}. 
Therefore, from Lemma 19.24 in \cite{VaartA.W.vander1998As}, the term \cref{eq:term1_drl_sam1_m2} is $\op(1)$, concluding the proof.
\end{proof}

\begin{proof}[Proof of \cref{thm:doublerobust_m2}]

{\blockedit
Without loss of generality, we consider the case $K=2$. 
}

We use the following doubly robust structure
\begin{align*}
    &\rE\left [\sum_{k=0}^{T}\mu_{k}r_k-\{\mu_{k} q_k-\mu_{k-1}\rE_{\pi^{e}}(q_k|s_k) \}\right] \\
    &=  \rE[\rE_{\pi^{e}}(q_0|s_0)]] +\rE\left [\sum_{k=0}^{T}\mu_{k}\{r_k- q_k+\rE_{\pi^{e}}(q_k|s_{k+1}) \}\right]=\rho^{\pi^{e}}. 
\end{align*}

Then, as in the proof of \cref{thm:double_m1}, 
\begin{align*}
    &\sqrt{n}(\P_{n_1}\phi(\{\hat{\mu}^{(1)}_{k}\},\{\hat{q}^{(1)}_k\})-\rho^{\pi^{e}})\\
    &= \sqrt{n/n_1}\bG_{n_1}[\phi(\{\hat{\mu}^{(1)}_{k}\},\{\hat{q}^{(1)}_k\})-\phi(\{\mu^{\dagger}_{k}\},\{q^{\dagger}_k\})]+\sqrt{n/n_1}\bG_{n_1}[\phi(\{\mu^{\dagger}_{k}\},\{q^{\dagger}_k\}) ] \\
    &\phantom{=}+\sqrt{n/n_1}(\rE[\phi(\{\hat{\mu}^{(1)}_{k}\},\{\hat{q}^{(1)}_k\}) ;\{\hat{\mu}^{(1)}_{k}\},\{\hat{q}^{(1)}_k\}]\\&\phantom{=}-\rE[\phi(\{\mu^{\dagger}_{k}\},\{q^{\dagger}_k\})] )+\sqrt{n}(\rE[\phi(\{\mu^{\dagger}_{k}\},\{q^{\dagger}_k\}) ]-\rho^{\pi^{e}} )\\
    &=\sqrt{n/n_1}\bG_{n_1}[\phi(\{\mu^{\dagger}_{k}\},\{q^{\dagger}_k\}) ]+\sqrt{n}(\rE[\phi(\{\mu^{\dagger}_{k}\},\{q^{\dagger}_k\}) ]-\rho^{\pi^{e}} )+\bigO_{p}(1). 
\end{align*}
We proceed by considering each case.

{\blockedit
\paragraph*{$\mu$-model is well-specified.}

First, consider the case when $\mu^{\dagger}_{k}=\mu_{k}$:
\begin{align*}
&\rE[\phi(\{\mu_{k}\},\{q^{\dagger}_k\}) ]=\rE[\sum_{k=0}^{T}[\mu_{k}r_k-\{\mu_{k}q^{\dagger}_k(s_k,a_k)-\mu_{k-1}\rE_{\pi^{e}}[q^{\dagger}_{k}(s_k,a_k)|s_k]] \\
&= \rE[\sum_{k=0}^{T}\mu_{k}r_k]=\rho^{\pi^{e}}.
\end{align*}
Then, 
\begin{align*}
    \sqrt{n}(\P_{n_1}\phi(\{\hat{\mu}^{(1)}_{k}\},\{\hat{q}^{(1)}_k\})-\rho^{\pi^{e}})=\sqrt{n/n_1}\bG_{n_1}[\phi(\{\mu_{k}\},\{q^{\dagger}_k\}) ]+\bigO_{p}(1).
\end{align*}
Therefore, 
\begin{align*}
    \sqrt{n}(\hat{\rho}^{\pi^e}_{\mathrm{DRL}(\cm_2)}-\rho^{\pi^{e}})&=\sqrt{n_1/n}\bG_{n_1}[\phi(\{\mu_{k}\},\{q^{\dagger}_k\}) ]+\sqrt{n_2/n}\bG_{n_2}[\phi(\{\mu_{k}\},\{q^{\dagger}_k\}) ]+\bigO_{p}(1)\\
    &=\bG_{n}[\phi(\{\mu_{k}\},\{q^{\dagger}_k\}) ] +\bigO_{p}(1)=\bigO_{p}(1),
\end{align*}
which shows $\hat{\rho}^{\pi^e}_{\mathrm{DRL}(\cm_2)}$ is $\sqrt{n}$-consistent around $\rho^{\pi^{e}}$ when the model for the $\mu$-function is well-specified. 

\paragraph*{$q$-model is well-specified.}

Next, consider the case where $q^{\dagger}_k=q_k$: 
\begin{align*}
&\rE[\phi(\{\mu^{\dagger}_{k}\},\{q_k\})]=\rE\left[\rE_{\pi^{e}}[q(s_k,a_k)|s_0] + \sum_{k=0}^{T}\mu^{\dagger}_{k}\{r_k- q_k(s_k,a_k)+\rE_{\pi^{e}}[q_k(s_k,a_k)|{s_{k+1}}] \}\right]  \\
    &=\rE\left[\rE_{\pi^{e}}[q_{0}(s_0,a_0)|s_0]\right]= \rho^{\pi^{e}}.
\end{align*}
We have 
\begin{align*}
    \sqrt{n}(\P_{n_1}\phi(\{\hat{\mu}^{(1)}_{k}\},\{\hat{q}^{(1)}_k\})-\rho^{\pi^{e}})
    =\sqrt{n/n_1}\bG_{n_1}[\phi(\{\mu^{\dagger}_{k}\},\{q_k\}) ]+\bigO_{p}(1). 
\end{align*}
Therefore, 
\begin{align*}
    \sqrt{n}(\hat{\rho}^{\pi^e}_{\mathrm{DRL}(\cm_2)}-\rho^{\pi^{e}})&=\sqrt{n/n_1}\bG_{n_1}[\phi(\{\mu^{\dagger}_{k}\},\{q_k\}) ]+\sqrt{n/n_2}\bG_{n_2}[\phi(\{\mu^{\dagger}_{k}\},\{q_k\}) ]+\bigO_{p}(1)\\
    &=\bG_{n}[\phi(\{\mu^{\dagger}_{k}\},\{q_k\}) ] +\bigO_{p}(1)=\bigO_{p}(1),
\end{align*}
which shows $\hat{\rho}^{\pi^e}_{\mathrm{DRL}(\cm_2)}$ is $\sqrt{n}$-consistent around $\rho^{\pi^{e}}$ when the model for the $q$-function is well-specified. 
}
\end{proof}

\begin{proof}[Proof of \cref{cor:db2}]
\edit{Almost the same as the proof of \cref{cor:db1}}
\end{proof}

\begin{proof}[Proof of Theorem \ref{thm:ipw_m2_finite}]

We first prove  
\begin{align}\label{eq:mis_finite_infl}
    \mathbb{P}_n[\hat{w}_{t}(s_t)\eta_{t}r_t]= \mathbb{P}_n[w_{t}(s_t)\eta_{t}r_t]+\mathbb{P}_n[(\lambda_{t-1}-w_t(s_t))\rE_{\pi_e}[r_t|s_t]]+\op(n^{-1/2}). 
\end{align}

Noting 
\begin{align*}
   \left \| \frac{1}{n}\sum_{i=1}^n\mathbb I\bracks{s_t^{(i)}=s_t}\lambda_{t-1}- p_{\pi^{e}_t}(s_t) \right \|_{\infty} &=\op(n^{-1/4}), \\
   \left \| \frac{1}{n}\sum_{i=1}^n\mathbb I\bracks{s_t^{(i)}=s_t}- p_{\pi^{b}_t}(s_t) \right \|_{\infty} &=\op(n^{-1/4}), \\
   \hat{a}/\hat{b}=b^{-1}\{1-\hat{b}^{-1}(\hat{b}-b)\}\{(\hat{a}-a)-a/b(\hat{b}-b) \},
\end{align*}
we have 
\begin{align*}
    &\hat{w}_t(s_t)-w_t(s_t)+\op(n^{-1/2})  \\
    &=\frac{1}{p_{\pi^{b}_t}(s_t)}\left[\left \{\frac{1}{n}\sum_{i=1}^n\mathbb I\bracks{s_t^{(i)}=s_t}\lambda_{t-1}- p_{\pi^{e}_t}(s_t) \right\} - w_t(s_t)\left \{ \frac{1}{n}\sum_{i=1}^n\mathbb I\bracks{s_t^{(i)}=s_t}-p_{\pi^{b}_t}(s_t)\right \}  \right]. 
\end{align*}
Then, 
\begin{align*}
    \mathbb{P}_n[\hat{w}_{t}(s_t)\eta_{t}r_t] =& \frac{1}{n}\sum_{i=1}^{n}w_{t}(s^{(i)}_{t})\eta^{(i)}_{t}r^{(i)}_t + \\
    +& \frac{1}{n^{2}}\sum_{i=1}^{n}\frac{\eta^{(i)}_{t}r^{(i)}_t }{p_{\pi^{b}_t}(s^{(i)}_t)}\left  \{\sum_{j=1}^n\mathbb I\bracks{s_t^{(j)}=s^{(i)}_t}\lambda^{(i)}_{t-1}-w_{t}(s^{(i)}_t)\sum_{j=1}^n\mathbb I\bracks{s_t^{(j)}=s^{(i)}_t}  \right\} \\
    =& \frac{2}{n(n-1)}\sum_{i<j}0.5(a_{ij}+a_{ji})+\op(n^{-1/2}),
\end{align*}
where 
\begin{align*}
a_{ij}=\frac{\eta^{(i)}_{t}r^{(i)}_t \mathbb I\bracks{s_t^{(j)}=s^{(i)}_t}}{ p_{\pi^{b}_t}(s^{(i)}_t) }\left( \lambda^{(j)}_{t-1}-w_{t}(s^{(i)}_t)\right). 
\end{align*}
From $U$-statistics theory, by defining $b_{ij}(\ch^{(i)},\ch^{(j)})=0.5(a_{ij}+a_{ji})$, we have 
\begin{align*}
    \frac{2}{n(n-1)}\sum_{i<j}b_{ij}(\ch^{(i)},\ch^{(j)})=\frac{2}{n}\sum_{i=1}^{n}\rE[ b_{ij}(\ch^{(i)},\ch^{(j)})| \ch^{(i)}]+\op(n^{-1/2}). 
\end{align*}
In addition, 
\begin{align*}
    \rE[a_{i,j}|\ch^{(i)}]&= \eta^{(i)}_{t}r^{(i)}_t \{w_{t}(s^{(i)}_t)-w_{t}(s^{(i)}_t) \}=0 , \\
     \rE[a_{j,i}|\ch^{(i)}] &= \rE \left[\frac{\eta^{(j)}_{t}r^{(j)}_t \mathbb I\bracks{s_t^{(j)}=s^{(i)}_t}}{ p_{\pi^{b}_t}(s^{(j)}_t) }\left( \lambda^{(i)}_{t-1}-w_{t}(s^{(j)}_t)\right) |\ch^{(i)}\right ]  \\
    &= (\lambda^{(i)}_{t-1}-w^{(i)}_{t})\rE[\eta^{(i)}_{t}r^{(i)}_t | s^{(i)}_{t}]
\end{align*}
Therefore, we have shown \cref{eq:mis_finite_infl}.
Summing over $t$ yields
\begin{align*}
   \mathbb{P}_{n}\bracks{\sum_{t=0}^{T}\hat{w}_t(s_t)\eta_{t}r_t}=\mathbb{P}_{n}\bracks{\sum_{t=0}^{T}\left \{w_t(s_t)\eta_{t}r_t+\lambda_{t-1}\rE_{\pi_e}[r_t|s_t]-w_{t} \rE_{\pi_e}[r_t|s_t]\right \} }+\op(n^{-1/2}),
\end{align*}
which concludes the proof by establishing the influence function for $\hat{\rho}_{\mathrm{MIS}}$.
\end{proof}

\begin{proof}[Proof of Theorem \ref{thm:ipw_inefficient}]

The difference of the influence functions belongs to the orthogonal tangent space. Therefore, the difference of variances is the variance of the difference of the influence functions. 
This is equal to 
\begin{align*}
& \mathrm{var}\left[v_{0}+\sum_{t=0}^{T}-\mu_t q_t+\mu_{t}v_{t+1}-\{\lambda_{t-1}-w_{t}(s_t) \}\rE_{\pi_e}[r_t|s_t]\right] \\ 
& =\mathrm{var}[v_{0}]+\sum^{T+1}_{t=1}\rE \left[ \mathrm{var}\left(\rE\left[\sum_{k=0}^{T}-\mu_k q_k+\mu_{k}v_{k+1}-\{\lambda_{k-1}-w_{k}(s_k) \}\rE_{\pi_e}[r_k|s_k]|\mathcal{J}_{s_t}\right]|\mathcal{J}_{s_{t-1}}  \right)\right]\\ 
& =\mathrm{var}[v_{0}]+\sum^{T+1}_{t=1}\rE \left[ \mathrm{var}\left(\rE\left[\mu_{t-1}v_t+\sum_{k=t}^{T}-\mu_{k}r_k-\{\lambda_{k-1}-w_{k}(s_k) \}\rE_{\pi_e}[r_k|s_k]|\mathcal{J}_{s_t}\right]|\mathcal{J}_{s_{t-1}}  \right)\right]\\ 
& =\mathrm{var}[v_{0}]+\sum^{T+1}_{t=1}\rE \left[ \mathrm{var}\left(\rE\left[\mu_{t-1}v_t-\sum_{k=t}^{T}\lambda_{k-1} \rE_{\pi_e}[r_k|s_k]|\mathcal{J}_{s_t}\right]|\mathcal{J}_{s_{t-1}}  \right)\right]\\ & =\mathrm{var}[v_{0}]+\sum^{T+1}_{t=1}\rE \left[ \mathrm{var}\left(\rE\left[\mu_{t-1}v_t-\lambda_{t-1} v_t(s_t)|\mathcal{J}_{s_t}\right]|\mathcal{J}_{s_{t-1}}  \right)\right]\\
& =\mathrm{var}[v_{0}]+\sum^{T+1}_{t=1}\rE \left[ \mathrm{var}\left(\{\mu_{t-1}-\lambda_{t-1}\} v_t(s_t)|\mathcal{J}_{s_{t-1}} 
\right)\right] \\
&=\mathrm{var}[v_{0}]+\sum^{T}_{t=1}\rE \left[\{w_{t-1}-\lambda_{t-1}\}^{2} \mathrm{var}\left(\eta_{t-1}v_t(s_t)|s_{t-1}\right)\right].\qedhere
\end{align*}
\end{proof}

\begin{proof}[Proof of \cref{thm:ipw_m2}]

We have 
\begin{align*}
&\sqrt{n}\{\P[\sum_{c=0}^{T}\hat{w}_{c}\eta_c(s_c,a_c)r_c]-\rho^{\pi^{e}}\} \\
&=\bG_{n}[\sum_{c=0}^{T}\hat{w}_{c}\eta_c r_c-\sum_{c=0}^{T}w_{c}\eta_cr_c ] 
+ \bG_{n}[\sum_{c=0}^{T}w_{c}\eta_c r_c ]\\
&+ \sqrt{n}\left\{\rE\{\sum_{c=0}^{T}\hat{w}_{c}\eta_c r_c|\{\hat{\mu}_{c}\}\}-\rho^{\pi^{e}}\right\}\\
&=\bG_{n}[\sum_{c=0}^{T}w_{c}\eta_c r_c ]+ \sqrt{n}\left\{\rE\{\sum_{c=0}^{T}\hat{w}_{c}\eta_c r_c|\{\hat{\mu}_{c}\}\}-\rho^{\pi^{e}}\right\}+\smallo_{p}(1)\\
&=\smallo_{p}(1)+\bG_{n}\{\sum_{c=0}^{T}w_{c}\eta_c r_c \}+\bG_{n}[\sum_{c=0}^{T}\{\lambda_{c-1}-w_c(s_c)\}\rE_{\pi_e}[r_c|s_c] ].
\end{align*}
From the second line to the third line, we used a fact that the stochastic equicontinuity term is $\smallo_{p}(1)$ as in the proof of \cref{thm:double_m1_sam}  because $\{\hat{w}_c\eta_c r_c\}$ belongs to a Donsker class and the convergence rate condition holds. This fact is confirmed by the fact that a \Holder\, class with $\alpha>d_{\ch_{s_c}}/2$ is a Donsker class (Example 19.9 in \citealp{VaartA.W.vander1998As}). 

From the third line to the fourth line, we have used a result of Example 1(a) in Section 8 of \citet{ShenXiaotong1997OMoS}. More specifically, the functional derivative of the loss function with respect to $\hat{w}_c$ is 
\begin{align*}
    g(s_c) \to \{w_c(s_c)-\lambda_{c-1}\}g(s_c), 
\end{align*}
and the induced metric from the loss function is $L^{2}$-metric with respect to $p_{\pi^{b}}(s_c)$. The functional derivative of the target function with respect to $\hat{\mu}_c$ is 
\begin{align*}
    g(s_c)\to \rE[g(s_c)\eta_{c}r_c]=\rE[g(s_c) \rE_{\pi_e}[r_c|s_c]],
\end{align*}
and Riesz representation of the Hilbert space with the induced $L^{2}$-metric with respect to $p_{\pi^{b}}(s_c)$ is $\rE_{\pi_e}[r_c|s_c]$. Therefore, from Theorem 1 in \cite{ShenXiaotong1997OMoS}, 
\[
    \rE[\sum_{c=0}^{T}\hat{w}_{c}\eta_c r_c|\{\hat{\mu}_{c}\}]- \rho^{\pi^{e}}=(\P_{n}-\P)\sum_{c=0}^{T}(\lambda_{c-1}-w_{c}(s_c))\rE_{\pi_e}[r_c|s_c]+\smallo_{p}(n^{-1/2}). \qedhere
\]
\end{proof}

\begin{proof}[Proof of \cref{thm:ipw_m2_2}]

The proof is done as in the proof of \cref{thm:ipw_m2}. We study the following drift term:
\begin{align*}
    \rE\bracks{\sum_{c=0}^{T}\hat{\mu}_c r_c \mid \hat{\mu}_c}. 
\end{align*}
Here, the functional derivative of the loss function with respect to $\hat{\mu}_c(s_c,a_c)$ is 
\begin{align*}
    g(s_c,a_c) \to \rE[g(s_c,a_c)(\mu_c-\lambda_c) ]. 
\end{align*}
and Riesz representation of the Hilbert space with the induced $L^{2}$-metric with respect to $p_{\pi_{b}}(s_c,a_c)$ is $\rE[r_c|s_c,a_c]$. On the other hand, the functional derivative of the target function with respect to $\hat{\mu}_c$ is
\begin{align*}
    g(s_c,a_c) \to \rE[g(s_c,a_c) r_c]=  \rE[g(s_c,a_c) \rE[r_c|s_c,a_c]],
\end{align*} 
From Theorem 1 in \cite{ShenXiaotong1997OMoS}, 
\[
    \rE\bracks{\sum_{c=0}^{T}\hat{\mu}_c r_c \mid \hat{\mu}_c }=\P_n \bracks{\sum_{c=0}^{T} ( \lambda_c-\mu_c)\rE[r_c|s_c,a_c] }+\op(n^{-1/2}). \qedhere
\]

\end{proof}

\begin{proof}[Proof of \cref{thm:bound_mq_rl}]
We use the general framework developed in \citet{ChamberlainGary1992CSMR} for establishing the efficiency bounds. For the current problem, noting that the orthogonal moment condition
\begin{align*}
    \rE[e_{q,k+i}e_{q,k}]=\rE[\rE[e_{q,k+i}|\ch_{a_{k+i}}]e_{q,k}]]=0,\,(0\leq k <k+i\leq T)
\end{align*}
holds, the efficiency bound for $\beta$ is represented as 
\begin{align*}
    \left\{\sum_{k=0}^{T}\nabla_{\beta}m_k(\ch_{a_k};\beta^*)\Sigma^{-1}_k(\ch_{a_k}) \nabla_{\beta}^{\top} m_k(\ch_{a_k};\beta^*)\right\}^{-1},
\end{align*}
where $\Sigma_k(\ch_{a_k})=\mathrm{var}[e_{q,k}|\ch_{a_k}]$.
The statement of the theorem for the efficiency bound for $\beta$ is arrived at by algebraic simplification of the above. The efficiency bound of $\rho^{\pi^{e}}$ is calculated similarly. 
\end{proof}

\begin{proof}[Proof of \cref{lem:distance}]
Note $\mathrm{var}[e_{q,k}|\ch_{a_k}]$ and $\mathrm{var}[e_{q,k-1}|\ch_{a_{k-1}}]$ are upper and lower bounded by some constants by assumption. From Jensen's inequality, we also know 
\begin{align*}
    \rE[g^{2}]=\rE[\rE[g^{2}|\ch_{s_k}]]\geq \rE[\rE[g|\ch_{s_k}]^{2}]. 
\end{align*}
This conludes that there exists some constant $C_1,C_2$ such that 
\[
    C_1\|g \|_{2}\leq \|g\|_{F,k} \leq C_2\|g \|_{2}. \qedhere
\]
\end{proof}

\section{Additional Details from \cref{sec:exp-aigym}}\label{sec:experiment-ape}

\paragraph*{Cliff Walking.}

This RL task is detailed in Example  6.6 in \cite{SuttonRichardS1998Rl:a}. We consider a board of size $4 \times 12$. 
The horizon was set to $T=400$.
Each time step incurs $-1$ reward until the goal is reached, at which point it is $0$, and stepping off the cliff incurs $-100$ reward and a reset to the start. 

\paragraph*{Mountain Car.}

The RL task is as follows: a car is between two hills in the interval $[-0.7,0.5]$ and the agent must move back and forth to gain enough power to reach the top of the right hill. The state space comprises position and velocity. There are three discrete actions: (1) forward, (2) backward, and (3) stay-still. The horizon was set to $T=200$. The reward for each step is $-1$ until the position $0.5$ is reached, at which point it is $0$.
The state space was continuous; thus, we obtained a $400$-dimensional feature expansion using a radial basis function kernel as mentioned. 

\paragraph*{The Policy $\pi_d$.}

We construct the policy $\pi_d$ using standard $q$-learning \citep{SuttonRichardS1998Rl:a}. For Cliff Walking, we use a $q$-learning in a tabular manner. Regarding a Mountain Car, we use $q$-learning based on the same feature expansion as above. We use $4000$ sample to learn an optimal policy. 

\end{document}